\documentclass{article} %
\usepackage{iclr2023_conference,times}

\usepackage{microtype}
\usepackage{graphicx}
\usepackage{subfigure,wrapfigure}
\usepackage{booktabs} %
\usepackage{arydshln} %

\usepackage{amsmath}
\usepackage{amssymb}
\usepackage{mathtools}
\usepackage{amsthm}

\usepackage{enumitem,multirow,adjustbox}

\usepackage{hyperref}
\usepackage{url}
\usepackage{xcolor}		%
\definecolor{darkblue}{rgb}{0, 0, 0.5}
\definecolor{beaublue}{rgb}{0.74, 0.83, 0.9}
\definecolor{gainsboro}{rgb}{0.86, 0.86, 0.86}
\definecolor{kleinblue}{rgb}{0,0.18,0.65}
\hypersetup{colorlinks=true,citecolor=kleinblue, linkcolor=kleinblue, urlcolor=kleinblue}

\newtheorem{theorem}{Theorem}[section]

\newtheorem{definition}[theorem]{Definition}
\newtheorem{assumption}[theorem]{Assumption}

\usepackage{amsmath,amsfonts,bm}

\def\eqref#1{equation~\ref{#1}}

\def\1{\bm{1}}
\newcommand{\train}{\mathcal{D_{\mathrm{tr}}}}

\def\vg{{\bm{g}}}

\def\vp{{\bm{p}}}

\def\mL{{\bm{L}}}

\DeclareMathAlphabet{\mathsfit}{\encodingdefault}{\sfdefault}{m}{sl}
\SetMathAlphabet{\mathsfit}{bold}{\encodingdefault}{\sfdefault}{bx}{n}

\def\gA{{\mathcal{A}}}

\def\gD{{\mathcal{D}}}
\def\gE{{\mathcal{E}}}
\def\gF{{\mathcal{F}}}

\def\gH{{\mathcal{H}}}
\def\gI{{\mathcal{I}}}

\def\gL{{\mathcal{L}}}

\def\gP{{\mathcal{P}}}

\def\gS{{\mathcal{S}}}

\def\gW{{\mathcal{W}}}
\def\gX{{\mathcal{X}}}
\def\gY{{\mathcal{Y}}}
\def\gZ{{\mathcal{Z}}}

\def\sP{{\mathbb{P}}}

\newcommand{\R}{\mathbb{R}}

\DeclareMathOperator*{\argmax}{arg\,max}
\DeclareMathOperator*{\argmin}{arg\,min}

\usepackage{enumitem,algorithm,algorithmic}

\usepackage{xspace}
\newcommand{\ours}[0]{\texttt{PAIR}\xspace} 
\newcommand{\ourso}[0]{\texttt{PAIR-o}\xspace} 
\newcommand{\ourss}[0]{\texttt{PAIR-s}\xspace} 
\newcommand{\ourst}[0]{\text{PAIR}\xspace}	%
\newcommand{\oursfull}[0]{\textbf{PA}reto \textbf{I}nvariant \textbf{R}isk Minimization\xspace}

\newcommand{\dobed}[0]{\textsc{DomainBed}\xspace}
\newcommand{\wilds}[0]{\textsc{Wilds}\xspace}
\newcommand{\pacs}[0]{\textsc{PACS}\xspace}
\newcommand{\terra}[0]{\textsc{TerraIncognita}\xspace}

\newcommand{\irms}[0]{$\text{IRM}_\gS$\xspace}
\newcommand{\irml}[0]{$\text{IRMv1}$\xspace}
\newcommand{\erm}[0]{\text{ERM}\xspace}
\newcommand{\irm}[0]{\text{IRM}\xspace}
\newcommand{\irmx}[0]{\text{IRMX}\xspace}
\newcommand{\vrex}[0]{\text{VREx}\xspace}
\newcommand{\cmnist}[0]{\textsc{ColoredMNIST}\xspace}
\newcommand{\dataset}{{\cal D}}
\newcommand{\inv}{{\text{inv}}}

\newcommand{\ood}{\text{ood}}

\newcommand{\envtrain}{{\gE_{\text{tr}}}}

\newcommand{\envall}{{\gE_{\text{all}}}}
\newcommand{\envalpha}{{\gE_{\alpha}}}
\newcommand{\rad}{{\text{Rad}}}
\newcommand{\var}{{\text{var}}}
\newcommand{\des}{\text{des}}
\newcommand{\std}[1]{\textit{(\scriptsize{$\pm$#1}})}
\newcommand{\vL}{\bm{L}}
\newcommand{\vG}{\bm{G}}

\newenvironment{myquotation}{\setlength{\leftmargini}{0em}\quotation}{\endquotation}
\usepackage[hyperpageref]{backref}

\renewcommand*{\backrefalt}[4]{
	\ifcase #1 \relax
	\or
	(Cited on page #2)
	\else
	(Cited on pages #2)
	\fi
}

\newcommand{\abs}[1]{\lvert#1\rvert}
\newcommand{\absbig}[1]{\big\lvert#1\big\rvert}

\definecolor{Gray}{gray}{0.9}

\graphicspath{{./fig/}}

\newtheorem{lemmas}{Lemma}
\newtheorem{propositions}{Proposition}
\newtheorem{corollarys}{Corollary}

\newcommand{\lsq}{\ell_{\textup{sq}}}
\newcommand{\llog}{\ell_{\textup{log}}}
\newcommand{\vrexz}{VREx\textsubscript{0}}
\newcommand{\Ix}{\mathcal{I}_X}
\newcommand{\Is}{\mathcal{I}_\mathcal{S}}
\newcommand{\Ede}{\mathbb{E}_{\mathcal{D}_e}}
\newcommand{\Prde}{\Pr_{\mathcal{D}_e}}
\newcommand{\hL}{\widehat{\mathcal{L}}}

\newcommand{\Peo}{\bm{P}^\epsilon_\ourst}

\newcommand{\Peeo}{\widehat{\bm{P}}^\epsilon_\ourst}

\iclrfinalcopy %

\title{Pareto Invariant Risk Minimization: Towards Mitigating The Optimization Dilemma in Out-of-Distribution Generalization}

\author{Yongqiang Chen$^1$\thanks{Work done during an internship at Tencent AI Lab. }, Kaiwen Zhou$^1$, Yatao Bian$^2$, Binghui Xie$^1$, Bingzhe Wu$^2$\\
$^1$The Chinese University of Hong Kong $^2$Tencent AI Lab $^3$Hong Kong Baptist University\\
\texttt{\{yqchen,kwzhou,bhxie21,hyang,klma,jcheng\}@cse.cuhk.edu.hk}\\\vspace{-0.35in}
\AND
Yonggang Zhang$^3$, Han Yang$^1$, Kaili Ma$^1$, Peilin Zhao$^2$, Bo Han$^3$, James Cheng$^1$ \\
\texttt{\{yatao.bian,wubingzheagent\}@gmail.com}\ \ \texttt{masonzhao@tencent.com}\\ \texttt{\{csygzhang,bhanml\}@comp.hkbu.edu.hk} \\
}

\usepackage{etoc}
\etocdepthtag.toc{mtchapter}
\etocsettagdepth{mtchapter}{subsection}
\etocsettagdepth{mtappendix}{none}

\definecolor{purple}{HTML}{7D2882}
\newcommand{\revision}[1]{\textcolor{black}{#1}}

\begin{document}

\maketitle

\begin{abstract}
	Recently, there has been a growing surge of interest in enabling
	machine learning systems to generalize well to Out-of-Distribution (OOD) data.
	Most efforts are devoted to advancing \emph{optimization objectives} that regularize models to capture the underlying invariance; however, there often are compromises in the \emph{optimization process} of these OOD objectives:
	i) Many OOD objectives have to be relaxed as penalty terms of Empirical Risk Minimization (ERM) for the ease of optimization, while the relaxed forms can weaken the robustness of the original objective;
	ii) The penalty terms also require careful tuning of the penalty weights due to the intrinsic conflicts between ERM and OOD objectives.
	Consequently, these compromises could \emph{easily lead to suboptimal performance} of either the ERM or OOD objective.
	To address these issues,
	we introduce a multi-objective optimization (MOO) perspective to understand the OOD optimization process,
	and propose a new optimization scheme called \oursfull (\ours).
	\ours improves the robustness of OOD objectives by cooperatively optimizing with other OOD objectives, thereby bridging the gaps caused by the relaxations.
	Then \ours approaches a Pareto optimal solution that trades off the ERM and OOD objectives properly.
	Extensive experiments on challenging benchmarks, \wilds, show that \ours  alleviates the compromises and yields top OOD performances.\footnote{Code is available at \url{https://github.com/LFhase/PAIR}.}
\end{abstract}

\section{Introduction}
The interplay between optimization and generalization is crucial to the success of deep learning~\citep{memorization,arora_dlt,beyond_2l,ntk,featlearn}.
Guided by empirical risk minimization (ERM)~\citep{erm}, simple optimization algorithms can find uneventful descent paths in the non-convex loss landscape of deep neural networks~\citep{hess_analysis}.
However, when distribution shifts are present, the optimization is usually biased by spurious signals such that the learned models can fail dramatically in
\emph{Out-of-Distribution} (OOD) data~\citep{camel_example,covid19_application,shortcut_dl}.
Therefore, overcoming the OOD generalization challenge has drawn much attention recently. Most efforts are devoted to proposing better \emph{optimization objectives}~\citep{causal_transfer,iga,andmask,vrex,env_inference,jtt,sd,ib-irm,clove,fish,fishr,graph_ood} that regularize the gradient signals produced by ERM,
while it has been long neglected that the interplay between optimization and generalization under distribution shifts has already changed its nature.

In fact, the \emph{optimization process} of the OOD objectives turns out to be substantially more challenging than ERM.
There are often compromises when applying the OOD objectives in practice.
Due to the optimization difficulty,
many OOD objectives have to be relaxed as penalty terms of ERM in practice~\citep{irmv1,iga,vrex,sd,ib-irm,fishr},
but the relaxed formulations can behave very differently from the original objective~\citep{irm_aistats} (Fig.~\ref{fig:aistats_fail}).
Moreover, due to the generally existing gradient conflicts between ERM and OOD objectives (Fig.~\ref{fig:grad_conflict}), trade-offs among ERM and OOD performance during the optimization are often needed.
\citet{groupdro,gen_reweighted} suggest that ERM performance usually needs to be sacrificed for better OOD generalization.
On the other hand,
it usually requires careful tuning of the OOD penalty hyperparameters \citep{rfc} (Fig.~\ref{fig:sweep_acc}),
which however either weakens the power of OOD objectives or makes them too strong that prevents models from capturing all desirable patterns.
Consequently, using the traditional optimization wisdom to train and select models can \emph{easily lead to suboptimal performance} of either ERM or OOD objectives.
Most OOD objectives remain struggling with distribution shifts or even underperform ERM~\citep{domainbed,wilds}.
This phenomenon calls for a better understanding of the optimization in OOD generalization, and raises a challenging question:
\begin{myquotation}
	\vskip -0.05in
	\emph{How can one obtain a desired OOD solution under the conflicts of ERM and OOD objectives?}
	\vskip -0.05in
\end{myquotation}

\begin{figure}[t]
	\vskip -0.3in
	\subfigure[Theoretical failure case.]{
		\includegraphics[width=0.28\textwidth]{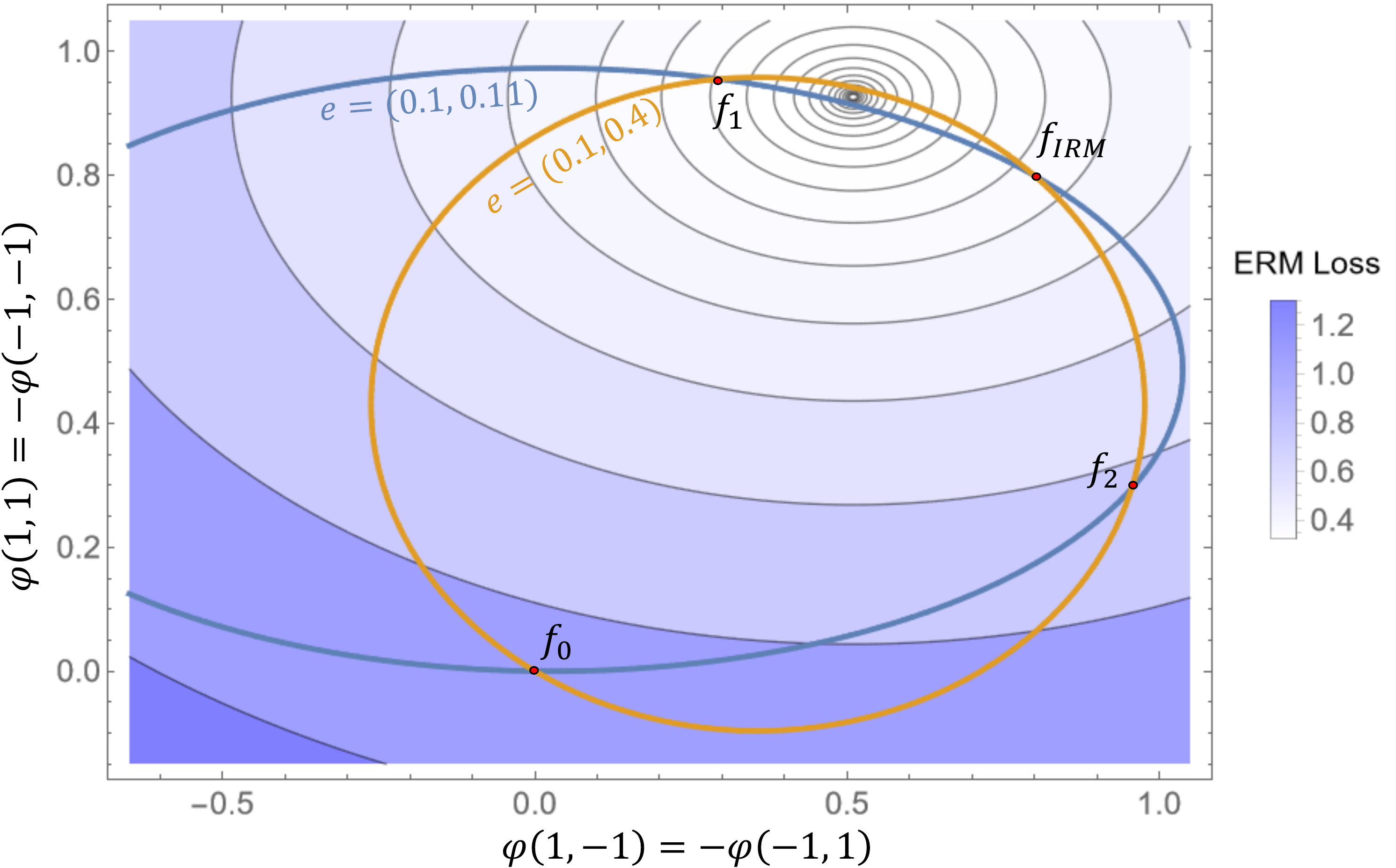}
		\label{fig:aistats_fail}
	}
	\subfigure[Gradient conflicts.]{
		\includegraphics[width=0.221\textwidth]{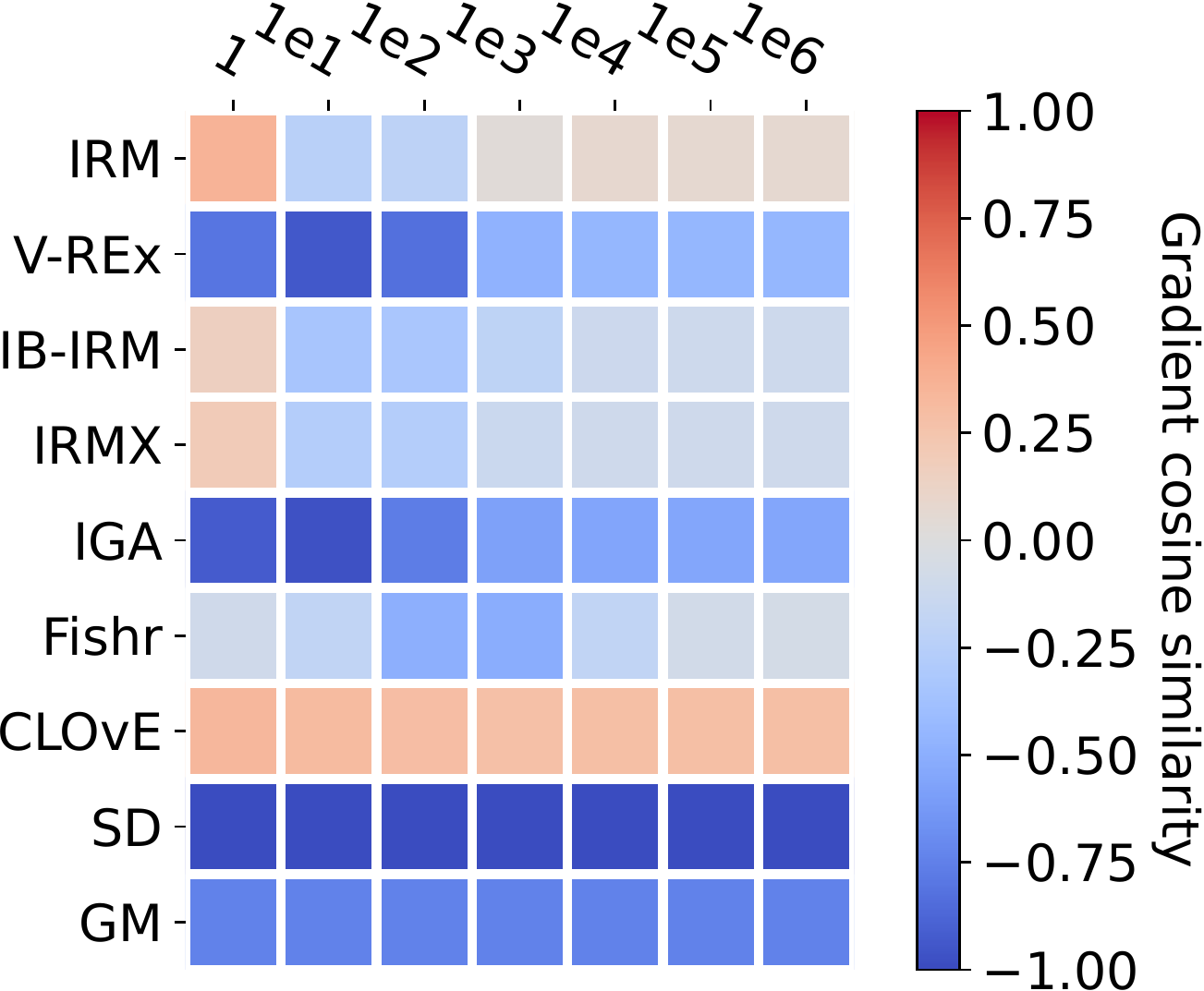}
		\label{fig:grad_conflict}
	}
	\subfigure[Unreliable opt. scheme.]{
		\includegraphics[width=0.23\textwidth]{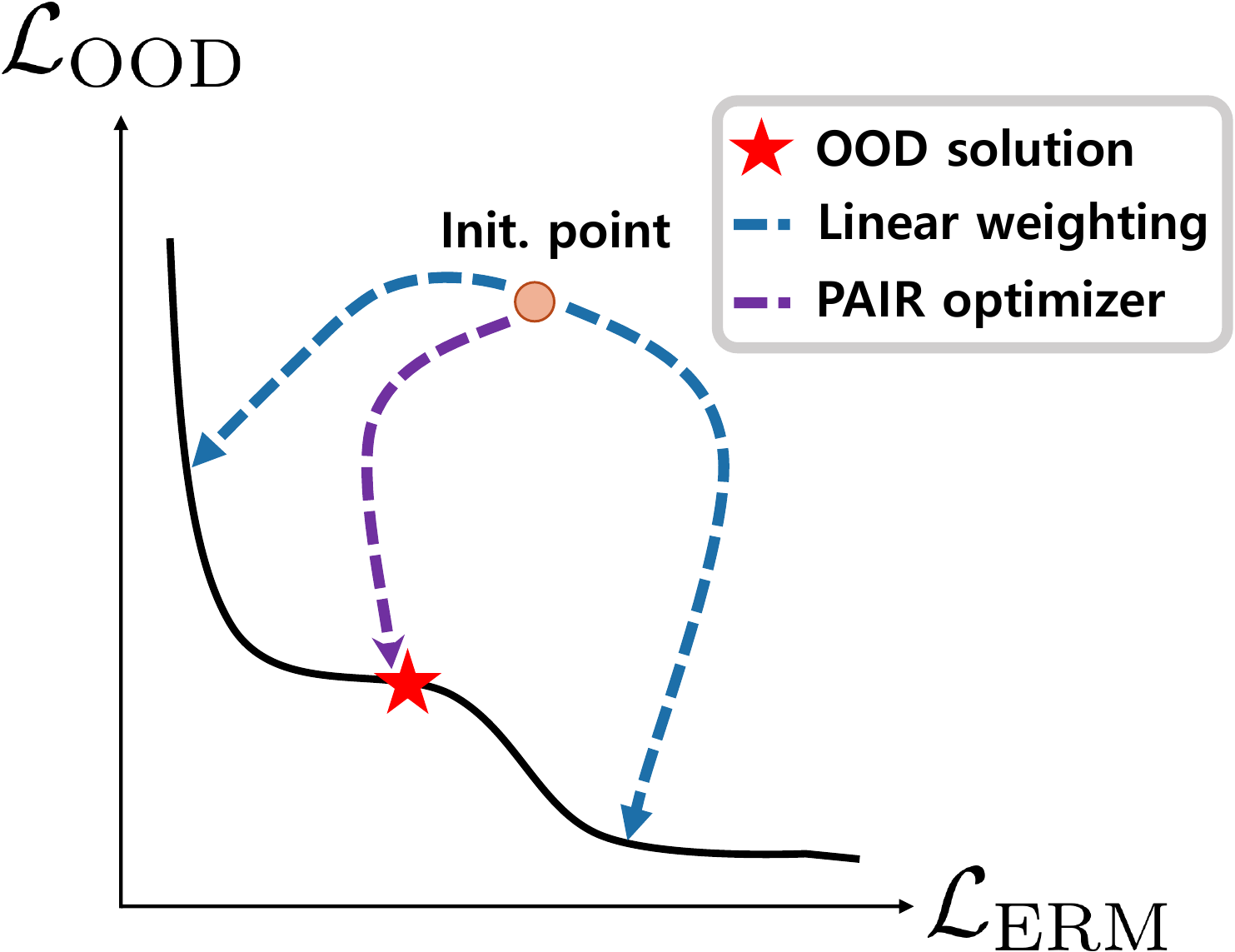}
		\label{fig:bad_scalar}
	}
	\subfigure[Exhaustive tuning.]{
		\includegraphics[width=0.203\textwidth]{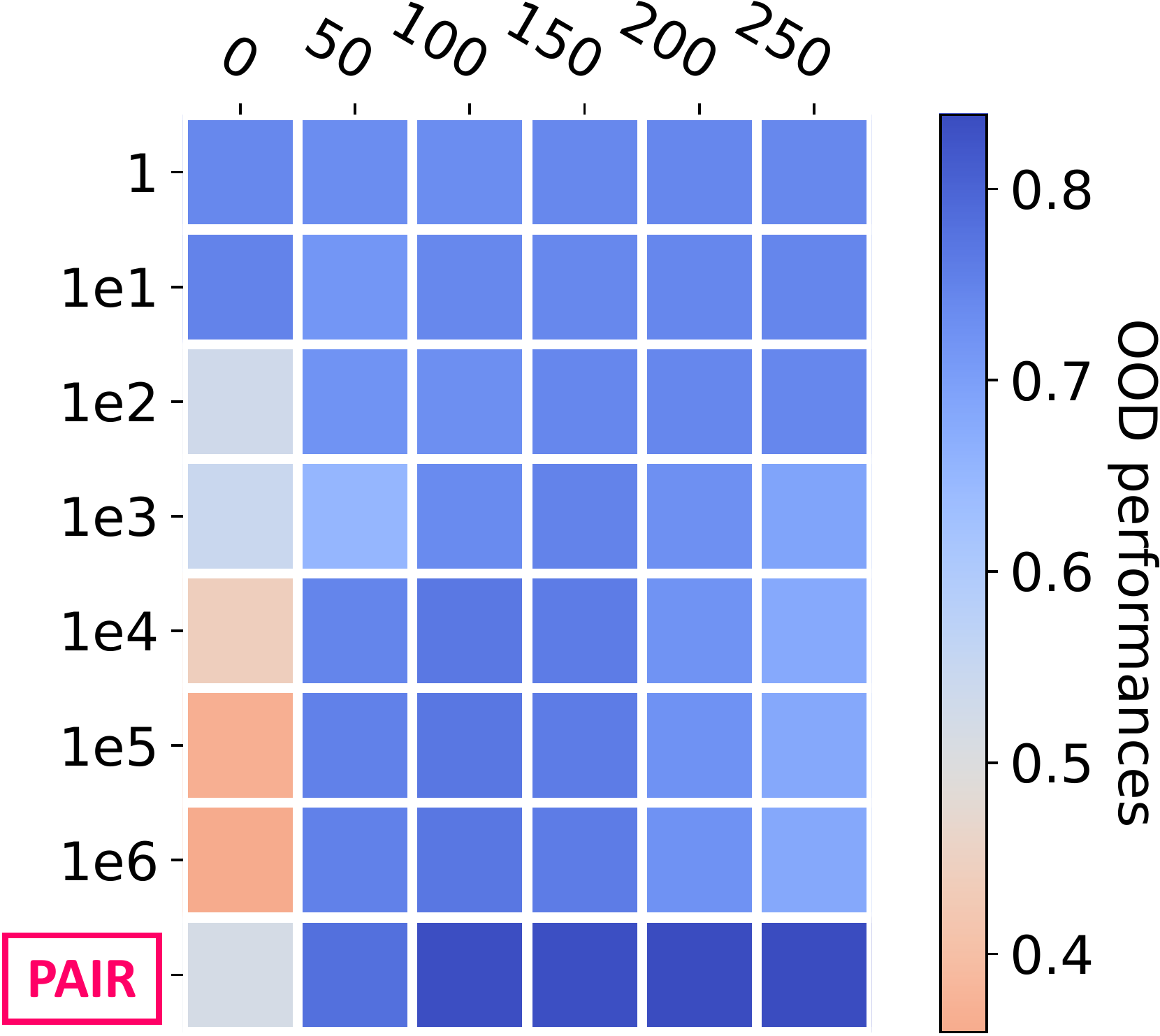}
		\label{fig:sweep_acc}
	}
	\vskip -0.1in
	\caption{Optimization issues in OOD algorithms. (a) OOD objectives such as \irm usually require several relaxations for the ease of optimization, which however introduces huge gaps. The ellipsoids denote solutions that satisfy the invariance constraints of practical \irm variant \irml. When optimized with ERM, \irml prefers $f_1$ instead of $f_\irm$ (The predictor produced by \irm).\! (b) The gradient conflicts between ERM and OOD objectives generally exist for different objectives at different penalty weights ($x$-axis).
		(c) The typically used linear weighting scheme to combine ERM and OOD objectives requires careful tuning of the weights to approach the solution. However, the scheme cannot reach any solutions in the non-convex part of the Pareto front.
		In contrast, \ours finds an adaptive descent direction under gradient conflicts that leads to the desired solution. (d) Due to the optimization dilemma, the best OOD performance (e.g., \irml w.r.t. a modified \cmnist from Sec.~\ref{sec:experiments}) usually requires exhaustive tuning of hyperparameters ($y$-axis: penalty weights; $x$-axis: pretraining epochs), while \ours robustly yields top performances by resolving the compromises.}
	\label{fig:ood_opt_issue}
	\vskip -0.2in
\end{figure}

To answer this question, we take a multi-objective optimization (MOO) perspective of the OOD optimization. Specifically, using the representative OOD objective IRM~\citep{irmv1} as an example, we find that the failures in OOD optimization can be attributed to two issues.
The first one is the compromised robustness of OOD objectives due to the relaxation in the practical variants.
In fact, it can even eliminate
the desired invariant solution from the Pareto front w.r.t. the ERM and the OOD penalty (Fig.~\ref{fig:aistats_fail}). Therefore, merely optimizing the ERM and the relaxed OOD penalty can hardly approach the desired solution.
On the other hand, when the Pareto front contains the desired solution, as shown in Fig.~\ref{fig:bad_scalar}, using the traditional linear weighting scheme that linearly reweights the ERM and OOD objectives, cannot reach the solution if it lies in the non-convex part of the front~\citep{bv_convex}. Even when the OOD solution is reachable (i.e., lies in the convex part), it still requires careful tuning of the OOD penalty weights to approach the solution, as shown in Fig.~\ref{fig:sweep_acc}.

To address these issues, we propose a new optimization scheme for OOD generalization, called \oursfull (\ours), which includes a new optimizer (\ourso) and a new model selection criteria (\ourss).
Owing to the MOO formulation, \ourso allows for cooperative optimization with other OOD objectives to improve the robustness of practical OOD objectives.
Despite the huge gaps between \irml and \irm, we show that incorporating VREx~\citep{vrex} into \irml provably recovers the causal invariance~\citep{irmv1} for some group of problem instances (Sec.~\ref{sec:irm_pair_sol}).
When given robust OOD objectives, \ourso finds a descent path with adaptive penalty weights, which leads to a Pareto optimal solution that trades off ERM and OOD performance properly (Sec.~\ref{sec:pair_solution}).
In addition, the MOO analysis also motivates \ourss, which facilitates the OOD model selection by considering the trade-offs between ERM and OOD objectives.

We conducted extensive experiments on challenging OOD benchmarks. Empirical results show that \ourso successfully alleviates the objective conflicts and empowers \irml to achieve high performance in $6$ datasets from \wilds~\citep{wilds}.\!
\ourss effectively improves the performance of selected OOD models up to $10\%$ across $3$ datasets from \dobed~\citep{domainbed}, demonstrating the significance of considering the ERM and OOD trade-offs in optimization.
\vskip -0.05in

\vskip -0.15in
\section{Background and related work}
\label{sec:related_work}
\vskip -0.05in
We first briefly introduce the background of our work (more details are given in Appendix~\ref{sec:related_work_appdx}.

\textbf{Problem setup.} 
The problem of OOD generalization typically considers
a supervised learning setting based on the data $\dataset=\{\dataset^e\}_{e\in\envall}$
collected from multiple causally related environments $\envall$,
where a subset of samples $\dataset^e=\{X^e_i,Y^e_i\}$ from a single environment $e\in\envall$
are drawn independently from an identical distribution $\sP^e$~\citep{inv_principle}.
Given the data from training environments $\{\dataset^e\}_{e\in\envtrain}$,
the goal of OOD generalization is to find a predictor $f:\gX\rightarrow\gY$
that generalizes well to all (unseen) environments, i.e., to minimize
$\max_{e\in\envall}\gL_e(f)$, where $\gL_e$ is the empirical risk under environment $e$.
The predictor $f=w\circ\varphi$ is usually composed of a featurizer $\varphi:\gX\rightarrow\gZ$ that learns to extract useful features, and a classifier $w:\gZ\rightarrow\gY$ that makes predictions from the extracted features.

\textbf{Existing solutions to OOD generalization.}
There exists a rich literature aiming to overcome the OOD generalization challenge, which usually appear as \emph{additional regularizations} of ERM~\citep{erm}. 
\citet{DANN,CORAL,deep_DG,DouCKG19} regularize the learned features to be \textbf{domain-invariant}. 
\citet{dro,DRSL,groupdro} regularize the models to be \textbf{robust to mild distributional perturbations} of the training distributions, and~\citet{zhang2021causaladv,jtt,cnc,lisa} improve the robustness with additional assumptions.
Recently there is increasing interest in adopting the causality theory~\citep{causality,towards_causality} and introducing the \textbf{causal invariance} to representation learning~\citep{inv_principle,irmv1,env_inference,andmask,clove,ib-irm}. \revision{They require $\varphi$ to learn causally invariant representations such that a predictor $w$ acting on $\varphi$ minimizes the risks of all the environments simultaneously. This work focuses on resolving the optimization issue in learning the causal invariance.}
In addition, \citet{iga,vrex,fish,fishr} implement the invariance by encouraging \textbf{agreements} at various levels across environments.
\revision{However, they mostly focus on developing better objectives while neglecting the optimization process of the objectives.}

\textbf{Optimization dilemma in OOD generalization.}
Along with the development of OOD methods, the OOD optimization dilemma is gradually perceived in the literature.
\citet{domainbed} find it hard to select a proper model in OOD generalization given ERM performance at different environments.
\citet{groupdro,gen_reweighted} find the ERM performance needs to be sacrificed for  satisfactory OOD performance.
Some initial trials are proposed.
\citet{pareto_da} use the guidance of the data from similar distributions with the test environment in MOO to resolve gradient conflicts and achieve better performance in domain adaption.
\citet{rfc} propose to construct diverse initializations for stabilizing OOD performance under the dilemma. 
\revision{However, why there exists such a dilemma in OOD generalization and whether we can resolve it remain elusive.}

\textbf{Multi-Objective Optimization (MOO).}
MOO considers solving $m$ objectives w.r.t. $\{\gL_i\}_{i=1}^m$ losses,
i.e., $\min_\theta\mL(\theta)\!=\!(\gL_1(\theta),...,\gL_m(\theta))^T$~\citep{moo_book}. 
A solution $\theta$ dominates another $\bar{\theta}$, i.e., $\mL(\theta)\preceq\mL(\bar{\theta})$, if $\gL_i(\theta)\leq\gL_i(\bar{\theta})$ for all $i$ and $\mL(\theta)\neq\mL(\bar{\theta})$. 
A solution $\theta^*$ is called \textbf{Pareto optimal} if no other $\theta$ dominates $\theta^*$. The set of Pareto optimal solutions is called Pareto set ($\gP$) and its image is called \textbf{Pareto front}.
In practice, it is usual that one cannot find a global optimal solution for all objectives, hence Pareto optimal solutions are of particular value. 
\revision{Although MOO has been widely applied to improving multi-task learning~\citep{mtl_moo},
it remains underexplored on how to model and mitigate objective conflicts in OOD generalization from the MOO perspective.}
\vskip -0.05in

\vskip -0.10in
\section{Optimization Challenges in IRM and its Effective Fix}
\label{sec:irm_usecase}
\vskip -0.05in
\revision{This work focus on one of the most representative OOD objectives in learning the causal invariance \!--\! IRM, %
\textbf{}to show how we can understand and mitigate the optimization dilemma through the MOO lens.}
\vskip -0.05in
\subsection{Drawbacks of IRM in Practice}
\label{sec:irm_drawback}
\vskip -0.05in
We first introduce the basics of IRM and the drawbacks of its practical variants, and leave theoretical details in Appendix~\ref{sec:irm_failure_appdx}.
Specifically, the IRM framework approaches OOD generalization by finding an invariant representation $\varphi$,
such that there exists a classifier acting on $\varphi$ that is
simultaneously optimal in $\envtrain$.
Hence, IRM leads to a challenging bi-level optimization problem as
\vskip -0.1in
\begin{equation}
	\label{eq:irm}
	\min_{w,\varphi}  \ \sum_{e\in\envtrain}\gL_e(w\circ\varphi),                            
			\text{s.t.}
	\ w\in\argmin_{\bar{w}:\gZ\rightarrow\gY} \gL_e(\bar{w}\circ\varphi),\ \forall e\in\envtrain.
\end{equation}
\vskip -0.1in
Given the training environments $\envtrain$, and functional spaces $\gW$ for $w$ and $\varPhi$ for $\varphi$,
predictors $f=w\circ\varphi$ satisfying the constraint in Eq.~\ref{eq:irm} are called invariant predictors,
denoted as $\gI(\envtrain)$. 
When solving for invariant predictors,
characterizing $\gI(\envtrain)$ is particularly difficult in practice,
hence it is natural to restrict $\gW$ to be the space of linear functions on $\gZ=\R^d$~\citep{ntk}.
Furthermore, \citet{irmv1} argue that linear classifiers actually do not provide additional representation power than \emph{scalar} classifiers, i.e., $d=1,\gW=\gS=\R^1$. The scalar restriction elicits a practical variant \irms as
\vskip -0.15in
\begin{equation}
	\label{eq:irms}
		\min_{\varphi} \ \sum_{e\in\envtrain}\gL_e(\varphi),
		\text{s.t.}
		\ \nabla_{w|w=1}\gL_e(w\cdot\varphi)=0,\ \forall e\in\envtrain.
\end{equation}
\vskip -0.1in

Since Eq.~\ref{eq:irms} remains a constrained programming. \citet{irmv1} further introduce a soften-constrained variant, called \irml, as the following
\vskip -0.15in
\begin{equation}
	\label{eq:irml}
	\min_{\varphi}  \sum_{e\in\envtrain}\gL_e(\varphi)+\lambda|\nabla_{w|w=1}\gL_e(w\cdot\varphi)|^2.
\end{equation}
\vskip -0.1in

\textbf{Theoretical failure of practical IRM variants.}
Although the practical variants seem promising,
the relaxations introduce huge gaps between IRM and the practical variants, so that both \irms and \irml can fail to capture the invariance~\citep{irm_aistats}. 
The failure case is illustrated by the two-bit environment with $\alpha_e,\beta_e\in[0,1]$.
Each environment $\dataset_e=\{X^e,Y^e\}$ is generated following
\begin{equation}
	\label{eq:twobit_env}
	Y^e:=\rad(0.5),\ X^e:=(X^e_1,X^e_2),\ X_1^e:=Y^e{\cdot}\rad(\alpha_e),\ X^e_2:=Y^e{\cdot}\rad(\beta_e),
\end{equation}
where $\rad(\sigma)$ is a random variable taking value $-1$ with probability $\sigma$ and $+1$ with probability $1-\sigma$. 
Each environment is denoted as $\envalpha=\{(\alpha,\beta_e):0<\beta_e<1\}$ where $X^e_1$ is the invariant feature as $\alpha$ is fixed for different environment $e$, and $X^e_2$ is the spurious feature as $\beta_e$ varies across different $e$.

Let $\gI_\gS(\envtrain)$ denote the set of invariant predictors elicited by the relaxed constraint in \irms. It follows that $\gI (\envtrain)\subseteq \gI_\gS(\envtrain)$.
Consequently, there exist some undesired predictors but considered “invariant” by \irms and \irml.
For example, in $\envtrain\!=\!\{(0.1,0.11),(0.1,0.4)\}$, the solutions satisfying the constraint in \irms are those intersected points in Fig.~\ref{fig:aistats_fail} (The ellipsoids are the constraints).
Although $f_1,f_\irm\in\gI_\gS(\envtrain)$, both \irms and \irml prefer $f_1$ instead of $f_\irm$ (the predictor produced by \irm), as $f_1$ has the smallest ERM loss.
In fact, \citet{irm_aistats} show that the failure can happen in a wide range of environments even given \emph{infinite} amount of environments and samples,
demonstrating the huge gap between the practical and the original IRM variants.

\textbf{Empirical drawback of practical IRM variants.}
In addition,
the optimization of \irml introduces more challenges due to the conflicts between the \irml penalty and ERM objective. 
As shown in Fig.~\ref{fig:sweep_acc},
it often requires significant efforts to tune the hyperparameters such as pretraining epochs and penalty weights $\lambda$ in Eq.~\ref{eq:irml}. 
Otherwise, the \irml penalty could be either too weak to enforce the invariance as required by \irm, or too strong that prevents ERM from learning all desirable patterns.

\subsection{Pareto Optimization for IRM}
\label{sec:irm_pair_sol}
\vskip -0.05in
As shown that both \irms and \irml fail to properly trade off between ERM and \irm objectives, we switch to a new perspective, i.e., the lens of MOO, to understand the failures of \irm in practice.

\begin{wrapfigure}{r}{0.3\textwidth}
	\vspace{-0.2in}
	\includegraphics[width=0.3\textwidth]{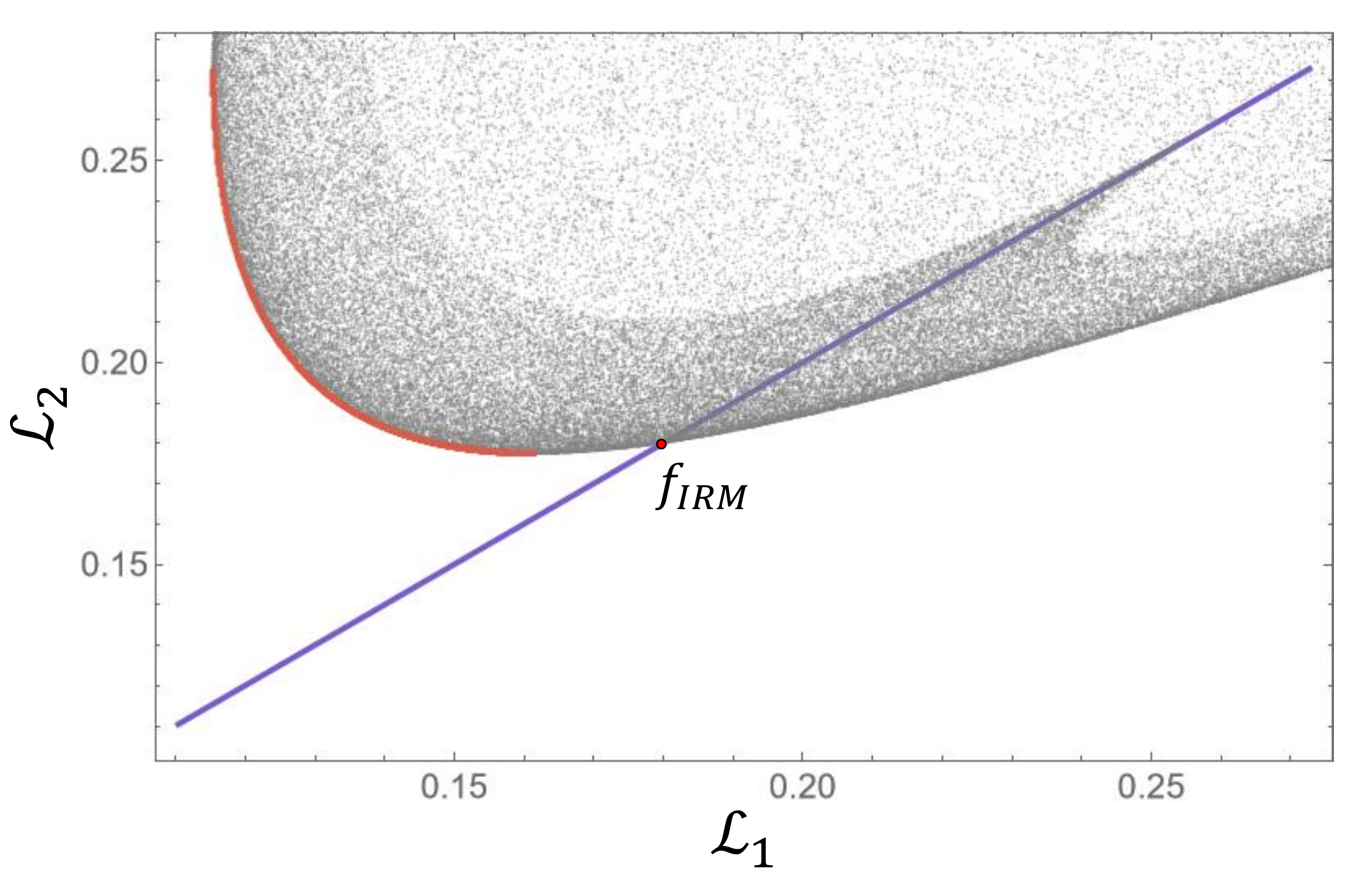}
	\vspace{-0.3in}
	\caption{Pareto front of ERM losses w.r.t. environments.}
	\label{fig:pareto_front_mse}
\end{wrapfigure}

\textbf{Understanding the \irm failures through the MOO perspective.}
To begin with, it is natural to reformulate the practical \irm problem (Eq.~\ref{eq:irml}) as an MOO problem: 
\vskip -0.15in
\begin{equation}
	\label{eq:irm_moo}
	\min_{\varphi} (\gL_\erm,\gL_\irm)^T,
\end{equation}
\vskip -0.15in
where $\gL_\erm =\frac{1}{|\envtrain|}\sum_{e\in\envtrain} \gL_e$ denotes the ERM loss, and
$\gL_\irm=\sum_e |\nabla_{w|w=1}\gL_e(w\cdot\varphi)|^2$ denotes the practical \irml loss. 
To understand the behaviors of solutions to Eq.~\ref{eq:irm_moo},
We visualize the Pareto front w.r.t. $\{\gL_e\}_{e\in\envtrain}$ using the 
previous failure case in Fig.~\ref{fig:aistats_fail}. 

Let $\gP(\gL_1(\theta),...,\gL_m(\theta))$ denote the set of Pareto optimal solutions w.r.t. $(\gL_1(\theta),...,\gL_m(\theta))$.
As shown in Fig.~\ref{fig:pareto_front_mse}, at first, we can find that $f_\irm\notin\gP(\gL_1, \gL_2)$.
In other words, solving any environment-reweighted ERM losses cannot obtain $f_\irm$.
Moreover, together with Fig.~\ref{fig:aistats_fail}, the failure remains even combined with the \irms or \irml, i.e., $f_\irm\notin\gP(\gL_1,\gL_2,\gL_\irm)$, hence $f_\irm\notin\gP(\gL_\erm,\gL_\irm)$, as $f_\irm$ is dominated by $f_1$.
Therefore, no matter how we carefully control the optimization process, we cannot obtain $f_\irm$ by merely minimizing the objectives in Eq.~\ref{eq:irm_moo}.
This is essentially because of the weakened OOD robustness of \irms and \irml caused by the relaxations.
Thus, choosing robust objectives for optimization is of great importance to OOD generalization. The ideal objectives should at least constitute a Pareto front that contains the desired OOD solution.

\begin{wrapfigure}{r}{0.33\textwidth}
	\vspace{-0.2in}
	\includegraphics[width=0.33\textwidth]{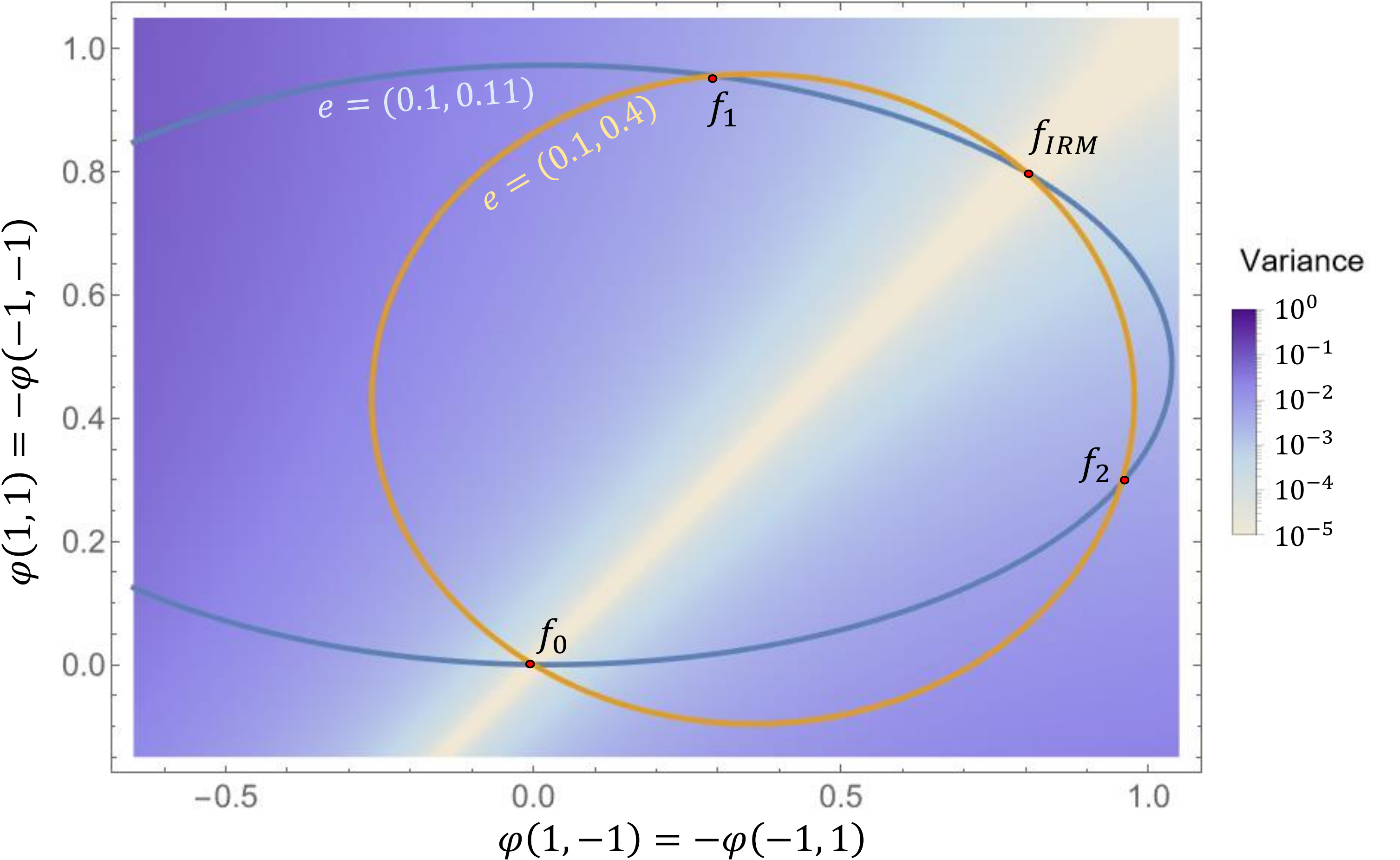}
	\vspace{-0.3in}
	\caption{Variance distribution.}
	\label{fig:aistats_var_mse}
\end{wrapfigure}
\vskip -0.05in
\textbf{Improving OOD robustness of practical IRM variants.} 
In pursuit of proper optimization objectives, 
we resort to the OOD extrapolation explanation of IRM~\citep{bottou_talk}.
A solution that is simultaneously optimal to all training environments (i.e., satisfying the original \irm constraints) is also a stationary point of ERM loss w.r.t. some OOD distribution:
\begin{equation}
	\vspace{-0.05in}
	\label{eq:irm_dro}
	\partial \gL_t /\partial  f_\irm = \mathbf{0},\ \gL_t \in \{\text{$\sum$}_{e\in\envtrain}\lambda_e\gL_e|\text{$\sum$}_{e\in\envtrain}\lambda_e=1\},
\end{equation}
\vskip -0.05in
where $\gL_t$ is the ERM loss under the OOD distribution.
Different from Distributionally Robust Optimization approaches~\citep{dro}, Eq.~\ref{eq:irm_dro} allows for some negative $\lambda_e$ and hence its solutions are expected to extrapolate better~\citep{bottou_talk}.

The previous failure case implies that both \irms and \irml fail in the extrapolation due to the relaxations,
nevertheless, we can introduce additional objectives to directly improve the OOD extrapolation power of the practical \irm variants.
To this end, we introduce the REx objective to \irml, 
which is derived by directly minimizing the worst case ERM loss under all OOD distributions up to a certain distance from the training distributions~\citep{vrex}. 
More formally, REx minimizes the worst case $\gL_t$ under an additional constraint of $\{\lambda_e\}_{e\in\envtrain}\geq -\beta$ in Eq.~\ref{eq:irm_dro}.
For the ease of optimization, they also propose an alternative objective as $\gL_\vrex := \var(\{\gL_e\}_{e\in\envtrain})$. 
In Fig.~\ref{fig:aistats_var_mse}, we plot the distribution of $\gL_\vrex$ in the 
the failure case of Fig.~\ref{fig:aistats_fail}. 
It can be found that, $f_\irm$ lies in the low variance region. 
Similarly, in Fig.~\ref{fig:pareto_front_mse}, the zero variance solutions (shown as the purple line at middle) points out the underlying $f_\irm$ beyond the Pareto front.
Therefore, incorporating $\gL_\vrex$ in Eq.~\ref{eq:irm_moo} can relocate $f_\irm$ into the Pareto front, which implies the desirable objectives as the following
\vskip -0.15in
\begin{equation}
	\label{eq:irmx_moo}
	(\text{\irmx})\qquad\qquad\qquad\qquad\min_{\varphi} (\gL_\erm,\gL_\irm,\gL_\vrex)^T.\qquad\qquad\qquad\qquad
\end{equation}
\vskip -0.1in
By resolving a large class of failure cases of \irms and \irml~\citep{irm_aistats}, solutions to Eq.~\ref{eq:irmx_moo} are more powerful than those to \irms and \irml
in OOD extrapolation.
In fact, we have
\begin{propositions}(Informal)\label{thm:recover_IRM_paper} Under Setting A (\citet{irm_aistats}), for all $\alpha\in (0,1)$, 
let $\mathcal{E} \coloneqq \{(\alpha, \beta_e): \beta_e \in (0,1)\}$ be any instance of the two-bit environment (Eq.~\ref{eq:twobit_env}), 
$\Ix$ denote the invariant predictors produced by Eq.~\ref{eq:irmx_moo},
it holds that $\Ix(\mathcal{E}) = \mathcal{I}(\mathcal{E})$.\footnote{Readers might be interested in the necessities of keeping \irml in the objectives. Proposition~\ref{thm:recover_IRM_paper}  considers only the ideal case, we additionally provide more empirical reasons in  Appendix~\ref{sec:discuss_pair_objs_appdx}; \revision{Our results can also be extended to multi-class following typical machine learning theory practice.}}
\end{propositions}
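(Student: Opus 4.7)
The plan is to prove the set equality $\Ix(\mathcal{E}) = \mathcal{I}(\mathcal{E})$ by showing both inclusions, relying on explicit closed-form expressions for the three objectives in the two-bit environment under Setting A (square loss, linear featurizers absorbed into scalar $w=1$). The first step is to parameterize any candidate predictor as $f(x) = w_1 x_1 + w_2 x_2$ and compute: (i) $\gL_e(f)$ as a quadratic polynomial in $(w_1, w_2)$ whose coefficients depend on $e$ only through $\beta_e$; (ii) the IRMv1 stationarity quantity $\nabla_{w|w=1}\gL_e(w\cdot\varphi)$; and (iii) the variance $\gL_\vrex = \var_e(\gL_e(f))$ across $\envtrain$. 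Once these three expressions are in hand, the problem reduces to characterizing the common zero set of $\gL_\irm$ and $\gL_\vrex$.

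For the forward inclusion $\mathcal{I}(\mathcal{E}) \subseteq \Ix(\mathcal{E})$, I would observe that any $f \in \mathcal{I}(\mathcal{E})$ must use only the invariant feature ($w_2 = 0$), because the Bayes-optimal classifier for each environment must coincide. Such an $f$ automatically satisfies the IRMv1 constraint (its gradient vanishes in every environment) and, crucially, yields $\gL_e(f)$ that depends only on $\alpha$, hence is constant across $e$, so $\gL_\vrex(f) = 0$. Because $f$ simultaneously zeroes both penalties while no predictor can dominate it on ERM alone (inside the $w_2 = 0$ slice it coincides with the joint minimizer), it belongs to the Pareto set of the IRMX multi-objective, which yields $f \in \Ix(\mathcal{E})$.

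The reverse inclusion is the main obstacle: I need to show that any $f \in \Ix(\mathcal{E})$ has $w_2 = 0$. The difficulty is that $\Ix$ is defined through Pareto optimality, so I cannot directly impose zero penalties; instead, I would use the structure exhibited in Fig.~\ref{fig:pareto_front_mse} and Fig.~\ref{fig:aistats_var_mse}, namely that within $\gI_\gS(\envtrain)$ the only points achieving zero variance lie on the $w_2 = 0$ axis. Concretely, I plan to show that the IRMv1 stationarity condition reduces, under Setting A, to a relation of the form $w_2 \cdot h(w_1, w_2, \beta_e) = 0$ for each $e$, and that simultaneously imposing constant-across-$e$ loss forces either $w_2 = 0$ or a degenerate $w_1$ that is dominated by the invariant solution on ERM (hence excluded from the Pareto front). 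The core algebraic lemma is therefore: for $\envtrain$ containing at least two environments with distinct $\beta_e$, the system $\{\nabla_{w|w=1}\gL_e(w \cdot \varphi) = 0\}_e$ together with $\gL_{e_1}(f) = \gL_{e_2}(f)$ has $w_2 = 0$ as its only non-degenerate solution; once this lemma is in place, the match with $\mathcal{I}(\mathcal{E})$ (which by the IRM characterization coincides with the $w_2 = 0$ family optimal for $\alpha$) completes the proof. I expect the calculation of coefficients and the verification of non-degeneracy across the two conditions to be the bookkeeping-heavy step that must be handled carefully; the multi-class extension mentioned in the footnote should follow by applying the same argument to each one-vs-rest sub-problem.
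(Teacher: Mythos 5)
Your algebraic core --- writing the candidate predictor as a two-parameter family, computing $\gL_e$, the \irml stationarity quantity, and the cross-environment loss variance explicitly in terms of $(\alpha,\beta_e)$, and showing that stationarity together with loss equality across environments forces the spurious coordinate to vanish --- is essentially the route the paper takes: its proof in Appendix~\ref{proof:recover_IRM} works with the four function values of an odd predictor on $\{-1,1\}^2$, which on this domain is exactly your $w_1x_1+w_2x_2$ family, and its conditions (C1)/(C2) are your ``equal losses'' and ``\irms stationarity'' system. The genuine gap is in how you interpret $\Ix$. The paper's formal statement reads $\Ix(\mathcal{E})$ as the set of predictors satisfying \emph{both} invariance constraints exactly, i.e.\ $\gI_\gS(\mathcal{E})\cap\{f:\gL_{e_1}(f)=\gL_{e_2}(f),\ \forall e_1,e_2\in\mathcal{E}\}$, \emph{not} as the Pareto set of $(\gL_\erm,\gL_\irm,\gL_\vrex)$. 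Under your Pareto reading the claimed equality is false in both directions: the pure ERM minimizer (the predictor $f_1$ of Fig.~\ref{fig:aistats_fail}) is Pareto optimal but not in $\mathcal{I}(\mathcal{E})$; and conversely the zero predictor $f_0$ belongs to $\mathcal{I}(\mathcal{E})$ (it is one of only two invariant predictors, by the characterization of \citet{irm_aistats} that the paper's proof relies on) yet for $\alpha\neq\tfrac{1}{2}$ it is dominated by $f_\irm$, which attains the same zero penalties with strictly smaller ERM loss, so $f_0$ is not Pareto optimal. Your plan to discard ``degenerate'' solutions by ERM dominance would therefore wrongly exclude $f_0$ and break the set equality. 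The repair is exactly what the paper does: work with the joint zero-penalty constraint sets throughout. With that reading your key lemma is the right one, but note that it must conclude more than $w_2=0$: since $\mathcal{I}(\mathcal{E})$ is not the whole $w_2=0$ family but only $\{f_0,f_\irm\}$, you also need the \irms condition $\Ede[f(X)^2]=\Ede[f(X)Y]$ to pin down $w_1\in\{0,\,1-2\alpha\}$ once $w_2=0$, which it does.

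A second, smaller omission: Setting A covers both the square and the logistic loss, and the paper runs a separate computation for $\llog$ (where loss equality becomes an equal-conditional-entropy condition and stationarity becomes $\Ede[-f(X)Y/(1+e^{f(X)Y})]=0$); your proposal handles only $\lsq$, so the logistic branch would still need to be supplied.
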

\vskip -0.05in
The formal description and proof of Proposition~\ref{thm:recover_IRM_paper} are given in Appendix~\ref{proof:recover_IRM}. Proposition~\ref{thm:recover_IRM_paper} implies that Eq.~\ref{eq:irmx_moo} are the ideal objectives for optimization. 
However, Eq.~\ref{eq:irmx_moo} can even add up the difficulty of OOD penalty tunning.
It introduces one more penalty to the overall objective that makes the Pareto front more complicated for the linear weighting scheme to find the desired solution.

\textbf{Pareto optimization for \irmx.}
Ideally, the set of Pareto optimal solutions is small such that each $f\in\gP(\gL_\erm,\gL_\irm,\gL_\vrex)$ satisfies the invariance constraints of \irml and \vrex, i.e., $\gL_\irm=0$ and $\gL_\vrex=0$, and with a minimal $\gL_\erm$, 
thereby eliciting the desired OOD solutions.
However, the ideal constraints might be too strong to be achieved when there are noises among invariant features and labels~\citep{dro_relax,cs_irm}, \revision{which will future enlarge the set of Pareto optimal solutions.} 
Therefore, it is natural to relax the constraints as $\gL_\irm\leq \epsilon_\irm$ and $\gL_\vrex\leq \epsilon_\vrex$. 
When $\epsilon_\irm\rightarrow 0,\epsilon_\vrex\rightarrow 0$, it recovers the ideal invariance.
To obtain a desired solution under these circumstances, the optimization process is expected to meet the following two necessities:
\vspace{-0.1in}
\begin{enumerate}[label=(\roman*).,wide, labelwidth=!]
\item The additional objective in \irmx can make the Pareto front more complicated such that the desired solutions are more likely to appear in the non-convex part, which are however not reachable by the linear weighting scheme~\citep{bv_convex}. 
Therefore, the optimizer needs to be able to reach any Pareto optimal solutions in the front, \revision{e.g., MGDA algorithms~\citep{mgda}.}\footnote{\revision{We leave more sophisticated Pareto front exploration methods~\citep{nonlinear_scalar,pareto_exp_mtl} to future investigation.}}
\item When both $\epsilon_\irm, \epsilon_\vrex> 0$, there can be multiple Pareto optimal solutions while there are few desired OOD solutions. Hence a preference of ERM and OOD objectives is usually needed. 
As the optimality of each OOD objective usually appears as a necessary condition for satisfactory OOD performance, the preferences for OOD objectives are expected to be higher.
\end{enumerate}

Given the two requirements, we leverage a preference-aware MOO solver to solve \irmx for the desired Pareto optimal solution~\citep{epo}. 
We summarize the overall solution as \oursfull (\ours).
When assigning a high preference to $\gL_\irm$ and $\gL_\vrex$ in \irmx (Eq.~\ref{eq:irmx_moo}),
\ours approaches a Pareto optimal solution that minimizes the OOD losses while not sacrificing the ERM performance too much, and has good OOD performance, shown as in Table.~\ref{tab:cmnist}.

\begin{figure}[t]
	\vskip -0.3in
	\subfigure[Ground truth.]{
		\includegraphics[width=0.23\textwidth]{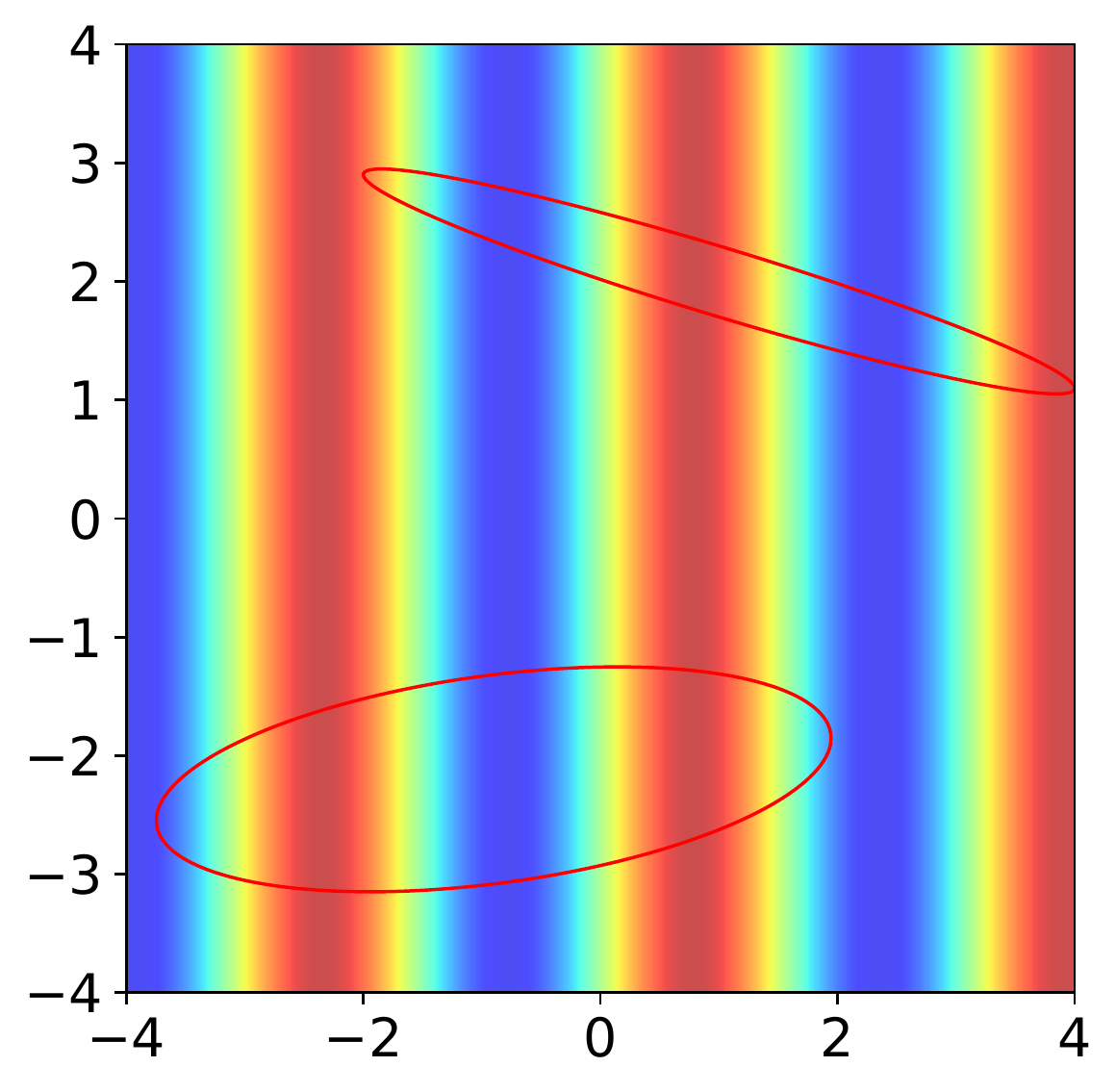}
		\label{fig:linextra_ground}
	}
	\subfigure[\irml.]{
		\includegraphics[width=0.23\textwidth]{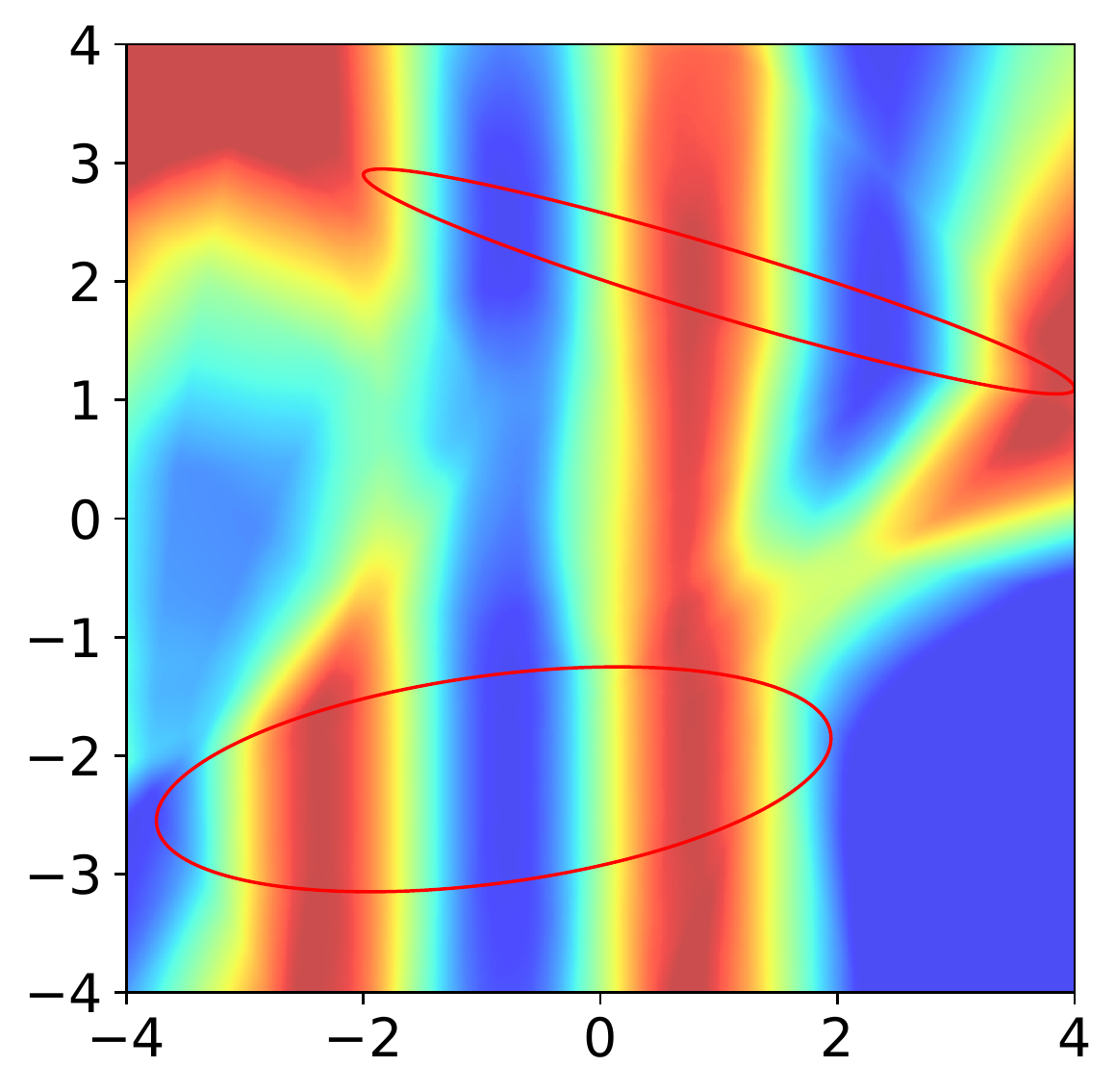}
		\label{fig:linextra_irm}
	}
	\subfigure[VREx.]{
		\includegraphics[width=0.23\textwidth]{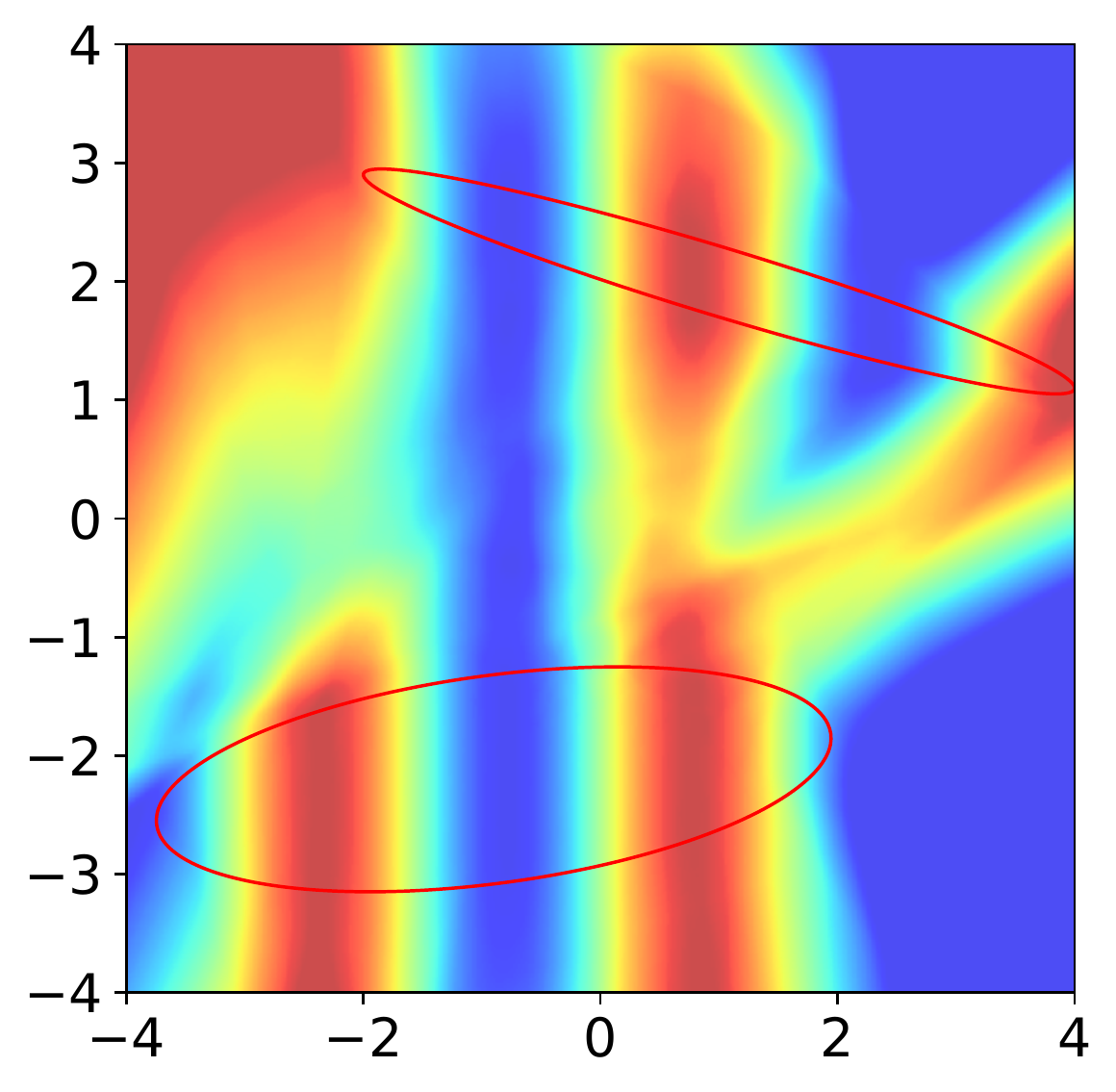}
		\label{fig:linextra_vrex}
	}
	\subfigure[\ourst.]{
		\includegraphics[width=0.23\textwidth]{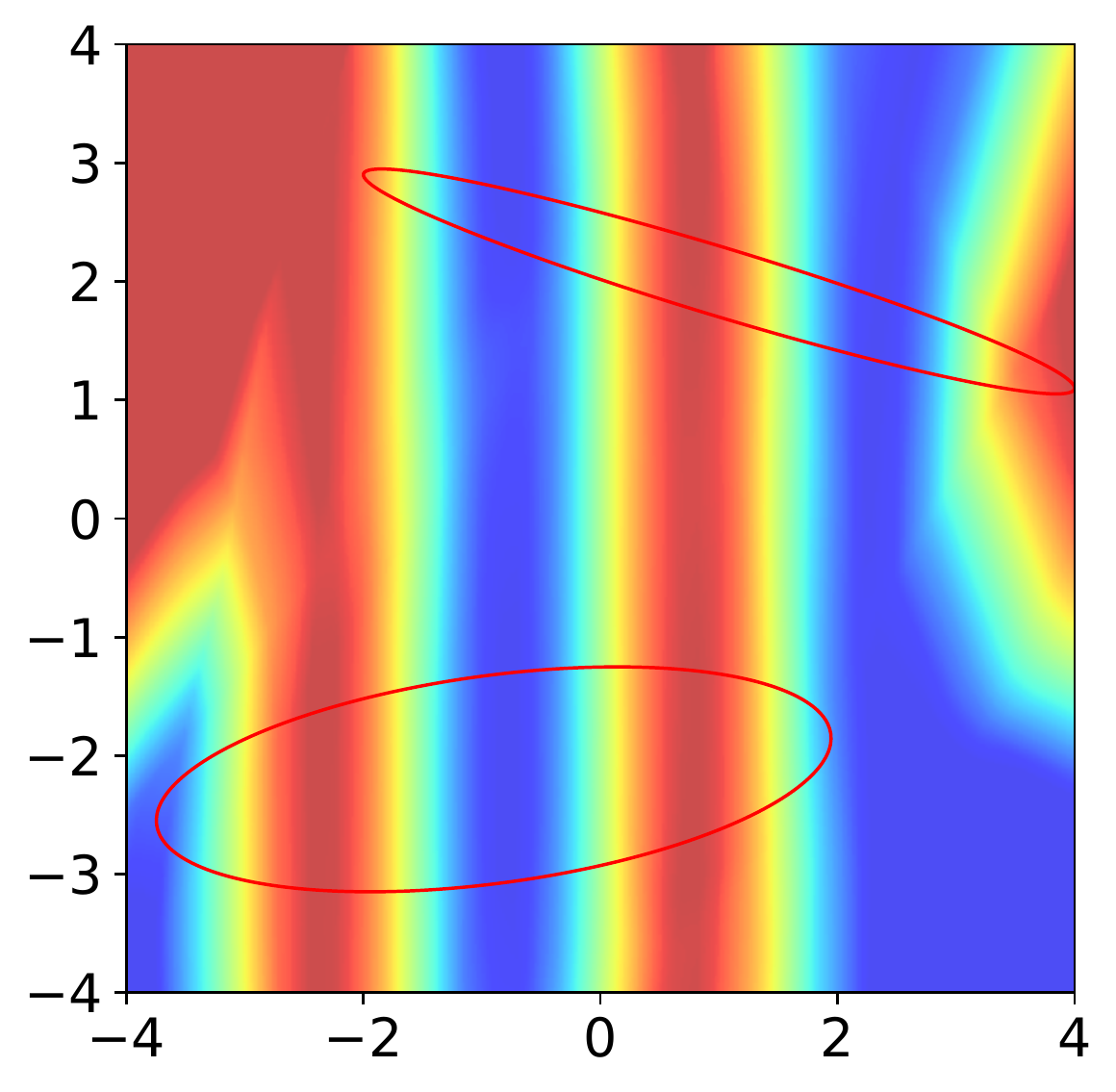}
		\label{fig:linextra_pair}
	}
\vskip -0.1in
	\caption{Recovery of causal invariance. The causal invariance (Definition.~\ref{def:causal_inv}) requires the model predictions to be independent of the spurious features within the overlapped invariant features. In this example, intuitively it requires the colored belts to be perpendicular to $x$-axis within $[-2,2]$. It can be found that \ours succeeds out of \irml and \vrex in recovering the causal invariance.}
	\label{fig:extrapolation}
	\vskip -0.2in
\end{figure}

\vskip -0.05in
\subsection{Recovery of Causal Invariance}
\label{sec:causal_extrapolate}
\vskip -0.05in
To better understand how \ours bridges the gaps between the practical and original \irm objectives, we examine to what extent \ours can recover the causal invariance specified by~\citet{irmv1} in a more difficult case. More formally, the causal invariance is defined as follows.
\begin{definition}(Causal Invariance)\label{def:causal_inv}
Given a predictor $f:=w\circ\varphi$, the representation produced by the featurizer $\varphi$ is invariant over $\envall$ if and only if for all $e_1,e_2\in\envall$, it holds that 
\[
\mathbb{E}_{\gD_{e_1}}[Y|\varphi(X)=z]=\mathbb{E}_{\gD_{e_2}}[Y|\varphi(X)=z],
\]
for all $z\in\gZ_\varphi^{e_1}\cap\gZ_\varphi^{e_2}$, 
where 
$\gZ_\varphi^e:=\{\varphi(X)|(X,Y)\in\text{supp}(\gD_e)\}$.
\vspace{-0.05in}
\end{definition}

Following Definition~\ref{def:causal_inv}, we construct a regression problem. As shown in Fig.~\ref{fig:extrapolation}, $Y =\sin(X_1)+1$ is solely determined by $X_1$, i.e., the values of the $x$-axis, while $X_2$ is the values of $y$-axis and does not influence the values of $Y$. 
Different colors indicate different values of $Y$.
In this problem, the invariant representation $\varphi$ should only take $X_1$ and discard $X_2$. 
We sampled two training environments as denoted by the ellipsoids colored in red, among which the overlapped region of the invariant features $X_1$ is $[-2,2]$. Hence the prediction produced by the invariant predictor following Definition~\ref{def:causal_inv} is expected to be independent of $X_2$. In other words, the plotted belts need to be perpendicular to the $x$-axis within the overlapped invariant features $[-2,2]$. More details can be found in Appendix~\ref{sec:causal_extrapolate_appdx}.

We plot predictions with the best MSE losses of \irml and \vrex in Fig.~\ref{fig:linextra_irm} and Fig.~\ref{fig:linextra_vrex}, respectively. 
Although both \irml and \vrex fail to achieve the causal invariance as expected, perhaps surprisingly, \ours almost recovers the causal invariance, as shown in Fig.~\ref{fig:linextra_pair}.

\section{Pareto Invariant Risk Minimization}
\label{sec:pair_solution}
\vskip -0.05in
The success of \ours in empowering unrobust \irml to achieve the causal invariance of \irm demonstrates the significance of considering the trade-offs between ERM and OOD objectives in the optimization.
In the next, we will summarize our findings and elaborate \ours in more details.
\vskip -0.05in
\subsection{Methodology Outcomes}
\label{sec:pair_method}
\vskip -0.05in
\textbf{Key takeaways from the \irm example.}
To summarize, the failures of OOD optimization can be attributed to:
i) Using unrobust objectives for optimization; ii) Using unreliable scheme to approach the desired solution.
Nevertheless, we can improve the robustness of the OOD objectives by introducing  additional guidance such that the desired solution is relocated in the Pareto front w.r.t. the new objectives.
After obtaining robust objectives to optimize, we then leverage a preference-aware MOO solver to find the Pareto optimal solutions that maximally satisfy the invariance constraints by assigning the OOD objective a higher preference while being aware of retaining ERM performance.

More formally, let $f_\ood$ be the desired OOD solution and $\gF$ be the functional class of $f_\ood$, a group of OOD objectives $\vL_\ood=\{\gL_\ood^i\}_{i=1}^m$ are robust if their composite objective $\vL_\ood$ satisfies that
\begin{equation}\label{eq:robust_obj}
	\vspace{-0.03in}
	\vL_\ood(f_\ood) \preceq \vL_\ood (f), \forall f\neq f_\ood\in\gF,
\end{equation}
When given a robust OOD objective $\vL_\ood$, our target is to solve the following MOO problem
\begin{equation}
	\vspace{-0.02in}
	\label{eq:pair_moo}
	\text{$\min$}_f (\gL_\erm,\vL_\ood)^T,
\end{equation}
where $\vL_\ood$ corresponds to an $\bm{\epsilon}_\ood$-relaxed invariance constraint as $\vL_\ood(f_\ood)=\bm{\epsilon}_\ood\preceq\vL_\ood(f),\forall f\neq f_\ood\in\gF$.
Denote the $\epsilon_\inv$ as empirical loss of using the underlying invariant features to predict labels, then the optimal values of the desired OOD solution w.r.t. Eq.~\ref{eq:pair_moo} are $(\epsilon_\inv,\bm{\epsilon}_\ood)^T=(\gL_\erm(f_\ood),\vL_\ood(f_\ood))^T$, which corresponds to an ideal preference (or OOD preference) for the objectives, that is $\vp_\ood=(\epsilon^{-1}_\inv,\bm{\epsilon}^{-1}_\ood)^T$.
The optimal solutions of Eq.~\ref{eq:pair_moo} that satisfy the exact Pareto optimality, i.e.,$\text{$\vp_\ood$}_i\gL_i=\text{$\vp_\ood$}_j\gL_j,\forall \gL_i,\gL_j \in\vL$, are expected to recover $f_\ood$ in Eq.~\ref{eq:robust_obj}.

\textbf{\ourso as an optimizer for OOD generalization.}
To find a desired Pareto optimal solution specified by $\vp_\ood$,
we adopt a 2-stage optimization scheme, which consists of two phases, i.e., the ``descent'' and the ``balance'' phase, following the common practice~\citep{domainbed}.

In the ``descent'' phase, we train the model with the ERM loss such that it approaches the Pareto front by merely minimizing $\gL_\erm$ first. 
Then, in the ``balance'' phase, we adjust the solution to maximally satisfy the exact Pareto optimality specified by $\vp_\ood$. 
We adopt the off-the-shelf preference-aware MOO solver EPO~\citep{epo} \revision{to find the desired Pareto optimal solutions with the given $\vp_\ood$. 
Specifically, at each step, $\vp_\ood$ implies a descent direction $\vg_b$ that maximally increase the satisfaction to the exact Pareto optimality.
Then, we will find an objective weight vector to reweight both the ERM and OOD objectives (thus their gradients), such that the reweighted descent direction $\vg_\text{dsc}$ has a maximum angle with $\vg_b$. }
Meanwhile, to avoid divergence, $\vg_\text{dsc}$ also needs to guarantee that it has a positive angle with the objective that diverges from the preferred direction most. 
We provide  detailed descriptions and theoretical discussions of the algorithm in Appendix~\ref{sec:pair_optimizer_appdx}.

\textbf{\ourss for OOD model selection.}
Model selection in OOD generalization is known to be challenging, as the validation data used to evaluate the model performance is no longer necessarily identically distributed to the test data~\citep{domainbed}.
The IRM example also implies that the traditional model selection methods that merely depends on the validation performance, i.e., the ERM performance, can easily compromise OOD performance due to the conflicts with ERM objective, especially when the validation set has a large gap between the test set (cf. CMNIST in Table~\ref{tab:dobed_select}).

When given no additional assumption, we posit that the OOD loss values can serve as a proxy for OOD performance, which essentially corresponds to the \emph{underlying prior} assumed in the OOD methods. 
It naturally resembles \ours optimization therefore motivates \ourss.
\ourss jointly considers and trades off the ERM and OOD performance in model selection, and select models that maximally satisfy the exact Pareto optimality. We leave more details and discussions in Appendix~\ref{sec:pair_selection_appdx}.
\vskip -0.05in
\subsection{Theoretical Discussions and Practical Considerations}
\label{sec:pair_discussion}
\vskip -0.05in
Essentially both \ourso and \ourss aim to solve Eq.~\ref{eq:pair_moo} up to the exact Pareto optimality. However, in practice, the ideal preference is usually unknown and the exact Pareto optimality could be too strict to achieve . 
Therefore,
we develop an $\epsilon$-approximated formulation of Eq.~\ref{eq:pair_moo}, i.e.,$|\text{$\vp_\ood$}_i\gL_i-\text{$\vp_\ood$}_j\gL_j|\leq \epsilon,\forall \gL_i,\gL_j \in\vL$, which might be of independent interest.
Built upon the relaxed variant, \revision{we analyze the OOD performance of \ours in terms of sample complexity, given the empirical risk and imprecise OOD preference,} and prove the following Theorem in Appendix~\ref{proof:pair_theory_appdx}.

\begin{theorem}(Informal)\label{thm:pair_theory}
	For $\gamma\in(0,1)$ and any $\epsilon,\delta>0$, if $\gF$ is a finite hypothesis class, both ERM and OOD losses are bounded above, let $I_\ourst$ be the index of all  losses, $p_{\max} \coloneqq \max_{i\in I_\ourst} {p_i}$ and $L_\textup{max} \coloneqq \max_{i\in I_\ourst}{L_i}$, if the number of training samples {\small$\abs{D} \geq (32L_\textup{max}^2p_{\max}^2/\delta^2)\log[{2(m+1)\abs{\mathcal{F}}/\gamma}]$}, then with probability at least $1 - \gamma$, 
\ourso and \ourss yield an $\epsilon$-approximated solution of $f_\ood$.
\end{theorem}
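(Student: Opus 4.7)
The plan is to combine a uniform-convergence argument for the finite hypothesis class $\gF$ with the $\epsilon$-approximated Pareto-optimality certificate that \ourso and \ourss produce on empirical losses, and then transport that certificate from the empirical to the population level. I would work throughout with the relaxed formulation introduced at the start of Section~\ref{sec:pair_discussion}: a predictor $f$ is $\epsilon$-approximated with respect to the ideal preference $\vp_\ood$ when $|[\vp_\ood]_i\gL_i(f)-[\vp_\ood]_j\gL_j(f)|\leq\epsilon$ for all $i,j\in I_\ourst$. The total $\epsilon$ budget is partitioned into a statistical component (absorbed by a sample-size lower bound) and an algorithmic component (delivered by the EPO-style preference solver), so the final high-probability guarantee is manufactured by matching both budgets.

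For the statistical part, I would invoke Hoeffding's inequality. Since every $\gL_i$ is bounded above by $L_\textup{max}$, for any fixed $f\in\gF$ and any $t>0$, $\Pr\{|\hat\gL_i(f)-\gL_i(f)|>t\}\leq 2\exp(-2\abs{D}t^2/L_\textup{max}^2)$. Union-bounding over all $(m+1)\abs{\gF}$ pairs (loss index, hypothesis) and choosing $t=\delta/(8p_{\max})$ makes the overall failure probability at most $\gamma$ precisely when $\abs{D}\geq(32L_\textup{max}^2p_{\max}^2/\delta^2)\log[2(m+1)\abs{\gF}/\gamma]$, which is exactly the stated threshold. On the resulting high-probability event $\mathcal{E}$, $|[\vp_\ood]_i(\hat\gL_i(f)-\gL_i(f))|\leq\delta/8$ uniformly over $f$ and $i$, so the triangle inequality yields $|[\vp_\ood]_i\gL_i(f)-[\vp_\ood]_j\gL_j(f)|\leq|[\vp_\ood]_i\hat\gL_i(f)-[\vp_\ood]_j\hat\gL_j(f)|+\delta/4$ for all $i,j$ and $f\in\gF$.

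For the algorithmic part, I would use the convergence guarantee of the EPO-based MOO solver underlying \ourso: the iterate it returns has empirical pairwise gap at most $\epsilon-\delta/4$ with respect to the (possibly imprecise) preference. Plugging this into the previous inequality gives the population pairwise gap bound $\epsilon$, so the returned predictor is an $\epsilon$-approximated solution of $f_\ood$. The same argument transfers verbatim to \ourss, which ranks candidate models by empirical pairwise gaps and therefore inherits the high-probability guarantee on the selected model.

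The main obstacle is the bookkeeping between the statistical slack and the algorithmic slack when the preference vector used in practice is imprecise. One must show that a small perturbation of $\vp_\ood$ propagates into a pairwise-gap perturbation of order $\bigO{p_{\max}L_\textup{max}}$, which can be absorbed by enlarging $\abs{D}$, and that the solver still delivers its empirical $\epsilon$-approximation guarantee under this perturbed preference. Once this splitting and absorption are made explicit via the relaxed formulation of Section~\ref{sec:pair_discussion}, the remainder is routine triangle-inequality and union-bound manipulation, plus invoking the Pareto-convergence result for the EPO-style solver.
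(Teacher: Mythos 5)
Your statistical skeleton matches the paper's: Hoeffding plus a union bound over the $(m+1)\abs{\gF}$ (loss, hypothesis) pairs, a representativeness radius of order $\delta/p_{\max}$, and a triangle inequality to move weighted pairwise gaps between empirical and population losses; your constants reproduce the stated sample-complexity threshold. The divergence --- and the gap --- is in what $\delta$ is and what it is used for. In the paper's proof (Theorem~\ref{thm:sample_comp}), $\delta$ is not a slack you get to subtract from the solver's target: it is the structural margin $\delta=\min_{f\in\gF,\,i\neq j}\absbig{\abs{p_i\gL_i(f)-p_j\gL_j(f)}-\epsilon}$, and the assumption $\delta>0$ (no hypothesis sits exactly on the constraint boundary) is what lets Lemma~\ref{lem:constraint_set} prove the \emph{two-sided identity} $\Peo=\Peeo$: a uniform deviation of $\delta/(4p_{\max})$ per loss cannot flip any hypothesis's membership in the constraint set in either direction. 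Your argument only establishes the one-sided statement that empirical feasibility with gap $\epsilon-\delta/4$ implies population feasibility with gap $\epsilon$, which is strictly weaker.

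The one-sidedness matters because the theorem's conclusion is not just feasibility. The object $\bm{f}^\epsilon_\ourst$ is the minimizer of $\gL_\erm$ \emph{subject to} the $\epsilon$-relaxed preference constraint, and the paper's guarantee is the sandwich $\gL_\erm(f^\epsilon_\ourst)\leq\gL_\erm(\hat{f}^\epsilon_\ourst)\leq\gL_\erm(f^\epsilon_\ourst)+\delta/(2p_{\max})$. To compare these two constrained minima you need both containments: $\Peo\subseteq\Peeo$ so that the population optimum is empirically feasible (hence $\hL_\erm(\hat{f}^\epsilon_\ourst)\leq\hL_\erm(f^\epsilon_\ourst)$), and $\Peeo\subseteq\Peo$ so that the empirical solution is population-feasible; $\delta/(4p_{\max})$-representativeness of $\gL_\erm$ then closes the chain. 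Your proposal never performs this ERM-loss comparison, so it does not deliver the optimality half of the conclusion. Separately, you outsource the empirical side to a finite-time ``$\epsilon-\delta/4$'' convergence guarantee for the EPO solver, which the paper neither proves nor needs --- it analyzes the exact empirical constrained minimizer $\hat{\bm{f}}^\epsilon_\ourst$ and leaves solver convergence to the separate, asymptotic Theorem~\ref{thm:pair_converge_appdx}. To repair your proof, replace the slack interpretation of $\delta$ with the margin definition, prove both containments of the constraint sets, and add the optimality sandwich.
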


\textbf{Practical Considerations.} Theorem~\ref{thm:pair_theory} establishes the theoretical guarantee of \ourso and \ourss given only an imprecise OOD preference.
Empirically, we find that assigning a large enough preference to the OOD objectives is generally sufficient for \ourso to find a desired OOD solution. 
For example, in most experiments \ourso yields a satisfactory OOD solution 
with a relative preference of $(1,1e10,1e12)$ for \erm, \irml, and \vrex.
For \ourss, we can estimate the empirical upper bounds of ($\epsilon_\inv$, $\bm{\epsilon}_\ood$) from the running history and adjust OOD preference to be slightly larger.
We provide a detailed discussion on the preference choice in practice in Appendix~\ref{sec:pair_discussion_pc_appdx}.

Besides, \revision{the requirement of whole network gradients} in \ourso can be a bottleneck when deployed to models that have a prohibitively large number of parameters~\citep{mtl_moo}. 
To this end, we can use only the gradients of classifier $w$ to solve for the objective weights, or freeze the featurizer after the ``descent'' phase to further reduce the resource requirement~\citep{rfc}.
We discuss more practical options and how \ours can be applied to other OOD methods in Appendix~\ref{sec:pair_discussion_appdx}.

\vskip -0.1in
\section{Experiments}
\label{sec:experiments}
\vskip -0.05in
We conduct extensive experiments on \cmnist, \wilds and \dobed to verify the effectiveness of \ourso and \ourss in finding a better OOD solution under objective conflicts.

\begin{wraptable}{r}{0.45\textwidth}
	\centering
	\small
	\vskip -0.35in
	\caption{\small OOD Performance on \cmnist}
	\label{tab:cmnist}
	 \resizebox{0.45\columnwidth}{!}{
		\begin{tabular}{ l r  r r}
			\toprule
			Method                & CMNIST             & CMNIST-m             & Avg.             \\
			\midrule
			ERM                           & $17.1 \pm 0.9$     & $73.3 \pm 0.9$       & $45.2$          \\
			IRMv1                 & $67.3 \pm 1.9$ & $76.8 \pm 3.2$ & $72.1$      \\
			V-REx                       & $68.6 \pm 0.7$  & $82.9 \pm 1.3$       & $75.8$      \\
			\irmx                     & $65.8 \pm 2.9$        & $81.6 \pm 2.0$         & $73.7$          \\\midrule
			\textbf{$\text{\ourso}_f$} & $68.6 \pm 0.9$ & $\mathbf{83.7}\pm 1.2$           & $76.2$ \\
			\textbf{$\text{\ourso}_\varphi$} & $68.6 \pm 0.8$ & $\mathbf{83.7}\pm 1.2$           & $76.2$ \\
			\textbf{$\text{\ourso}_w$} & $\mathbf{69.2 \pm 0.7}$ & $\mathbf{83.7}\pm 1.2$           & $\mathbf{76.5}$ \\
			\midrule
			Oracle \!           & $72.2 \pm 0.2$       & $86.5 \pm 0.3$     & $79.4$          \\
			Optimum               & $75$     & $90$                                   & $82.5$          \\
			Chance                & $50$                    & $50$                    & $50$            \\
			\bottomrule
		\end{tabular}}
	 	\vskip -0.1in
\end{wraptable}

\textbf{Proof of concept on \cmnist.} 
In Table~\ref{tab:cmnist}, we compare \ourso implemented with \irmx to other strong baselines on \cmnist (CMNIST) and the failure case variant~\citep{irm_aistats} (CMNIST-m). We follow the evaluation setup as in IRM~\citep{irmv1} and report the results from $10$ runs. We assign a relative preference $(1,1e10,1e12)$ to \erm, \irml and \vrex objectives, respectively. It can be found that \ourso significantly improves over \irml  across all environment settings, while \irmx using the linear weighting scheme performs worse than \ourso, confirming the effectiveness of \ourso. 
Interestingly, using only the gradients of the classifier $w$ in \ourso can yield competitive performance as that uses $f$ or $\varphi$, while the former has better scalability. Therefore, we will use $\text{\ourso}_w$ in the following experiments. 
More details are given in Appendix~\ref{sec:cmnist_appdx}.

\bgroup
\def\arraystretch{1.1}
\begin{table}[h]
    \vskip -0.2in
    \centering
    \caption{OOD generalization performances on \wilds benchmark.}
    \label{tab:wilds_results}
    \resizebox{\textwidth}{!}{
        \small
        \begin{tabular}{@{}{l}*{7}{c}@{}}    \toprule
                                    & {
            \textsc{Camelyon17}}    & {
            \textsc{CivilComments}} & {
            \textsc{FMoW}}          & {
            \textsc{iWildCam}}      & {
            \textsc{PovertyMap}}    & {
            \textsc{RxRx1}}         & {\multirow{2.5}{*}{\scriptsize{\textsc{Avg. Rank}($\downarrow$)$^\dagger$}}
            }                                                                                                                                                                                                              \\
            \cmidrule(lr){2-2} \cmidrule(lr){3-3} \cmidrule(lr){4-4} \cmidrule(lr){5-5} \cmidrule(lr){6-6} \cmidrule(lr){7-7}
                                    & Avg. acc. (\%)                                               & Worst acc. (\%)           & Worst acc. (\%)            & Macro F1         & Worst Pearson r   & Avg. acc. (\%)        \\
            \midrule
            ERM                     & $70.3$ \std{6.4}                                             & $56.0$ \std{3.6}          & $32.3$ \std{1.25}          & $30.8$ \std{1.3} & $0.45$ \std{0.06} & $29.9$  \std{0.4} & $4.50$ \\\hdashline[0.5pt/1pt]
            \rule{0pt}{10pt}CORAL                   & $59.5$ \std{7.7}                                             & $65.6$ \std{1.3}          & $31.7$ \std{1.24}          & $\mathbf{32.7}$ \std{0.2} & $0.44$ \std{0.07} & $28.4$  \std{0.3} & $5.50$ \\
            GroupDRO                & $68.4$ \std{7.3}                                             & $70.0$ \std{2.0}          & $30.8$ \std{0.81}          & $23.8$ \std{2.0} & $0.39$ \std{0.06} & $23.0$  \std{0.3} & $6.83$ \\
            IRMv1                     & $64.2$ \std{8.1}                                             & $66.3$ \std{2.1}          & $30.0$ \std{1.37}          & $15.1$ \std{4.9} & $0.43$ \std{0.07} & $8.2 $ \std{0.8}  & $7.67$ \\
            V-REx                   & $71.5$ \std{8.3}                                             & $64.9$ \std{1.2}          & $27.2$ \std{0.78}          & $27.6$ \std{0.7} & $0.40$ \std{0.06} & $7.5 $ \std{0.8}  & $7.00$ \\
            Fish                    & $74.3$ \std{7.7}                                             & $73.9$ \std{0.2}          & $34.6$ \std{0.51}          & $24.8$ \std{0.7} & $0.43$ \std{0.05} & $10.1$  \std{1.5} & $4.33$ \\
            LISA                    & $\mathbf{74.7}$ \std{6.1}                                             & $70.8$ \std{1.0}          & $33.5$ \std{0.70}          & $24.0$ \std{0.5} & $\mathbf{0.48}$ \std{0.07}  & $\mathbf{31.9}$  \std{0.8} & $2.67$ \\\hdashline[0.5pt/1pt]
            \rule{0pt}{10pt}\irmx                   & $67.0$ \std{6.6}                                             & $74.3$ \std{0.8}          & $33.7$ \std{0.78}          & $26.6$ \std{0.9} & $0.45$ \std{0.04} & $28.7$  \std{0.2} & $4.00$ \\
            \textbf{\ourso}        & $74.0$ \std{7.0}                                            & {$\mathbf{75.2}$ \std{0.7}} & $\mathbf{35.5}$ \std{1.13} & $27.9$ \std{0.7} & $0.47$ \std{0.06} & $28.8$  \std{0.1} & $\mathbf{2.17}$ \\
            \bottomrule
            \multicolumn{8}{l}{\rule{0pt}{8pt}$^\dagger$\text{\normalfont \small Averaged rank is reported because of the dataset heterogeneity. A lower rank is better.}  }
        \end{tabular}
    }
\vskip -0.05in
\end{table}
\egroup
\textbf{Can \ourso effectively find better OOD solutions under realistic distribution shifts?}
We evaluate \ourso implemented with \irmx on $6$ challenging datasets from \wilds benchmark~\citep{wilds}, and compare \ourso with other state-of-the-art OOD methods from different lines (Sec.~\ref{sec:related_work}), including CORAL~\citep{CORAL}, GroupDRO~\citep{groupdro}, IRM~\citep{irmv1}, V-REx~\citep{vrex}, Fish~\citep{fish} and an advanced importance-aware data augmentation method LISA~\citep{lisa}.
By default, we assign a relative preference  $(1,1e10,1e12)$ to \erm, \irml and \vrex objectives, respectively, and restrict the search space of the preference.
Our implementation and evaluation protocol follow the exact configuration as previous works~\citep{wilds,fish,lisa}. Details can be found in Appendix~\ref{sec:wilds_appdx}.

Table~\ref{tab:wilds_results} shows that \ourso substantially improves over \irml as well as \irmx and yields top-ranking OOD performance among all state-of-the-art methods across different realistic distribution shifts, demonstrating the effectiveness and significance of resolving the optimization dilemma in OOD generalization.
Besides, the advances of \ours over \irmx also confirm the effectiveness of \ourso in finding a better trade-off between ERM and OOD objectives.

\begin{figure}[t]
	\vskip -0.3in
	\subfigure[\ourst v.s. \irmx.]{
		\includegraphics[width=0.23\textwidth]{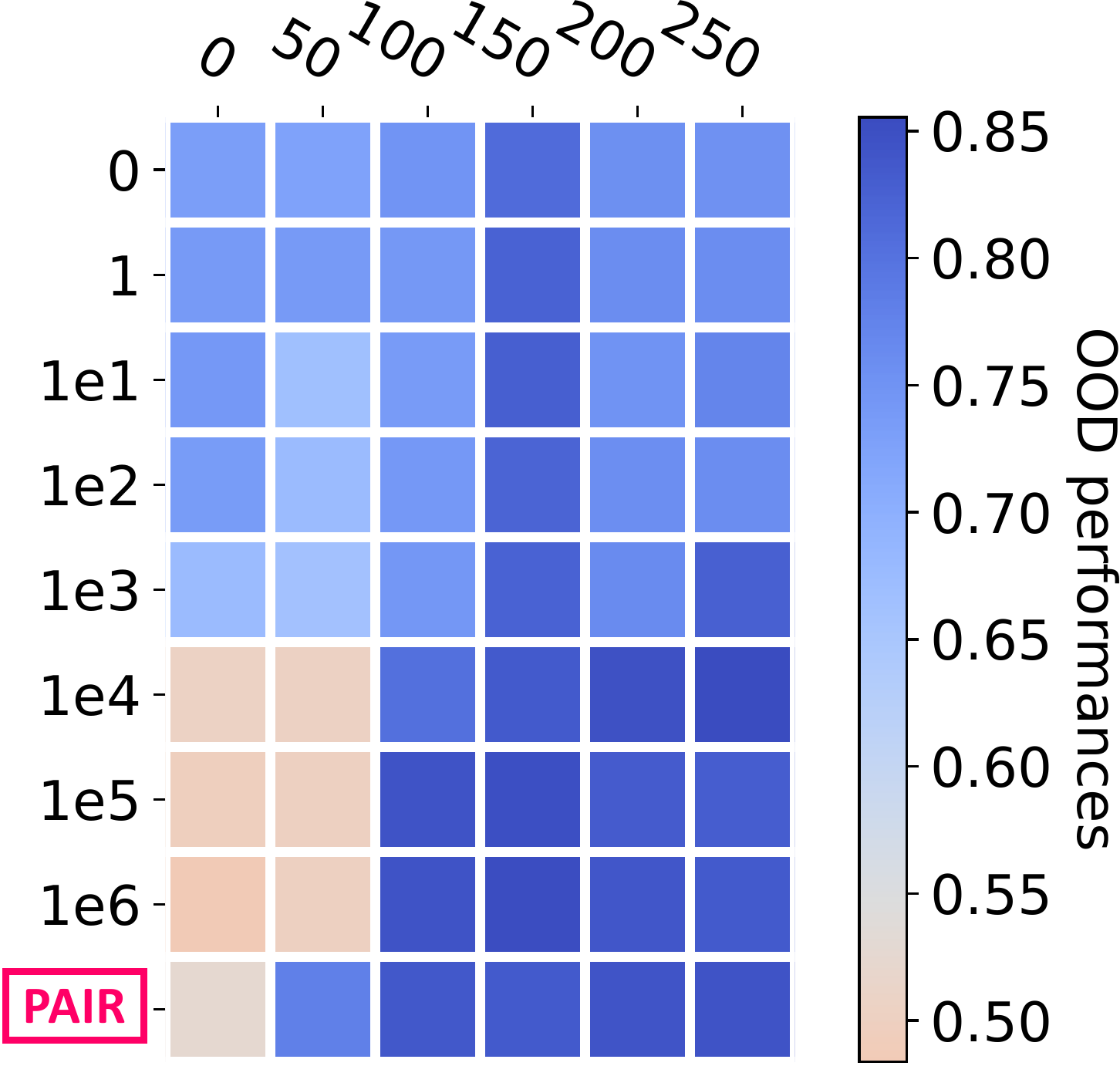}
		\label{fig:scalar}
	}
	\subfigure[Penalty weights trajectory.]{
		\includegraphics[width=0.26\textwidth]{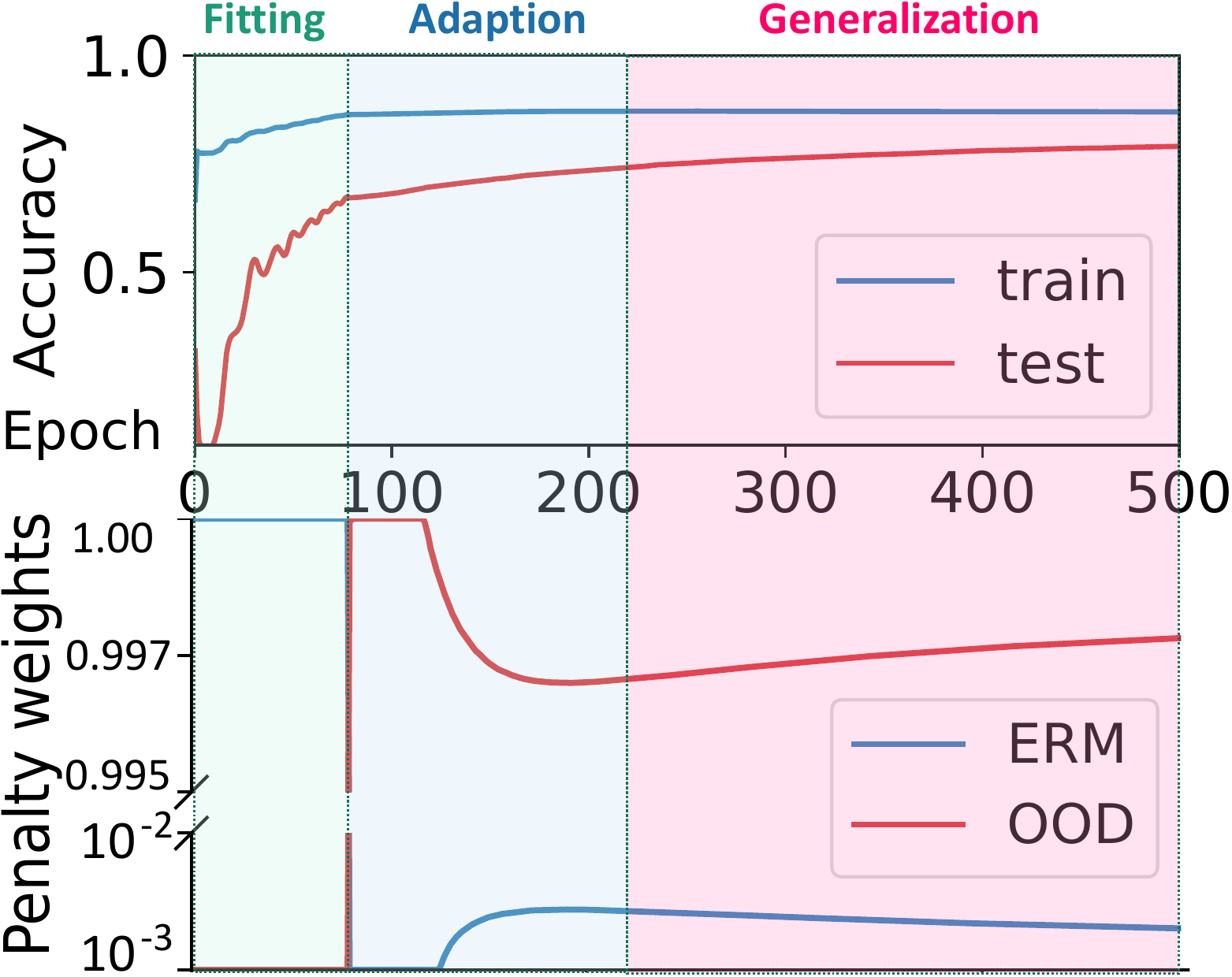}
		\label{fig:trajectory}
	}
	\subfigure[Normalized losses.]{
		\includegraphics[width=0.23\textwidth]{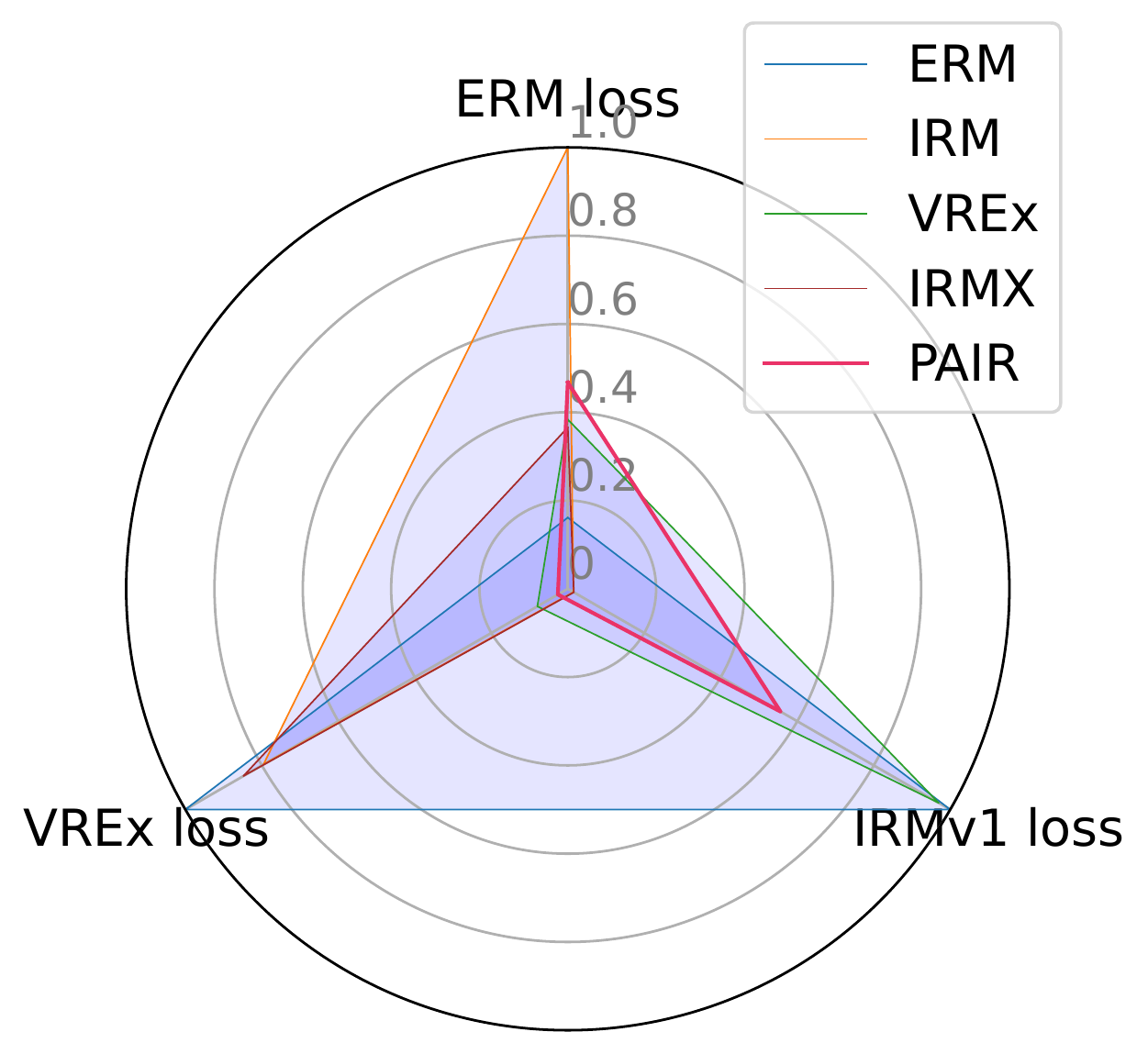}
		\label{fig:loss_radar}
	}
	\subfigure[Preference sensitivity.]{
		\includegraphics[width=0.215\textwidth]{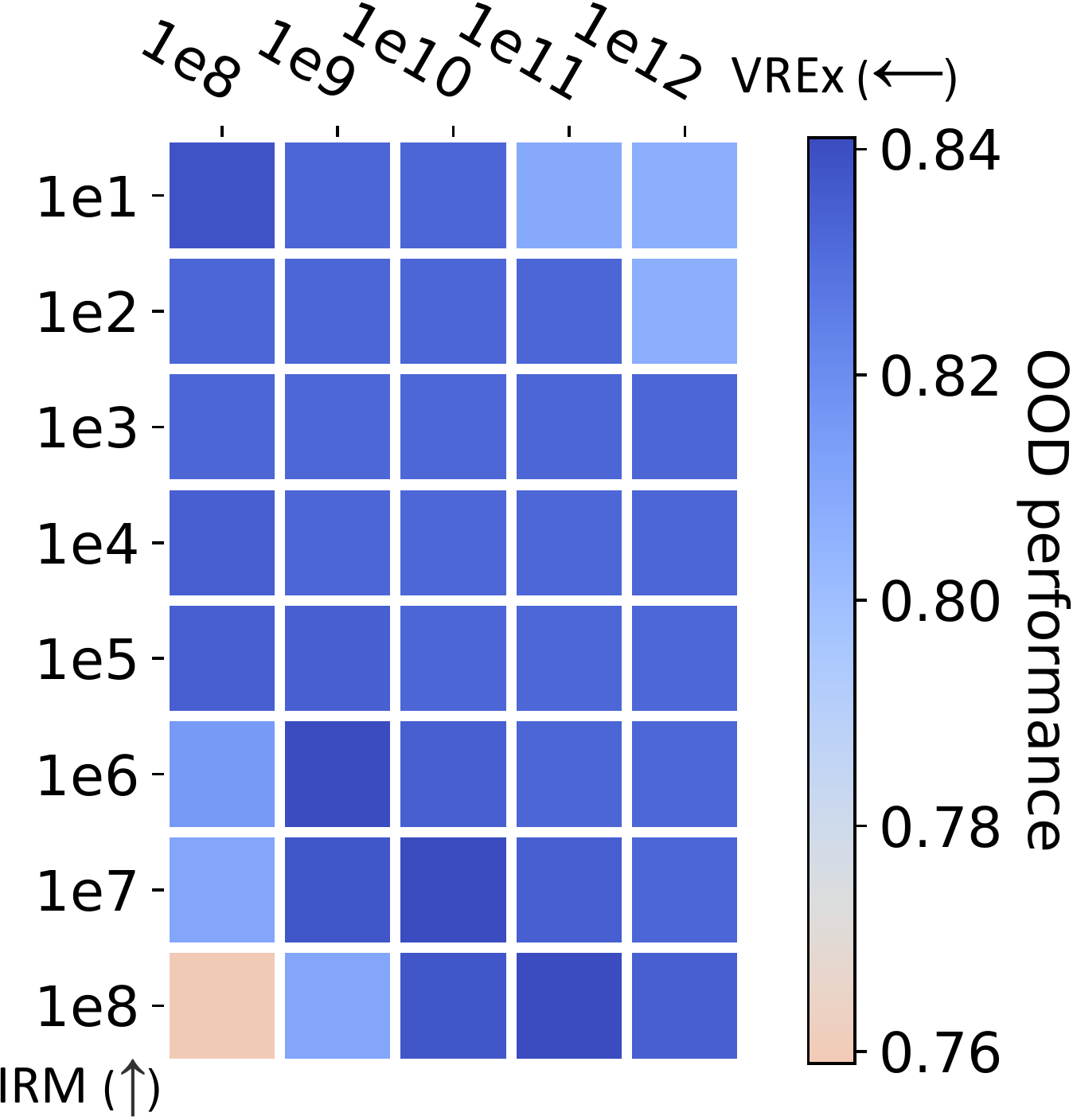}
		\label{fig:pc_sensitivity}
	}
\vskip -0.1in
	\caption{(a) Each point is the \emph{best} performed \irmx among corresponding pretraining epoch ($x$-axis), the \irml penalty weights ($y$-axis) and \emph{all} possible \vrex penalty weights. Despite the substantial tunning efforts, \irmx performs no better than \ours. That is because (b) \ours can adaptively adjust the penalty weights during the optimization process, and leads to a (c) Pareto optimal solution. (d) The robustness of \ourso to different preference choices enables it adaptable to various scenarios.}
	\label{fig:ablation_exp}
	\vskip -0.2in
\end{figure}

\textbf{How can \ourso mitigate the objective conflicts?} 
We conduct ablation studies with the modified \cmnist (More details and results are given in Appendix~\ref{sec:ablation_appdx}). 
First, as shown in Fig.~\ref{fig:scalar}, \ourso effectively finds a better solution than exhaustive tuning of penalty weights in \irmx. 
That is because \ours can adaptively adjust the penalty weights (Fig.~\ref{fig:trajectory}), which leads to a Pareto optimal solution that has lower OOD losses while not compromising the ERM loss too much (Fig.~\ref{fig:loss_radar}).
The other reason is that, \ourso is generally robust to different choices of preference choices (Fig.~\ref{fig:pc_sensitivity}), which makes it adaptable to various scenarios, confirming our discussions in Sec.~\ref{sec:pair_discussion}.

\textbf{Can \ourss effectively select better OOD solutions under realistic distribution shifts?}
To verify the effectiveness of \ourss, 
we apply \ourss to multiple representative OOD methods as discussed in Sec.~\ref{sec:related_work}, and examine whether \ourss can improve the model selections under rigorous hyperparameters tunning~\citep{domainbed} on \cmnist~\citep{irm_aistats}, \pacs~\citep{pacs} and \terra~\citep{camel_example}. Intuitively, models selected merely based on ERM performance tend to have a high preference or better performance on environments that have a similar distribution of the corresponding validation set, which will lead to higher variance of performances at different environments or a lower worst environment performance. 
Hence we use training-domain validation accuracy for \cmnist and \terra, and test-domain validation accuracy for \pacs to validate the existence of this issue under different scenarios~\citep{trainval_issue}. More details and results are provided in Appendix~\ref{sec:dobed_full_appdx}.

\begin{table}[h]
	\vskip -0.2in
	\caption{OOD generalization performances using \dobed evaluation protocol.}
	\label{tab:dobed_select}
	\resizebox{\textwidth}{!}{
		\small
		\begin{tabular}{@{}{l}*{15}{c}@{}}
			\toprule
			                         &                 & \multicolumn{4}{c}{{\small\cmnist$^\dagger$}} & \multicolumn{5}{c}{{\small \pacs$^\ddagger$}} & \multicolumn{5}{c}{{\small \terra$^\dagger$}}                                                                                                                                                                                                                         \\\cmidrule(lr){3-6}\cmidrule(lr){7-11}\cmidrule(lr){12-16}
			                         & \textbf{\ourss} & \textbf{+90\%}                                & \textbf{+80\%}                                & \textbf{10\%}                                 & \textbf{$\Delta$ wr.} & \textbf{A}      & \textbf{C}      & \textbf{P}      & \textbf{S}      & \textbf{$\Delta$ wr.} & \textbf{L100}   & \textbf{L38}    & \textbf{L43}    & \textbf{L46}    & \textbf{$\Delta$ wr.} \\ \midrule
			ERM                      &                 & $71.0$                                        & $\mathbf{73.4}$                               & $10.0$                                        &                       & $87.2$          & $79.5$          & $95.5$          & $76.9$          &                       & $46.7$          & $\mathbf{41.8}$ & $57.4$          & $39.7$          &                 \\  \hdashline[0.5pt/1pt]
			\rule{0pt}{10pt}DANN     &                 & $71.0$                                        & $\mathbf{73.4}$                               & $10.0$                                        &                       & $86.5$          & $79.9$          & $97.1$          & $75.3$          &                       & $46.1$          & $41.2$          & $56.7$          & $35.6$          &                       \\
			DANN                     & $\checkmark$    & $71.6$                                        & $73.3$                                        & $10.9$                                        & $+0.9$                & $87.0$          & $81.4$          & $96.8$          & $77.5$          & $+2.2$                & $43.1$          & $41.1$          & $55.2$          & $38.7$          & $+3.1$                \\\hdashline[0.5pt/1pt]
			\rule{0pt}{10pt}GroupDRO &                 & $72.6$                                        & $73.1$                                        & $9.9$                                         &                       & $87.7$          & $82.1$          & $98.0$          & $79.6$          &                       & $48.4$          & $40.3$          & $57.9$          & $40.0$          &                       \\
			GroupDRO                 & $\checkmark$    & $\mathbf{72.7}$                               & $73.2$                                        & $13.0$                                        & $+3.1$                & $86.7$          & $\mathbf{83.2}$ & $\mathbf{97.8}$ & $81.4$          & $+1.8$                & $48.4$          & $40.3$          & $57.9$          & $40.0$          & $+0.0$                \\\hdashline[0.5pt/1pt]
			\rule{0pt}{10pt}IRMv1      &                 & $72.3$                                        & $72.6$                                        & $9.9$                                         &                       & $82.3$          & $80.8$          & $95.8$          & $78.9$          &                       & $48.4$          & $35.6$          & $55.4$          & $40.1$          &                       \\
			IRMv1                      & $\checkmark$    & $67.4$                                        & $64.8$                                        & $\mathbf{24.2}$                               & $+14.3$               & $85.3$          & $81.7$          & $97.4$          & $79.7$          & $+0.8$                & $40.4$          & $38.3$          & $48.8$          & $37.0$          & $+1.4$                \\\hdashline[0.5pt/1pt]
			\rule{0pt}{10pt}Fishr    &                 & $72.2$                                        & $73.1$                                        & $9.9$                                         &                       & $\mathbf{88.4}$ & $82.2$          & $97.7$          & $81.6$          &                       & $49.2$          & $40.6$          & $57.9$          & $40.4$          &                       \\
			Fishr                    & $\checkmark$    & $69.1$                                        & $70.9$                                        & $22.6$                                        & $+12.7$               & $87.4$          & $82.6$          & $97.5$          & $\mathbf{82.2}$ & $+0.6$                & $\mathbf{51.0}$ & $40.7$          & $\mathbf{58.2}$ & $\mathbf{40.8}$ & $+0.3$                \\
			\bottomrule
			\multicolumn{15}{l}{\rule{0pt}{8pt}$^\dagger$\text{\normalfont \small Using the training domain validation accuracy.} $^\ddagger$\text{\normalfont \small Using the test domain validation accuracy.} }
		\end{tabular}}
	\vskip -0.05in
\end{table}

Table~\ref{tab:dobed_select} shows that there is a high variance in the performances at different environments of the models selected only based on the validation accuracy. In contrast, by jointly considering and trading off the ERM and OOD performances in model selection, \ourss substantially mitigates the variance by improving the worst environment performance of all methods under all setups up to $10\%$. It could serve as strong evidence for the importance of considering ERM and OOD trade-offs.
\vskip -0.1in

\vskip -0.1in
\section{Conclusion}
\vskip -0.05in
In this work, we provided a new understanding of optimization dilemma in OOD generalization from the MOO perspective, and attributed the failures of OOD optimization to the compromised robustness of relaxed OOD objectives and the unreliable optimization scheme.
We highlighted the importance of trading off the ERM and OOD objectives and proposed a new optimizer \ourso and a new model selection criteria \ourss to mitigate the dilemma.
We provided extensive theoretical and empirical evidence to show the necessity and significance of properly handling the ERM and OOD trade-offs.

\section*{Acknowledgements}
We thank the reviewers for their valuable comments. This work was supported by CUHK direct grant 4055146. YZ and BH were supported by the NSFC Young Scientists Fund No. 62006202, Guangdong Basic and Applied Basic Research Foundation No. 2022A1515011652, RGC Early Career Scheme No. 22200720, and Tencent AI Lab Rhino-Bird Gift Fund.

\section*{Ethics Statement}
Considering the wide applications and high sensitivity of deep neural networks to distribution shifts and spurious correlations,
it is important to develop new methods that are able to generalize to OOD data,
especially for some human-centered AI scenarios such as autopilot and social welfare.
By understanding and mitigating the optimization dilemma in OOD generalization,
our work could serve as an initiate step towards a new foundation of optimization for OOD generalization,
with the hope for building more trustworthy and AI systems to facilitate broader AI applications and social benefits.
Besides, this paper does not raise any ethical concerns.
This study does not involve
any human subjects, practices to data set releases, potentially harmful insights, methodologies and
applications, potential conflicts of interest and sponsorship, discrimination/bias/fairness concerns,
privacy and security issues, legal compliance, and research integrity issues.

\section*{Reproducibility Statement}
To ensure the reproducibility of our theoretical results, we provide detailed proofs for our propositions and theorems in Appendix~\ref{sec:theory_appdx}.
To ensure the reproducibility of our methods and experimental results, we provide detailed description of the IRM case in Appendix~\ref{sec:irm_failure_appdx}, the algorithms~\ref{sec:pair_implementation_appdx}, and the experimental setting in Appendix~\ref{sec:experiments_appdx}, in addition to the main text.
Besides, we will further provide a link to an anonymous repository that contains the source codes for reproducing the results in our paper during the discussion phase.

\clearpage
\bibliography{reference}

\begin{thebibliography}{88}
\providecommand{\natexlab}[1]{#1}
\providecommand{\url}[1]{\texttt{#1}}
\expandafter\ifx\csname urlstyle\endcsname\relax
  \providecommand{\doi}[1]{doi: #1}\else
  \providecommand{\doi}{doi: \begingroup \urlstyle{rm}\Url}\fi

\bibitem[Ahuja et~al.(2021{\natexlab{a}})Ahuja, Caballero, Zhang, Gagnon-Audet,
  Bengio, Mitliagkas, and Rish]{ib-irm}
Kartik Ahuja, Ethan Caballero, Dinghuai Zhang, Jean-Christophe Gagnon-Audet,
  Yoshua Bengio, Ioannis Mitliagkas, and Irina Rish.
\newblock Invariance principle meets information bottleneck for
  out-of-distribution generalization.
\newblock In \emph{Advances in Neural Information Processing Systems},
  2021{\natexlab{a}}.

\bibitem[Ahuja et~al.(2021{\natexlab{b}})Ahuja, Wang, Dhurandhar, Shanmugam,
  and Varshney]{cs_irm}
Kartik Ahuja, Jun Wang, Amit Dhurandhar, Karthikeyan Shanmugam, and Kush~R.
  Varshney.
\newblock Empirical or invariant risk minimization? a sample complexity
  perspective.
\newblock In \emph{International Conference on Learning Representations},
  2021{\natexlab{b}}.

\bibitem[Allen{-}Zhu \& Li(2021)Allen{-}Zhu and Li]{featlearn}
Zeyuan Allen{-}Zhu and Yuanzhi Li.
\newblock Feature purification: How adversarial training performs robust deep
  learning.
\newblock In \emph{{IEEE} Annual Symposium on Foundations of Computer Science},
  pp.\  977--988, 2021.

\bibitem[Allen{-}Zhu et~al.(2019)Allen{-}Zhu, Li, and Liang]{beyond_2l}
Zeyuan Allen{-}Zhu, Yuanzhi Li, and Yingyu Liang.
\newblock Learning and generalization in overparameterized neural networks,
  going beyond two layers.
\newblock In \emph{Advances in Neural Information Processing Systems}, pp.\
  6155--6166, 2019.

\bibitem[Arjovsky et~al.(2019)Arjovsky, Bottou, Gulrajani, and
  Lopez-Paz]{irmv1}
Mart{\'i}n Arjovsky, L{\'e}on Bottou, Ishaan Gulrajani, and David Lopez-Paz.
\newblock Invariant risk minimization.
\newblock \emph{arXiv preprint arXiv:1907.02893}, 2019.

\bibitem[Arora et~al.(2019)Arora, Du, Hu, Li, and Wang]{arora_dlt}
Sanjeev Arora, Simon~S. Du, Wei Hu, Zhiyuan Li, and Ruosong Wang.
\newblock Fine-grained analysis of optimization and generalization for
  overparameterized two-layer neural networks.
\newblock In \emph{International Conference on Machine Learning}, pp.\
  322--332, 2019.

\bibitem[B{\'{a}}ndi et~al.(2019)B{\'{a}}ndi, Geessink, Manson, Dijk,
  Balkenhol, Hermsen, Bejnordi, Lee, Paeng, Zhong, Li, Zanjani, Zinger, Fukuta,
  Komura, Ovtcharov, Cheng, Zeng, Thagaard, Dahl, Lin, Chen, Jacobsson,
  Hedlund, {\c{C}}etin, Halici, Jackson, Chen, Both, Franke,
  K{\"{u}}sters{-}Vandevelde, Vreuls, Bult, van Ginneken, van~der Laak, and
  Litjens]{camelyon}
P{\'{e}}ter B{\'{a}}ndi, Oscar Geessink, Quirine Manson, Marcory~Van Dijk,
  Maschenka Balkenhol, Meyke Hermsen, Babak~Ehteshami Bejnordi, Byungjae Lee,
  Kyunghyun Paeng, Aoxiao Zhong, Quanzheng Li, Farhad~Ghazvinian Zanjani,
  Svitlana Zinger, Keisuke Fukuta, Daisuke Komura, Vlado Ovtcharov, Shenghua
  Cheng, Shaoqun Zeng, Jeppe Thagaard, Anders~B. Dahl, Huangjing Lin, Hao Chen,
  Ludwig Jacobsson, Martin Hedlund, Melih {\c{C}}etin, Eren Halici, Hunter
  Jackson, Richard Chen, Fabian Both, J{\"{o}}rg Franke, Heidi
  K{\"{u}}sters{-}Vandevelde, Willem Vreuls, Peter Bult, Bram van Ginneken,
  Jeroen van~der Laak, and Geert Litjens.
\newblock From detection of individual metastases to classification of lymph
  node status at the patient level: The {CAMELYON17} challenge.
\newblock \emph{{IEEE} Trans. Medical Imaging}, 38\penalty0 (2):\penalty0
  550--560, 2019.

\bibitem[Beery et~al.(2018)Beery, Horn, and Perona]{camel_example}
Sara Beery, Grant~Van Horn, and Pietro Perona.
\newblock Recognition in terra incognita.
\newblock In \emph{Computer Vision European Conference, Part {XVI}}, volume
  11220, pp.\  472--489, 2018.

\bibitem[Beery et~al.(2020)Beery, Cole, and Gjoka]{iwildcam}
Sara Beery, Elijah Cole, and Arvi Gjoka.
\newblock The iwildcam 2020 competition dataset.
\newblock \emph{arXiv preprint arXiv:2004.10340}, 2020.

\bibitem[Borkan et~al.(2019)Borkan, Dixon, Sorensen, Thain, and
  Vasserman]{civil}
Daniel Borkan, Lucas Dixon, Jeffrey Sorensen, Nithum Thain, and Lucy Vasserman.
\newblock Nuanced metrics for measuring unintended bias with real data for text
  classification.
\newblock In \emph{Companion of The 2019 World Wide Web Conference}, pp.\
  491--500, 2019.

\bibitem[Bottou et~al.(2019)Bottou, Arjovsky, Gulrajani, and
  Lopez-Paz]{bottou_talk}
L{\'e}on Bottou, Martin Arjovsky, Ishaan Gulrajani, and David Lopez-Paz.
\newblock Learning representations using causal invariance.
\newblock Keynote in International Conference on Learning Representations,
  2019.
\newblock URL \url{https://leon.bottou.org/talks/invariances}.

\bibitem[Boyd \& Vandenberghe(2014)Boyd and Vandenberghe]{bv_convex}
Stephen~P. Boyd and Lieven Vandenberghe.
\newblock \emph{Convex Optimization}.
\newblock Cambridge University Press, 2014.

\bibitem[Chen et~al.(2022{\natexlab{a}})Chen, Yang, Zhang, Ma, Liu, Han, and
  Cheng]{gia_hao}
Yongqiang Chen, Han Yang, Yonggang Zhang, Kaili Ma, Tongliang Liu, Bo~Han, and
  James Cheng.
\newblock Understanding and improving graph injection attack by promoting
  unnoticeability.
\newblock In \emph{International Conference on Learning Representations},
  2022{\natexlab{a}}.

\bibitem[Chen et~al.(2022{\natexlab{b}})Chen, Zhang, Bian, Yang, KAILI, Xie,
  Liu, Han, and Cheng]{graph_ood}
Yongqiang Chen, Yonggang Zhang, Yatao Bian, Han Yang, MA~KAILI, Binghui Xie,
  Tongliang Liu, Bo~Han, and James Cheng.
\newblock Learning causally invariant representations for out-of-distribution
  generalization on graphs.
\newblock In \emph{Advances in Neural Information Processing Systems},
  2022{\natexlab{b}}.

\bibitem[Christie et~al.(2018)Christie, Fendley, Wilson, and Mukherjee]{fmow}
Gordon~A. Christie, Neil Fendley, James Wilson, and Ryan Mukherjee.
\newblock Functional map of the world.
\newblock In \emph{{IEEE} Conference on Computer Vision and Pattern
  Recognition}, pp.\  6172--6180, 2018.

\bibitem[Creager et~al.(2021)Creager, Jacobsen, and Zemel]{env_inference}
Elliot Creager, J{\"{o}}rn{-}Henrik Jacobsen, and Richard~S. Zemel.
\newblock Environment inference for invariant learning.
\newblock In \emph{International Conference on Machine Learning}, volume 139,
  pp.\  2189--2200, 2021.

\bibitem[DeGrave et~al.(2021)DeGrave, Janizek, and Lee]{covid19_application}
Alex~J. DeGrave, Joseph~D. Janizek, and Su{-}In Lee.
\newblock {AI} for radiographic {COVID-19} detection selects shortcuts over
  signal.
\newblock \emph{Nature Machince Intelligence}, 3\penalty0 (7):\penalty0
  610--619, 2021.

\bibitem[Dou et~al.(2019)Dou, de~Castro, Kamnitsas, and Glocker]{DouCKG19}
Qi~Dou, Daniel~Coelho de~Castro, Konstantinos Kamnitsas, and Ben Glocker.
\newblock Domain generalization via model-agnostic learning of semantic
  features.
\newblock In \emph{Advances in Neural Information Processing Systems}, pp.\
  6447--6458, 2019.

\bibitem[Duchin et~al.(2020)Duchin, Hashimoto, and Namkoong]{dro_relax}
John~C. Duchin, Tatsunori Hashimoto, and Hongseok Namkoong.
\newblock Distributionally robust losses for latent covariate mixtures.
\newblock \emph{arXiv preprint arXiv:2007.13982}, 2020.

\bibitem[Désidéri(2012)]{mgda}
Jean-Antoine Désidéri.
\newblock Multiple-gradient descent algorithm (mgda) for multiobjective
  optimization.
\newblock \emph{Comptes Rendus Mathematique}, 350\penalty0 (5):\penalty0
  313--318, 2012.

\bibitem[Ganin et~al.(2016)Ganin, Ustinova, Ajakan, Germain, Larochelle,
  Laviolette, Marchand, and Lempitsky]{DANN}
Yaroslav Ganin, Evgeniya Ustinova, Hana Ajakan, Pascal Germain, Hugo
  Larochelle, Fran{\c{c}}ois Laviolette, Mario Marchand, and Victor~S.
  Lempitsky.
\newblock Domain-adversarial training of neural networks.
\newblock \emph{Journal of Mache Learning Research}, 17:\penalty0 59:1--59:35,
  2016.

\bibitem[Geirhos et~al.(2020)Geirhos, Jacobsen, Michaelis, Zemel, Brendel,
  Bethge, and Wichmann]{shortcut_dl}
Robert Geirhos, J{\"{o}}rn{-}Henrik Jacobsen, Claudio Michaelis, Richard~S.
  Zemel, Wieland Brendel, Matthias Bethge, and Felix~A. Wichmann.
\newblock Shortcut learning in deep neural networks.
\newblock \emph{Nature Machine Intelligence}, 2\penalty0 (11):\penalty0
  665--673, 2020.

\bibitem[Gulrajani \& Lopez{-}Paz(2021)Gulrajani and Lopez{-}Paz]{domainbed}
Ishaan Gulrajani and David Lopez{-}Paz.
\newblock In search of lost domain generalization.
\newblock In \emph{International Conference on Learning Representations}, 2021.

\bibitem[He et~al.(2016)He, Zhang, Ren, and Sun]{resnet}
Kaiming He, Xiangyu Zhang, Shaoqing Ren, and Jian Sun.
\newblock Deep residual learning for image recognition.
\newblock In \emph{{IEEE} Conference on Computer Vision and Pattern
  Recognition}, pp.\  770--778, 2016.

\bibitem[Hu et~al.(2018)Hu, Niu, Sato, and Sugiyama]{DRSL}
Weihua Hu, Gang Niu, Issei Sato, and Masashi Sugiyama.
\newblock Does distributionally robust supervised learning give robust
  classifiers?
\newblock In \emph{International Conference on Machine Learning}, pp.\
  2034--2042, 2018.

\bibitem[Huang et~al.(2017)Huang, Liu, van~der Maaten, and
  Weinberger]{densenet}
Gao Huang, Zhuang Liu, Laurens van~der Maaten, and Kilian~Q. Weinberger.
\newblock Densely connected convolutional networks.
\newblock In \emph{{IEEE} Conference on Computer Vision and Pattern
  Recognition}, pp.\  2261--2269, 2017.

\bibitem[Jacot et~al.(2021)Jacot, Gabriel, and Hongler]{ntk}
Arthur Jacot, Franck Gabriel, and Cl{\'{e}}ment Hongler.
\newblock Neural tangent kernel: convergence and generalization in neural
  networks.
\newblock In \emph{Annual {ACM} {SIGACT} Symposium on Theory of Computing},
  pp.\ ~6, 2021.

\bibitem[Kaisa(1999)]{moo_book}
Miettinen Kaisa.
\newblock \emph{Nonlinear Multiobjective Optimization}.
\newblock Springer, 1999.

\bibitem[Kamath et~al.(2021)Kamath, Tangella, Sutherland, and
  Srebro]{irm_aistats}
Pritish Kamath, Akilesh Tangella, Danica Sutherland, and Nathan Srebro.
\newblock Does invariant risk minimization capture invariance?
\newblock In \emph{International Conference on Artificial Intelligence and
  Statistics}, pp.\  4069--4077, 2021.

\bibitem[Kingma \& Ba(2015)Kingma and Ba]{adam}
Diederik~P. Kingma and Jimmy Ba.
\newblock Adam: {A} method for stochastic optimization.
\newblock In \emph{International Conference on Learning Representations}, 2015.

\bibitem[Koh et~al.(2021)Koh, Sagawa, Marklund, Xie, Zhang, Balsubramani, Hu,
  Yasunaga, Phillips, Gao, Lee, David, Stavness, Guo, Earnshaw, Haque, Beery,
  Leskovec, Kundaje, Pierson, Levine, Finn, and Liang]{wilds}
Pang~Wei Koh, Shiori Sagawa, Henrik Marklund, Sang~Michael Xie, Marvin Zhang,
  Akshay Balsubramani, Weihua Hu, Michihiro Yasunaga, Richard~Lanas Phillips,
  Irena Gao, Tony Lee, Etienne David, Ian Stavness, Wei Guo, Berton Earnshaw,
  Imran Haque, Sara~M. Beery, Jure Leskovec, Anshul Kundaje, Emma Pierson,
  Sergey Levine, Chelsea Finn, and Percy Liang.
\newblock {WILDS:} {A} benchmark of in-the-wild distribution shifts.
\newblock In \emph{International Conference on Machine Learning,}, pp.\
  5637--5664, 2021.

\bibitem[Koyama \& Yamaguchi(2020)Koyama and Yamaguchi]{iga}
Masanori Koyama and Shoichiro Yamaguchi.
\newblock Out-of-distribution generalization with maximal invariant predictor.
\newblock \emph{arXiv preprint arXiv:2008.01883}, 2020.

\bibitem[Krueger et~al.(2021)Krueger, Caballero, Jacobsen, Zhang, Binas, Zhang,
  Priol, and Courville]{vrex}
David Krueger, Ethan Caballero, J{\"{o}}rn{-}Henrik Jacobsen, Amy Zhang,
  Jonathan Binas, Dinghuai Zhang, R{\'{e}}mi~Le Priol, and Aaron~C. Courville.
\newblock Out-of-distribution generalization via risk extrapolation (rex).
\newblock In \emph{International Conference on Machine Learning}, pp.\
  5815--5826, 2021.

\bibitem[Lecun et~al.(1998)Lecun, Bottou, Bengio, and Haffner]{mnist}
Y.~Lecun, L.~Bottou, Y.~Bengio, and P.~Haffner.
\newblock Gradient-based learning applied to document recognition.
\newblock \emph{Proceedings of the IEEE}, 86\penalty0 (11):\penalty0
  2278--2324, 1998.

\bibitem[Li et~al.(2017)Li, Yang, Song, and Hospedales]{pacs}
Da~Li, Yongxin Yang, Yi{-}Zhe Song, and Timothy~M. Hospedales.
\newblock Deeper, broader and artier domain generalization.
\newblock In \emph{International Conference on Computer Vision}, pp.\
  5543--5551, 2017.

\bibitem[Li et~al.(2018)Li, Tian, Gong, Liu, Liu, Zhang, and Tao]{deep_DG}
Ya~Li, Xinmei Tian, Mingming Gong, Yajing Liu, Tongliang Liu, Kun Zhang, and
  Dacheng Tao.
\newblock Deep domain generalization via conditional invariant adversarial
  networks.
\newblock In \emph{European Conference on Computer Vision}, pp.\  647--663,
  2018.

\bibitem[Lin et~al.(2019)Lin, Zhen, Li, Zhang, and Kwong]{pareto_mtl}
Xi~Lin, Hui{-}Ling Zhen, Zhenhua Li, Qingfu Zhang, and Sam Kwong.
\newblock Pareto multi-task learning.
\newblock In \emph{Advances in Neural Information Processing Systems}, pp.\
  12037--12047, 2019.

\bibitem[Lin et~al.(2022{\natexlab{a}})Lin, Dong, Wang, and Zhang]{BayesianIRM}
Yong Lin, Hanze Dong, Hao Wang, and Tong Zhang.
\newblock Bayesian invariant risk minimization.
\newblock In \emph{IEEE/CVF Conference on Computer Vision and Pattern
  Recognition}, pp.\  16000--16009, 2022{\natexlab{a}}.

\bibitem[Lin et~al.(2022{\natexlab{b}})Lin, Zhu, and Cui]{zin}
Yong Lin, Shengyu Zhu, and Peng Cui.
\newblock {ZIN:} when and how to learn invariance by environment inference?
\newblock \emph{arXiv preprint arXiv:2203.05818}, 2022{\natexlab{b}}.

\bibitem[Liu et~al.(2021)Liu, Haghgoo, Chen, Raghunathan, Koh, Sagawa, Liang,
  and Finn]{jtt}
Evan~Z Liu, Behzad Haghgoo, Annie~S Chen, Aditi Raghunathan, Pang~Wei Koh,
  Shiori Sagawa, Percy Liang, and Chelsea Finn.
\newblock Just train twice: Improving group robustness without training group
  information.
\newblock In \emph{International Conference on Machine Learning}, pp.\
  6781--6792, 2021.

\bibitem[Liu \& Vicente(2021)Liu and Vicente]{moo_bias}
Suyun Liu and Lu{\'i}s~Nunes Vicente.
\newblock The stochastic multi-gradient algorithm for multi-objective
  optimization and its application to supervised machine learning.
\newblock \emph{Annals of Operations Research}, 2021.

\bibitem[Lv et~al.(2021)Lv, Liang, Gong, Li, Liu, Li, Liu, and Wang]{pareto_da}
Fangrui Lv, Jian Liang, Kaixiong Gong, Shuang Li, Chi~Harold Liu, Han Li,
  Di~Liu, and Guoren Wang.
\newblock Pareto domain adaptation.
\newblock In \emph{Advances in Neural Information Processing Systems}, 2021.

\bibitem[Ma et~al.(2021)Ma, Yang, Yang, Jin, Chen, Chen, Kamhoua, and
  Cheng]{contrast_reg}
Kaili Ma, Haochen Yang, Han Yang, Tatiana Jin, Pengfei Chen, Yongqiang Chen,
  Barakeel~Fanseu Kamhoua, and James Cheng.
\newblock Improving graph representation learning by contrastive
  regularization.
\newblock \emph{arXiv preprint}, arXiv:2101.11525, 2021.

\bibitem[Ma et~al.(2020)Ma, Du, and Matusik]{pareto_exp_mtl}
Pingchuan Ma, Tao Du, and Wojciech Matusik.
\newblock Efficient continuous pareto exploration in multi-task learning.
\newblock In \emph{International Conference on Machine Learning}, Proceedings
  of Machine Learning Research, pp.\  6522--6531, 2020.

\bibitem[Mahajan et~al.(2021)Mahajan, Tople, and Sharma]{causal_matching}
Divyat Mahajan, Shruti Tople, and Amit Sharma.
\newblock Domain generalization using causal matching.
\newblock In \emph{International Conference on Machine Learning}, volume 139,
  pp.\  7313--7324, 2021.

\bibitem[Mahapatra \& Rajan(2020)Mahapatra and Rajan]{epo}
Debabrata Mahapatra and Vaibhav Rajan.
\newblock Multi-task learning with user preferences: Gradient descent with
  controlled ascent in pareto optimization.
\newblock In \emph{International Conference on Machine Learning}, pp.\
  6597--6607, 2020.

\bibitem[Namkoong \& Duchi(2016)Namkoong and Duchi]{dro}
Hongseok Namkoong and John~C. Duchi.
\newblock Stochastic gradient methods for distributionally robust optimization
  with f-divergences.
\newblock In \emph{Advances in Neural Information Processing Systems}, pp.\
  2208--2216, 2016.

\bibitem[Parascandolo et~al.(2021)Parascandolo, Neitz, Orvieto, Gresele, and
  Sch{\"{o}}lkopf]{andmask}
Giambattista Parascandolo, Alexander Neitz, Antonio Orvieto, Luigi Gresele, and
  Bernhard Sch{\"{o}}lkopf.
\newblock Learning explanations that are hard to vary.
\newblock In \emph{International Conference on Learning Representations}, 2021.

\bibitem[Paszke et~al.(2019)Paszke, Gross, Massa, Lerer, Bradbury, Chanan,
  Killeen, Lin, Gimelshein, Antiga, Desmaison, Kopf, Yang, DeVito, Raison,
  Tejani, Chilamkurthy, Steiner, Fang, Bai, and Chintala]{pytorch}
Adam Paszke, Sam Gross, Francisco Massa, Adam Lerer, James Bradbury, Gregory
  Chanan, Trevor Killeen, Zeming Lin, Natalia Gimelshein, Luca Antiga, Alban
  Desmaison, Andreas Kopf, Edward Yang, Zachary DeVito, Martin Raison, Alykhan
  Tejani, Sasank Chilamkurthy, Benoit Steiner, Lu~Fang, Junjie Bai, and Soumith
  Chintala.
\newblock Pytorch: An imperative style, high-performance deep learning library.
\newblock In \emph{Advances in Neural Information Processing Systems}, pp.\
  8024--8035, 2019.

\bibitem[Pearl(2009)]{causality}
Judea Pearl.
\newblock \emph{Causality}.
\newblock Cambridge University Press, 2 edition, 2009.

\bibitem[Peters et~al.(2016)Peters, Bühlmann, and Meinshausen]{inv_principle}
Jonas Peters, Peter Bühlmann, and Nicolai Meinshausen.
\newblock Causal inference by using invariant prediction: identification and
  confidence intervals.
\newblock \emph{Journal of the Royal Statistical Society: Series B (Statistical
  Methodology)}, 78\penalty0 (5):\penalty0 947--1012, 2016.

\bibitem[Peters et~al.(2017)Peters, Janzing, and Schlkopf]{elements_ci}
Jonas Peters, Dominik Janzing, and Bernhard Schlkopf.
\newblock \emph{Elements of Causal Inference: Foundations and Learning
  Algorithms}.
\newblock The MIT Press, 2017.
\newblock ISBN 0262037319.

\bibitem[Pezeshki et~al.(2021)Pezeshki, Kaba, Bengio, Courville, Precup, and
  Lajoie]{sd}
Mohammad Pezeshki, S{\'{e}}kou{-}Oumar Kaba, Yoshua Bengio, Aaron~C. Courville,
  Doina Precup, and Guillaume Lajoie.
\newblock Gradient starvation: {A} learning proclivity in neural networks.
\newblock In \emph{Advances in Neural Information Processing Systems}, pp.\
  1256--1272, 2021.

\bibitem[Rame et~al.(2021)Rame, Dancette, and Cord]{fishr}
Alexandre Rame, Corentin Dancette, and Matthieu Cord.
\newblock Fishr: Invariant gradient variances for out-of-distribution
  generalization.
\newblock \emph{arXiv preprint arXiv:2109.02934}, 2021.

\bibitem[Rojas-Carulla et~al.(2018)Rojas-Carulla, Sch{{\"o}}lkopf, Turner, and
  Peters]{causal_transfer}
Mateo Rojas-Carulla, Bernhard Sch{{\"o}}lkopf, Richard Turner, and Jonas
  Peters.
\newblock Invariant models for causal transfer learning.
\newblock \emph{Journal of Machine Learning Research}, 19\penalty0
  (36):\penalty0 1--34, 2018.

\bibitem[Rosenfeld et~al.(2022)Rosenfeld, Ravikumar, and Risteski]{dare}
Elan Rosenfeld, Pradeep Ravikumar, and Andrej Risteski.
\newblock Domain-adjusted regression or: Erm may already learn features
  sufficient for out-of-distribution generalization.
\newblock \emph{arXiv preprint arXiv:2202.06856}, 2022.

\bibitem[Sadeghi et~al.(2022)Sadeghi, Dehdashtian, and Boddeti]{sadeghi2022on}
Bashir Sadeghi, Sepehr Dehdashtian, and Vishnu Boddeti.
\newblock On characterizing the trade-off in invariant representation learning.
\newblock \emph{Transactions on Machine Learning Research}, 2022.
\newblock ISSN 2835-8856.

\bibitem[Sagawa* et~al.(2020)Sagawa*, Koh*, Hashimoto, and Liang]{groupdro}
Shiori Sagawa*, Pang~Wei Koh*, Tatsunori~B. Hashimoto, and Percy Liang.
\newblock Distributionally robust neural networks.
\newblock In \emph{International Conference on Learning Representations}, 2020.

\bibitem[Sagun et~al.(2018)Sagun, Evci, G{\"{u}}ney, Dauphin, and
  Bottou]{hess_analysis}
Levent Sagun, Utku Evci, V.~Ugur G{\"{u}}ney, Yann~N. Dauphin, and L{\'{e}}on
  Bottou.
\newblock Empirical analysis of the hessian of over-parametrized neural
  networks.
\newblock In \emph{International Conference on Learning Representations,
  Workshop Track Proceedings}, 2018.

\bibitem[Sanh et~al.(2019)Sanh, Debut, Chaumond, and Wolf]{distillbert}
Victor Sanh, Lysandre Debut, Julien Chaumond, and Thomas Wolf.
\newblock Distilbert, a distilled version of {BERT:} smaller, faster, cheaper
  and lighter.
\newblock \emph{arXiv preprint arXiv:1910.01108}, 2019.

\bibitem[Sch{\"{o}}lkopf et~al.(2021)Sch{\"{o}}lkopf, Locatello, Bauer, Ke,
  Kalchbrenner, Goyal, and Bengio]{towards_causality}
Bernhard Sch{\"{o}}lkopf, Francesco Locatello, Stefan Bauer, Nan~Rosemary Ke,
  Nal Kalchbrenner, Anirudh Goyal, and Yoshua Bengio.
\newblock Toward causal representation learning.
\newblock \emph{Proceedings of the IEEE}, 109\penalty0 (5):\penalty0 612--634,
  2021.

\bibitem[Sener \& Koltun(2018)Sener and Koltun]{mtl_moo}
Ozan Sener and Vladlen Koltun.
\newblock Multi-task learning as multi-objective optimization.
\newblock In \emph{Advances in Neural Information Processing Systems}, pp.\
  525--536, 2018.

\bibitem[Sener \& Koltun(2022)Sener and Koltun]{sener2022domain}
Ozan Sener and Vladlen Koltun.
\newblock Domain generalization without excess empirical risk.
\newblock In \emph{Advances in Neural Information Processing Systems}, 2022.

\bibitem[Shalev{-}Shwartz \& Ben{-}David(2014)Shalev{-}Shwartz and
  Ben{-}David]{mlt}
Shai Shalev{-}Shwartz and Shai Ben{-}David.
\newblock \emph{Understanding Machine Learning - From Theory to Algorithms}.
\newblock Cambridge University Press, 2014.

\bibitem[Shi et~al.(2022)Shi, Seely, Torr, N, Hannun, Usunier, and
  Synnaeve]{fish}
Yuge Shi, Jeffrey Seely, Philip Torr, Siddharth N, Awni Hannun, Nicolas
  Usunier, and Gabriel Synnaeve.
\newblock Gradient matching for domain generalization.
\newblock In \emph{International Conference on Learning Representations}, 2022.

\bibitem[Sun \& Saenko(2016)Sun and Saenko]{CORAL}
Baochen Sun and Kate Saenko.
\newblock Deep {CORAL:} correlation alignment for deep domain adaptation.
\newblock In \emph{European Conference on Computer Vision}, volume 9915, pp.\
  443--450, 2016.

\bibitem[Sutskever et~al.(2013)Sutskever, Martens, Dahl, and Hinton]{momentum}
Ilya Sutskever, James Martens, George~E. Dahl, and Geoffrey~E. Hinton.
\newblock On the importance of initialization and momentum in deep learning.
\newblock In \emph{International Conference on Machine Learning}, pp.\
  1139--1147, 2013.

\bibitem[Taylor et~al.(2019)Taylor, Earnshaw, Mabey, Victors, and
  Yosinski]{rxrx1}
James Taylor, Berton Earnshaw, Ben Mabey, Mason Victors, and Jason Yosinski.
\newblock Rxrx1: An image set for cellular morphological variation across many
  experimental batches.
\newblock In \emph{International Conference on Learning Representations}, 2019.

\bibitem[Teney et~al.(2021)Teney, Abbasnejad, Lucey, and van~den
  Hengel]{trainval_issue}
Damien Teney, Ehsan Abbasnejad, Simon Lucey, and Anton van~den Hengel.
\newblock Evading the simplicity bias: Training a diverse set of models
  discovers solutions with superior {OOD} generalization.
\newblock \emph{arXiv preprint arXiv:2105.05612}, 2021.

\bibitem[Teney et~al.(2022)Teney, Lin, Oh, and Abbasnejad]{id_ood_inverse}
Damien Teney, Yong Lin, Seong~Joon Oh, and Ehsan Abbasnejad.
\newblock {ID} and {OOD} performance are sometimes inversely correlated on
  real-world datasets.
\newblock \emph{arXiv preprint arXiv:2209.00613}, 2022.

\bibitem[Vapnik(1991)]{erm}
Vladimir Vapnik.
\newblock Principles of risk minimization for learning theory.
\newblock In \emph{Advances in Neural Information Processing Systems}, pp.\
  831--838, 1991.

\bibitem[Wald et~al.(2021)Wald, Feder, Greenfeld, and Shalit]{clove}
Yoav Wald, Amir Feder, Daniel Greenfeld, and Uri Shalit.
\newblock On calibration and out-of-domain generalization.
\newblock In \emph{Advances in Neural Information Processing Systems}, pp.\
  2215--2227, 2021.

\bibitem[Xie et~al.(2021)Xie, Kumar, Jones, Khani, Ma, and Liang]{innout}
Sang~Michael Xie, Ananya Kumar, Robbie Jones, Fereshte Khani, Tengyu Ma, and
  Percy Liang.
\newblock In-n-out: Pre-training and self-training using auxiliary information
  for out-of-distribution robustness.
\newblock In \emph{International Conference on Learning Representations}, 2021.

\bibitem[Yao et~al.(2022)Yao, Wang, Li, Zhang, Liang, Zou, and Finn]{lisa}
Huaxiu Yao, Yu~Wang, Sai Li, Linjun Zhang, Weixin Liang, James Zou, and Chelsea
  Finn.
\newblock Improving out-of-distribution robustness via selective augmentation.
\newblock In \emph{International Conference on Machine Learning}, pp.\
  25407--25437, 2022.

\bibitem[Yeh et~al.(2020)Yeh, Perez, Driscoll, Azzari, Tang, Lobell, Ermon, and
  Burke]{povertymap}
Christopher Yeh, Anthony Perez, Anne Driscoll, George Azzari, Zhongyi Tang,
  David Lobell, Stefano Ermon, and Marshall Burke.
\newblock Using publicly available satellite imagery and deep learning to
  understand economic well-being in africa.
\newblock \emph{Nature Communications}, 11\penalty0 (2583), 2020.

\bibitem[Zhai et~al.(2022)Zhai, Dan, Kolter, and Ravikumar]{gen_reweighted}
Runtian Zhai, Chen Dan, J.~Zico Kolter, and Pradeep Ravikumar.
\newblock Understanding why generalized reweighting does not improve over
  {ERM}.
\newblock \emph{arXiv preprint arXiv:2201.12293}, 2022.

\bibitem[Zhang et~al.(2017)Zhang, Bengio, Hardt, Recht, and
  Vinyals]{memorization}
Chiyuan Zhang, Samy Bengio, Moritz Hardt, Benjamin Recht, and Oriol Vinyals.
\newblock Understanding deep learning requires rethinking generalization.
\newblock In \emph{International Conference on Learning Representations}, 2017.

\bibitem[Zhang et~al.(2018)Zhang, Ciss{\'{e}}, Dauphin, and Lopez{-}Paz]{mixup}
Hongyi Zhang, Moustapha Ciss{\'{e}}, Yann~N. Dauphin, and David Lopez{-}Paz.
\newblock mixup: Beyond empirical risk minimization.
\newblock In \emph{International Conference on Learning Representations}, 2018.

\bibitem[Zhang et~al.(2022{\natexlab{a}})Zhang, Lopez-Paz, and Bottou]{rfc}
Jianyu Zhang, David Lopez-Paz, and L{\'e}on Bottou.
\newblock Rich feature construction for the optimization-generalization
  dilemma.
\newblock \emph{arXiv preprint arXiv:2203.15516}, 2022{\natexlab{a}}.

\bibitem[Zhang et~al.(2022{\natexlab{b}})Zhang, Sohoni, Zhang, Finn, and
  R{\'{e}}]{cnc}
Michael Zhang, Nimit~Sharad Sohoni, Hongyang~R. Zhang, Chelsea Finn, and
  Christopher R{\'{e}}.
\newblock Correct-n-contrast: a contrastive approach for improving robustness
  to spurious correlations.
\newblock In \emph{International Conference on Machine Learning}, volume 162,
  pp.\  26484--26516, 2022{\natexlab{b}}.

\bibitem[Zhang \& Golovin(2020)Zhang and Golovin]{nonlinear_scalar}
Richard Zhang and Daniel Golovin.
\newblock Random hypervolume scalarizations for provable multi-objective black
  box optimization.
\newblock In \emph{International Conference on Machine Learning}, pp.\
  11096--11105, 2020.

\bibitem[Zhang et~al.(2022{\natexlab{c}})Zhang, Gong, Liu, Niu, Tian, Han,
  Sch{\"o}lkopf, and Zhang]{zhang2021causaladv}
Yonggang Zhang, Mingming Gong, Tongliang Liu, Gang Niu, Xinmei Tian, Bo~Han,
  Bernhard Sch{\"o}lkopf, and Kun Zhang.
\newblock Causaladv: Adversarial robustness through the lens of causality.
\newblock 2022{\natexlab{c}}.

\bibitem[Zhao et~al.(2019)Zhao, des Combes, Zhang, and Gordon]{DG_issue}
Han Zhao, Remi~Tachet des Combes, Kun Zhang, and Geoffrey~J. Gordon.
\newblock On learning invariant representations for domain adaptation.
\newblock In \emph{International Conference on Machine Learning}, pp.\
  7523--7532, 2019.

\bibitem[Zhao et~al.(2020)Zhao, Dan, Aragam, Jaakkola, Gordon, and
  Ravikumar]{fund_tradeoff}
Han Zhao, Chen Dan, Bryon Aragam, Tommi~S. Jaakkola, Geoffrey~J. Gordon, and
  Pradeep Ravikumar.
\newblock Fundamental limits and tradeoffs in invariant representation
  learning.
\newblock \emph{arXiv preprint arXiv:2012.10713}, 2020.

\bibitem[Zhao \& Zhang(2015)Zhao and Zhang]{importance_sampling}
Peilin Zhao and Tong Zhang.
\newblock Stochastic optimization with importance sampling for regularized loss
  minimization.
\newblock In \emph{International Conference on Machine Learning}, pp.\  1--9,
  2015.

\bibitem[Zhou et~al.(2018)Zhou, Shang, and Cheng]{pmlr-v80-zhou18c}
Kaiwen Zhou, Fanhua Shang, and James Cheng.
\newblock A simple stochastic variance reduced algorithm with fast convergence
  rates.
\newblock In \emph{International Conference on Machine Learning}, pp.\
  5980--5989, 2018.

\bibitem[Zhou et~al.(2020)Zhou, Jin, Ding, and Cheng]{robust_momentum}
Kaiwen Zhou, Yanghua Jin, Qinghua Ding, and James Cheng.
\newblock Amortized nesterov’s momentum: A robust momentum and its
  application to deep learning.
\newblock In \emph{Conference on Uncertainty in Artificial Intelligence (UAI)},
  pp.\  211--220, 2020.

\bibitem[Zhou et~al.(2022)Zhou, Lin, Zhang, and Zhang]{SparseIRM}
Xiao Zhou, Yong Lin, Weizhong Zhang, and Tong Zhang.
\newblock Sparse invariant risk minimization.
\newblock In \emph{39th International Conference on Machine Learning}, pp.\
  27222--27244, 2022.

\end{thebibliography}
\bibliographystyle{iclr2023_conference}

\newpage
\appendix
\onecolumn

\begin{center}
	\LARGE \bf {Appendix of ``Pareto Invariant Risk Minimization''}
\end{center}

\etocdepthtag.toc{mtappendix}
\etocsettagdepth{mtchapter}{none}
\etocsettagdepth{mtappendix}{subsection}
\tableofcontents

\newpage

\section{Notations}
\label{app:notations}
\revision{We first list the notations for key concepts in our paper.}
\begin{table}[ht]\small
	\revision{
	\caption{Notations}
	\centering
	\begin{tabular}{ll}
		\toprule
		\(\gX=\R^n\)                          & the input space\\\midrule
		\(\gY=\R\)                          & the label space\\\midrule
		\(\gZ=\R^d\)                          & the latent space\\\midrule
		\(\varphi\)                          & the featurizer $\varphi:\gX\rightarrow\gZ$ learns a latent representation for each input example\\\midrule
		\(w\)                          & the classifier $w:\gZ\rightarrow\gY$\\\midrule
		\(f\in\gF\)                          & the predictor $f=w\circ\varphi:\gX\rightarrow\gY$ is composed of a featurizer and classifier\\
		\(\)&when $w$ is linear, $f$ can be simply represented via dot product $w\cdot\varphi$\\\midrule
		\(\envall\) & the set of indices for all environments\\\midrule
		\(\envtrain\) & the subset of indices of training environments\\\midrule
		\(e\) & the index set of a specific environment \\\midrule
		\(\dataset^e,\dataset_e\)                & the dataset from environment $e$, containing samples $\{X_i^e,Y_i^e\}$ considered as i.i.d. from $\mathbb{P}^e$\\
		\midrule
		\(\dataset\)                & the overall dataset containing data from all environments, $\dataset=\{\dataset^e\}_{e\in\envall}$\\
		\midrule
		\(\gI(\gE)\)                          & the set of invariant predictors w.r.t. some OOD objectives (e.g., IRM) and environments $\gE$\\\midrule
		\(\gL_e\)                          & the empirical risk calculated based on $\dataset^e$, e.g., square loss or logistic loss\\\midrule
		\(\vL\)                          & the vector of losses $\{\gL_i\}_{i=1}^m$ considered in $m$ objectives from a MOO problem,\\
		\(\)                          & shared a set of parameters $\theta$\\\midrule
		\(\gP(\vL)\)                          & the set of Pareto optimal solutions w.r.t. the objectives $\vL$\\\midrule
		\(\vp_\ood\)                          & the vector of objective preference\\\midrule
		\(\vG\in\R^{m\times d}\)                          & the matrix of gradients w.r.t. $m$ objectives $\vL$ and parameters $\theta\in\R^d$\\
		\(\)& each objective $\gL_i$ corresponds to a gradient vector $\vg\in\R^d$\\\midrule
		\(\gS^{m+1}\)                          & the $m$-simplex corresponding to $m$ OOD objectives, $\{\beta\in\R^{m+1}_+|\sum_{i=1}^{m+1}\beta_i=1\}$\\\midrule
		\bottomrule
	\end{tabular}}
\end{table}

\section{More Discussions on Background and Future Directions}

\subsection{Background and related work}
\label{sec:related_work_appdx}

In this section, we provide more details of the backgrounds and closely related works to ours, in complementary to Sec.~\ref{sec:related_work}.

\textbf{The problem of OOD generalization.}
The problem of OOD generalization typically considers
a supervised learning setting based on the data $\dataset=\{\dataset^e\}_{e\in\envall}$
collected from multiple causally related environments $\envall$,
where a subset of samples $\dataset^e=\{X^e_i,Y^e_i\}$ from a single environment $e\in\envall$
are drawn independently from an identical distribution $\sP^e$~\citep{inv_principle}.
Given the data from training environments $\{\dataset^e\}_{e\in\envtrain}$,
the goal of OOD generalization is to find a predictor $f:\gX\rightarrow\gY$
that generalizes well to all (unseen) environments, i.e., to minimize
$\max_{e\in\envall}\gL_e(f)$, where $\gL_e$ is the empirical risk~\citep{erm} under environment $e$, $\gX$ and $\gY$ are the input and labeling spaces, respectively.
The predictor $f=w\circ\varphi$ is usually composed of a featurizer $\varphi:\gX\rightarrow\gZ$ that learns to extract useful features, and a classifier $w:\gZ\rightarrow\gY$ that makes predictions from the extracted features.
In practice, $\varphi$ is commonly implemented as a deep feature extractor, while $w$ is generically implemented as a simple dense linear classifier~\citep{domainbed,wilds,fishr,dare}.

\textbf{Existing solutions to OOD generalization.}
There exists a rich literature aiming to overcome the OOD generalization challenge, which usually appear as \emph{additional regularizations} of ERM~\citep{erm}.
The first line is the Domain Generalization works~\citep{DANN,CORAL,deep_DG,DouCKG19} that tries to regularize the learned features to be \textbf{domain-invariant}. However, \citet{DG_issue} show that the domain invariant features solely are not sufficient for guaranteed good OOD generalization. We refer readers to~\citet{domainbed} for more details of the literature about Domain Generalization.
Moreover, \citet{dro,DRSL,groupdro} aim to regularize the models to be \textbf{robust to mild distributional perturbations} of the training distributions such that the models are expected to perform well in unseen test environments. Following the line of distributional robustness, \citet{jtt,cnc,lisa} further propose advanced strategies to improve the robustness by assuming that models trained with ERM have strong reliance to spurious features.

Recently there is increasing interest in adopt theory of causality~\citep{causality,elements_ci,towards_causality} and introduce the \textbf{causal invariance} to the learned representations~\citep{inv_principle,causal_transfer,irmv1}.
The causal invariance is inspired by the assumption of Independent Causal Mechanism (ICM) in causality~\citep{elements_ci}. ICM assumes that conditional distribution of each variable given its causes (i.e., its mechanism) does not inform or influence the other conditional distributions~\citep{causality,elements_ci}. \citet{inv_principle} introduce the concept of environments which are generated by different interventions on certain variables involved in the underlying data generation process of $(X,Y)$. Despite of the changes to the intervened variables, the conditional distribution of intervened variables (they usually are the direct parents of $Y$ in the underlying causal graph) and $Y$ is invariant. Therefore, the invariant relationship can be leveraged to predict $Y$ and generalize to different environments. We refer interested readers to~\citet{inv_principle,towards_causality,ib-irm} for more details.
Inspired by the causal invariance principle, \citet{irmv1} propose the framework of Invariant Risk Minimization (IRM) that allows the adoption of the causal invariance in neural networks.
It further inspires plentiful invariant learning works~\citep{andmask,causal_matching,env_inference,clove,ib-irm,graph_ood,zin}.
At the heart of these works is the intuition that: When a predictor $w$ acting on $\varphi$ minimizes the risks in all of the environments simultaneously,
$\varphi$ is expected to discard the spurious signals while keeping the causally invariant signals.
Additionally, there can be more definitions and implementations of the invariance~\citep{iga,vrex,fish,fishr} which further encourage \textbf{agreements} at various levels across different environments. We refer interested readers to~\citet{fishr} for a detailed comparison and discussion.
\revision{As shown that most of the existing approaches encounter the optimization dilemma when learning the causal invariance, this work mainly focus on resolving the optimization issue in learning the causal invariance defined by the framework of Invariant Risk Minimization~\citep{irmv1}, which is different from the literature of IRM variants or other OOD objectives that focus on proposing better objectives to learn the causal invariance.}

\textbf{Optimization Dilemma in OOD Algorithms.}
Along with the developments of OOD methods, the optimization dilemma in OOD generalization is gradually perceived in the literature, and raises new puzzles to the community.
In fact, several recent works also notice the optimization dilemma in OOD algorithms, specifically, the trade-off between discovering the statistical correlations (i.e., ERM) and preventing the usage of spurious correlations (e.g., IRM).
Empirically, \citet{domainbed} observe that, with careful hyperparameter tuning and evaluation setting, many OOD algorithms cannot outperform ERM in domain generalization, demonstrating the difficulties of properly mitigating the trade-offs between OOD and ERM objectives in practice.
Moreover,
\citet{groupdro,gen_reweighted} find that, regularization on ERM, or sacrificing ERM performance, is usually needed for achieving satisfactory OOD performance.
A similar phenomenon has also been observed by~\citet{fund_tradeoff,innout,sadeghi2022on,sener2022domain,id_ood_inverse}, which aligns with our findings through Pareto front as shown in Fig.~\ref{fig:pareto_front_mse_appdx} and Fig.~\ref{fig:pareto_front_logl_appdx}.
Besides, \citet{BayesianIRM} find that IRM can easily overfit and learns unexpected features when applying IRM on large neural networks. \citet{SparseIRM} propose to alleviate this problem by imposing sparsity constraints. Orthogonal to \citet{BayesianIRM,SparseIRM} that focuses on the optimization consequences, we focus on the optimization process of OOD objectives.
In addition, \citet{rfc} find that, the performance of OOD algorithms largely relies on choosing proper pretraining epochs which aligns with our findings in Fig.~\ref{fig:sweep_acc}, hence propose to construct a ready-to-use features for stable OOD generalization performance.
Orthogonal to~\citet{rfc}, we focus on developing better optimization scheme for OOD algorithms, including choosing the proper objectives and the achievability of the invariant predictors.
Besides, \citet{pareto_da} propose ParetoDA to leverage MOO to resolve the gradient conflicts amon the objectives in Domain Adaption. ParetoDA uses the guidance of validation loss based on the data that has the identical distribution to test distribution, to trade-off the conflicts in domain adaption objectives.
However, there can be multiple test domains, and the data that has identical distribution with the test domain is usually unavailable in OOD generalization. Therefore, ParetoDA is unsuitable for general OOD generalization methods. 
\revision{Despite the increasing literature that perceives the OOD optimization dilemma, it remains an open problem on why there exists such a dilemma, and how to effectively mitigate the conflicts of ERM and OOD objectives and obtain a OOD generalizable solution.}

\textbf{Further implications by the OOD optimization dilemma.} In addition to preventing finding a proper OOD solution, the OOD optimization dilemma also raises significant challenges for model selection of OOD algorithms. \citet{domainbed} highlight this challenge with rigorous evaluation of OOD algorithms.
Similar to \ourso, \ourss resolves the dilemma by leveraging the OOD loss values and
explicitly considering the trade-offs of ERM and OOD performance.
We present more details in Sec.~\ref{sec:dobed_intro_appdx}.

\textbf{Multi-Objective Optimization (MOO) and its applications in Multi-Task Learning.}
MOO considers  solving $m$ objectives, w.r.t. $\{\gL_i\}_{i=1}^m$ losses,
i.e., $\min_\theta\mL(\theta)\!=\!(\gL_1(\theta),...,\gL_m(\theta))^T$~\citep{moo_book}. 
A solution $\theta$ dominates another $\bar{\theta}$, i.e., $\mL(\theta)\preceq\mL(\bar{\theta})$, if $\gL_i(\theta)\leq\gL_i(\bar{\theta})$ for all $i$ and $\mL(\theta)\neq\mL(\bar{\theta})$. 
A solution $\theta^*$ is called \textbf{Pareto optimal} if there exists no other solution that dominates $\theta^*$. The set of Pareto optimal solutions is called Pareto set, denoted as $\gP$, and its image is called \textbf{Pareto front}.
As it is usual that we cannot find a global optimal solution for all objectives in practice, hence Pareto optimal solutions are of particular value. The multiple-gradient descent algorithm (MGDA) is one of the commonly used approaches to efficiently find the Pareto optimal solutions~\citep{mgda} but limited to low-dimensional data.
\citet{mtl_moo} then resolve the issue and apply MGDA to high-dimensional multi-task learning scenarios, where the objective conflicts may degenerate the performance when using linear scalarization.
As pure MGDA cannot find a Pareto optimal solution specified by certain objective preferences,
~\citet{pareto_mtl,nonlinear_scalar,pareto_exp_mtl} propose efficient methods to explore the Pareto set. \citet{epo} propose EPO to find the exact Pareto optimal solution with the specified objective preferences.
\revision{Although MOO has gained success in mitigating the task conflicts in multi-task learning, it remains underexplored on whether and how we can leverage the MOO to model and resolve the ERM and OOD conflicts. Without a proper set of objectives and preference guidance, the existing MOO solvers are unable to obtain a desired solution for OOD generalization.}

\subsection{Limitations and future directions}
\label{sec:future_appdx}
Although \ours is shown to effectively mitigate the objective conflicts and boost the OOD performance via better optimization and model selection, the performance gain sometimes can decrease given the limitations of \ours. We believe future works can be built upon resolving the limitations of \ours, as detailed below.

From the optimizer perspective, the improvements of \ourso can decrease on some datasets. We hypothesize it is because of the inevitable stochastic gradient bias in all MGDA MOO solvers~\citep{moo_bias}, and potentially large variance in estimating the \irml penalties (e.g., \textsc{RxRx1} where both \irml and \vrex are shown to perform poor ), as we discussed in Appendix~\ref{sec:pair_opt_est_appdx}. 

For \ourss, as discussed in Sec.~\ref{sec:pair_solution} that \ourss can mitigate the drawbacks of selecting models using an unreliable validation set (has a large gap from the test domain), the improvements will be a bit smaller when the gaps narrow down (e.g., \pacs using test domain validation accuracy). Besides, the estimation of satisfaction to Pareto optimality in \ourss can also be affected by the variances in estimating loss values in stochastic setting (e.g., \terra), as discussed in Appendix~\ref{sec:pair_selection_appdx}.

Additionally, \ours can also be applied to scenarios where gradient conflicts exist, such as the tradeoff between adversarial power and unnoticeability of the attacks~\citep{gia_hao}, as well as improving the quality of representations in contrastive learning~\citep{contrast_reg}.

\section{More Details on IRM Failures and Fix}
\label{sec:irm_usecase_appdx}

In this section, we provide more details about the failure case of IRM and its effective fix from the perspective of MOO, in complementary to Sec.~\ref{sec:irm_usecase}.

\subsection{More detail about failure case of \irm}
\label{sec:irm_failure_appdx}
We follow~\citet{irm_aistats} to discuss the failure case of \irm. Specifically, given the problem setup as in Sec.~\ref{sec:related_work_appdx}, we are interested in the linear classification/regression following the setting. The loss values are measured as population loss in each environment.
\paragraph{Setting A (identical to (\citet{irm_aistats})):} $\hat{\mathcal{Y}} = \R, \mathcal{Y} \subseteq \R$, $\ell$ is either the square loss $\lsq(\hat{y}, y) \coloneqq \frac{1}{2}(\hat{y} - y)^2$, or the logistic loss $\llog(\hat{y}, y) \coloneqq \log{(1 + \exp{(-\hat{y}y)})}$ when $\mathcal{Y} =\{-1, 1\}$ (binary classification).

IRM approaches the problem by finding an invariant representation $\varphi:\gX\rightarrow\gZ$,
such that there exists a predictor $w: \gZ\rightarrow\gY$ acting on $\varphi$ that is
simultaneously optimal among $\envall$.
Hence, IRM leads to a challenging bi-level optimization problem~\citep{irmv1} as
\begin{equation}
	\label{eq:irm_appdx}
	\begin{aligned}
		\min_{w,\varphi} & \ \sum_{e\in\envtrain}\gL_e(w\circ\varphi),                                                   \\
		\text{s.t.}
		                 & \ w\in\argmin_{\bar{w}:\gZ\rightarrow\gY} \gL_e(\bar{w}\circ\varphi),\ \forall e\in\envtrain.
	\end{aligned}
\end{equation}
Given the training environments $\envtrain$, and functional spaces $\gW$ for $w$ and $\varPhi$ for $\varphi$,
predictors $w\circ\varphi$ satisfying the constraint are called invariant predictors,
denoted as $\gI(\envtrain)$.
When solving Eq.~\ref{eq:irm_appdx},
characterizing $\gI(\envtrain)$ is particularly difficult in practice,
given the access only to finite samples from a small subset of environments.
It is natural to introduce a restriction that $\gW$ is the space of linear functions on $\gZ=\R^d$~\citep{ntk}.
Furthermore, \citet{irmv1} argue that linear predictors actually do not provide additional representation power than \emph{scalar} predictors, i.e., $d=1,\gW=\gS=\R^1$. The scalar restriction on $\gW$ elicits a practical variant \irms as
\begin{equation}
	\label{eq:irms_appdx}
	\min_{\varphi} \ \sum_{e\in\envtrain}\gL_e(\varphi),
	\text{s.t.}
	\ \nabla_{w|w=1}\gL_e(w\cdot\varphi)=0,\ \forall e\in\envtrain.
\end{equation}
Let $\gI_\gS(\envtrain)$ denote the set of invariant predictors elicited by the relaxed constraint in \irms. It follows that $\gI (\envtrain)\subseteq \gI_\gS(\envtrain)$~\citep{irm_aistats}.
Yet, Eq.~\ref{eq:irms_appdx} remains a constrained programming. Hence, \citet{irmv1} introduce a soft-constrained variant \irml as
\begin{equation}
	\label{eq:irml_appdx}
	\min_{\varphi}  \sum_{e\in\envtrain}\gL_e(\varphi)+\lambda|\nabla_{w|w=1}\gL_e(w\cdot\varphi)|^2.
\end{equation}

\textbf{Theoretical Failure of Practical IRM Variants.}
Although the practical variants seem promising,
\citet{irm_aistats} show there exists huge gaps between the variants and the original IRM
such that both \irms and \irml can fail to capture the desired invariance, even being given the \emph{population loss} and \emph{infinite} amount of training environments.
The failure case, called two-bit environment~\citep{irm_aistats}, follows the setup of ColoredMNIST in IRM~\citep{irmv1}, and defines environments with two parameters $\alpha_e,\beta_e\in[0,1]$.
Each $\dataset_e$ is defined as
\begin{equation}
	\label{eq:twobit_env_appdx}
	Y\!:=\!\rad(0.5), X_1\!:=\!Y{\cdot}\rad(\alpha_e),X_2\!:=\!Y{\cdot}\rad(\beta_e),
\end{equation}
where $\rad(\sigma)$ is a random variable taking value $-1$ with probability $\sigma$ and $+1$
with probability $1-\sigma$. We denote an environment $e$ with $(\alpha_e,\beta_e)$ for simplicity.
The setup in IRM can be denoted as $\envalpha\!=\!\{(\alpha,\beta_e)\!:\!0\!<\!\beta_e\!<\!1\}$ where $X_1$ is the invariant feature as $\alpha$ is fixed for different $e$.

\begin{figure}[ht]
	\subfigure[Pareto Front under MSE loss.]{
		\centering
		\includegraphics[width=0.3\textwidth]{figures/New_Pareto_Front_Sqls.pdf}
		\label{fig:pareto_front_mse_appdx}
	}
	\subfigure[Failure case under MSE loss.]{
		\centering
		\includegraphics[width=0.31\textwidth]{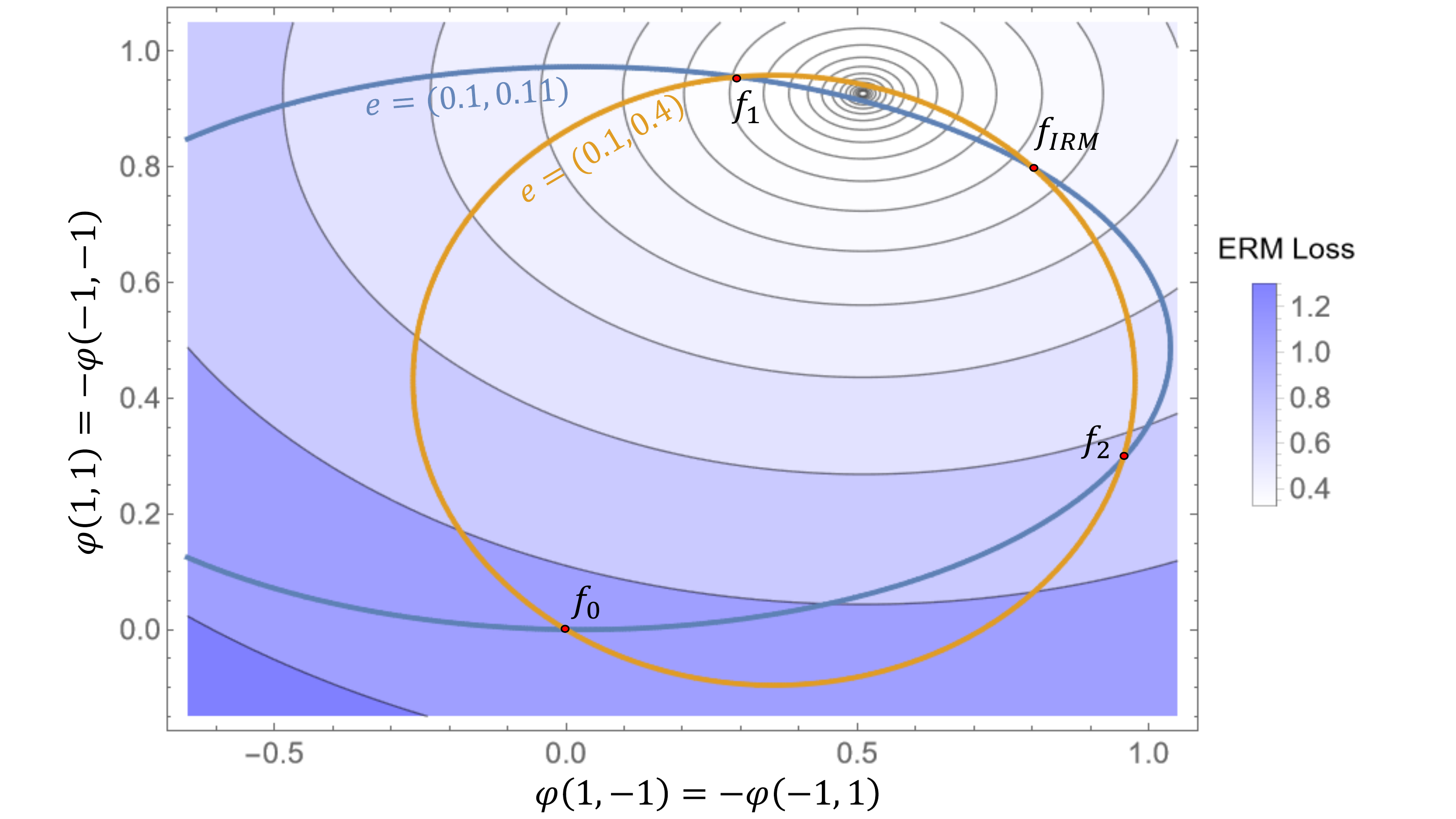}
		\label{fig:aistats_loss_mse_appdx}
	}
	\subfigure[Variance distribution under MSE loss.]{
		\centering
		\includegraphics[width=0.32\textwidth]{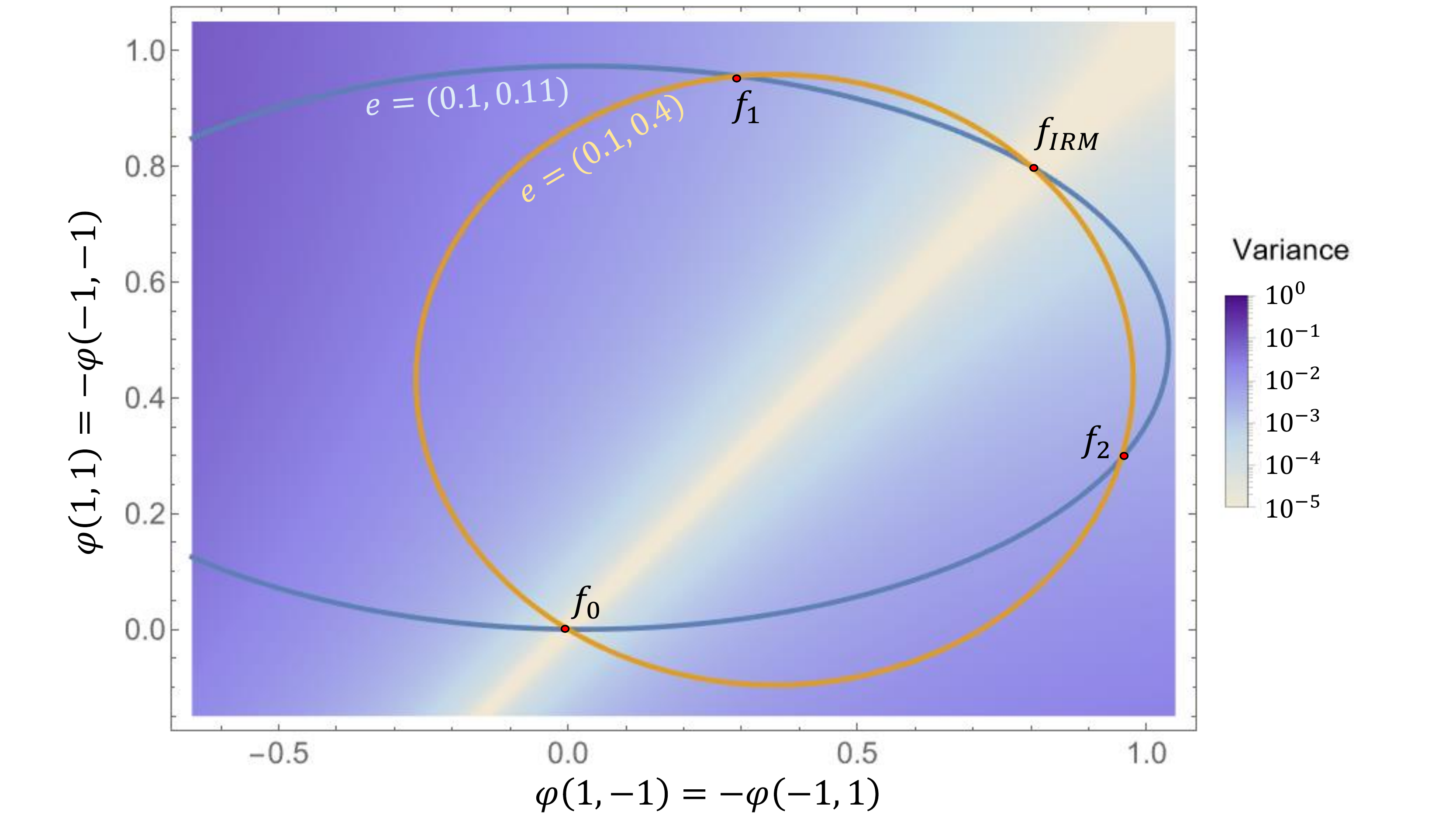}
		\label{fig:aistats_var_mse_appdx}
	}
	\vskip -0.15in
	\caption{Counterparts of Fig.~\ref{fig:aistats_fail}, Fig.~\ref{fig:aistats_var_mse} and Fig.~\ref{fig:pareto_front_mse} implemented in MSE loss.}
	\label{fig:aistats_fail_mse_appdx}
\end{figure}

In the example given by~\citet{irmv1}, i.e.,
$\envtrain:=\{(0.25,0.1),(0.25,0.2)\}$,
\irms and \irml are shown to be able to learn the invariant predictor $f_\irm$ as the original IRM despite of the relaxation.
However, due to $\gI (\envtrain)\subseteq \gI_\gS(\envtrain)$, \citet{irm_aistats} show that the set of “invariant predictors” produced by \irms and \irml is broader than our intuitive sense.
For example, when given $\envtrain:=\{(0.1,0.11),(0.1,0.4)\}$, the solutions satisfying the constraint in \irms are those intersected points in Fig.~\ref{fig:aistats_fail} (The ellipsoids are the constraints).
Although $f_0,f_1,f_2,f_\irm\in\gI_\gS(\envtrain)$, both \irms and \irml prefer $f_1$ instead of $f_\irm$ (the predictor elicited by the original IRM), as $f_1$ has the smallest ERM loss.
In fact, \citet{irm_aistats} prove that, the failure can happen in a wide range of environments with $\alpha<0.1464$ and $\alpha>0.8356$,
even being given \emph{infinite} number of additional environments, under MSE loss.
It follows that $\gI(\envtrain)\subsetneq\gI_\gS(\envtrain)$. In other words,
the relaxation in \irms and \irml will introduce additional ``invariant predictors'' which however do not satisfy the original IRM constraint.
Both \irms and \irml will prefer those ``invariant predictors'' when they have lower ERM loss than $f_\irm$,
demonstrating the significant theoretical gap between the practical variants and the original IRM.

\begin{figure}[ht]
	\subfigure[Pareto Front under Logistic loss.]{
		\centering
		\includegraphics[width=0.3\textwidth]{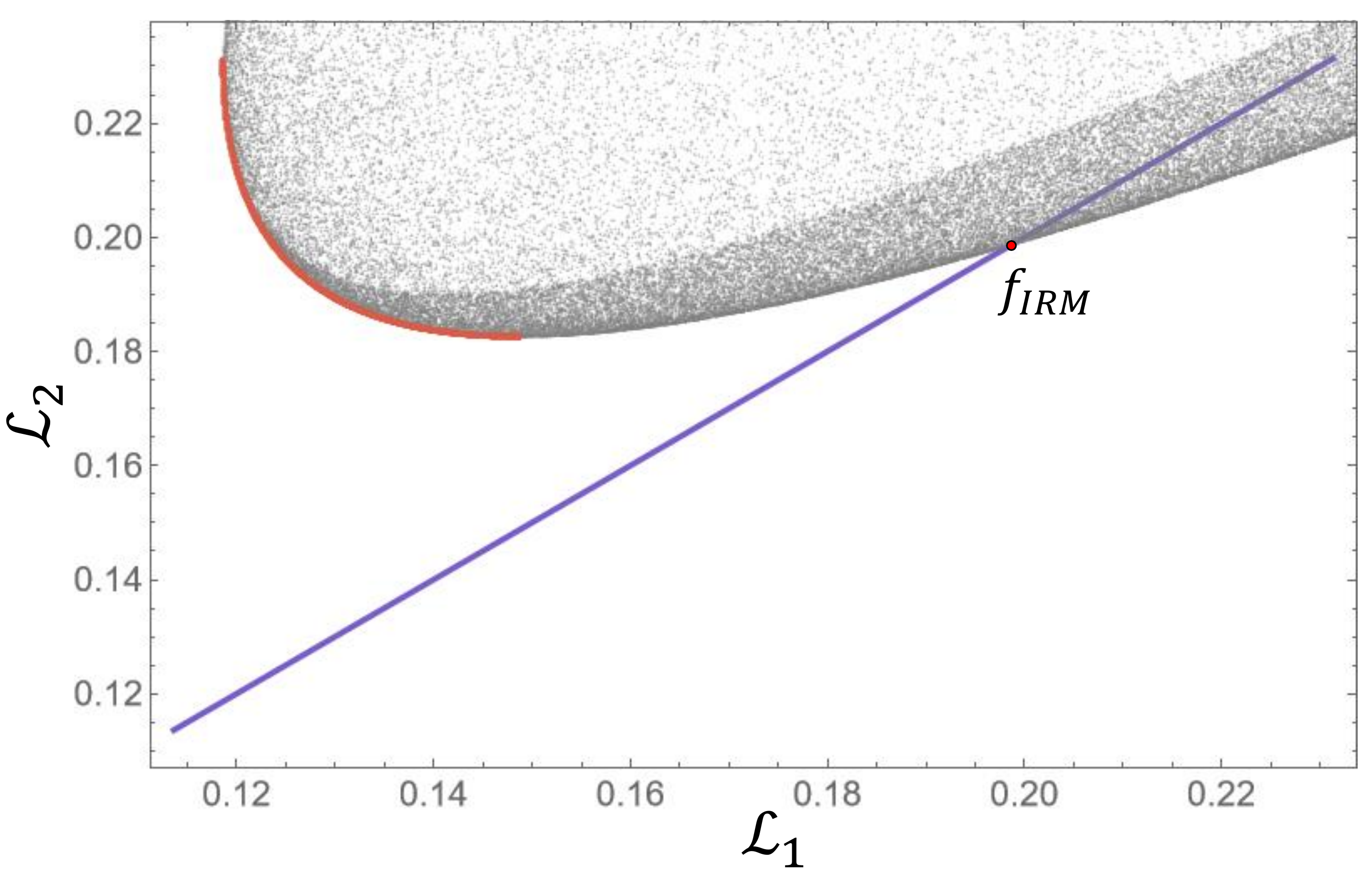}
		\label{fig:pareto_front_logl_appdx}
	}
	\subfigure[Failure case under Logistic loss.]{
		\centering
		\includegraphics[width=0.31\textwidth]{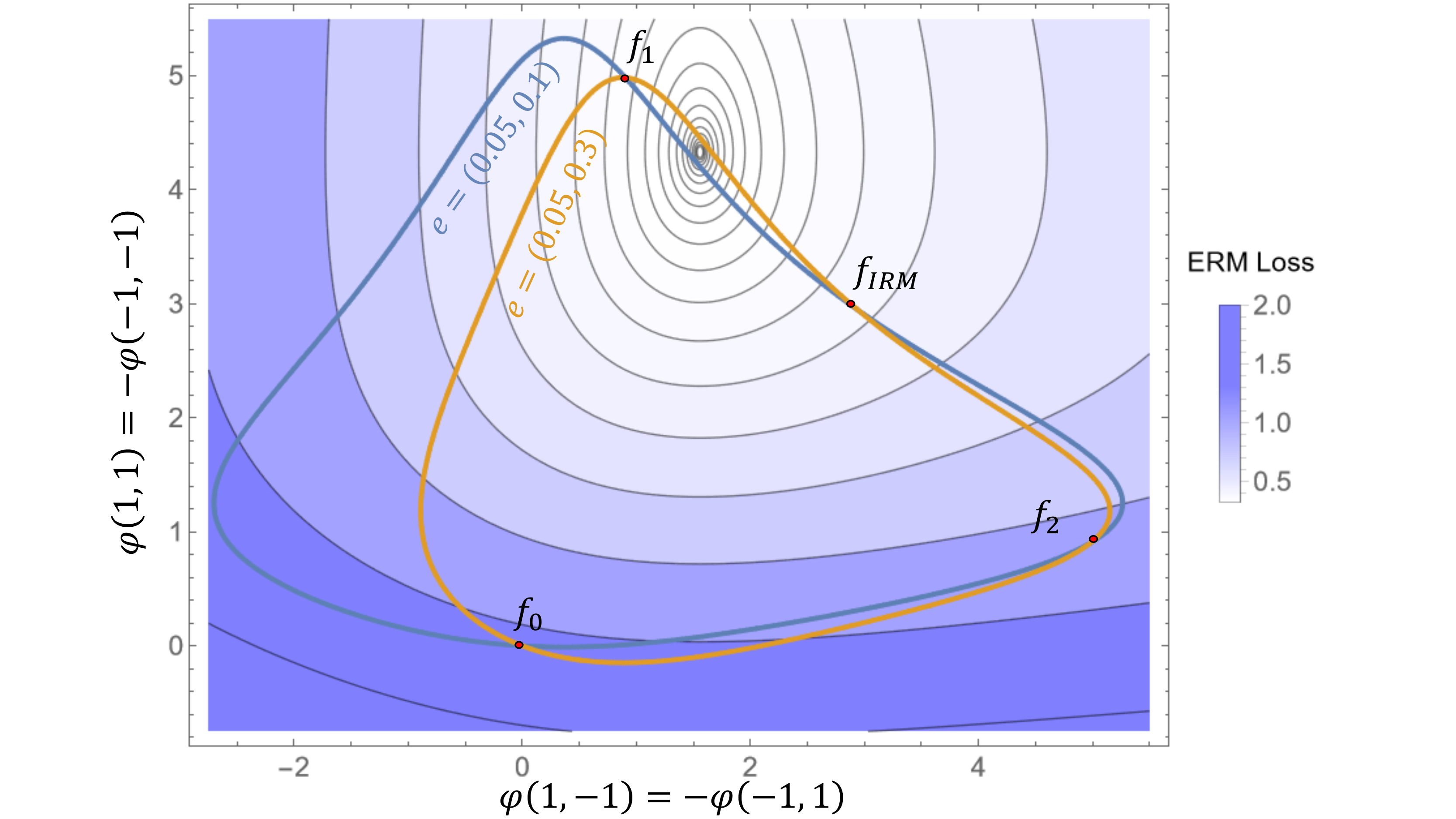}
		\label{fig:aistats_loss_logl_appdx}
	}
	\subfigure[Variance distribution under Logistic loss.]{
		\centering
		\includegraphics[width=0.32\textwidth]{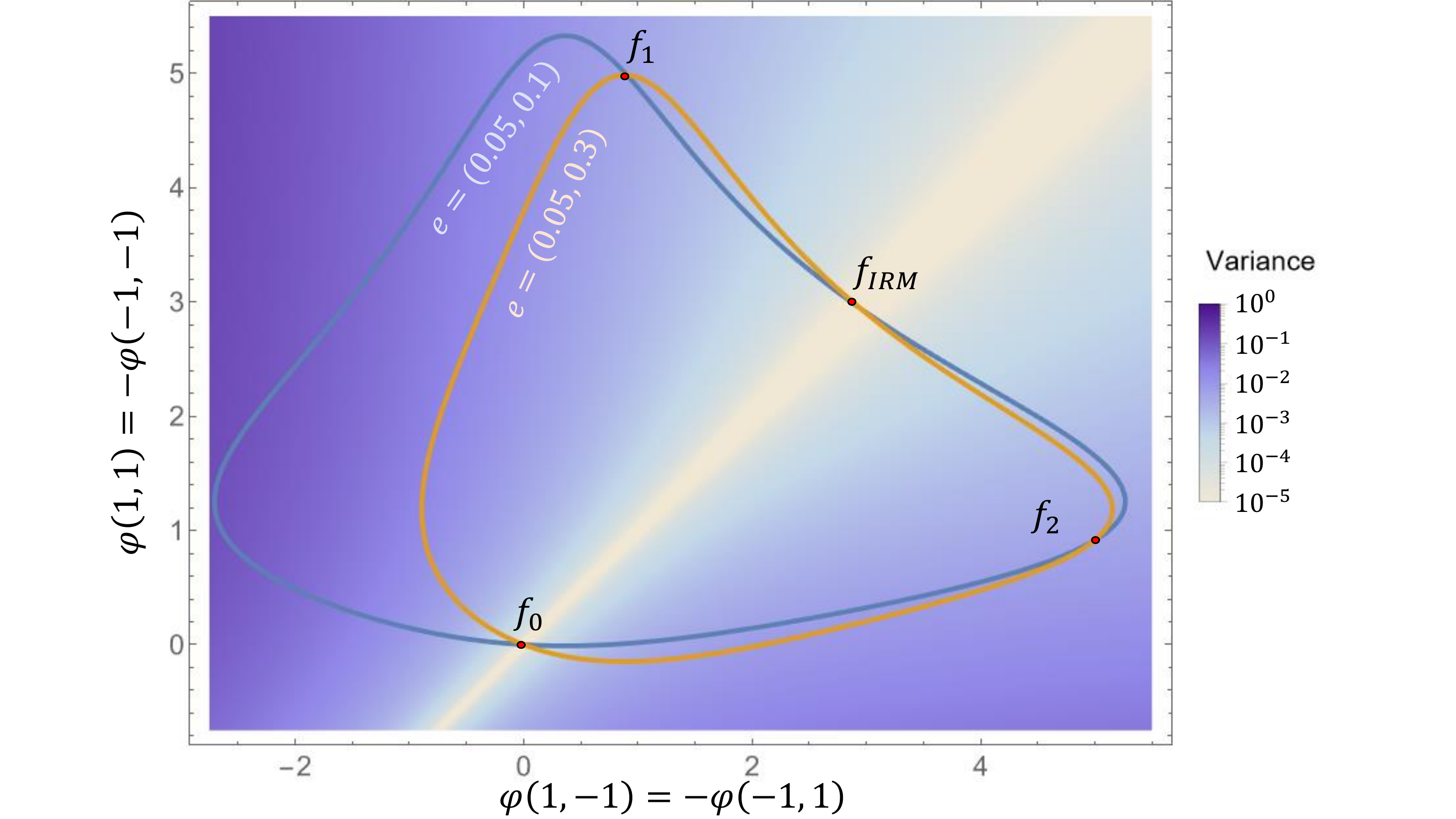}
		\label{fig:aistats_var_logl_appdx}
	}
	\vskip -0.15in
	\caption{Counterparts of Fig.~\ref{fig:aistats_fail}, Fig.~\ref{fig:aistats_var_mse} and Fig.~\ref{fig:pareto_front_mse} implemented in Logistic loss.}
	\label{fig:aistats_fail_logl_appdx}
\end{figure}

\textbf{More visualization results of the failure cases.}
In the main paper, we visualize the Pareto front, ERM loss distribution, and the variance distribution of the failure case given MSE losses, given the environment setup of $\envtrain:=\{(0.1,0.11),(0.1,0.4)\}$.
We plot Fig.~\ref{fig:aistats_fail} and Fig.~\ref{fig:aistats_var_mse} based on the Mathematica code provided by~\citet{irm_aistats}, where we focus on the odd predictors due to the symmetry in two-bit environments, i.e., predictors satisfying $\varphi(1,-1)=-\varphi(-1,1)$ and $\varphi(1,1)=-\varphi(-1,-1)$.
Since Fig.~\ref{fig:aistats_fail}, Fig.~\ref{fig:aistats_var_mse} and Fig.~\ref{fig:pareto_front_mse} are implemented in MSE loss, for completing the discussion under Setting A~\citep{irm_aistats}, we also give their logistic counterparts as in Fig.~\ref{fig:aistats_fail_logl_appdx}.

\begin{figure}[ht]
	\subfigure[\irml in the original CMNIST.]{
		\centering
		\includegraphics[width=0.31\textwidth]{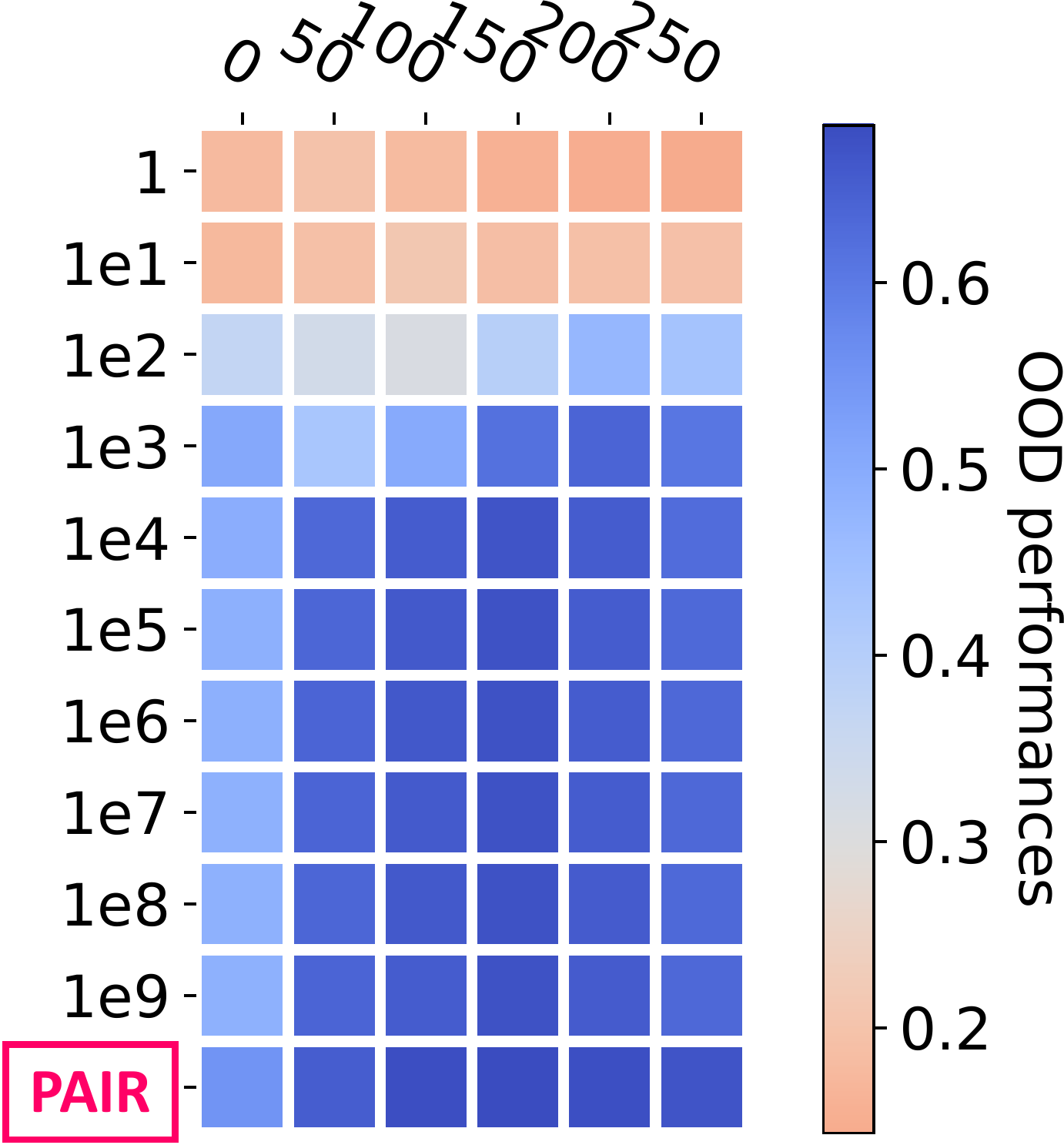}
	}
	\subfigure[\irml in CMNIST-m.]{
		\centering
		\includegraphics[width=0.31\textwidth]{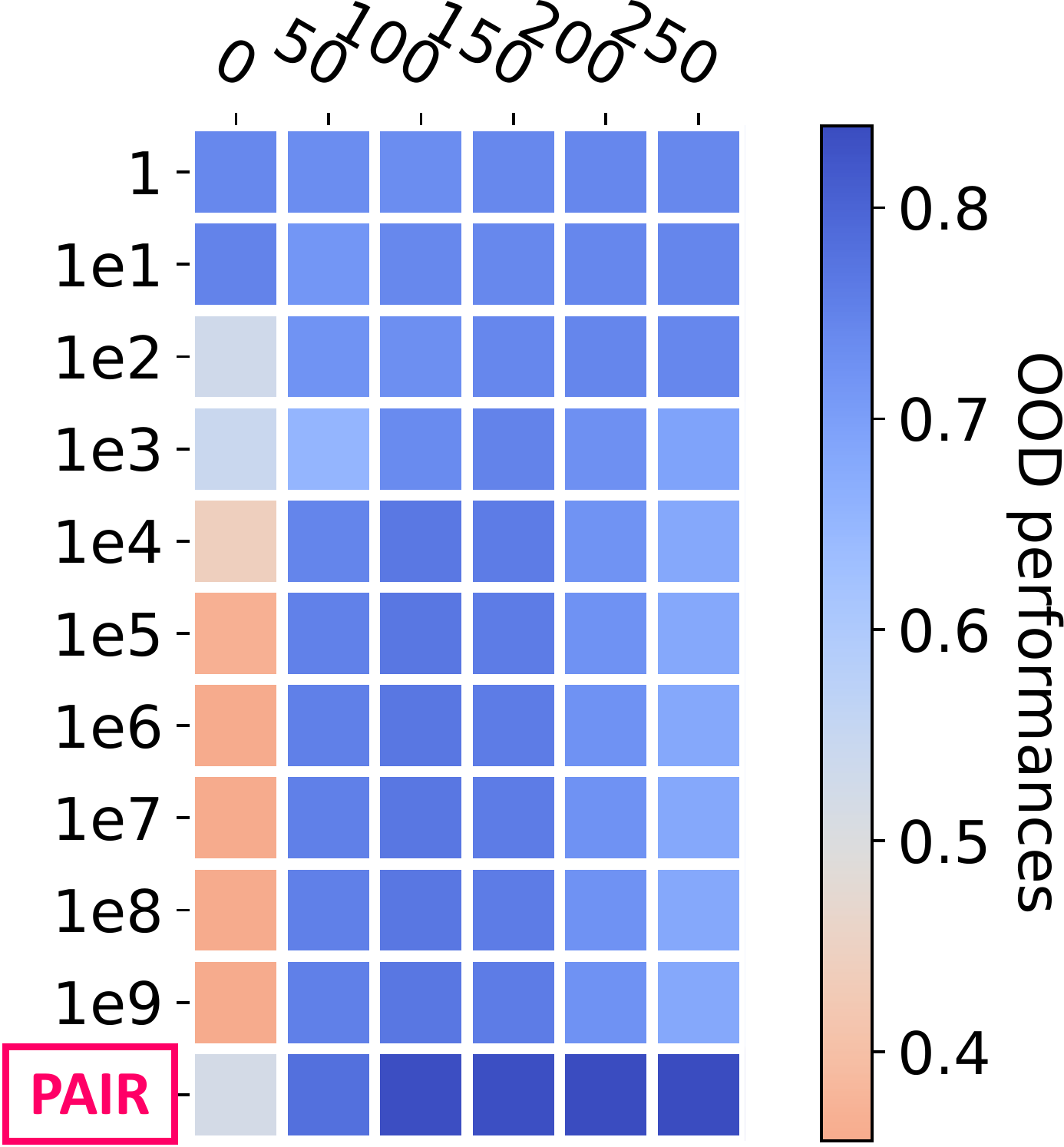}
	}
	\subfigure[Detailed performance of \irml.]{
		\centering
		\includegraphics[width=0.31\textwidth]{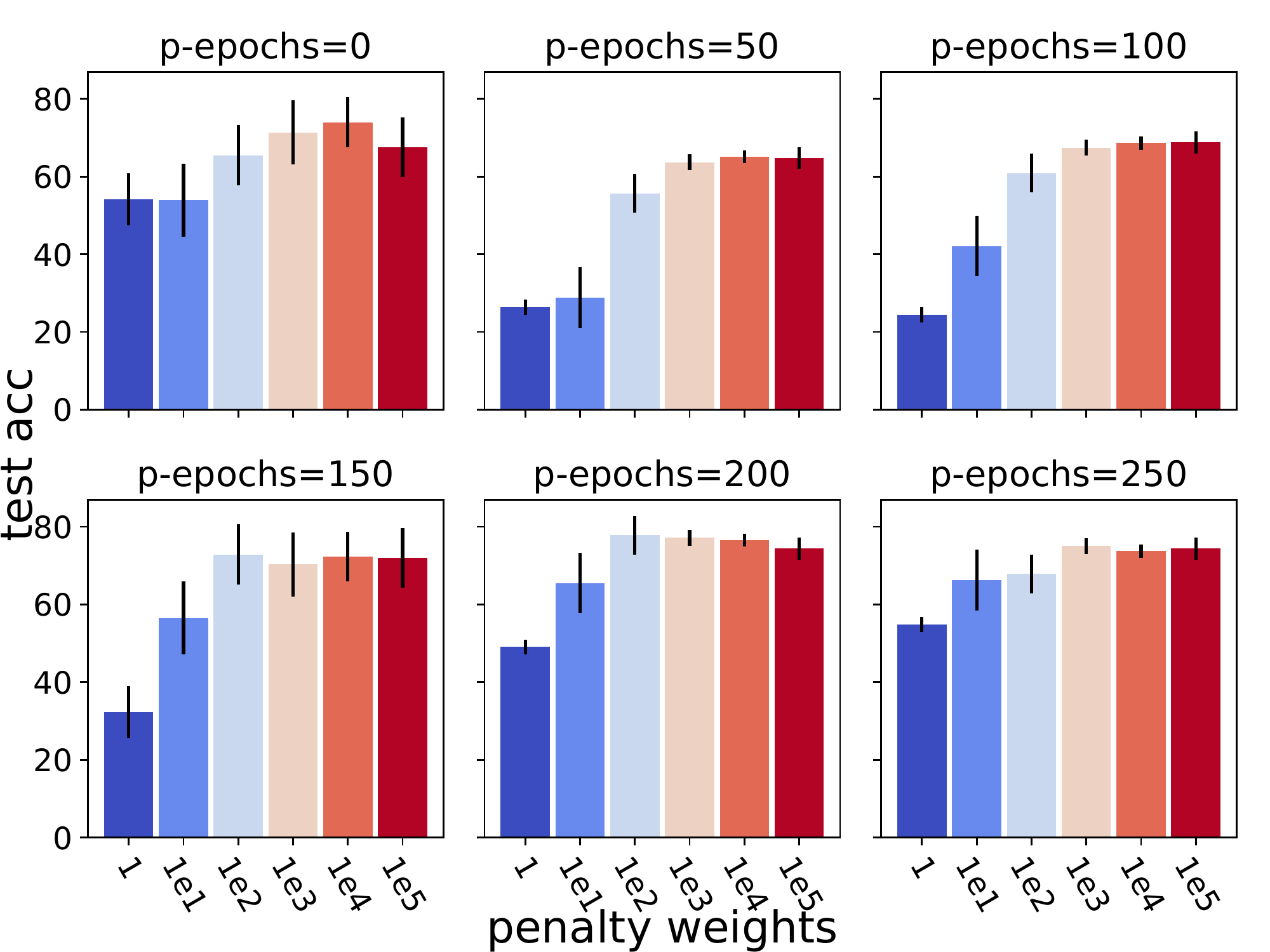}
		\label{fig:c12_hp_irm_appdx}
	}
	\caption{Performances of \irml in CMNIST and CMNIST-m under different hyperparameters.}
	\label{fig:sweep_irm_appdx}
\end{figure}

\textbf{Practical Drawback of Practical IRM Variants.}
In addition to the theoretical gap, the optimization of \irml is also difficult due to the conflicts between the IRM penalty and ERM penalty in Eq.~\ref{eq:irml_appdx}. It often requires significant efforts for choosing proper hyperparameters such as pretraining epochs and IRM penalty weights, i.e., $\lambda$. Otherwise, \irml may not enforce the constraint in \irms, hence will lead to unsatisfactory performance, as shown in Fig.~\ref{fig:sweep_acc}.
We argue that the gradient conflicts generally exist in OOD optimization for various objectives, in Fig.~\ref{fig:grad_conflict}, we visualize the cosine similarity between the gradients produced by ERM and OOD objectives, which is averaged from $50$ epochs after the pretraining. It can be found that, all of the OOD objectives~\citep{irmv1,vrex,ib-irm,iga,fishr,clove,sd} tend to yield gradients that have a lower cosine similarity with those of ERM. The generally existed conflicts can further lead to suboptimal performances of these OOD objective in practice even with exhaustive parameter tunning.

In complementary to Fig.~\ref{fig:sweep_acc}, we provide full results in Fig.~\ref{fig:sweep_irm_appdx}, where we show the results of \irml under different penalty weights ($y$-axis) and pretraining epochs ($x$-axis) on \cmnist~\citep{irmv1} (CMNIST) as well as the failure case~\citep{irm_aistats} (CMNIST-m), or $\envtrain:=\{(0.1,0.2),(0.1,0.25)\}$ described in two-bit environment.
It can be found that the performances of \irml are highly dependent on proper tuning of pretraining epochs and the penalty weights. The dependence grows stronger when \irml is shown to be unrobust on CMNIST-m.
We also provide a more detailed results of \irml on CMNIST-m in Fig.~\ref{fig:c12_hp_irm_appdx}, where the dependence can be clearly observed.
In contrast, \ours performs robustly well under different pretraining epochs, using a default preference $(1,1e10,1e12)$ to \erm, \irml and \vrex objectives, respectively. In Sec.~\ref{sec:experiments}, we provide more evidences to demonstrate the power of \ourso.

\subsection{Discussions of objectives in \ours}
\label{sec:discuss_pair_objs_appdx}

\begin{wrapfigure}{r}{0.33\textwidth}
	\vspace{-0.3in}
	\includegraphics[width=0.33\textwidth]{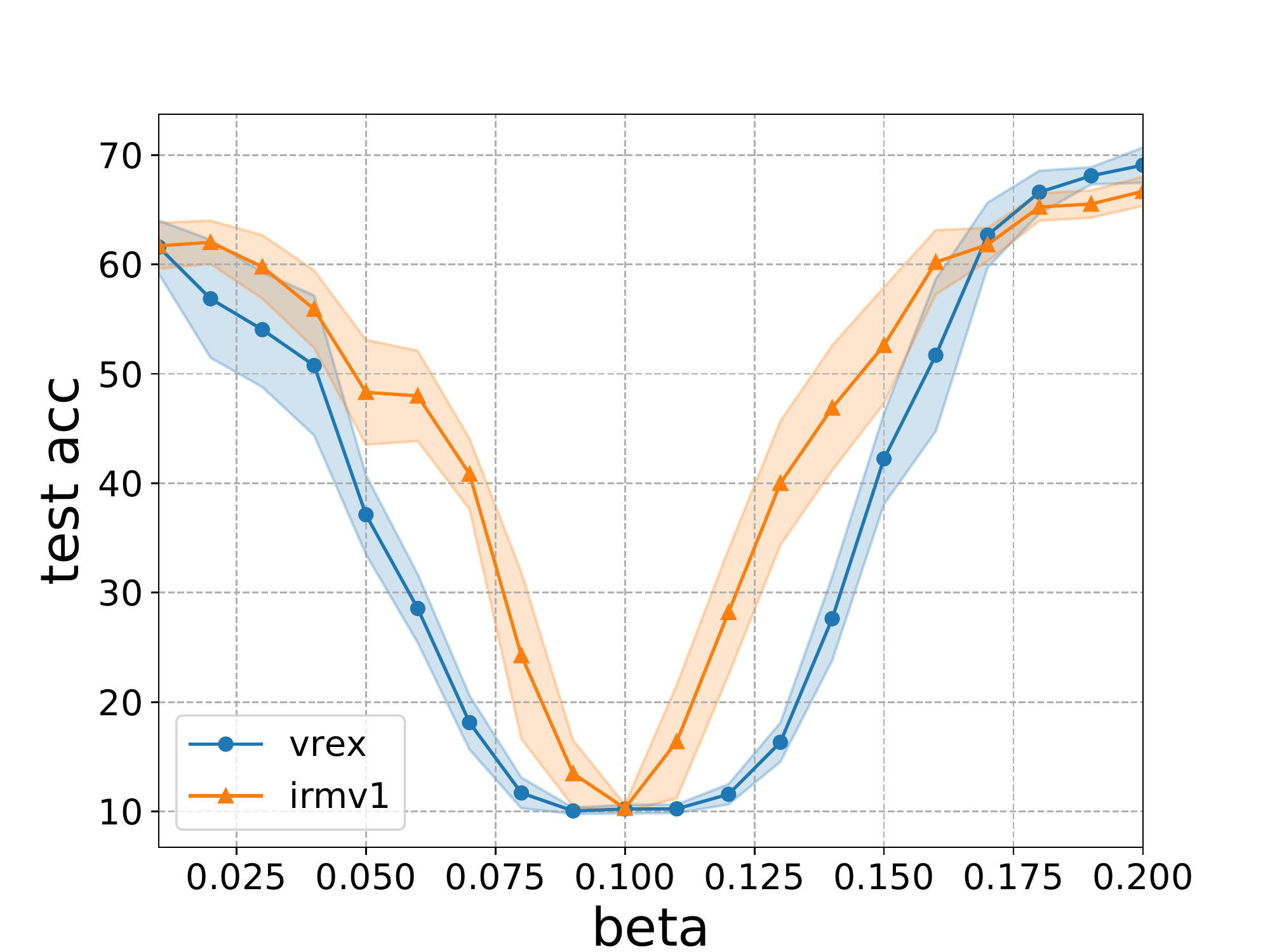}
	\vskip -0.1in
	\caption{Drawbacks of V-REx in practice.}
	\label{fig:vrex_failure}
	\vspace{-0.2in}
\end{wrapfigure}

In Sec.~\ref{sec:irm_pair_sol}, we derive a group of ideal objectives for improving the robustness of \irml, shown as the following \irmx
\begin{equation}
	\label{eq:irmx_moo_appdx}
	(\text{\irmx})\qquad\qquad\qquad\qquad\min_{\varphi} (\gL_\erm,\gL_\irm,\gL_\vrex)^T.\qquad\qquad\qquad\qquad
\end{equation}
We prove in Proposition~\ref{thm:recovep_iRM_paper_appdx} that \irmx is able to solve a large number of failure cases of \irms and \irml, and recovers the set of invariant predictors produced by the original \irm. However, motivated readers might be interested in the reasons for keeping \irml in \irmx, since \vrex solely could resolve the two-bit environment failure case.

Theoretically, Proposition~\ref{thm:recovep_iRM_paper_appdx} requires also the invariant predictors produced by \irms, i.e., $\gI_\gS(\gE)$, to recover the invariant predictors yielded by \irm. Nevertheless, it considers only the ideal case.
In the next, we elaborate on a detailed discussion from the empirical side.

\textbf{Drawbacks of Robust Minimization in Practice.}
After showing REx~\citep{vrex} can help avoiding the failure cases of \irms, a natural question is that, does $\gL_\irm$ remain necessary? We find the answer is ``Yes''.
In Fig.~\ref{fig:vrex_failure}, we use a modified example of $\envtrain=\{(0.25,0.1),(0.25,\beta)\}$ with ColoredMNIST~\citep{irmv1}, where we change the variance between two environments through different $\beta$. It can be found that, as the variance between two environments getting closer, the performance of REx~\citep{vrex} (denoted as vrex) drops more sharply than \irml (denoted as irmv1). %
The main reason is that, as the variation of spurious signals in two environments tends to be smaller, the gradient signal of $\var(\{\gL_e\}_{e\in\envtrain})$ tends to vanish, while the signals from $\gL_\irm$ maintains. This issue can be more serious in stochastic gradient descent where the estimates of the variance of $\{\gL_e\}_{e\in\envtrain}$ in minibatches tend to be noisy, leading to weaker signals.

\subsection{More details on the extrapolation example}
\label{sec:causal_extrapolate_appdx}
In this section, we provide more details and results about the extrapolation example that examines the recovery of causal invariance, in complementary to Sec.~\ref{sec:causal_extrapolate}.

We first restate the definition of causal invariance specified by~\citet{inv_principle,irmv1,irm_aistats} as in Definition~\ref{def:causal_inv_appdx}.
\begin{definition}(Causal Invariance)\label{def:causal_inv_appdx}
	Given a predictor $f:=w\circ\varphi$, the representation produced by the featurizer $\varphi$ is invariant over $\envall$ if and only if for all $e_1,e_2\in\envall$, it holds that
	\[
		\mathbb{E}_{\gD_{e_1}}[Y|\varphi(X)=z]=\mathbb{E}_{\gD_{e_2}}[Y|\varphi(X)=z],
	\]
	for all $z\in\gZ_\varphi^{e_1}\cap\gZ_\varphi^{e_2}$,
	where
	$\gZ_\varphi^e:=\{\varphi(X)|(X,Y)\in\text{supp}(\gD_e)\}$.
	\vspace{-0.05in}
\end{definition}

Then, we construct a regression example from $\gX:\R^2\rightarrow\gY:\R$. The input $X$ is a two dimensional inputs, i.e., $X=(X_1,X_2)$.
$X_1$ is designed to be the invariant feature, i.e., $Y =\sin(X_1)+1$, while $X_2$ is designed to be the spurious feature that can be controlled to be spuriously correlated with label $Y$.
The environments are synthesized according to different sampling methods.

\begin{figure}[t]
	\subfigure[Uniform.]{
		\includegraphics[width=0.245\textwidth]{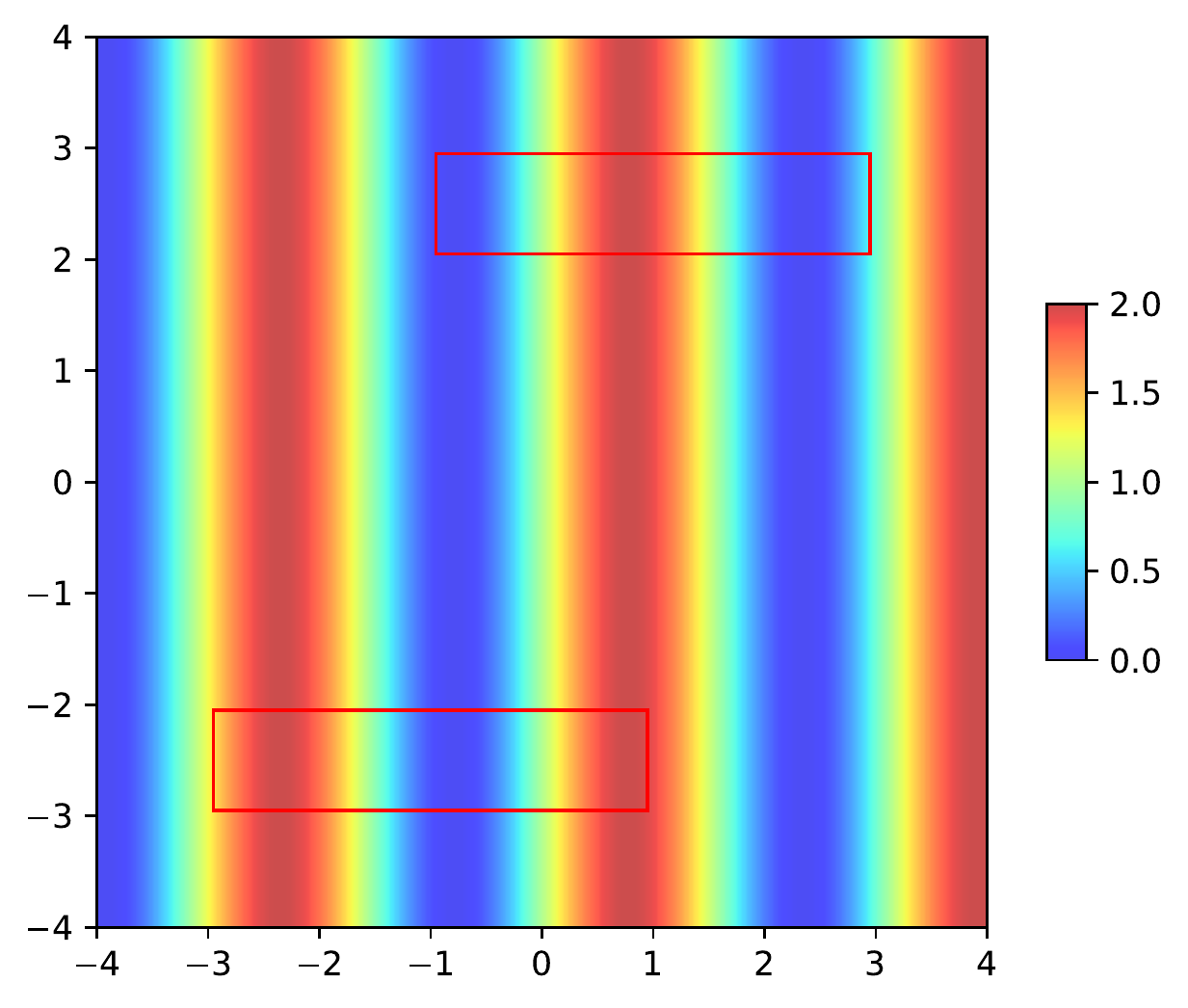}
		\label{fig:linextra_uniform_appdx}
	}
	\subfigure[ERM.]{
		\includegraphics[width=0.215\textwidth]{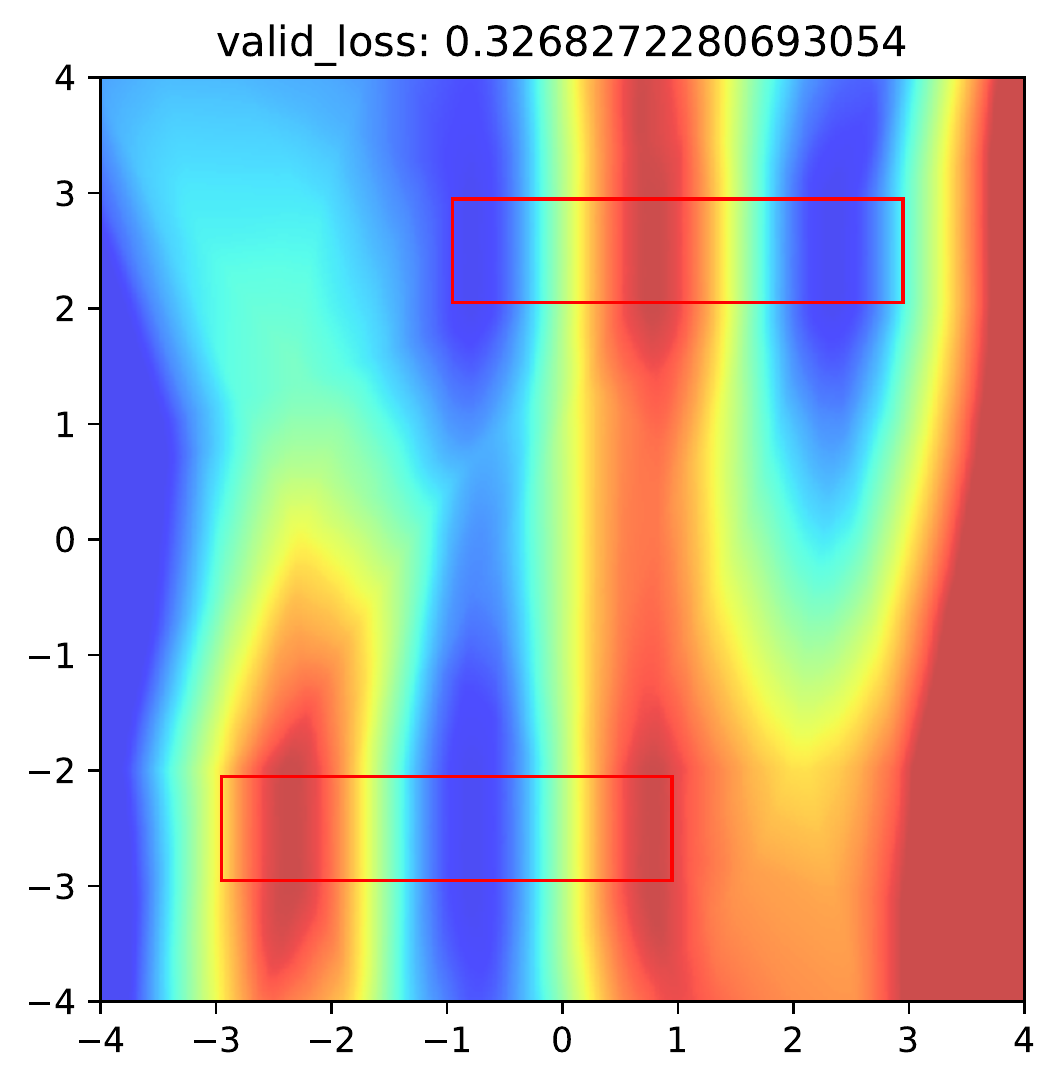}
		\label{fig:linextra_uniform_erm_appdx}
	}
	\subfigure[Gaussian.]{
		\includegraphics[width=0.245\textwidth]{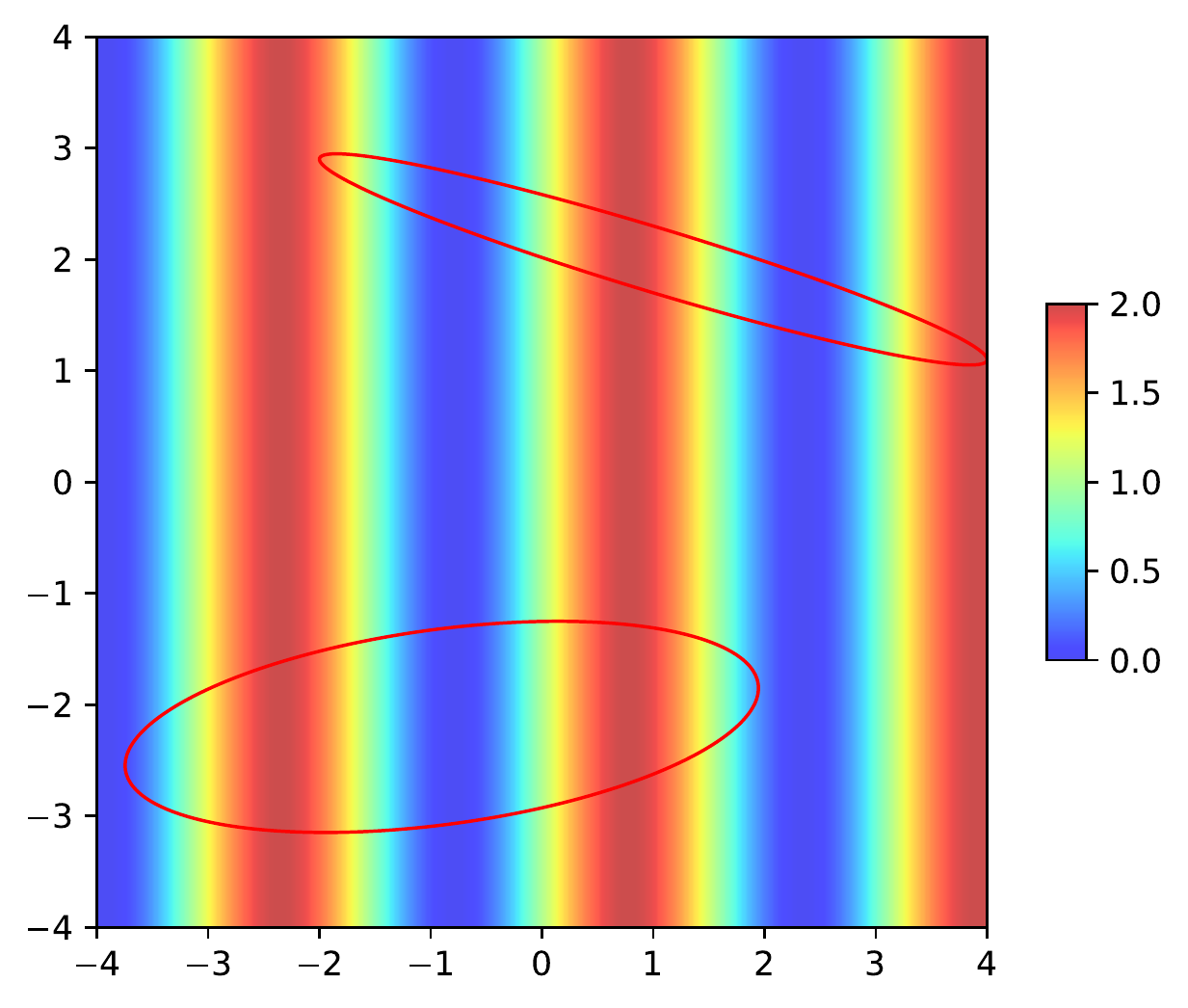}
		\label{fig:linextra_gau_appdx}
	}
	\subfigure[ERM.]{
		\includegraphics[width=0.215\textwidth]{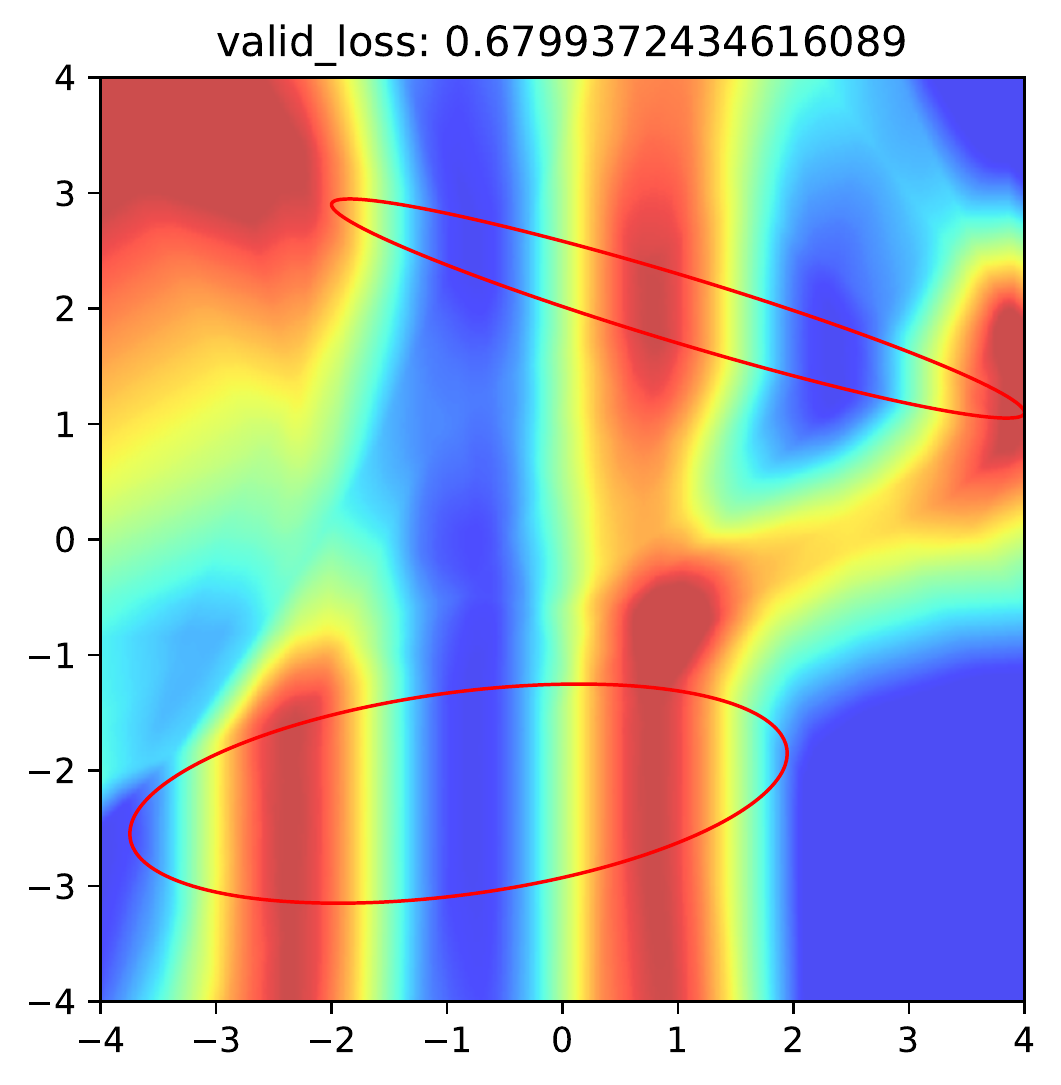}
		\label{fig:linextra_gau_erm_appdx}
	}
	\caption{Recovery of causal invariance via \ourst. (a), (c) We adopt two sampling methods where we sample the training data (mainly) from the regions marked in red, and evaluate the predictions across all region from $(-4,-4)$ to $(4,4)$.
		The predictor following the invariance defined in \irm~\citep{irmv1} requires the predictions to be independent of spurious features within the overlapped invariant features. In this example, intuitively it requires the colored lines to be perpendicular to $x$-axis within $[-2,2]$. (b) and (d) show the performances of \erm under two sampling methods, it can be found that \erm fail to recover the causal invariance and incurs a high MSE loss.}
	\label{fig:extrapolation_sampling_appdx}
\end{figure}

Shown as in Fig.~\ref{fig:extrapolation_sampling_appdx}, we leverage two sampling methods: i) Uniform sampling and ii) Gaussian sampling, where the latter is more difficult than the former.
For Uniform sampling, we uniformly sample the rectangle regions $\{(-3,-3),(-2,1)\}$ as environment $1$ and $\{(-1,2),(3,3)\}$ as environment $2$, shown as the red regions marked in Fig.~\ref{fig:linextra_uniform_appdx}.
For Gaussian sampling, we sample from two Gaussian distributions: the first one has the center as $(-0.9,-2.2)$ with the covariance matrix as $\{(0.9,0.11),(0.11,0.1)\}$; the second one has the center as $(1,2)$ with the covariance matrix as $\{(1,-0.3),(-0.3,0.1)\}$, shown as the red regions marked in Fig.~\ref{fig:linextra_gau_appdx}.

Therefore, in these two examples, the invariant representation $\varphi$ should only take $X_1$ and discard the spurious features $X_2$ under the overlapped invariant features, i.e., $[-2,2]$.
As we use different colors to denote, the prediction produced by the invariant predictor following Definition~\ref{def:causal_inv_appdx} is expected be independent of $X_2$. In other words, the plotted lines need to be \emph{perpendicular} to the $x$-axis within the overlapped invariant features $[-2,2]$.

We implement the predictor with a $3$-layer linear perceptron that has a hidden dimension of $128$. We use the MSE loss and Adam~\citep{adam} to optimize the neural network. We sample $2500$ training data points from each environment and evaluate with $1000$ data points uniformly sampled across all regions. For fair comparison, we train all algorithms $10000$ epochs until converge. Following the common practice~\citep{domainbed}, we use a anneal iterations of the OOD penalties for all methods as $150$.
For \irml, \vrex and \irmx, we search the penalty weights from $1e-4$ to $1e$ and find they generically perform well when with the penalty weights of $1e-2$ to $1e1$. While for $\ours$, we search the relative preferences across $6$ choices $(1,1e4,1e16),(1,1e4,1e12),(1,1e6,1e8),(1,1e8,1e4),(1,1e4,1e4),(1,1e8,1e8)$, and find $(1,1e4,1e12),(1,1e8,1e4),(1,1e4,1e4),(1,1e8,1e8)$ have lower validation losses.

\begin{figure}[H]
	\vskip -0.2in
	\subfigure[Uniform.]{
		\includegraphics[width=0.23\textwidth]{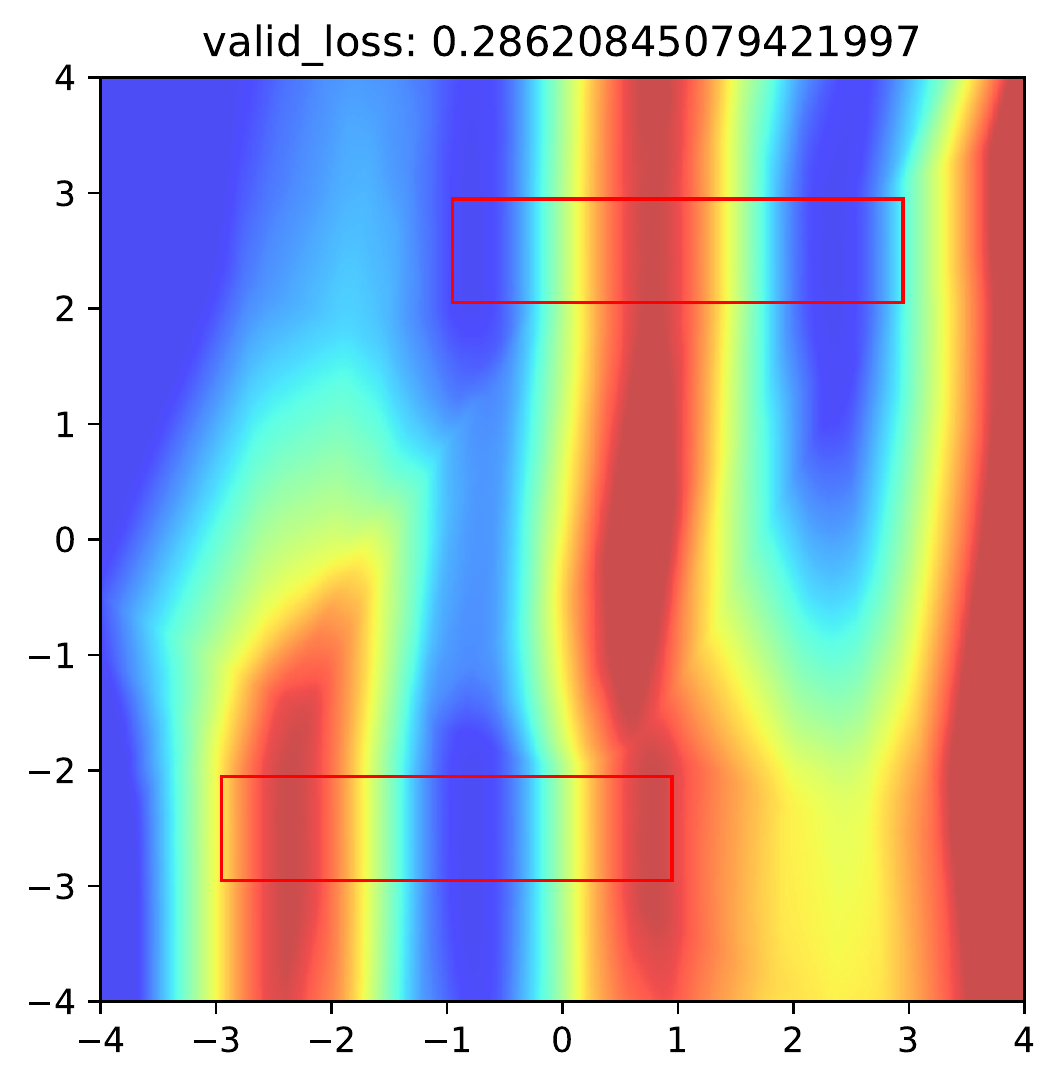}
		\label{fig:linextra_uniform_irm_appdx}
	}
	\subfigure[Uniform.]{
		\includegraphics[width=0.23\textwidth]{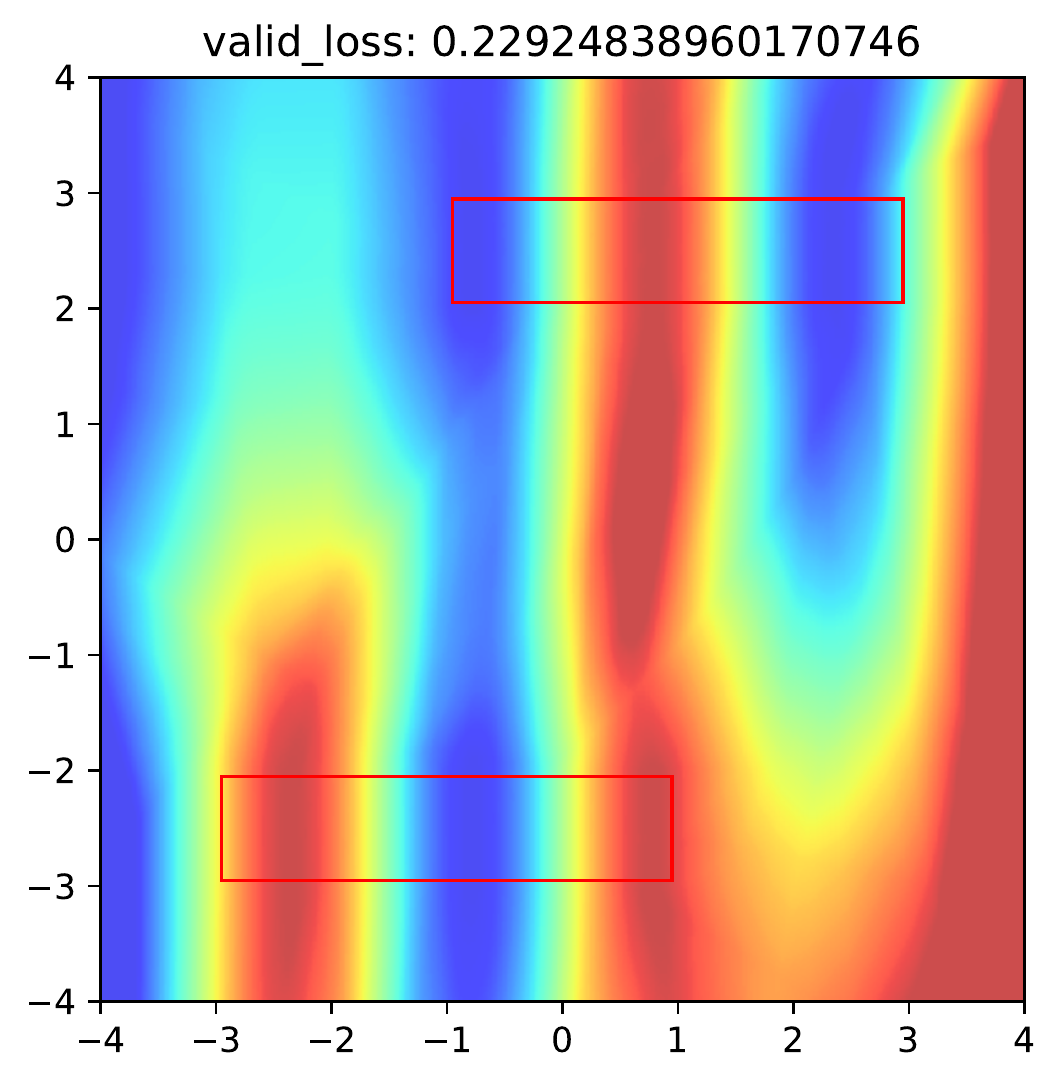}
		\label{fig:linextra_uniform_irm2_appdx}
	}
	\subfigure[Gaussian.]{
		\includegraphics[width=0.23\textwidth]{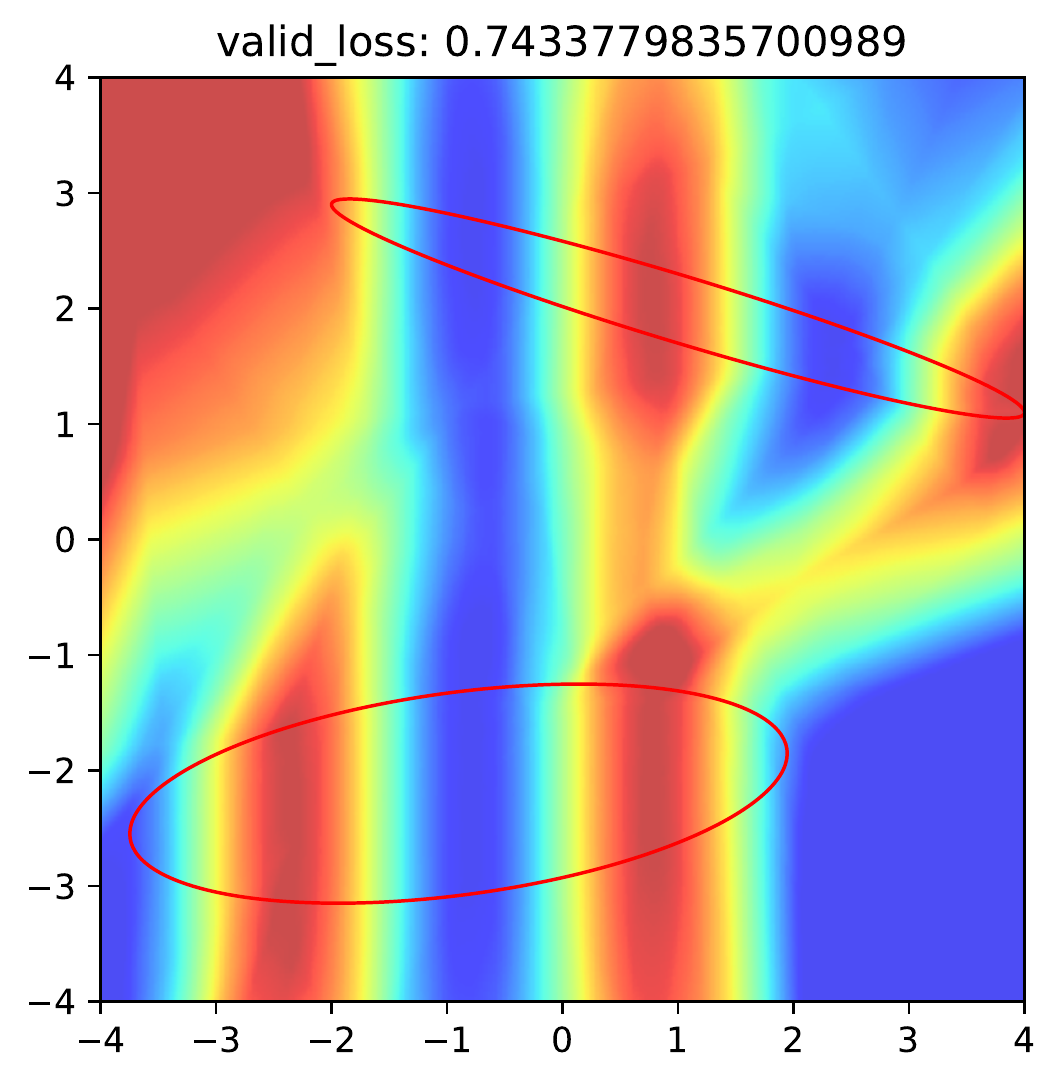}
		\label{fig:linextra_gau_irm_appdx}
	}
	\subfigure[Gaussian.]{
		\includegraphics[width=0.23\textwidth]{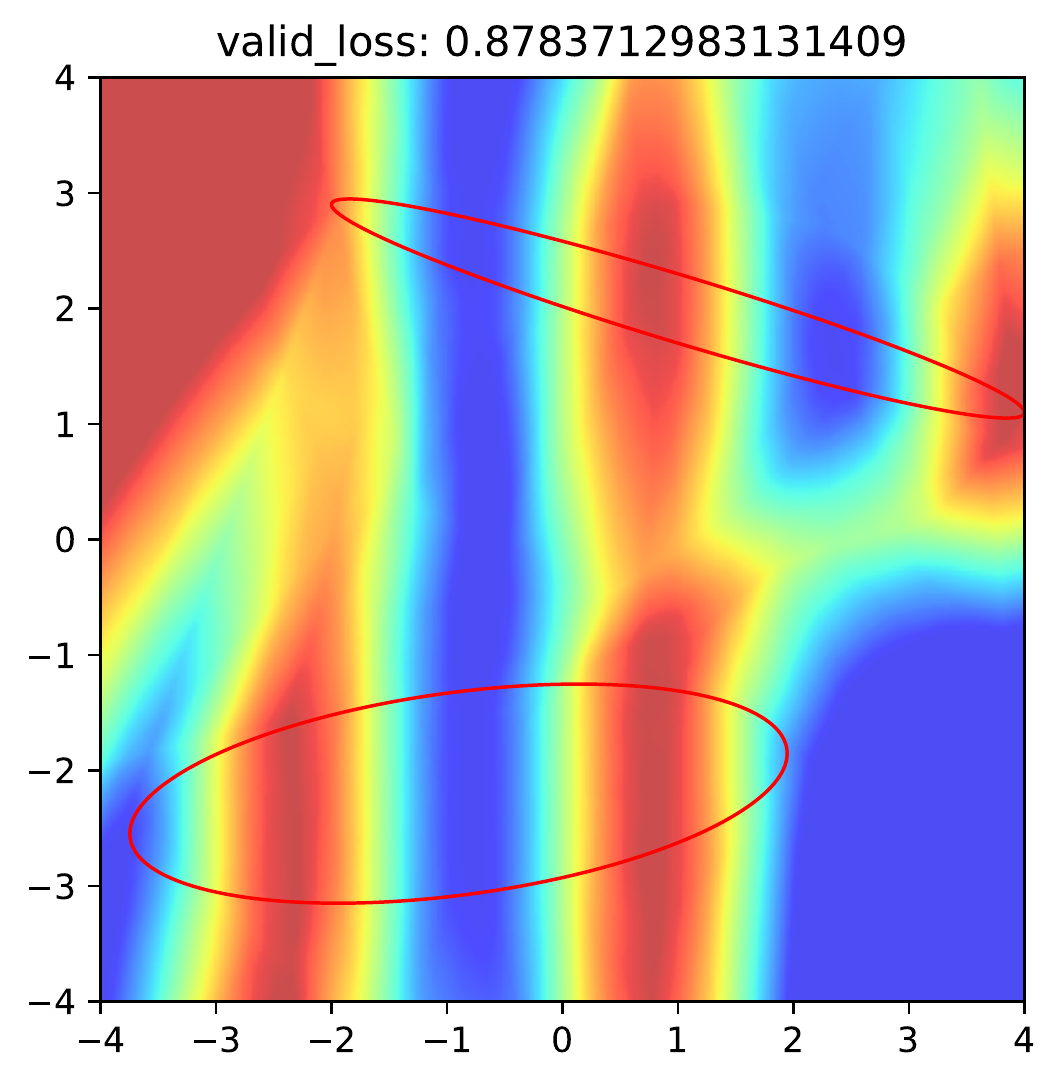}
		\label{fig:linextra_gau_irm2_appdx}
	}
	\caption{Recovery of causal invariance via \irml.}
	\label{fig:linextra_irm_appdx}
	\vskip -0.2in
\end{figure}

\begin{figure}[H]
	\vskip -0.2in
	\subfigure[Uniform.]{
		\includegraphics[width=0.23\textwidth]{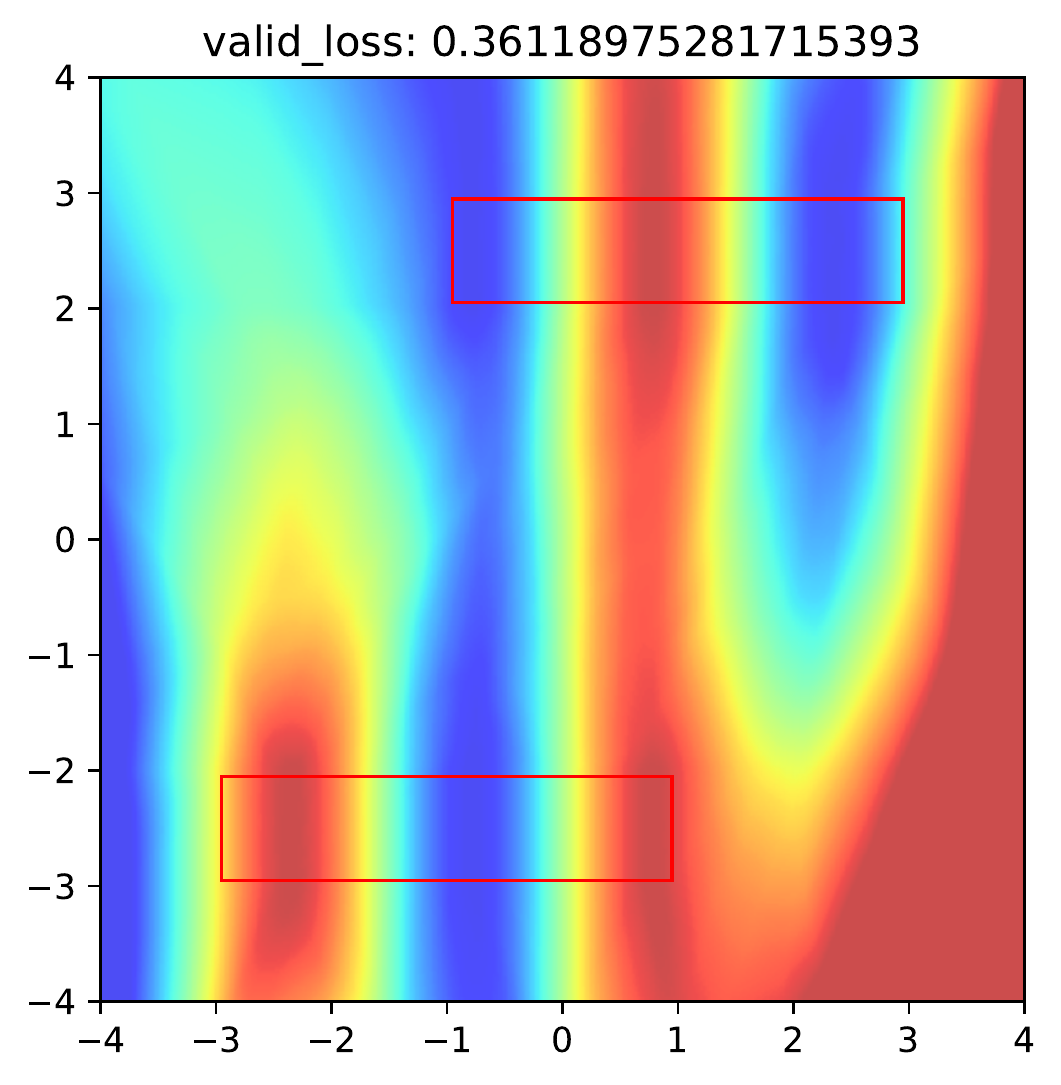}
		\label{fig:linextra_uniform_vrex_appdx}
	}
	\subfigure[Uniform.]{
		\includegraphics[width=0.23\textwidth]{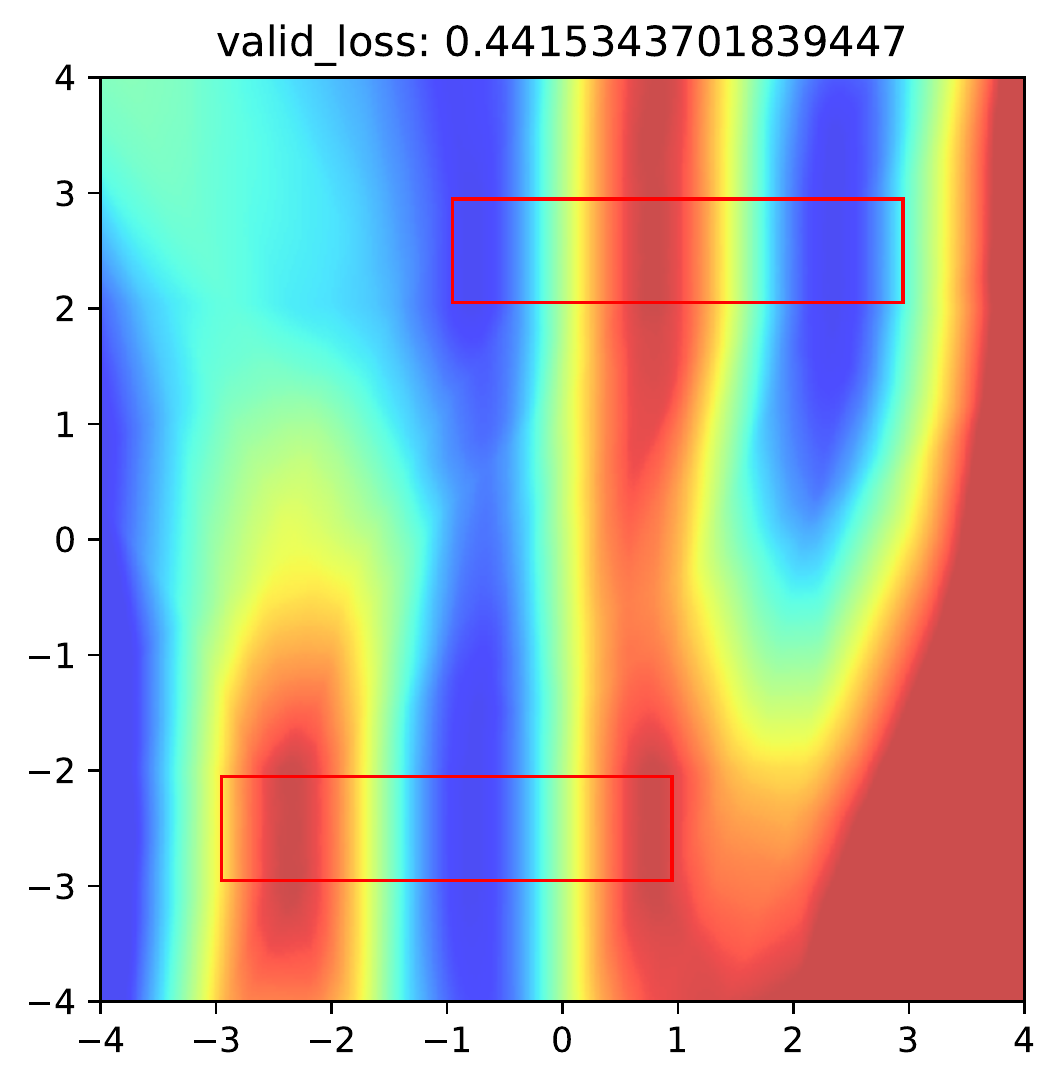}
		\label{fig:linextra_uniform_vrex2_appdx}
	}
	\subfigure[Gaussian.]{
		\includegraphics[width=0.23\textwidth]{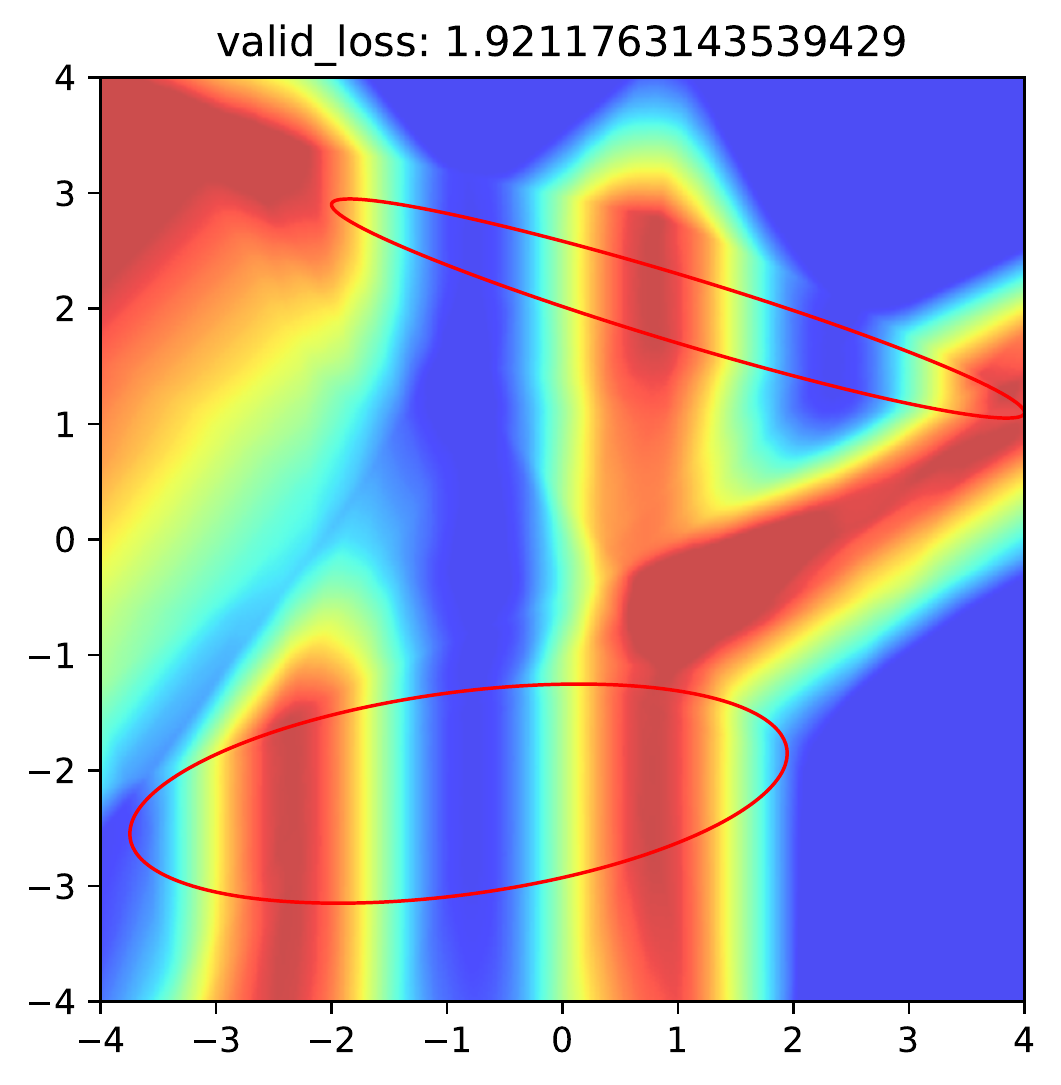}
		\label{fig:linextra_gau_vrex_appdx}
	}
	\subfigure[Gaussian.]{
		\includegraphics[width=0.23\textwidth]{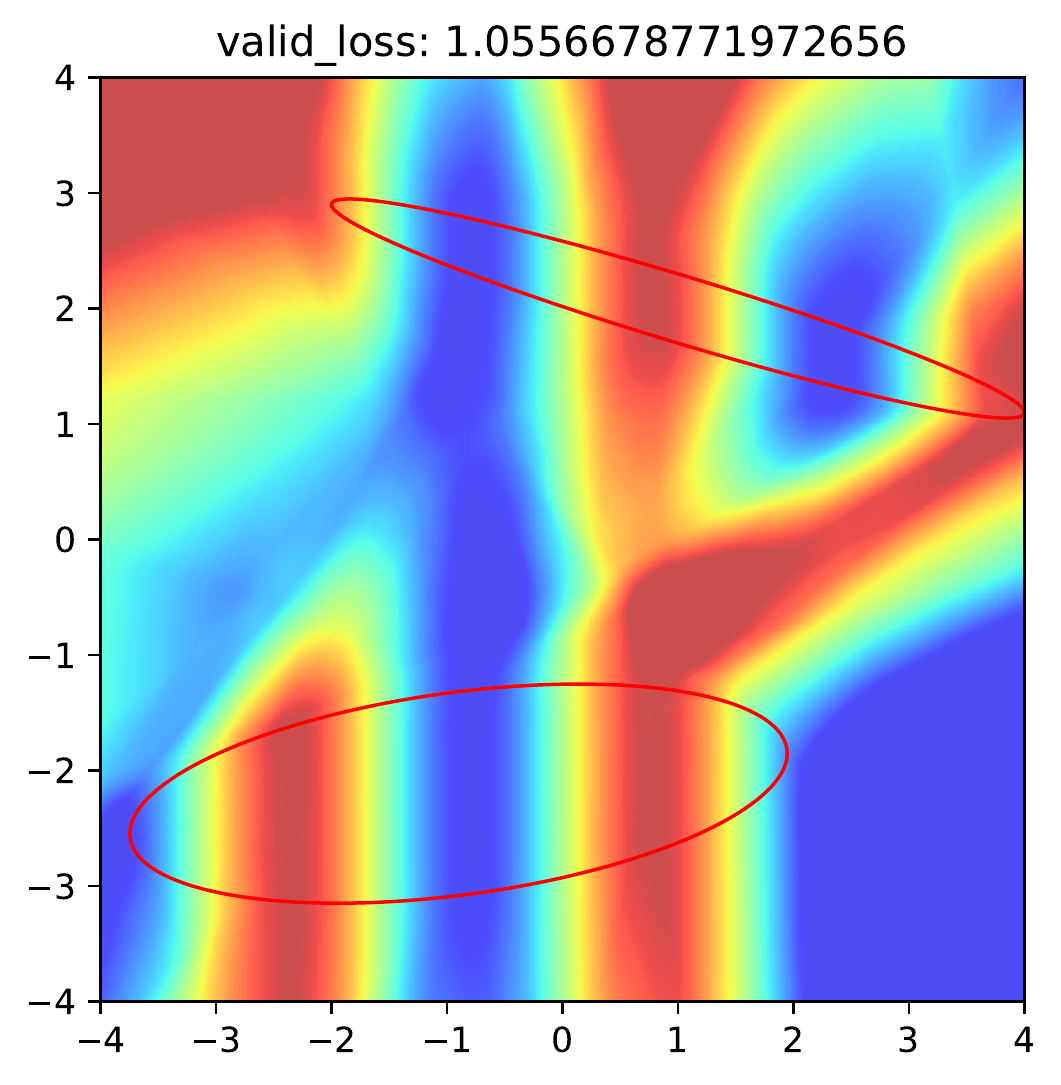}
		\label{fig:linextra_gau_vrex2_appdx}
	}
	\caption{Recovery of causal invariance via \vrex.}
	\label{fig:linextra_vrex_appdx}
\end{figure}

\begin{figure}[H]
	\subfigure[Uniform.]{
		\includegraphics[width=0.23\textwidth]{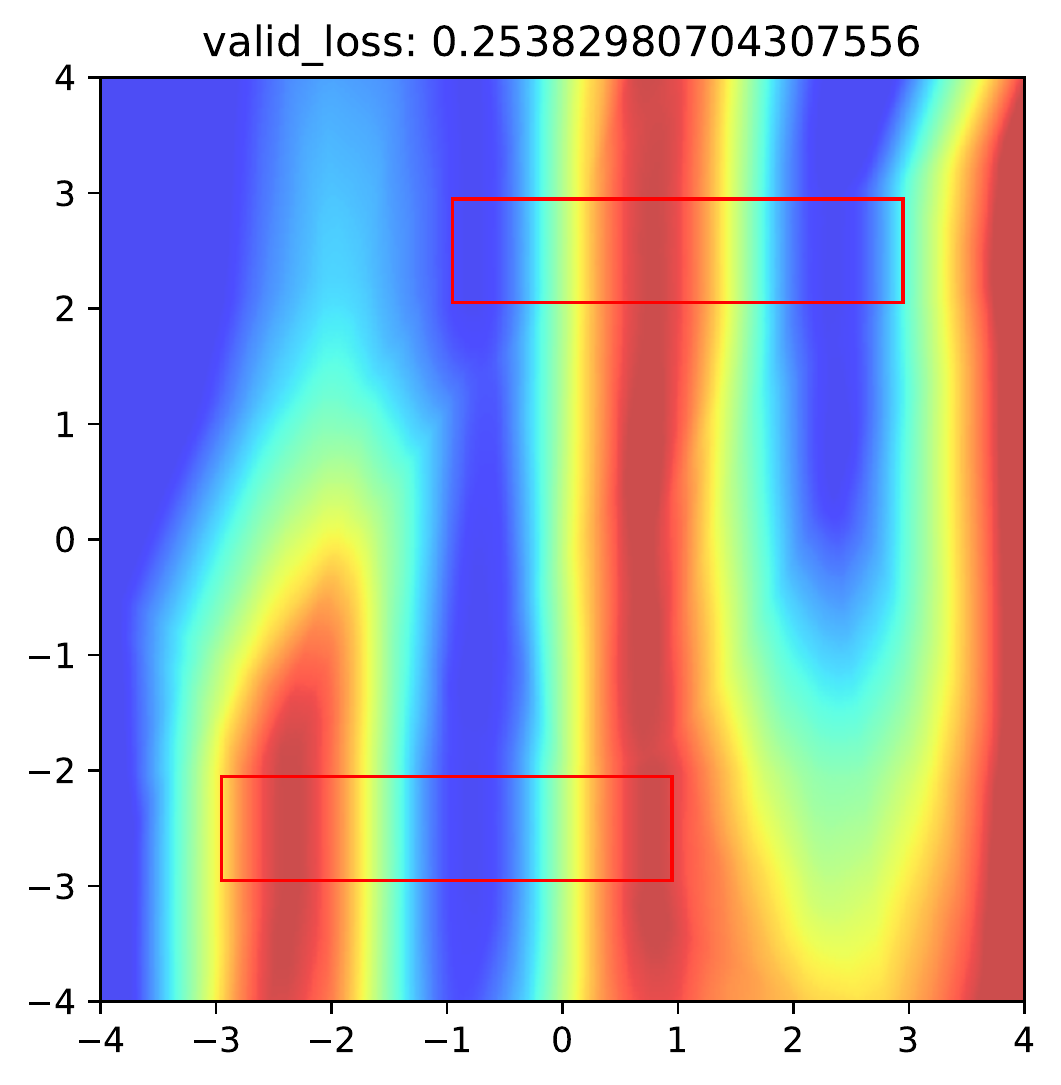}
		\label{fig:linextra_uniform_irmx_appdx}
	}
	\subfigure[Uniform.]{
		\includegraphics[width=0.23\textwidth]{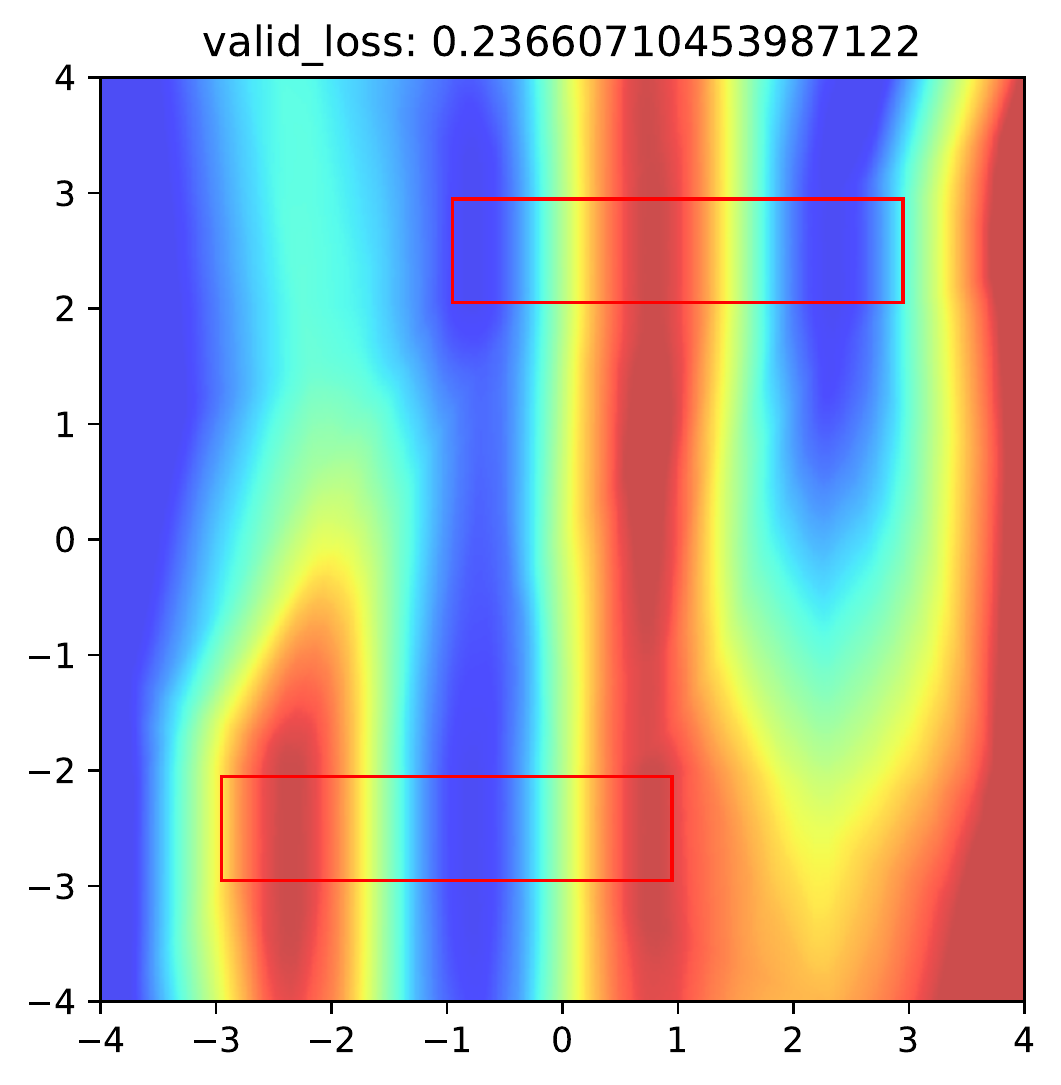}
		\label{fig:linextra_uniform_irmx2_appdx}
	}
	\subfigure[Gaussian.]{
		\includegraphics[width=0.23\textwidth]{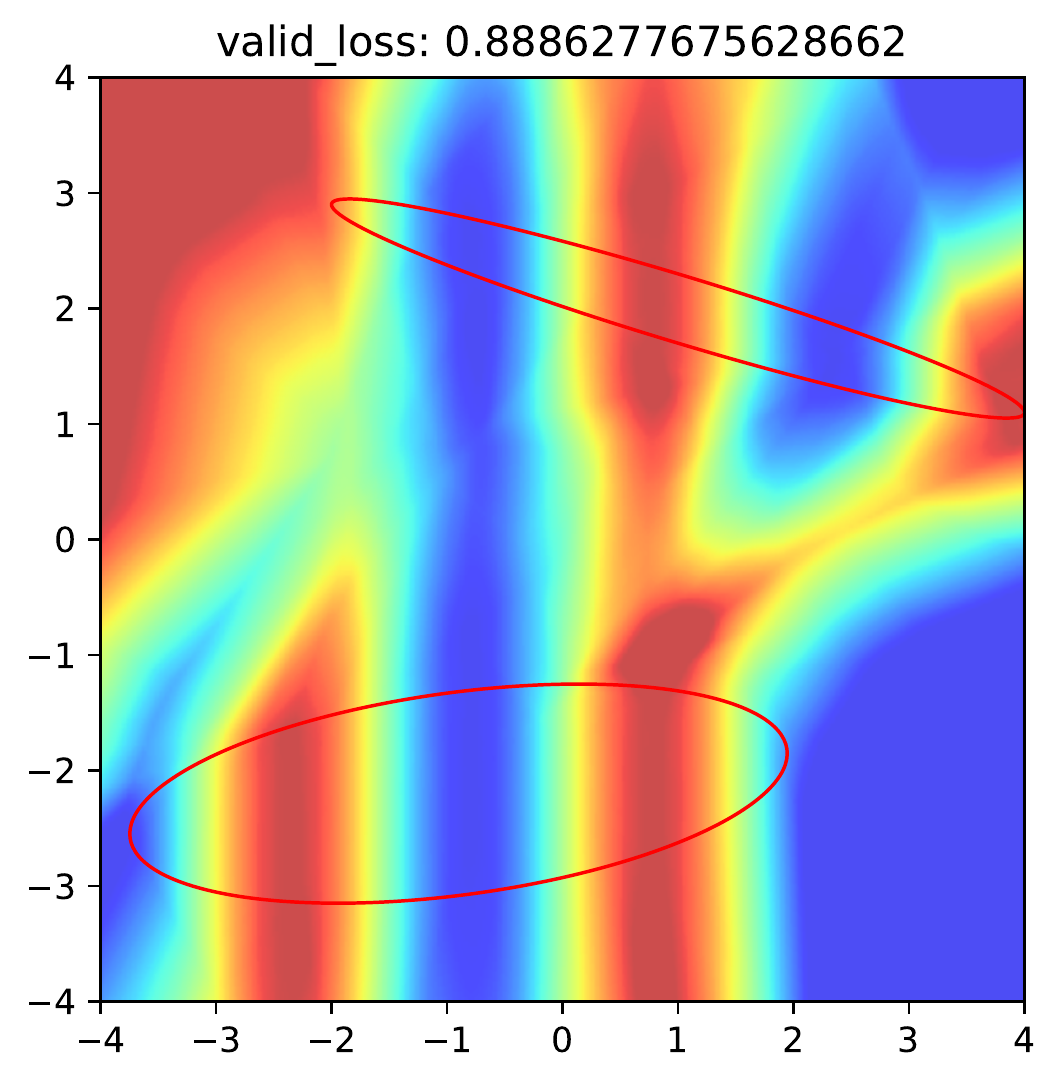}
		\label{fig:linextra_gau_irmx_appdx}
	}
	\subfigure[Gaussian.]{
		\includegraphics[width=0.23\textwidth]{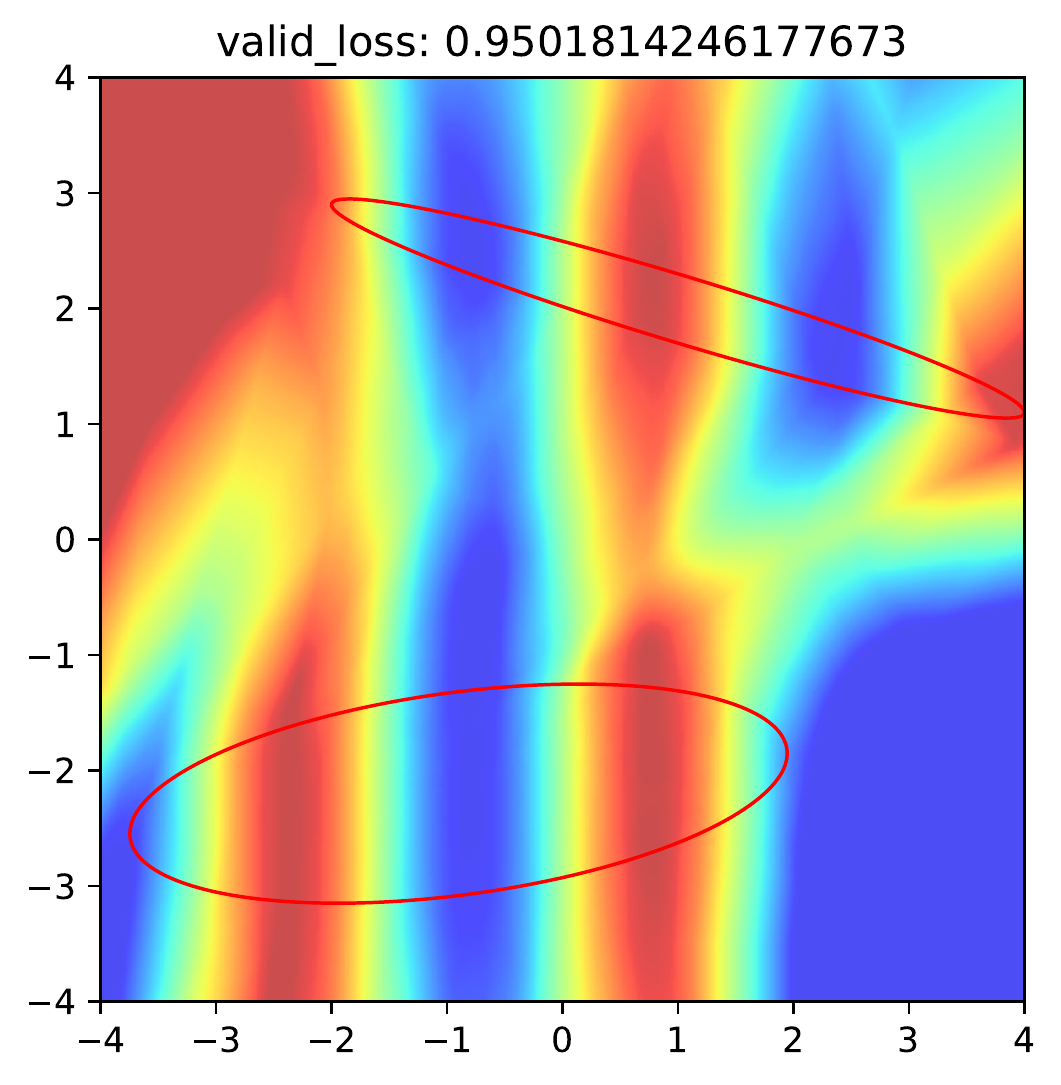}
		\label{fig:linextra_gau_irmx2_appdx}
	}
	\caption{Recovery of causal invariance via \irmx.}
	\label{fig:linextra_irmx_appdx}
\end{figure}

\begin{figure}[H]
	\subfigure[Uniform.]{
		\includegraphics[width=0.23\textwidth]{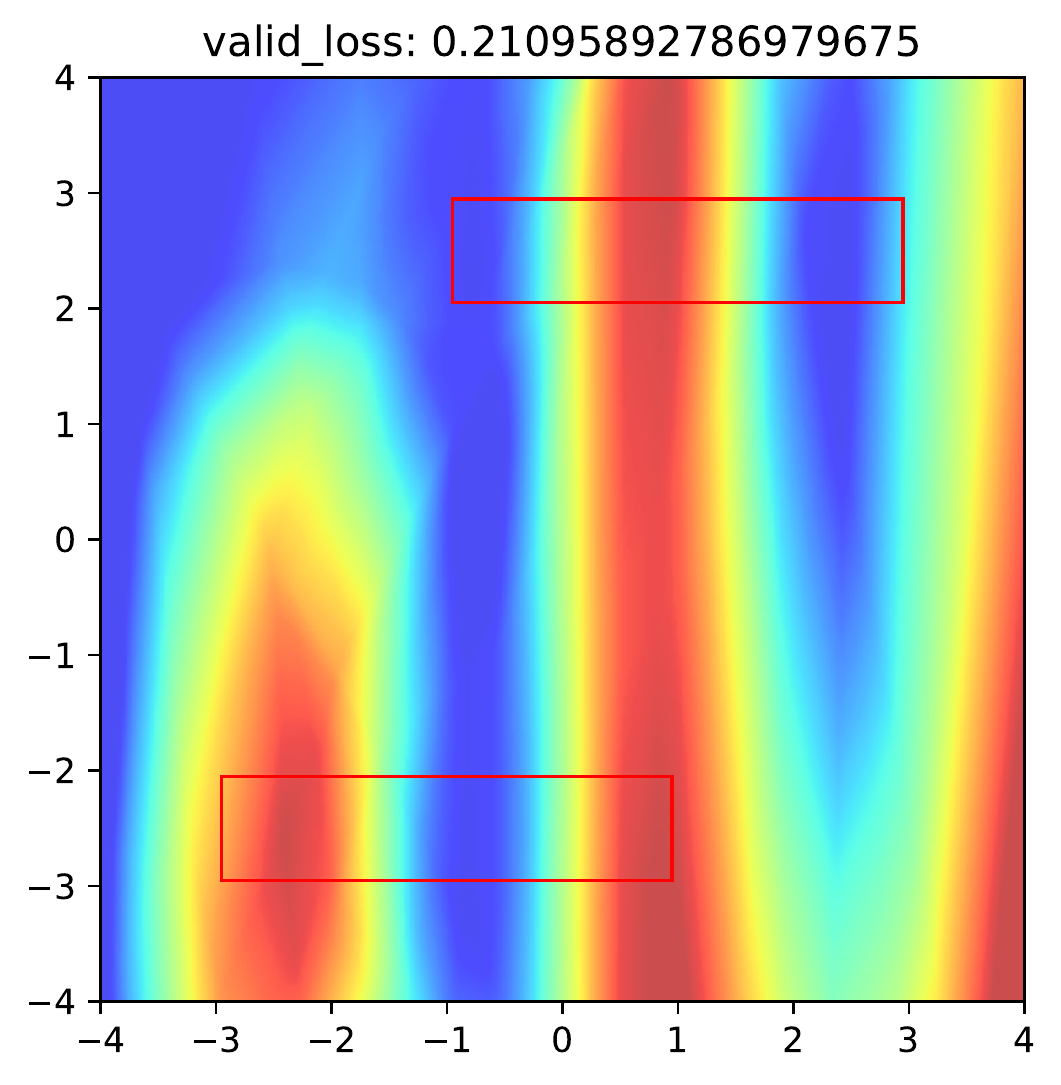}
		\label{fig:linextra_uniform_pair_appdx}
	}
	\subfigure[Uniform.]{
		\includegraphics[width=0.23\textwidth]{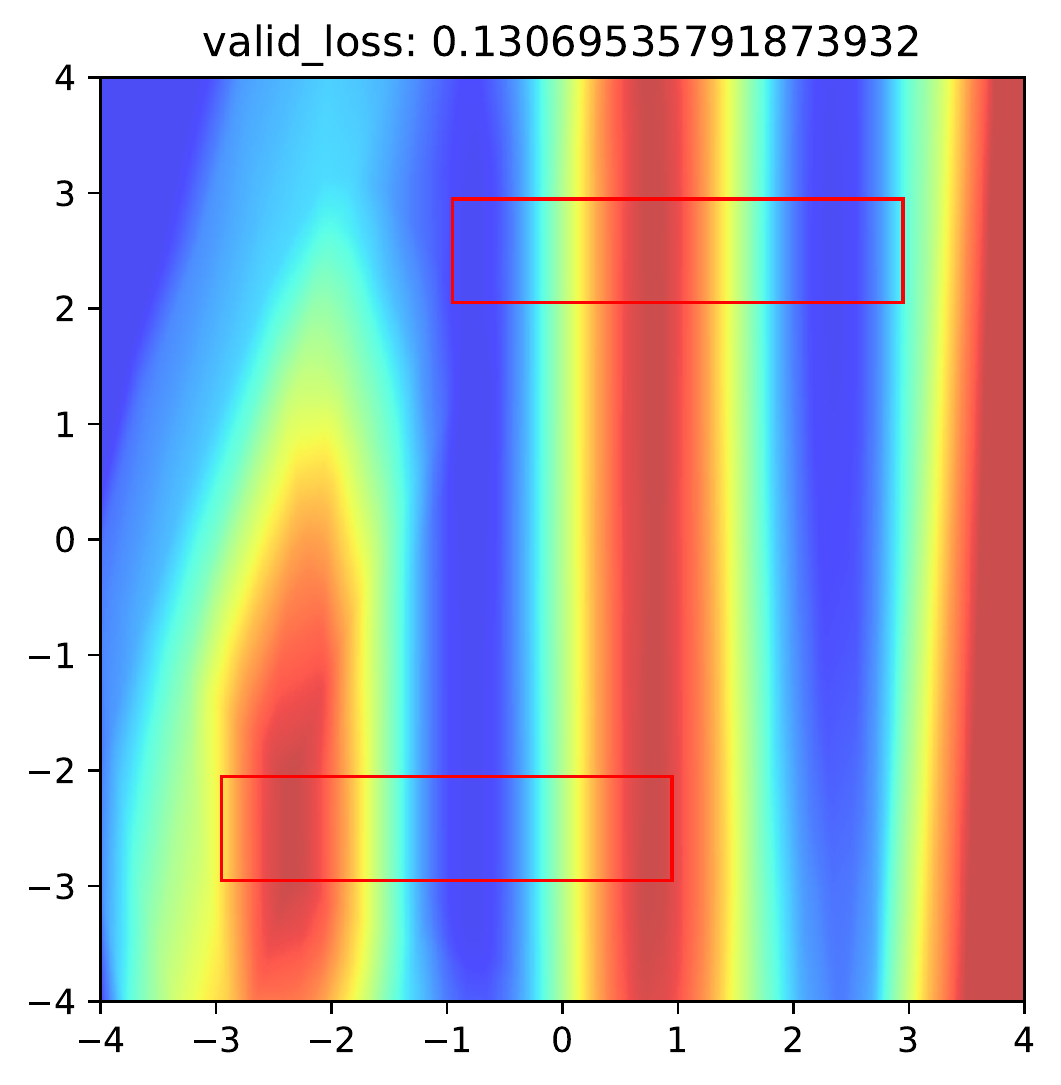}
		\label{fig:linextra_uniform_pair2_appdx}
	}
	\subfigure[Gaussian.]{
		\includegraphics[width=0.23\textwidth]{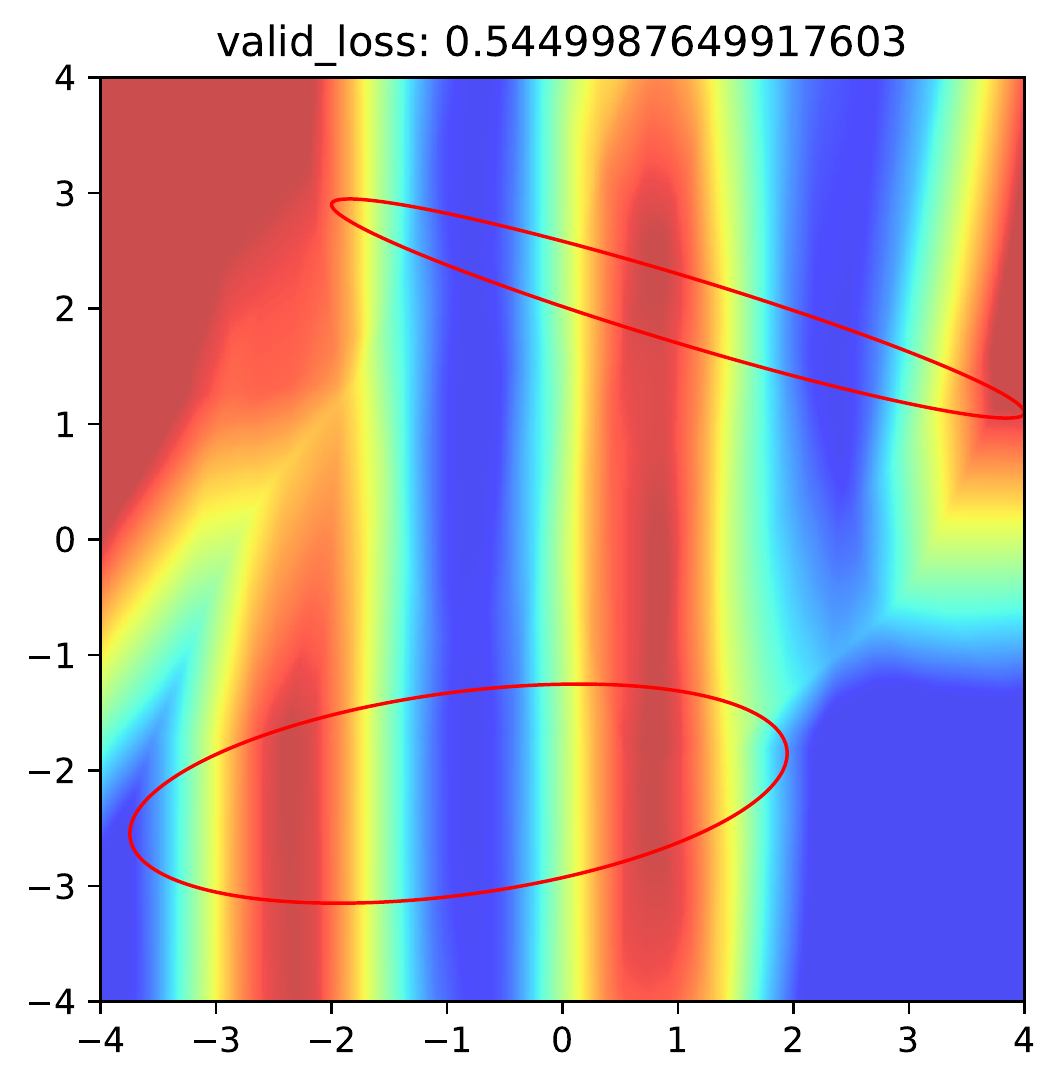}
		\label{fig:linextra_gau_pair_appdx}
	}
	\subfigure[Gaussian.]{
		\includegraphics[width=0.23\textwidth]{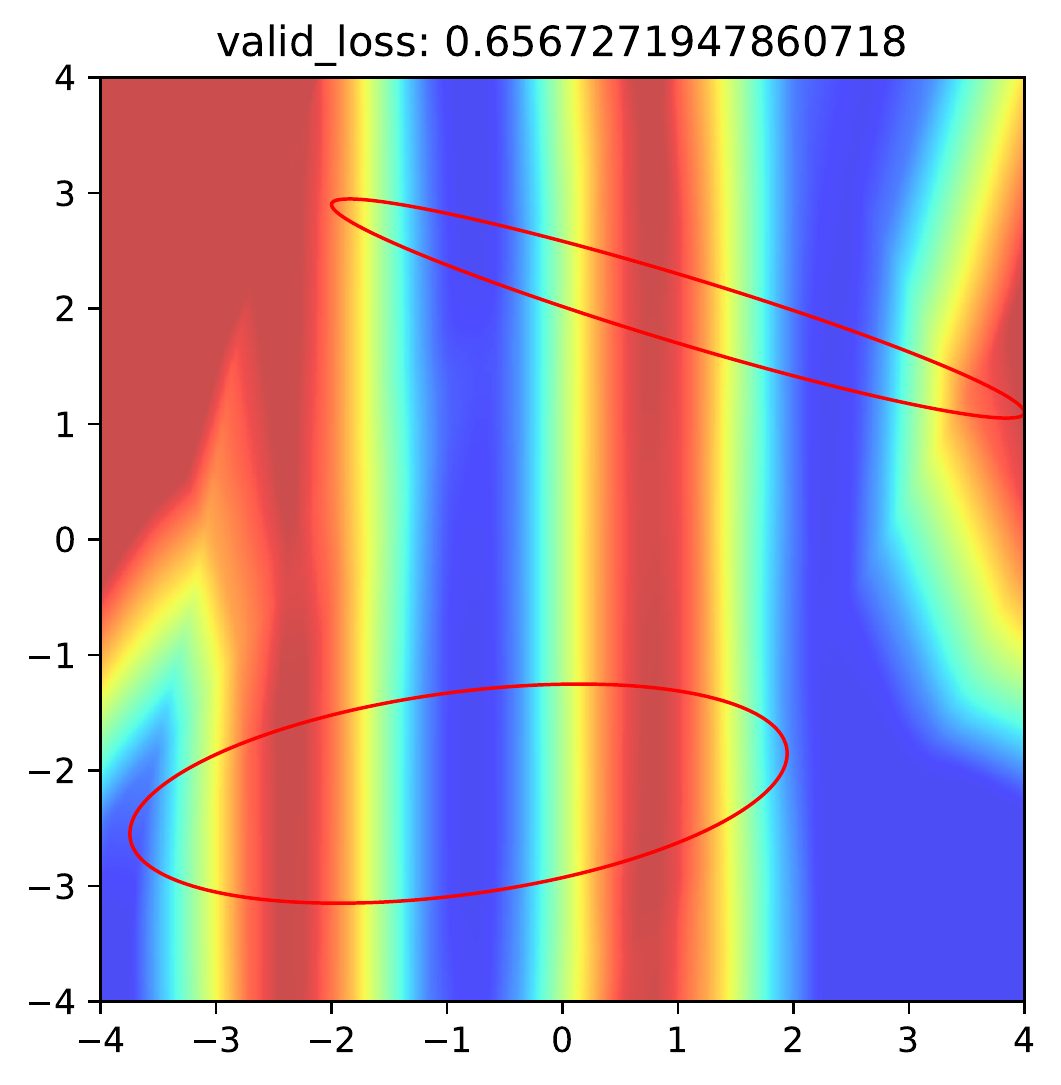}
		\label{fig:linextra_gau_pair2_appdx}
	}
	\caption{Recovery of causal invariance via \ourst.}
	\label{fig:linextra_pair_appdx}
\end{figure}
We plot predictions with the best MSE losses of \irml, \vrex, \irmx and \ours in Fig.~\ref{fig:linextra_irm_appdx}, Fig.~\ref{fig:linextra_vrex_appdx}, Fig.~\ref{fig:linextra_irmx_appdx}, and Fig.~\ref{fig:linextra_pair_appdx} respectively.
We also plot the validation loss at the top of the image while \emph{it does not necessarily indicate a better recovery of causal invariance}.
It can be found that, when given the uniform sampled environments, the unrobust \irml, \vrex and \irmx can recover part of the causal invariance, while when switching to the Gaussian sampled environments, they can fail dramatically as expected.
In contrast, for both uniform sampling and Gaussian sampling, \ours manage to recover the causal invariance almost perfectly. Perhaps even more surprisingly, \ours achieve a lower extrapolation loss up to $0.06$ and $0.32$, which are essentially beyond the extrapolation requirement issued by the causal invariance. Hence we believe it is an interesting and promising future direction to probe the extrapolation ability within and beyond causal invariance.

\section{More Details on the Implementations of \ours}
\label{sec:pair_implementation_appdx}
In this section, we provide more details about the implementation of \ours as a optimizer and a model selection criteria, in complementary to Sec.~\ref{sec:pair_method}.

\textbf{Key takeaways from the \irm example.}
Recall that the key takeaways from the failures of OOD optimization can be attributed to:
i) using unrobust objectives for optimization; ii) using unreliable scheme to approach the desired solution.
Nevertheless, we can improve the robustness of the OOD objectives by introducing  additional guidance such that the desired solution can be relocated in the Pareto front w.r.t. to the new objectives.
After obtaining robust objectives to optimize, we then leverage a preference-aware MOO solver to find the Pareto optimal solutions that maximally satisfy the invariance constraints by assigning the OOD objective a higher preference while being aware of retaining ERM performance.

More formally, let $f_\ood$ be the desired OOD solution, a group of OOD objectives $\vL_\ood=\{\gL_\ood^i\}_{i=1}^m$ are robust if they satisfy that
\begin{equation}
	\vL_\ood(f_\ood) \preceq \vL_\ood (f), \forall f\neq f_\ood\in\gF,
\end{equation}
where $\gF$ denotes the functional class of possible predictors. When given a robust OOD objective $\vL_\ood$, our target is to solve the following MOO problem
\begin{equation}
	\label{eq:pair_moo_appdx}
	\text{$\min$}_f (\gL_\erm,\vL_\ood)^T,
\end{equation}
where $\vL_\ood$ corresponds to a $\bm{\epsilon}_\ood$-relaxed invariance constraint as $\vL_\ood(f_\ood)=\bm{\epsilon}_\ood\preceq\vL_\ood(f),\forall f\neq f_\ood\in\gF$.
Denote the $\epsilon_\inv$ as empirical loss of using the underlying invariant features to predict labels, then the optimal values of the desired OOD solution are $(\epsilon_\inv,\bm{\epsilon}_\ood)^T=(\gL_\erm(f_\ood),\vL_\ood(f_\ood))^T$, which corresponds to an ideal OOD preference for the objectives that is $\vp_\ood=(\frac{1}{\epsilon}_\inv,\bm{\frac{1}{\epsilon}}_\ood)^T$.
Then the solution of Eq.~\ref{eq:pair_moo} needs to maximally satisfy the OOD preference, i.e., maximize $\vL(f)^T\vp_\ood$.

\subsection{Detailed description of \ourso for OOD optimization}
\label{sec:pair_optimizer_appdx}

To find a Pareto optimal solution that satisfies the OOD preference $\vp_\ood$, we leverage the preference-aware MOO solver~\citep{epo}.
Different from~\citet{epo}, we adopt an explicit 2-stage ``descent'' and ``balance'' scheme, following the common practice in OOD generalization~\citep{domainbed}.

\begin{wrapfigure}{r}{0.33\textwidth}
	\vspace{-0.1in}
	\includegraphics[width=0.33\textwidth]{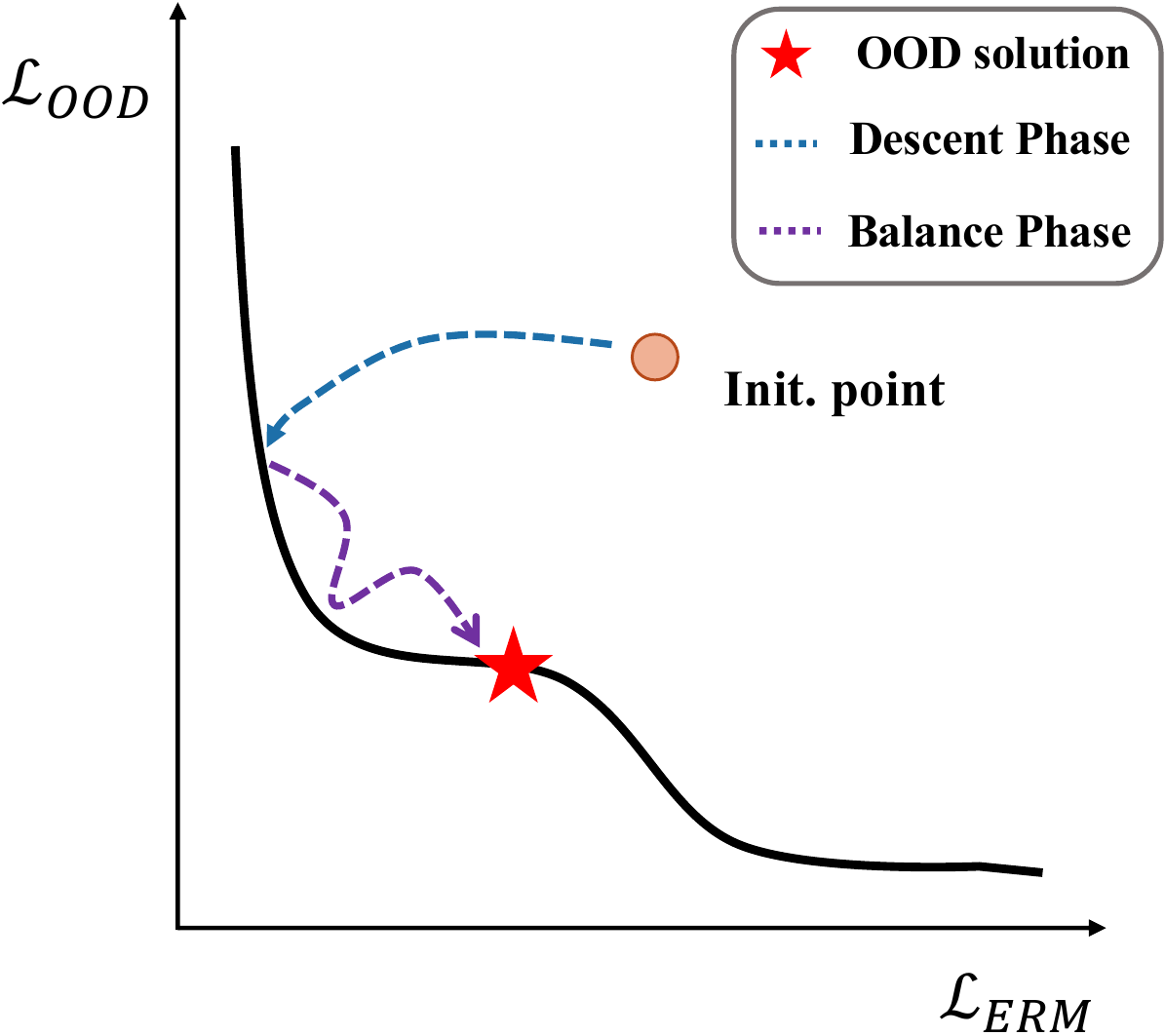}
	\vskip -0.1in
	\caption{Illustration of \ourso.}
	\label{fig:pair_opt_appdx}
	\vspace{-0.3in}
\end{wrapfigure}
Illustrated as in Fig.~\ref{fig:pair_opt_appdx}, in the ``descent'' phase, we train the model to minimize the ERM loss such that it approaches the Pareto front by merely minimizing $\gL_\erm$ first. Then, in the ``balance'' phase, we adjust the solution to maximally satisfy the OOD preference $\vp_\ood$.

Meanwhile, to avoid divergence from the Pareto front, at each  step, the descent direction $\vg_\des$ not only needs to maximize $\vL(f)^T\vp_\ood$, but also needs to avoid ascending all the loss values. More formally, let $\vG$ denote the gradient signals produced by $\vL$, at step $t$ of the  ``balance'' phase, it solves the following LP for the objective weights $\beta^*$,
\begin{equation}\label{eq:pair_lp_appdx}
	\begin{aligned}
		\beta^* = \text{$\arg\max$}_{\beta\in\gS^{m+1}} & \ (\vG\beta)^T\vg_b,                                             \\
		\ \text{s.t.}                                   & \ (\vG\beta)^T\vG_j \geq \vg_b^T\vG_j,\ \forall j\in\bar{J}-J^*, \\
		                                                & \ (\vG\beta)^T\vG_j \geq 0,\ \forall j\in J^*,
	\end{aligned}
\end{equation}
where $\gS^{m+1}=\{\beta\in\R^{m+1}_+|\sum_{i=1}^{m+1}\beta_i=1\}$, $\vg_b$ is the adjustment direction that leads to the preferred Pareto optimal solution by $\vp_\ood$, $J=\{j|G_j^T\vg_b>0\}$ are the indices of objectives which donot conflict with $\vg_b$ while $\bar{J}=\{j|G_j^T\vg_b\leq 0\}$ are those have conflicts with $\vg_b$, $J^*=\{j|L_j\text{$\vp_\ood$}_j=\max_{j'}(L_{j'}\text{$\vp_\ood$}_{j'})\}$ is the index of the objective which diverges from the preference most.

Specifically, \citet{epo} show that using the following $\vg_b$ could provably lead the solution converge to the desired preferred Pareto optimal solution, which is defined as follows
\begin{equation}\label{eq:pair_bal_appdx}
	\vg_b = \vp\odot(\log((m+1)\hat{\vL})-\mu(\vL)),
\end{equation}
where \revision{$\odot$ is the element-wise product operator}, $\mu(\vL)$ is the quantitative divergence of the current solution from the preferred direction, calculated through the losses at the current step, as follows
\begin{equation}\label{eq:pair_mu_appdx}
	\mu(\vL)=\text{KL}(\hat{\vL}|\mathbf{1}/m)=\sum_{i}^{m+1}\hat{\vL}_i\log(m\hat{\vL}_i),
\end{equation}
where $\hat{\vL}$ is the normalized loss as
\[\hat{\vL}_i=\text{$\vp_\ood$}_i\vL_i/\sum_{j}^{m+1}\vp_j\vL_j.\]

Then, we elaborate the detailed algorithm of \ourso implemented via the EPO solver~\citep{epo} as in Algorithm~\ref{alg:pair_opt_appdx}.

\begin{algorithm}[tb]
	\caption{Pseudo code for \ourso.}
	\label{alg:pair_opt_appdx}
	\begin{algorithmic}[1]
		\STATE {\bfseries Input:} Training data $\train=\{X_i,Y_i\}_{i=1}^N$ with environment partitions $\train=\{\dataset^e\}_{e\in\envtrain}$; learning rate $\eta$; batch size $b$; number of sampled environments $d$; OOD preference $\vp_\ood$ for ERM loss $\gL_\erm$ and $m$ OOD losses $\vL_\ood=\{\gL_\ood^i\}_{i=1}^m$; pre-training epochs $e_p$; maximum training epochs for ``balance'' phase $e_b$; Trainable parameters at the ``balance'' phase $\theta$;
		\STATE Randomly initialize parameters in the model $f=w\circ\varphi$;
		\FOR{$i=1$ {\bfseries to} $e_p$}
		\STATE Sample batches of data $\{X_j,Y_j\}_{j=1}^b$;
		\STATE Make predictions with $f$: $\{\widehat{Y}_j\}_{j=1}^b=f(\{X_j\}_{j=1}^b)$;
		\STATE Calculate the empirical loss $L_\erm$ with $\{\widehat{Y}_j\}_{j=1}^b$;
		\STATE Update parameters of $f$ with the empirical loss $\gL_\erm$ using the learning rate $\eta$;
		\ENDFOR
		\FOR{$i=1$ {\bfseries to} $e_b$}
		\FOR{$D^e\in\texttt{permute}(\{\dataset^e\}_e\in\envtrain)$}
		\STATE Sample a batch of the data from $D^e$, $\{X_j^e,Y_j^e\}_{j=1}^b\sim D^e$;
		\STATE Make predictions with $f$: $\{\widehat{Y}_j^e\}_{j=1}^b=f(\{X_j^e\}_{j=1}^b)$;
		\ENDFOR
		\STATE Calculate empirical and OOD losses $\gL_\erm$ and $\gL_\ood$ and obtain the overall losses $\vL$;
		\STATE Obtain gradients $\vG=\partial\vL/\partial \theta$;
		\STATE Calculate the OOD divergence $\mu(\vL)$ using Eq.~\ref{eq:pair_mu_appdx};
		\STATE Obtain the adjustment direction $\vg_b$ using Eq.~\ref{eq:pair_bal_appdx};
		\STATE Obtain the index sets $J,J^*,\bar{J}$ required by Eq.~\ref{eq:pair_lp_appdx};
		\STATE Solve Eq.~\ref{eq:pair_mu_appdx} for the loss weights $\beta^*$;
		\STATE Update parameters $\theta^{i+1}= \theta^{i}-\eta\vG\beta^*$;
		\ENDFOR
	\end{algorithmic}
\end{algorithm}

We now state a informal version of the convergence guarantee.
\begin{theorem}(Informal)\label{thm:pair_converge_appdx}
	Given $L_\erm$ along with $m$ differentiable OOD losses $\vL_\ood$, at each step in the ``balance'' phase (line $9$ to line $21$ in Algorithm~\ref{alg:pair_opt_appdx}), there exists a step size $\eta_0$ such that,
	the set of new loss values $\vL^{(i+1)}=(L_\erm,L_i,...,L_m)^T$ with the updated parameters $\theta^{(t+1)}$ by any $\eta\in[0,\eta_0]$, denoted as $\gA^t$
	has the following properties:
	\begin{enumerate}[label=(\roman*).,wide, labelwidth=!]
		\item $\gA^t$ contains the exact Pareto optimal solution satisfying the OOD preference vector, i.e., $\vL^*\in\gA^t$;
		\item $\gA^t$ grows monotonically smaller and smaller.
	\end{enumerate}
\end{theorem}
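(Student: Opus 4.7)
The plan is to adapt the convergence analysis of the EPO solver of Mahapatra and Rajan to the \ourso setting, leveraging the two-phase structure (``descent'' followed by ``balance'') of Algorithm~\ref{alg:pair_opt_appdx}. I would first assume, as is standard, that each loss $\gL_i \in \{\gL_\erm\} \cup \vL_\ood$ is continuously differentiable with Lipschitz-continuous gradient in a neighborhood of the current iterate $\theta^{(i)}$. Under this, the descent lemma gives a threshold $\eta_0 > 0$ below which the first-order Taylor expansion of each $\gL_i(\theta^{(i)} - \eta \vG\beta^*)$ is accurate to leading order in $\eta$, reducing the non-linear loss evolution to the linearized inner products that appear as the constraints and objective of Eq.~\ref{eq:pair_lp_appdx}.

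Next I would verify that the LP is always feasible and that its optimum yields a direction $\vG\beta^*$ with strictly positive inner product against $\vg_b$ whenever $\mu(\vL) > 0$. Feasibility follows by checking that $\beta$ concentrated on any $j^{*} \in J^{*}$ satisfies the $J^{*}$-constraint automatically (as $\vG_{j^*}^T \vG_{j^*} \geq 0$), and the $\bar{J} - J^{*}$ constraints reduce, under the Pareto stationarity dichotomy, to either a satisfiable linear system or the degenerate stationary case. For strict positivity of the LP value, I would invoke the construction of $\vg_b$ in Eq.~\ref{eq:pair_bal_appdx}: $\vg_b$ vanishes exactly when $\hat{\vL}$ is uniform (equivalently, when $\vL$ lies on the ray singled out by $\vp_\ood$), and is non-degenerate otherwise, so by Farkas-type reasoning some $\beta \in \gS^{m+1}$ produces $(\vG\beta)^T \vg_b > 0$ off the preferred ray.

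With these in place, property (i), $\vL^* \in \gA^t$, would follow from continuity of the parametrized path $\eta \mapsto \vL(\theta^{(i)} - \eta \vG\beta^*)$ on $[0,\eta_0]$ combined with the LP's first constraint $(\vG\beta^*)^T \vG_j \geq \vg_b^T \vG_j$, which prevents the update from overshooting the preferred ray while still moving monotonically toward it, so $\vL^*$ remains in the closure of the reachable set. Property (ii), monotone contraction, would follow from a Lyapunov argument on $\mu(\vL)$ from Eq.~\ref{eq:pair_mu_appdx}: $\vg_b^T \vg_b > 0$ whenever $\mu(\vL) > 0$, so the LP value is strictly positive, $\mu$ strictly decreases each step, and the $J^{*}$-constraint forbids any increase in the most-divergent loss, so $\gA^{t+1} \subseteq \gA^t$ as sets of loss vectors attainable from the shifted iterate.

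The main obstacle will be reconciling the linearized LP picture with the true non-linear evolution of $\vL$ near non-convex regions of the Pareto front, where $\vg_b$ may fail to lie in the conic hull of the rows of $\vG$. Here a case split is needed: either the LP admits a strictly feasible $\beta^*$ (generic case, yielding both properties with quantitative rates through the descent lemma), or the iterate is already Pareto-stationary in the subspace orthogonal to $\vg_b$ (degenerate case, requiring a fall-back argument to a pure MGDA direction and a termination claim). A secondary subtlety is that the ``descent'' phase (lines 3--8 of Algorithm~\ref{alg:pair_opt_appdx}) must leave the iterate in a neighborhood of the Pareto front on which the above linearization remains valid when the ``balance'' phase begins; this I would address by combining smoothness of $\gL_\erm$ with a sufficient choice of $e_p$, so that the initial $\vG$ used by the LP is close to Pareto-stationary and the analysis of Eq.~\ref{eq:pair_lp_appdx} applies throughout the subsequent balance iterations.
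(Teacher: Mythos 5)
Your proposal is correct and takes essentially the same route as the paper: the paper gives no independent argument for Theorem~\ref{thm:pair_converge_appdx} and simply defers to Theorem~1 through Corollary~1 of \citet{epo}, whose proof is exactly the chain you reconstruct (feasibility of the LP in Eq.~\ref{eq:pair_lp_appdx}, strict positivity of the anchoring inner product off the preference ray, monotone decrease of $\mu(\vL)$, and non-increase of the most-divergent losses via the $J^*$ constraint). Your sketch is in fact more detailed than the paper's one-line citation, and the subtleties you flag (the degenerate Pareto-stationary case and the validity of the linearization once the ``balance'' phase begins) are handled in the cited EPO analysis rather than in this paper.
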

From (i) and (ii) in Theorem~\ref{thm:pair_converge_appdx}, it suffices to know that as the optimization continues, $\gA^t$ converges to the losses of the exact Pareto optimal solution, hence for the parameters.
The proof for Theorem~\ref{thm:pair_converge_appdx} simply follows the Theorem 1 to Corollary 1 in~\citet{epo}.
Note that \ourso provides a general framework to find a better OOD solution that properly trades off ERM and OOD objectives.
In experiments, we find that using the simply modified variant of EPO solver~\citep{epo} in \ourso can effectively find a descent path under the gradient conflicts that leads to a better OOD solution.
Nevertheless, a more sophisticated preference-aware MOO solver can be developed and integrated into the framework of \ourso, which we believe is a promising future direction~\citep{importance_sampling,pmlr-v80-zhou18c,robust_momentum}.

\subsection{Detailed description of \ourss for OOD model selection}
\label{sec:pair_selection_appdx}

In this section, we provide a detailed description of \ourss for OOD model selection for Sec.~\ref{sec:pair_method}.
Before start, we also provide a detailed description of the critical reasons for designing \ourss in Appendix~\ref{sec:dobed_intro_appdx}.
From the \irm example, it is obvious that traditional model selection methods that merely use validation performance, i.e., ERM performance, are not suitable to select a desired solution for OOD generalization. Otherwise, the OOD performance would be easily compromised due to its conflicts with ERM objective. This issue is more serious when the validation set has a large gap between the test set (cf. Training-domain validation set selection for \cmnist in Table~\ref{tab:dobed_select}).
Intuitively, models selected merely based on ERM performance tend to have a high preference or better performance on environments that have a similar distribution of the corresponding validation set, which will lead to higher variance of performances at different environments or a lower worst environment performance.
Therefore, it is natural to jointly consider the ERM and OOD performances in model selection. Specifically, the selected model is expected to maximally satisfy the exact Pareto optimality.

Since our focus of \ourss is mainly to validate the existence of previous mode selection issues, we simply incorporate the \ourst score as an additional model selection criteria. More specifically, given a OOD preference $\vp_\ood$, we can calculate the \ourst selection score as
\begin{equation}\label{eq:pair_score_appdx}
	s_\ourst = \vL^T\hat{\vp}_\ood,
\end{equation}
where $\hat{\vp}_\ood$ is the normalized OOD preference as $\vp_\ood/\sum_{i=1}^{m+1}\text{$\vp_\ood$}_i$.
With the \ourst score, we then can apply it into the \dobed model selection algorithms~\citep{domainbed}. Specifically, the model selection in \dobed aims to select models from several rigorous hyperparameter trials according to the validation accuracy. For the model selection in each run, one can obtain all training domain validation accuracies but only one test domain validation accuracy for fairness.

The algorithm is detailed as in Algorithm~\ref{alg:pair_sel_appdx}. The \ourst score is mainly used to select models among the logged steps within one run. To avoid trivial cases, we expect the models participated into the selection are converged. To this end, we heuristically use a threshold $c$ to filter out the first $c$ steps and find it empirically effective. To select models from different runs, we will first use the validation accuracy to filter out some unreliable cases, and then adopt the \ourst to finalize the model selection. The only exception is the test domain validation accuracy, where the test domain validation accuracy is more likely to be a reliable indicator than the \ourst score.

The main limitation of the \ourst estimation is about the estimation of the loss values. In stochastic gradient descent, one could only obtain a stochastic estimate of loss values based on a minibatch sample of $\train$. When the stochastic estimates of the loss values are unbiased, the \ourst is unbiased, too. However, there can exist certain variances in the stochastic estimates, which can severely affect the precision of the score thus the comparison of different models. Although Theorem~\ref{thm:pair_theory_appdx} establishes certain theoretical guarantees that allows for some degree of uncertainties, the variances are usually unavoidable. A instant fix for the issue is that one could afford some additional evaluation time to obtain a better estimate of the loss values. Besides, one could also jointly consider the uncertainty of the estimation and derive a more accurate model selection~\citep{clove}, which we leave for future work.

\begin{algorithm}[tb]
	\caption{Pseudo code for \ourss.}
	\label{alg:pair_sel_appdx}
	\begin{algorithmic}[1]
		\STATE {\bfseries Input:} Running history $\gH$ from $R$ runs, where each running history is consist of loss history $\gL=\{\gL_1^t,\gL_2^t,...,\gL_{(m+1)}^t\}_{t=1}^T$ of $(m+1)$ losses, i.e., $\gL_\erm$ and $\vL_\ood=\{\gL_\ood^i\}_{i=1}^m$, and training and validation accuracy history $\gA=\{A_{\text{tr}}^t,A_{\text{val}}^t\}_{t=1}^T$, from $T$ logging steps; OOD preference $\vp_\ood$; Convergence step $c$; Validation accuracy percentile $p$;
		\FOR{$r=1$ {\bfseries to} $R$}
		\STATE Calculate \ourst score using $\vp_\ood$ for all $T$ steps as $\gS=\{s^t\}_{t=1}^T$ using Eq.~\ref{eq:pair_score_appdx};
		\STATE Filter out the first $c$ steps to avoid trivial cases and get $\widehat{\gS}=\{s^t\}_{t=c}^T$;
		\STATE Store the step with maximum \ourst score as $s_*=\argmax_t \widehat{\gS}$;
		\ENDFOR
		\STATE Obtain the selected steps from $R$ runs as $\gS=\{s_*^r\}_{r=1}^R$;
		\STATE Obtain the validation accuracies for all selected steps $\gA_{\text{val}}=\{A_\text{val}^{s_*^r}\}_{r=1}^R$;
		\STATE Calculate the validation selection bar as $\bar{A}_\text{val}=(\max\gA_{\text{val}}-\min\gA_{\text{val}})*p+\min\gA_{\text{val}}$;
		\STATE Filter out all runs that have a validation accuracy lower than $\bar{A}_\text{val}$ and obtain $\bar{\gH}$;
		\STATE Find the run with highest \ourst score as $r_*=\argmax_{r\in\bar{\gH}}s_*^r$;
		\STATE Return associated history of $r_*$;
	\end{algorithmic}
\end{algorithm}

\subsection{Discussion on the practical choices of OOD preference}
\label{sec:pair_discussion_pc_appdx}
Essentially, the performances of both \ourso and \ourss have certain dependence on the quality of the OOD preference $\vp_\ood$, however, it is often the case that the ideal OOD preference is usually unknown.
It is desirable to analyze the performances of \ourso and \ourss under a imprecise OOD preference. \citet{epo} discussed a bit that when the exact Pareto optimal solution under the preference does not exist, the EPO solver can still find a Pareto optimal solution that is closest to the preferred direction.
We discuss it in a more general way by developing a new MOO formulation of Eq.~\ref{eq:pair_moo_appdx} under a approximated preference up to some approximation error of $\epsilon$.
The theoretical discussion can be found in Sec.~\ref{proof:pair_theory_appdx}. In this section, we focus on the practical side of the choice of $\vp_\ood$.

We first discuss some heuristics that can be leveraged to obtain a proper OOD preference under two scenarios:
\begin{enumerate}[label=(\roman*).,wide, labelwidth=!]
	\item one has little-to-no knowledge about the OOD loss values;
	\item one has the access to some running histories that one has some empirical knowledge about the OOD loss values;
\end{enumerate}
In practice, i) mostly fits to \ourso while ii) mostly fits to \ourss.

When \textbf{i)} one has little-to-no knowledge about the OOD loss values, one can leverage certain theoretical inductive biases about the OOD losses. In fact, it is usual the case that the theoretical conditions for the optimality of OOD objectives do not hold in practice~\citep{DANN,groupdro,vrex,fish,fishr}. In this case, minimizing the OOD losses acts more like a necessary condition for a satisfactory OOD solution.
Therefore, one could assign a sufficiently larger preference to OOD objectives than ERM objective. For example, throughout all experiments in the paper, we mostly assign $(1,1e10,1e12)$ to \erm, \irml, and \vrex losses, which works under many scenarios.

Besides, among different OOD objectives, one could easily know which is more likely to be optimized than another. Therefore, to ensure all OOD losses are equally maximally optimized, we could assign the easily-optimizable OOD objectives higher preference. For example in \irmx, \vrex tends to be easier to optimize than \irml therefore we assign a higher preference to \vrex. Moreover, if one could know the performances of different OOD objectives, it is natural to assign a higher preference to those which solely perform better.

When \textbf{ii)} one has the access to some running histories that one has some empirical knowledge about the OOD loss values, one could obtain a empirical estimate of the OOD loss values w.r.t. ERM loss values at convergence. Since the estimate is obtained under gradient conflicts, one could expect the ratios of OOD loss w.r.t. ERM loss should be higher when one could resolve the gradient conflicts properly. Therefore, one could assign a slightly higher preference to OOD losses than the empirically estimated ratios. In the model selection experiments, we directly increase the ratio by $1e2$ and find it works well as expected.

In fact, both \textbf{i)} and \textbf{ii)} are discussed under minimal assumption about the external knowledge of the optimization process, the task and the data. We expect a better estimate of the OOD preference could be obtained when more external inductive biases are incorporated. For instance, \ourso generalize to ParetoDA~\citep{pareto_da} when one could obtain a validation set that has similar distribution to the test data. Even under the case that such data is not available, one could also adopt some techniques such as Mixup~\citep{mixup} to obtain an approximation. We believe that obtaining a better estimate of the ideal OOD preference would be a promising future development based on our work.

\subsection{Discussion on the use of \ours in practice}
\label{sec:pair_discussion_appdx}

\subsubsection{Scalability}
\label{sec:pair_opt_sca_appdx}
Similar to other MOO algorithms~\citep{mtl_moo,pareto_mtl,epo}, \ourso requires full gradients of the predictor to make an accurate derivation of the objective weights $\beta^*$, which could be a bottleneck when deployed to large-scale networks, as it usually involves a prohibitively massive number of parameters. \citet{mtl_moo} develops an approximation of the full gradients using the gradients w.r.t. the latent representation produced by the featurizer, i.e., $\partial \vL/\partial \varphi(X)$. However, it requires a strong assumption on the structure of the data and the model.
Moreover, when it involves complex network architectures such as DenseNet~\citep{densenet} or DistillBERT~\citep{distillbert} in \wilds, the approximation or even the full gradients can be even imprecise, as the gradients of the complex neural networks can not be directly concatenated as those of simple linear networks.

To this end, we develop another approximation that takes only the gradients of the classifier, which usually appears as a linear classification layer in the predictor. Interestingly, we empirically find $\partial \vL/\partial w$ can even produce more useful signals for OOD generalization than the gradients w.r.t. classifier, shown as in Table~\ref{tab:cmnist}.

When considering a more resource restricted scenarios, such as the iWildCam and RxRx1 in \wilds, we freeze the featurizer after the ``descent'' phase, which can further resolve the memory and computation overheads. It also aligns with some recent discoveries that the featurizer trained merely with ERM may already discovery all useful patterns~\citep{dare}. \citet{rfc} also find the technique useful in Camelyon17 dataset of \wilds.

\subsubsection{Loss value estimation}
\label{sec:pair_opt_est_appdx}
Similar to other MOO algorithms~\citep{mtl_moo,pareto_mtl,epo}, \ourso is described and analyzed in full batch setting, i.e., full gradient descent. However, in practice, stochastic setting tends to appear more often than vanilla gradient descent due to the scalability considerations. As also discussed in Sec.~\ref{sec:pair_method}, variances are unavoidable no matter the estimated values are biased or unbiased. Fortunately, the robustness of \ourso to the preference can partially mitigate the issue.

The another potential limitation in \ourso could be the possibly negative estimate of some OOD losses, such as the stochastic estimates of \irml, since general MOO algorithms together with \ourso only accept non-negative loss values as the inputs. To this end, we will use \irml as an example to explain how one could handle the potentially negative values in loss value estimation.

We will first introduce the unbiased empirical estimator of \irml, following~\citet{irmv1,cs_irm}. More specifically, considering the \irml objective,
\begin{equation}
	\label{eq:irml_bias_appdx}
	\min_{\varphi}  \sum_{e\in\envtrain}\gL_e(\varphi)+\lambda|\nabla_{w|w=1}\gL_e(w\cdot\varphi)|^2.
\end{equation}

Observe that $$\nabla_{w|w=1.0} \gL_e(w\cdot\varphi) = \frac{\partial \mathbb{E}^{e}\big[\ell(w\cdot\varphi(X^{e}), Y^{e})\big]}{\partial w}\Big|_{w=1.0}= \mathbb{E}^{e}\bigg[\frac{\partial \ell(w\cdot\varphi(X^{e}), Y^{e})}{\partial w}\Big|_{w=1.0}\bigg]$$ and
\begin{equation}
	\begin{aligned}\label{eq:irml_bias_prod_appdx}
		\|\nabla_{w|w=1.0} \gL_e(w\cdot\varphi)\|^2 & =\Big(\frac{\partial \mathbb{E}^{e}\big[\ell(w\cdot\varphi(X^{e}), Y^{e})\big]}{\partial w}\Big|_{w=1.0}\Big)^2     \\
		                                            & = \Big(\mathbb{E}^{e}\bigg[\frac{\partial \ell(w\cdot\varphi(X^{e}), Y^{e})}{\partial w}\Big|_{w=1.0}\bigg]\Big)^2, \\
	\end{aligned}
\end{equation}
for which the simplification is derived by taking the derivative inside the expectation, using the Leibniz integral rule. Obviously, the stochastic estimate of Eq.~\ref{eq:irml_bias_prod_appdx} is biased.

To obtain an unbiased estimate of \irml penalty, observe that
\[
	\mathbb{E}[X]^2 = \mathbb{E}[AB],
\]
if $A$, $B$ and $X$ are i.i.d. random variables w.r.t. the same distribution $\gX$.
Equipped with this observation, we can further write Eq.~\ref{eq:irml_bias_prod_appdx} as
\begin{equation}\label{eq:irml_unbias_prod_appdx}
	\begin{aligned}
		\|\nabla_{w|w=1.0} \gL_e(w\cdot\varphi)\|^2 & =  \mathbb{E}^{e}\bigg[\Big(\frac{\partial \ell(w\cdot\varphi(X^{e}), Y^{e})}{\partial w}\Big|_{w=1.0}\Big)\Big(\frac{\partial \ell(w\cdot\varphi(\tilde{X}^{e}), \tilde{Y}^{e})}{\partial w}\Big|_{w=1.0}\Big)\bigg],             \\
		                                            & =\bigg[\mathbb{E}^{e}\Big(\frac{\partial \ell(w\cdot\varphi(X^{e}), Y^{e})}{\partial w}\Big|_{w=1.0}\Big)\mathbb{E}^{e}\Big(\frac{\partial \ell(w\cdot\varphi(\tilde{X}^{e}), \tilde{Y}^{e})}{\partial w}\Big|_{w=1.0}\Big)\bigg],
	\end{aligned}
\end{equation}
where $(X^e, Y^e)\sim \mathbb{P}^{e}$ and $(\tilde{X}^{e}, \tilde{Y}^{e})\sim \mathbb{P}^{e}$ are i.i.d. samples from $\mathbb{P}^e$ of the environment $e$.
As $\mathbb{E}^{e}\Big(\frac{\partial \ell(w\cdot\varphi(X^{e}), Y^{e})}{\partial w}\Big|_{w=1.0}\Big)$ and $\mathbb{E}^{e}\Big(\frac{\partial \ell(w\cdot\varphi(\tilde{X}^{e}), \tilde{Y}^{e})}{\partial w}\Big|_{w=1.0}\Big)$ can separately be estimated in minibatches without bias, Eq.~\ref{eq:irml_unbias_prod_appdx} essentially provides a practical unbiased estimator of \irml.

However, different from \irml, Eq.~\ref{eq:irml_unbias_prod_appdx} does not have any guarantees for its non-negativity, though the expectation of Eq.~\ref{eq:irml_unbias_prod_appdx} is non-negative. To this end, we propose two heuristics to mitigate the issue.

The first heuristic is to add all minibatch estimates $\mathbb{E}^{e}\Big(\frac{\partial \ell(w\cdot\varphi(X^{e}), Y^{e})}{\partial w}\Big|_{w=1.0}\Big)$ by a sufficiently large constant $C$, such that the minimum value of $\mathbb{E}^{e}\Big(\frac{\partial \ell(w\cdot\varphi(X^{e}), Y^{e})}{\partial w}\Big|_{w=1.0}\Big)+C$ is non-negative. Moreover, as the constant does not affect the calculation of the gradients, when \irml is minimized to $0$, $\mathbb{E}^{e}\Big(\frac{\partial \ell(w\cdot\varphi(X^{e}), Y^{e})}{\partial w}\Big|_{w=1.0}\Big)$ is also optimized to $C$.

The other heuristic is to multiply the negative minibatch estimates $\mathbb{E}^{e}\Big(\frac{\partial \ell(w\cdot\varphi(X^{e}), Y^{e})}{\partial w}\Big|_{w=1.0}\Big)$ by a proper negative constant $-C$, which will make all estimations non-negative. On the other hand, however, it can dramatically affect the variances in the estimations. Essentially, this multiplication will enlarge the expectation of the estimated \irml, and may cause instability of the training, due to the unrobustness of \irml. Therefore, we can heuristically search the values $C$ from $1$ to $1e-4$ by observing the early training dynamics. If the training is unstable, then we heuristically tune $C$ to be smaller by $1e-2$.

Although both of the heuristics above can not rigorously recover a non-negative estimate of \irml penalty (which is essentially impossible for the formulations like \irml), we empirically find them effective, for which we hypothesize is because of the robustness of \ourso to the preference in OOD generalization.

\subsubsection{Generalizing to other OOD methods}
As shown in Fig.~\ref{fig:grad_conflict}, the gradient conflicts between ERM and OOD objectives generally exist~\citep{irmv1,vrex,clove,sd,fishr}. It implies that, on the one hand, the optimization dilemma generally exist for all OOD objectives. Meanwhile, both \ourso and \ourss are generically applicable to all OOD methods. In experiments (Sec.~\ref{sec:experiments}), we validate the generality of \ourss only for several OOD methods from the four main lines as discussed in related works (Sec.~\ref{sec:related_work_appdx}) though, \ourso essentially has similar generality as \ourss, for whose performances at real world datasets, we will leave for future verification due to the limited computational resources.
Nevertheless, we can theoretically discuss the implementation options about how \ourso can be applied to different OOD methods.

First, for Domain Generalization based methods~\citep{DANN,CORAL,deep_DG,DouCKG19}, such as DANN~\citep{DANN}, \ourso can directly take the domain classification loss and the label classification loss as the inputs.

Second, for Distributionally Robust Optimization methods~\citep{dro,DRSL,groupdro}, \ourso can take the worst group loss or some more sophisticated regularizations and the \erm loss as the inputs.

Third, for the causal invariance based methods~\citep{inv_principle,causal_transfer,irmv1,env_inference,andmask,clove,ib-irm,graph_ood} and agreement based methods~\citep{iga,vrex,fish,fishr}, they can be handled by \ourso similarly as \irmx.

\section{Theoretical Discussions}
\label{sec:theory_appdx}

\subsection{Proof for Proposition~\ref{thm:recover_IRM_paper}}
\label{proof:recover_IRM}
We first restate the proposition with formally defined Setting A by~\citet{irm_aistats}.

\paragraph{Setting A (identical to~\citet{irm_aistats}):} Considering the task of linear classification/regression $\gX\rightarrow\gY$ where the quality of predictors $f:\gX\rightarrow\widehat{\gY}$ is measured by population losses $l:\widehat{\gY}\times\gY\rightarrow\R_{\geq0}$, $\widehat{\mathcal{Y}} = \R, \mathcal{Y} \subseteq \R$, $\ell$ is either the square loss $\lsq(\hat{y}, y) \coloneqq \frac{1}{2}(\hat{y} - y)^2$, or the logistic loss $\llog(\hat{y}, y) \coloneqq \log{(1 + \exp{(-\hat{y}y)})}$ when $\mathcal{Y} =\{-1, 1\}$ (binary classification).

\begin{propositions}\label{thm:recovep_iRM_paper_appdx} Under Setting A (\citet{irm_aistats}), for all $\alpha\in (0,1)$,
	let $\mathcal{E} \coloneqq \{(\alpha, \beta_e): \beta_e \in (0,1)\}$ be any instance of the two-bit environment (Eq.~\ref{eq:twobit_env_appdx}),
	$\Ix$ denote the invariant predictors produced by Eq.~\ref{eq:irmx_moo},
	it holds that $\gI_{\gS\cap X}(\mathcal{E}) = \mathcal{I}(\mathcal{E})$.\footnote{Motivated readers might be interested in the necessities of keeping \irml in the objectives, for which we provide details in Appendix~\ref{sec:discuss_pair_objs_appdx}.}
\end{propositions}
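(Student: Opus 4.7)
The plan is to establish the two inclusions $\mathcal{I}(\mathcal{E})\subseteq \Ix(\mathcal{E})$ and $\Ix(\mathcal{E})\subseteq \mathcal{I}(\mathcal{E})$ separately, leveraging the characterization of two-bit environments from~\citet{irm_aistats} together with the extra variance constraint supplied by \vrex.

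\paragraph{Easy inclusion.} First I would handle $\mathcal{I}(\mathcal{E})\subseteq \Ix(\mathcal{E})$. Let $f=w\circ\varphi\in\mathcal{I}(\mathcal{E})$. By definition of IRM invariance, a single linear head $w$ is \emph{simultaneously optimal} on every environment $e\in\mathcal{E}$, so the constraint $\nabla_{w|w=1}\gL_e(w\cdot\varphi)=0$ is satisfied for every $e$, giving $\gL_\irm(f)=0$. Moreover, since $f$ only exploits the invariant bit $X_1$ (which has environment-independent distribution $(\alpha,1-\alpha)$), the population loss $\gL_e(f)$ does not depend on $e$. Hence $\gL_\vrex(f)=\var(\{\gL_e(f)\}_{e\in\mathcal{E}})=0$, which together with $\gL_\erm(f)$ being its minimum value among invariant solutions shows $f$ cannot be Pareto-dominated w.r.t.\ $(\gL_\erm,\gL_\irm,\gL_\vrex)$, i.e.\ $f\in\Ix(\mathcal{E})$.

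\paragraph{Hard inclusion.} The main obstacle is the reverse direction $\Ix(\mathcal{E})\subseteq \mathcal{I}(\mathcal{E})$. I would fix $f=w\circ\varphi\in\Ix(\mathcal{E})$ and use the two-bit parameterization of~\citet{irm_aistats}: by oddness / symmetry of $\rad$, the relevant predictor is determined by two scalars $(p,q)$ encoding its response to the invariant bit $X_1$ and the spurious bit $X_2$. Under Setting A one can write $\gL_e(f)$ explicitly as a polynomial (for $\ell_{\mathrm{sq}}$) or an explicit smooth function (for $\ell_{\mathrm{log}}$) in $(p,q,\alpha,\beta_e)$. Pareto optimality of $f$ w.r.t.\ $\gL_\vrex$ means, in the ideal case $\epsilon_\vrex\to 0$, that $\gL_e(f)$ is constant in $e$, and since $\alpha$ is fixed across $\mathcal{E}$ but $\beta_e$ varies over $(0,1)$, the map $\beta\mapsto\gL_{(\alpha,\beta)}(f)$ must be constant on $(0,1)$. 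A direct computation then forces the coefficient of $\beta$ in this expression (which is proportional to $q$, i.e.\ the dependence on $X_2$) to vanish; hence $f$ discards the spurious feature entirely. A predictor that depends only on $X_1$ trivially satisfies the original IRM constraint simultaneously on every environment in $\mathcal{E}$, so $f\in\mathcal{I}(\mathcal{E})$.

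\paragraph{Where the difficulty lies.} The subtle step is verifying that the ``constant in $\beta$'' condition, combined with the \irms constraint $\nabla_{w|w=1}\gL_e=0$, leaves no room for the spurious-feature predictors $f_1,f_2$ that caused the failure in Fig.~\ref{fig:aistats_fail}; i.e.\ I need to rule out degenerate solutions in which variation in $\beta$ is absorbed by a compensating change elsewhere in the loss surface. This is done case by case for $\lsq$ and $\llog$ by isolating the $\beta$-monomial in the closed-form expression for $\gL_{(\alpha,\beta)}(f)$ and solving the resulting linear equation in $q$; the equation is non-degenerate precisely because $\alpha\in(0,1)$, matching the hypothesis. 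The formal computation, together with the uniqueness lemmas already proved in~\citet{irm_aistats}, is deferred to Appendix~\ref{sec:discuss_pair_objs_appdx}; combining the two inclusions yields $\Ix(\mathcal{E})=\mathcal{I}(\mathcal{E})$ and completes the proof.
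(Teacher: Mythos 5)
Your two-inclusion skeleton matches the paper's overall strategy (a lemma giving $\mathcal{I}(\mathcal{E})\subseteq\Ix(\mathcal{E})$, then an explicit computation in the two-bit parameterization for the converse), but the central computation in your hard direction does not go through as stated. Writing an odd predictor as $f_{(1,1)}=-f_{(-1,-1)}=p+q$ and $f_{(1,-1)}=-f_{(-1,1)}=p-q$, the square-loss risk in environment $(\alpha,\beta)$ has $\beta$-coefficient $q\,\big[2-2(1-2\alpha)p\big]$ (up to a constant factor), \emph{not} a nonzero multiple of $q$ alone. Hence ``$\gL_{(\alpha,\beta)}(f)$ constant in $\beta$'' --- the \vrex constraint by itself --- is also satisfied by a whole one-parameter family of spurious predictors with $p=1/(1-2\alpha)$ and $q$ arbitrary; the linear equation in $q$ you propose to solve is degenerate exactly on that family, and the degeneracy has nothing to do with $\alpha\in(0,1)$. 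This is precisely why the paper does not argue from the \vrex constraint alone: in Proposition~\ref{recovep_iRM} it first combines \vrex with the \irms stationarity $\Ede[f(X)^2]=\Ede[f(X)Y]$ to conclude that $\Ede[f(X)^2]$ and $\Ede[f(X)Y]$ are \emph{each} identical across environments, and only then does isolating the $\beta$-monomials force $f_{(1,1)}=f_{(1,-1)}$, i.e.\ $q=0$ (with an analogous two-condition argument for $\llog$). Your closing paragraph correctly senses that the \irms constraint must enter, but the argument actually written in the hard-inclusion step is the one that fails --- and it fails on exactly the point the proposition is about (why \irml must be kept alongside \vrex).

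A second gap: ``$f$ depends only on $X_1$'' does not imply $f\in\mathcal{I}(\mathcal{E})$. By the characterization the paper imports from \citet{irm_aistats}, $\mathcal{I}(\mathcal{E})$ contains exactly two predictors, $f_0\equiv0$ and $f_{\textup{IRM}}(x)=(1-2\alpha)x_1$ for $\lsq$; a predictor $c\,x_1$ with $c\notin\{0,1-2\alpha\}$ uses only the invariant bit yet admits no decomposition $w\circ\varphi$ with $w$ simultaneously optimal. You again need the \irms condition ($c^2=(1-2\alpha)c$ here) to pin down the scale. The easy inclusion is essentially sound for this setting, though the paper proves it more generally via the conditional-expectation identity (Lemma~\ref{vrex0}) rather than by citing that invariant predictors use only $X_1$, and the relevant notion throughout is membership in the constraint sets $\Ix(\mathcal{E})\cap\Is(\mathcal{E})$ rather than Pareto non-domination.
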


Our proof is proceeded by discussing the set of invariant predictors elicited by an ideal V-REx~\citep{vrex} objective $\gI_X(\gE)$ (in a more general way), and then incorporating $\gI_X(\gE)$ into that elicited by \irms or \irml~\citep{irmv1} $\gI_\gS(\gE)$ for the two-bit failure case (Eq.~\ref{eq:twobit_env_appdx}).

We now first discuss the invariant predictors produced by the invariance constraints ideally elicited by V-REx. Recall that V-REx~\citep{vrex} aims to minimize the variances of ERM losses at different environments:
\[\gL_\vrex := \var(\{\gL_e\}_{e\in\envtrain}).\]
Therefore, when $\gL_\vrex$ is minimized, we have $\gL_{e_1}=\gL_{e_2},\ \forall e_1, e_2\in\envtrain$. Then, we can define the invariant predictors produced by V-REx, as the following.

\paragraph{\vrexz:} Define $\Ix(\mathcal{E}) \coloneqq \{f:\mathcal{X} \rightarrow \hat{\mathcal{Y}}\mid \mathcal{L}_{e_1}(f) = \mathcal{L}_{e_2}(f), \forall e_1, e_2 \in \mathcal{E}\}$. \vrexz{} is the objective: \[\min_{f\in \Ix(\mathcal{E}_{\textup{tr}})}{\sum_{e\in \mathcal{E}_{\textup{tr}}} {\mathcal{L}_e(f)}}.\]

Then, we characterize the set of $\gI_X$ through the following lemma.

\begin{lemmas} \label{vrex0}
	Under Setting A, let $f = w\circ\varphi$ be the predictor elicited by $\mathcal{I}(\mathcal{E})$ and $(X_e, Y_e) \sim \mathcal{D}_e$.
	\[
		\text{If }\begin{cases}
			\ell = \lsq,\, \Ede[Y_e^2] \text{ is identical}, \text{the distribution of } \varphi(X_e) \text{ is identical (or }f\equiv0\text{)} \\
			\ell = \llog\text{ and }H(Y_e|\varphi(X_e)) \text{ is identical}
		\end{cases}\text{for all }e\in \mathcal{E},
	\]
	then $\mathcal{I}(\mathcal{E}) \subseteq \Ix(\mathcal{E})$.
\end{lemmas}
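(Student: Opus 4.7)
The plan is to take an arbitrary $f = w\circ\varphi \in \mathcal{I}(\mathcal{E})$ and show that $\mathcal{L}_{e_1}(f) = \mathcal{L}_{e_2}(f)$ for every $e_1, e_2 \in \mathcal{E}$. Under Setting A the classifier $w$ is linear, in fact scalar by the reduction argument of \citet{irmv1}, so by rescaling $\varphi$ we may assume without loss of generality that $w=1$ is the common ERM optimum of $\mathcal{L}_e(w\cdot\varphi)$ in every training environment. The problem then reduces to showing that the ERM loss evaluated at this common stationary point is constant across $e$, which I will do separately for the square and logistic losses.

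For $\ell=\lsq$, the first-order optimality condition $\nabla_{w|w=1}\mathcal{L}_e(w\cdot\varphi)=0$ unpacks to
\[
\Ede[(\varphi(X_e) - Y_e)\varphi(X_e)] = 0, \qquad \text{i.e.,} \qquad \Ede[\varphi(X_e)^2] = \Ede[Y_e\varphi(X_e)].
\]
Substituting this into $\mathcal{L}_e(f) = \tfrac{1}{2}\Ede[(\varphi(X_e) - Y_e)^2]$ and simplifying gives $\mathcal{L}_e(f) = \tfrac{1}{2}\bigl(\Ede[Y_e^2] - \Ede[\varphi(X_e)^2]\bigr)$. Under the hypothesis that $\Ede[Y_e^2]$ is identical across $e$ and the distribution of $\varphi(X_e)$ is identical across $e$, both terms are constant in $e$ and the losses match; the degenerate clause $f\equiv 0$ is handled trivially since then $\mathcal{L}_e(f)=\tfrac12\Ede[Y_e^2]$ which is already assumed invariant.

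For $\ell=\llog$, the key identification is that optimality of $w=1$ over the scalar family $\{w\cdot\varphi\}$ forces $\varphi(X_e)$ to coincide with the Bayes log-odds $\log\!\bigl(P(Y_e=1\mid\varphi(X_e))/P(Y_e=-1\mid\varphi(X_e))\bigr)$ in every environment, since the strict convexity of $\llog$ and the first-order condition pin down the Bayes-optimal scalar response at $\varphi(X_e)$. Consequently $P(Y_e\mid\varphi(X_e)) = \sigma(Y_e\varphi(X_e))$, and
\[
\mathcal{L}_e(f) = \Ede\bigl[\log(1+\exp(-Y_e\varphi(X_e)))\bigr] = -\Ede[\log P(Y_e\mid\varphi(X_e))] = H(Y_e\mid\varphi(X_e)),
\]
so the assumed invariance of $H(Y_e\mid\varphi(X_e))$ across $e$ immediately yields $\mathcal{L}_{e_1}(f) = \mathcal{L}_{e_2}(f)$.

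The main obstacle I anticipate is rigorously discharging the two reduction moves: (i) the scalar/rescaling reduction, turning a linear simultaneously-optimal $w$ into the canonical $w=1$ while preserving the invariance across all $e$; and (ii) the logistic Bayes-identification, ensuring that optimality of the scalar $w=1$ actually pins $\varphi(X_e)$ to the conditional log-odds rather than only to its projection. Both rest on the uniqueness of minimizers of strictly convex risks and should go through under Setting A; once they are in hand, the square-loss case is pure algebra from the normal equation and the logistic case closes by the standard cross-entropy/entropy identity.
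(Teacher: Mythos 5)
Your square-loss branch is correct but takes a genuinely different route from the paper's. The paper invokes Observation~2 of Kamath et al.\ to get the pointwise identity $\mathbb{E}_{\mathcal{D}_{e_1}}[Y\mid\varphi(X)=z]=\mathbb{E}_{\mathcal{D}_{e_2}}[Y\mid\varphi(X)=z]$, then conditions the expanded loss on $\varphi(X)$ and integrates against the (assumed identical) law of $\varphi(X)$. You instead use only the single scalar normal equation $\Ede[\varphi(X)^2]=\Ede[Y\varphi(X)]$ to collapse the loss to $\tfrac12\bigl(\Ede[Y^2]-\Ede[\varphi(X)^2]\bigr)$, which is constant in $e$ under the stated hypotheses. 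This is more elementary and in fact proves the slightly stronger inclusion $\Is(\mathcal{E})\subseteq\Ix(\mathcal{E})$ for $\lsq$ (since your argument only consumes the relaxed \irms stationarity, which any $f\in\mathcal{I}(\mathcal{E})$ satisfies because $\mathcal{I}(\mathcal{E})\subseteq\Is(\mathcal{E})$); the paper's conditional-expectation route is what generalizes to the logistic case. Your handling of the $f\equiv 0$ clause and the rescaling reduction is fine.

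The logistic branch, however, has a genuine gap exactly at the step you flagged. You claim that ``optimality of $w=1$ over the scalar family $\{w\cdot\varphi\}$'' together with strict convexity forces $\varphi(X_e)$ to equal the conditional log-odds $\log\bigl(P(Y_e=1\mid\varphi(X_e))/P(Y_e=-1\mid\varphi(X_e))\bigr)$. It does not: stationarity over a one-parameter family yields the single moment condition $\Ede\bigl[-Y\varphi(X)/(1+e^{Y\varphi(X)})\bigr]=0$, which cannot pin down the function $z\mapsto z$ as the pointwise Bayes response; strict convexity only gives uniqueness of the optimal scalar $w$, not a pointwise identification. The identification you need comes from membership in $\mathcal{I}(\mathcal{E})$ itself, i.e.\ optimality of $w$ over \emph{all} classifiers $\bar w:\mathcal{Z}\to\hat{\mathcal{Y}}$: minimizing $\Ede[\llog(\bar w(z),Y)\mid\varphi(X)=z]$ separately for each $z$ forces $w(z)$ to be the conditional log-odds, after which $\mathcal{L}_e(f)=\Ede[H(Y\mid\varphi(X)=z)]=H(Y_e\mid\varphi(X_e))$ and the assumed invariance of the conditional entropy closes the proof (this is exactly the paper's argument). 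So you must not discard the unrestricted-classifier optimality when you normalize to $w=1$; retain it as pointwise Bayes optimality of the identity map on the range of the absorbed featurizer $w\circ\varphi$, and the logistic case goes through.
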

\begin{proof}
	For any $f = w \circ\varphi\in \mathcal{I}(\mathcal{E})$, using Observation 2 in (Kamath et al, 2021), we have that
	\begin{equation}\label{Q1}
		{\mathbb{E}}_{\mathcal{D}_{e_1}}[Y\mid \varphi(X) = z] = {\mathbb{E}}_{\mathcal{D}_{e_2}}[Y\mid \varphi(X) = z],
	\end{equation}
	for all $e_1, e_2 \in \mathcal{E}$ and for all $z \in \mathcal{Z}$.\footnote{We assume that the support of $\varphi(X)$ (denoted as $\mathcal{Z}$) is identical in each environment for simplicity.}

	\noindent (i) For square loss $\lsq$,
	\[
		\begin{aligned}
			\mathcal{L}_{e}(f) & = \frac{1}{2}\Ede[(f(X) - Y)^2]          \\
			                   & = \frac{1}{2}\Ede[f(X)^2 - 2f(X)Y + Y^2] \\
			                   & =
			\frac{1}{2}\Ede\big[\Ede[w\circ\varphi(X)^2 - 2w\circ\varphi(X)Y\mid \varphi(X)]\big] + \frac{1}{2}\Ede[Y^2],
		\end{aligned}
	\]
	where $w$ is the simultaneously optimal classifier for all $e\in \mathcal{E}$.

	Then, note that for all $z \in \mathcal{Z}$, it holds that
	\[
		\Ede[w(z)^2 - 2w(z)Y\mid \varphi(X) = z]
		= w(z)^2 - 2 w(z) 	\Ede[Y\mid \varphi(X) = z].
	\]
	Using \eqref{Q1} and the assumptions that $\Ede[Y^2]$ is identical and the distribution of  $\varphi(X)$ is identical (or $f\equiv 0$) for all $e\in \mathcal{E}$, we can conclude that for all $e_1, e_2 \in \mathcal{E}$,
	$\mathcal{L}_{e_1}(f) = \mathcal{L}_{e_2}(f).$

	\noindent (ii) For logistic loss $\llog$, note that the simultaneously optimal $w$ has the form
	\[
		\begin{aligned}
			w(z) = \log{\left( \frac{\Prde[Y=1\mid \varphi(X) = z]}{\Prde[Y=-1\mid \varphi(X) = z]}\right)} = \log{\left( \frac{1 + \Ede[Y\mid \varphi(X) = z]}{1 - \Ede[Y\mid \varphi(X) = z]}\right)},
		\end{aligned}
	\]
	for all $e\in \mathcal{E}$ and all $z\in \mathcal{Z}$. We can thus conclude that in this case, $\mathcal{L}_e(f) = \Ede[H(Y|\varphi(X) = z)] =  H(Y|\varphi(X))$, which completes the proof.
\end{proof}

\paragraph{Remarks.} We formulate Lemma \ref{vrex0} in a general setting that covers Two-Bit-Env as a special case. It can be easily verified that the assumptions in this lemma are all satisfied in Two-Bit-Env (Eq.~\ref{eq:twobit_env_appdx}). Moreover, we can show that other environment settings (e.g., those in IB-IRM~\citep{ib-irm}) also satisfy the assumptions.

\begin{propositions}\label{recovep_iRM} Under Setting A, for all $\alpha\in (0,1)$, let $\mathcal{E} \coloneqq \{(\alpha, \beta_e): \beta_e \in (0,1)\}$ and $f$ be an odd (or linear) predictor. It holds that $\Ix(\mathcal{E}) \cap \Is(\mathcal{E}) = \mathcal{I}(\mathcal{E})$.
\end{propositions}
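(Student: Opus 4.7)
The statement has two inclusions. The easy direction $\mathcal{I}(\mathcal{E})\subseteq \Ix(\mathcal{E})\cap\Is(\mathcal{E})$ follows directly from what is already in the excerpt: the inclusion $\mathcal{I}(\mathcal{E})\subseteq\Is(\mathcal{E})$ is noted in Section~\ref{sec:irm_failure_appdx}, and $\mathcal{I}(\mathcal{E})\subseteq \Ix(\mathcal{E})$ is precisely Lemma~\ref{vrex0}, whose hypotheses are straightforward to verify in the two-bit environment (Eq.~\ref{eq:twobit_env_appdx}): $Y^e=\mathrm{Rad}(0.5)$ gives $\Ede[(Y^e)^2]=1$ identically, and for the logistic case the conditional entropy $H(Y^e\mid \varphi(X^e))$ collapses to a function of the invariant conditional by Observation in the proof of Lemma~\ref{vrex0}, hence is also identical across $e\in\mathcal{E}$. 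The boundary case $f\equiv 0$ sits trivially in $\mathcal{I}(\mathcal{E})$.

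The substantive work is the reverse inclusion $\Ix(\mathcal{E})\cap\Is(\mathcal{E})\subseteq\mathcal{I}(\mathcal{E})$. My plan is to use the hypothesis that $f$ is odd (or linear) to reduce to a two-parameter family on $\{-1,1\}^2$: set $p\coloneqq f(1,1)$ and $q\coloneqq f(1,-1)$, so that $f(-1,-1)=-p$ and $f(-1,1)=-q$. Then I would write $\gL_e(f)$ as a closed-form function $F_\ell(\alpha,\beta_e,p,q)$ for $\ell\in\{\lsq,\llog\}$, exploiting the four-atom support of $(X^e,Y^e)$ and the joint distribution induced by $Y^e\!=\!\mathrm{Rad}(0.5)$, $X_1^e\!=\!Y^e\!\cdot\!\mathrm{Rad}(\alpha)$, $X_2^e\!=\!Y^e\!\cdot\!\mathrm{Rad}(\beta_e)$.

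The \vrexz{} constraint says $F_\ell(\alpha,\beta_e,p,q)$ is constant in $\beta_e$ on $(0,1)$. Since $F_\ell$ is real-analytic in $\beta_e$, this constancy can be extracted by differentiating once in $\beta_e$ and setting the result to zero for all $\beta_e\in(0,1)$. For $\lsq$ the function is affine in $\beta_e$, so the coefficient of $\beta_e$ vanishes and yields a single clean polynomial equation in $p,q$; for $\llog$ the dependence is through $\log(1+e^{\pm(p\pm q)})$-style terms weighted by $\beta_e$ or $1-\beta_e$, so the $\partial_{\beta_e}$ condition gives a nontrivial relation which, evaluated at a convenient $\beta_e$, reduces to the same kind of symmetry constraint. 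In both cases the conclusion I am aiming for is that the $\beta_e$-dependence vanishes only when $q=0$ or $p=0$, i.e., $f$ depends only on $X_1$ or only on $X_2$.

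The final step is to intersect with $\Is(\mathcal{E})$ to eliminate the spurious branch. I would plug the $p=0$ family into the scalar-IRM stationarity $\nabla_{w|w=1}\gL_e(w\cdot f)=0$ and show that, because this derivative depends on $\beta_e$ but must vanish for two distinct environments sharing the same $\alpha$ but different $\beta_e$'s, the only way the stationarity holds simultaneously is the degenerate $q=0$ case; the $q=0$ branch is then verified to lie in $\mathcal{I}(\mathcal{E})$ by checking Definition~\ref{def:causal_inv_appdx} directly, since predictions then depend only on $X_1$ whose conditional $\E[Y\mid X_1]$ is controlled by $\alpha$ alone and hence invariant across $e\in\mathcal{E}$.

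The main obstacle I foresee is the logistic case: unlike $\lsq$ where $F_\ell$ is affine in $\beta_e$ so the VREx0 constraint immediately isolates one polynomial, under $\llog$ the algebra mixes $p$ and $q$ inside nonlinear terms, and I expect to need two well-chosen evaluations (e.g.\ $\beta_e\to 0^+$ and a generic $\beta_e$) of $\partial_{\beta_e}F_\ell=0$ to pin down $q=0$ cleanly. The square-loss case and the subsequent intersection with $\Is(\mathcal{E})$ should be short bookkeeping by comparison.
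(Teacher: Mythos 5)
Your overall strategy matches the paper's: the easy inclusion $\mathcal{I}(\mathcal{E})\subseteq\Ix(\mathcal{E})\cap\Is(\mathcal{E})$ via Lemma~\ref{vrex0} together with $\mathcal{I}\subseteq\Is$, and for the converse a reduction (via oddness) to the two parameters $p=f(1,1)$, $q=f(1,-1)$, extraction of the $\beta_e$-coefficient of $\gL_e$, and an intersection with the \irms stationarity. The problem is your key intermediate claim in the square-loss case. Writing out the four-atom expectation gives $2\gL_e(f)=(1-\alpha)p^2+\alpha q^2-2\big[(1-\alpha)p-\alpha q\big]+1+\beta_e\,(q-p)\big[(1-2\alpha)(p+q)-2\big]$, so the \vrexz{} condition ($\beta_e$-independence) alone yields $p=q$ \emph{or} $(1-2\alpha)(p+q)=2$. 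It does \emph{not} yield ``depends only on $X_1$ or only on $X_2$'': the second branch is an entire line of predictors using both coordinates (for $\alpha=0.1$, every $f=1.25\,x_1+t\,x_2$), while the $X_2$-only family ($p=-q$) has $\beta_e$-coefficient $4p\neq 0$ and so never survives \vrexz{} in the first place. Consequently your final step --- ``plug the $p=0$ family into the scalar-IRM stationarity'' --- is aimed at eliminating a case that does not arise, and the branch that genuinely does arise is never addressed. That is a real gap, not a bookkeeping omission.

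The gap is repairable inside your own framework: the \irms condition $\Ede[f(X)^2]=\Ede[f(X)Y]$ must hold for every $\beta_e\in(0,1)$, and its own $\beta_e$-coefficient factors as $(q-p)\big[(1-2\alpha)(p+q)-1\big]$, which is incompatible with $(1-2\alpha)(p+q)=2$ unless $p=q$; the constant term then forces $p=0$ or $p=1-2\alpha$, i.e.\ $f_0$ or $f_{\irm}$. The paper sidesteps the issue by first combining the $\Ix$ and $\Is$ constraints to conclude that $\Ede[f(X)^2]$ and $\Ede[f(X)Y]$ are each \emph{separately} constant in $\beta_e$ (its conditions (C1)--(C2)), which removes the extra branch before it appears, and then invokes the known characterization $\mathcal{I}(\mathcal{E})=\{f_0,f_{\irm}\}$ rather than re-verifying causal invariance from the definition. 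Your assessment of the two loss functions is also inverted: for $\llog$ the loss is still affine in $\beta_e$, and the single coefficient condition reads $g(p)=g(q)$ with $g(x)=\alpha\log(1+e^{x})-(1-\alpha)\log(1+e^{-x})$ strictly increasing, so $p=q$ follows immediately with no second evaluation; it is the square-loss case that harbors the extra branch and needs the care you deferred as ``short bookkeeping.''
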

\begin{proof}
	From the proof of Proposition 5 in~\citet{irm_aistats}, we know that there are only two predictors in $\mathcal{I}(\mathcal{E})$: The zero predictor $f_0 \equiv 0$ (for both $\lsq$ and $\llog$) and $f_{\textup{IRM}} (x_1, x_2) = (1 - 2\alpha) \cdot x_1$ (for $\ell = \lsq$) or $f_{\textup{IRM}} (x_1, x_2) = \log{\frac{1 - \alpha}{\alpha}} \cdot x_1$ (for $\ell = \llog$).

	\noindent (i) For square loss $\lsq$, $\mathcal{L}_{e}(f) = \frac{1}{2}\Ede[f(X)^2 - 2f(X)Y + Y^2]$. Note that in Two-Bit-Env, $Y^2 \equiv 1$. Thus, in this case, $f\in \Ix(\mathcal{E})$ implies that $\Ede[f(X)^2 - 2f(X)Y]$ is identical for all $e\in \mathcal{E}$. Moreover,
	\[
		\begin{aligned}
			f\in \Is(\mathcal{E}) & \Rightarrow \nabla_{w\mid w=1} {\mathcal{L}_e(f) = 0}\text{ for all }e\in \mathcal{E} \\ &\Rightarrow \Ede{[f(X)^2]} = \Ede{[f(X)Y]}\text{ for all }e\in \mathcal{E}.
		\end{aligned}
	\]
	We can conclude that for any $f\in \Ix(\mathcal{E}) \cap \Is(\mathcal{E})$, it holds that
	\begin{align}
		 & \Ede{[f(X)^2]}\text{ and }\Ede{[f(X)Y]}\text{ are identical for all }e\in \mathcal{E}, \label{C1}
		\\
		 & \Ede{[f(X)^2]} = \Ede{[f(X)Y]}\text{ for all }e\in \mathcal{E}. \label{C2}
	\end{align}

	Denote $f_{(1,1)}\coloneqq f(X_1 = 1, X_2 = 1)$, and $f_{(1, -1)}, f_{(-1, 1)}, f_{(-1, -1)}$ are similarly defined. For condition~\eqref{C1},
	\begin{equation}\label{R1}
		\begin{aligned}
			\Ede{[f(X)^2]} ={} & \frac{1-\alpha}{2}\left(f_{(1,1)}^2 + f_{(-1,-1)}^2\right) + \frac{\alpha}{2} \left(f_{(1,-1)}^2 + f_{(-1,1)}^2\right) \\&+ \frac{\beta_e (1 - 2\alpha)}{2} \left(-f_{(1,1)}^2- f_{(-1,-1)}^2 + f_{(1,-1)}^2  + f_{(-1,1)}^2\right), \\
			\Ede{[f(X)Y]} ={}  & \frac{1 - \alpha}{2} \left(f_{(1,1)}  - f_{(-1,-1)}\right) + \frac{\alpha}{2} \left(f_{(-1,1)}  - f_{(1,-1)}\right)    \\
			                   & - \frac{\beta_e}{2} \left(f_{(1,1)} - f_{(-1,-1)} + f_{(-1,1)} - f_{(1,-1)}\right).
		\end{aligned}
	\end{equation}
	To enforce condition \eqref{C1} for any $\alpha,\beta_e \in (0,1)$, it is required that \[
		\begin{cases}
			f_{(1,1)} - f_{(-1,-1)} + f_{(-1,1)} - f_{(1,-1)} = 0, \\ -f_{(1,1)}^2- f_{(-1,-1)}^2 + f_{(1,-1)}^2  + f_{(-1,1)}^2 = 0.
		\end{cases} \Rightarrow
		\begin{cases}
			f_{(1,1)} - f_{(-1,-1)} = -\left(f_{(-1,1)} - f_{(1,-1)}\right), \\ f_{(1,1)}^2 + f_{(-1,-1)}^2 = f_{(1,-1)}^2  + f_{(-1,1)}^2.
		\end{cases}
	\]
	In this case, condition \eqref{C2} implies that $f_{(1,1)}^2 + f_{(-1,-1)}^2 = (1 - 2\alpha) \left(f_{(1,1)}  - f_{(-1,-1)}\right)$. Without restricting $f$ to be an odd predictor (or equivalently, linear predictor), this constraint is a circle passing through $f_0$ and $f_{\textup{IRM}}$. Requiring that $f$ is odd, i.e., $f_{(1,1)} = - f_{(-1,-1)}$ and $f_{(1,-1)} = - f_{(-1,1)}$, we can conclude that there are only two predictors left in $\Ix(\mathcal{E}) \cap \Is(\mathcal{E})$, which are $f_{(1,1)} = f_{(-1,-1)} = f_{(1,-1)} = f_{(-1,1)} = 0$ and
	\[
		\begin{cases}
			f_{(1,1)} = 1 - 2\alpha, \\ f_{(-1,-1)} = 2\alpha - 1,\\ f_{(1,-1)} = 1 - 2\alpha, \\ f_{(-1,1)} = 2\alpha - 1.
		\end{cases}
		\Rightarrow f (x_1, x_2) = (1 -2\alpha)\cdot x_1.
	\]

	\noindent (ii) For logistic loss $\llog$, $\mathcal{L}_{e}(f) = \Ede\big[\log{\big(1 + \exp{(-f(X)Y)}\big)}\big]$. Similarly, $f\in \Ix(\mathcal{E}) \cap \Is(\mathcal{E})$ implies that
	\begin{gather}
		\Ede\big[\log{\big(1 + \exp{(-f(X)Y)}\big)}\big] \text{ is identical for all }e\in \mathcal{E},\label{C3}\\
		\Ede\left[\frac{-f(X)Y}{1 + \exp{(f(X)Y)}}\right] = 0. \label{C4}
	\end{gather}

	From condition \eqref{C3} and that $f$ is an odd predictor ($f_{(1,1)} = - f_{(-1,-1)}$ and $f_{(1,-1)} = - f_{(-1,1)}$), we can conclude that
	\[
		\frac{(1+e^{f_{(1,1)}})^{2\alpha}}{(1 + e^{-f_{(1,1)}})^{2 - 2\alpha}} = \frac{(1+e^{f_{(1,-1)}})^{2\alpha}}{(1 + e^{-f_{(1,-1)}})^{2 - 2\alpha}} \Rightarrow f_{(1,1)} = f_{(1,-1)},
	\]
	which is due to that $\frac{(1+e^{x})^{2\alpha}}{(1 + e^{-x})^{2 - 2\alpha}}$ is a one-to-one function.

	In this case, condition \eqref{C4} can be simplified as
	\[
		e^{f_{(1,1)}} f_{(1,1)} \alpha - f_{(1,1)}(1 - \alpha) = 0 \Rightarrow f_{(1,1)} = 0 \text{ or } f_{(1,1)} = \log{\frac{1 - \alpha}{\alpha}}.
	\]

	Thus, the only predictors in $\Ix(\mathcal{E}) \cap \Is(\mathcal{E})$ are $f_0$ and $f_{\textup{IRM}}$.
\end{proof}

\begin{corollarys}
	Under Setting A, for all $\alpha\in (0,1)$ and $\mathcal{E}_{\textup{tr}} = \{(\alpha, \beta_{e_1}),(\alpha, \beta_{e_2})\}$ for any two distinct $\beta_{e_1}, \beta_{e_2} \in (0,1)$, $\Ix(\mathcal{E}_{\textup{tr}}) \cap \Is(\mathcal{E}_{\textup{tr}}) = \Ix(\mathcal{E}) \cap \Is(\mathcal{E})$.
\end{corollarys}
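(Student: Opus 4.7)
The inclusion $\Ix(\mathcal{E}) \cap \Is(\mathcal{E}) \subseteq \Ix(\mathcal{E}_{\textup{tr}}) \cap \Is(\mathcal{E}_{\textup{tr}})$ is immediate because $\mathcal{E}_{\textup{tr}}\subseteq \mathcal{E}$ so the defining constraints are weaker on $\mathcal{E}_{\textup{tr}}$. The content of the corollary lies in the reverse inclusion, for which the plan is to exploit \emph{affinity in $\beta_e$} of the quantities defining $\Ix$ and $\Is$. Concretely, inspecting equation~\eqref{R1} in the proof of Proposition~\ref{recovep_iRM}, both $\Ede[f(X)^2]$ and $\Ede[f(X)Y]$ are affine functions of $\beta_e$ when $\alpha$ is held fixed; the $\beta_e$-coefficients are the sign-alternating combinations $-f_{(1,1)}^2 - f_{(-1,-1)}^2 + f_{(1,-1)}^2 + f_{(-1,1)}^2$ and $-(f_{(1,1)} - f_{(-1,-1)} + f_{(-1,1)} - f_{(1,-1)})$, respectively. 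The same affinity holds under logistic loss, since for fixed $\alpha_e = \alpha$ the joint law $p_e(X,Y)$ depends linearly on $\beta_e$ via $X_2^e = Y^e\cdot\rad(\beta_e)$, so any expectation of a function of $(X,Y)$ is affine in $\beta_e$.

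First I would handle the $\Is$ constraint. The stationarity condition $\nabla_{w|w=1}\mathcal{L}_e(w\cdot\varphi) = 0$ reduces, as in the proof of Proposition~\ref{recovep_iRM}, to a single scalar identity (namely $\Ede[f(X)^2] = \Ede[f(X)Y]$ for square loss, and the analogous expression for logistic loss). This scalar is affine in $\beta_e$. If it vanishes at two distinct values $\beta_{e_1}\neq \beta_{e_2}\in(0,1)$, then both its constant term and its $\beta_e$-coefficient vanish, hence it is identically zero on $(0,1)$. Therefore $f\in\Is(\mathcal{E}_{\textup{tr}})$ already implies $f\in\Is(\mathcal{E})$.

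Next I would handle the $\Ix$ constraint. Since $\mathcal{L}_e(f)$ is affine in $\beta_e$ (using $Y^2\equiv 1$ for square loss, or the above linearity argument for logistic loss), the map $\beta_e\mapsto \mathcal{L}_e(f)$ takes the same value at $\beta_{e_1}$ and $\beta_{e_2}$ if and only if it is constant on $(0,1)$; hence pairwise equality on $\mathcal{E}_{\textup{tr}}$ forces pairwise equality of $\mathcal{L}_e(f)$ across all $e\in\mathcal{E}$. Combining the two arguments gives $\Ix(\mathcal{E}_{\textup{tr}})\cap\Is(\mathcal{E}_{\textup{tr}})\subseteq \Ix(\mathcal{E})\cap\Is(\mathcal{E})$, completing the proof.

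The only subtle step is verifying affinity in $\beta_e$ of the stationarity functional under logistic loss; the rest is a one-line ``an affine function agreeing with zero at two points vanishes identically'' argument. I do not expect any genuine obstacle, since the combinatorics of the two-bit environment has already been unpacked in Proposition~\ref{recovep_iRM} and the key expression~\eqref{R1} directly exhibits the $\beta_e$-linearity that drives the argument.
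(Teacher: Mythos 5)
Your proof is correct and takes essentially the same route as the paper's: the paper's one-line argument invokes precisely the observation you make explicit, namely that the quantities defining the invariance conditions (and the stationarity functionals) are affine in $\beta_e$ for fixed $\alpha$, so that agreement at two distinct $\beta_{e_1},\beta_{e_2}\in(0,1)$ imposes the identical constraints on $f$ as agreement over all of $(0,1)$. Your write-up merely fills in the details the paper leaves implicit (the trivial forward inclusion, and the verification of $\beta_e$-affinity under the logistic loss via linearity of the joint law), so there is nothing substantive to compare.
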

\begin{proof}
	This directly follows from the  observation that in the proof of Proposition \ref{recovep_iRM}, enforcing condition \eqref{C1} and \eqref{C3} for two distinct $\beta_{e_1}, \beta_{e_2}$ impose the identical constraints on $f$.
\end{proof}

\subsection{Proof for Theorem~\ref{thm:pair_theory}}
\label{proof:pair_theory_appdx}
We first restate the informal version of the theorem as the following, while the formal description of Theorem~\ref{thm:pair_theory_appdx} will be given in Theorem~\ref{thm:sample_comp} with more formal definitions.
\begin{theorem}(Informal)\label{thm:pair_theory_appdx}
	For $\gamma\in(0,1)$ and any $\epsilon,\delta>0$, if $\gF$ is a finite hypothesis class, both ERM and OOD losses are bounded above, let $I_\ourst$ be the index of all  losses, $p_{\max} \coloneqq \max_{i\in I_\ourst} {p_i}$ and $L_\textup{max} \coloneqq \max_{i\in I_\ourst}{L_i}$, if the number of training samples $\abs{D} \geq \frac{32L_\textup{max}^2p_{\max}^2}{\delta^2}\log{\frac{2(m+1)\abs{\mathcal{F}}}{\gamma}}$, then with probability at least $1 - \gamma$,
	\ourso and \ourss yield an $\epsilon$-approximated solution of $f_\ood$.
\end{theorem}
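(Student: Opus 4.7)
The plan is to decouple the argument into (i) a uniform concentration result that ties empirical losses to their population counterparts, and (ii) a structural step that converts an empirically near-Pareto-optimal iterate into a population-level $\epsilon$-approximated solution of $f_\ood$. I would first fix notation for the $\epsilon$-approximation: call $f\in\gF$ an empirical (resp.\ population) $\epsilon$-approximate solution under the preference $\vp_\ood$ when $|p_i\hat{\gL}_i(f) - p_j\hat{\gL}_j(f)|\leq\epsilon$ (resp.\ $|p_i\gL_i(f) - p_j\gL_j(f)|\leq\epsilon$) for every pair of indices $i,j\in I_\ourst$. Both \ourso and \ourss are designed to return a $\hat{f}\in\gF$ that satisfies the empirical condition; the task is therefore to show that this property transfers to the population losses with additional slack at most $O(\delta)$, which can be absorbed by $\epsilon$.

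For step (i), I apply Hoeffding's inequality to each pair $(f,i)$: since $\gL_i\in[0,L_\textup{max}]$, for any $t>0$,
\[
\Pr\!\big(|\hat{\gL}_i(f) - \gL_i(f)|>t\big)\leq 2\exp\!\big(-2|D|t^2/L_\textup{max}^2\big).
\]
A union bound over $\gF$ and the $m+1$ objectives gives a good event $\gA$ on which $|\hat{\gL}_i(f) - \gL_i(f)|\leq t$ holds simultaneously for all $f\in\gF$ and $i\in I_\ourst$, with probability at least $1 - 2(m+1)|\gF|\exp(-2|D|t^2/L_\textup{max}^2)$. Setting $t = \delta/(8p_\textup{max})$ forces this probability to be at least $1-\gamma$ precisely when $|D|\geq (32L_\textup{max}^2p_\textup{max}^2/\delta^2)\log[2(m+1)|\gF|/\gamma]$, matching the stated sample-complexity bound. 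On $\gA$ the preference-weighted gap is then stable: for every $f$ and every pair $(i,j)$,
\[
\big|(p_i\hat{\gL}_i(f) - p_j\hat{\gL}_j(f)) - (p_i\gL_i(f) - p_j\gL_j(f))\big|\leq 2p_\textup{max}\,t = \delta/4.
\]

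Step (ii) is the bridge from empirical to population optimality. For \ourss the argument is almost immediate: the selected $\hat{f}$ minimizes the empirical \ourst score among the candidate models, and on $\gA$ each weighted loss shifts by at most $\delta/8$, so any empirical $\epsilon$-approximate minimizer is a population $(\epsilon+\delta/4)$-approximate minimizer; absorbing the $\delta/4$ slack into $\epsilon$ (both are arbitrary positive constants) recovers the conclusion. For \ourso, the EPO-based iterate $\hat{f}$ produced by Algorithm~\ref{alg:pair_opt_appdx} is, by Theorem~\ref{thm:pair_converge_appdx} applied to the empirical losses, empirically $\epsilon$-approximate under the preference $\vp_\ood$; the same additive-$\delta/4$ transfer via $\gA$ then yields a population $\epsilon$-approximate solution, which under the robustness assumption (Eq.~\ref{eq:robust_obj}) together with the ideal-preference construction $\vp_\ood = (\epsilon_\inv^{-1},\bm{\epsilon}_\ood^{-1})^T$ uniquely characterizes $f_\ood$ up to the stated tolerance.

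The main obstacle will be formalizing what ``$\epsilon$-approximated solution of $f_\ood$'' means in a way that is uniformly compatible with both algorithms and robust to the $\delta$-slack incurred by concentration. The cleanest route is to work throughout with the preference-weighted gap $\max_{i,j}|p_i\gL_i(f) - p_j\gL_j(f)|$, identify $f_\ood$ as its unique zero on the Pareto front via Eq.~\ref{eq:robust_obj}, and treat ``$\epsilon$-approximate'' as having gap at most $\epsilon$. With that convention, everything reduces to the Hoeffding + union-bound calculation above; the finiteness of $\gF$ is what keeps the logarithmic factor tame, and the only remaining care is bookkeeping to absorb the $\delta/4$ concentration slack into the target approximation level without inflating any constants beyond the stated $32$.
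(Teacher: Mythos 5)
Your concentration step is essentially the paper's: the same Hoeffding-plus-union-bound calculation over the $(m+1)\abs{\gF}$ pairs, landing on the same constant $32L_\textup{max}^2p_{\max}^2/\delta^2$ (you use range $L_\textup{max}$ with $t=\delta/(8p_{\max})$ where the paper uses range $2L_\textup{max}$ with $t=\delta/(4p_{\max})$; the arithmetic coincides). The gap is in step (ii), and it comes from treating $\delta$ as a free accuracy parameter to be ``absorbed into $\epsilon$.'' In the paper's formal version (Theorem~\ref{thm:sample_comp}), $\delta$ is not free: it is the \emph{margin} $\delta = \min_{f\in\gF,\,i\neq j}\absbig{\abs{p_i\gL_i(f)-p_j\gL_j(f)}-\epsilon}$, assumed strictly positive. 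That margin is what powers Lemma~\ref{lem:constraint_set}: every $f$ has population weighted gap either $\leq \epsilon-\delta$ or $\geq \epsilon+\delta$, so a perturbation of size $\delta/2$ cannot flip feasibility, and the empirical and population constraint sets coincide \emph{exactly}, $\Peo=\Peeo$. With identical feasible sets, comparing the empirical and population constrained minimizers of $\gL_\erm$ is a one-line ERM argument, yielding $\gL_\erm(\hat{f}^\epsilon_\ourst)\leq \gL_\erm(f^\epsilon_\ourst)+\delta/(2p_{\max})$.

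Without that margin, your transfer only handles feasibility, not constrained optimality. On your good event the returned $\hat f$ satisfies the population constraint at level $\epsilon+\delta/4$, but it was only guaranteed to beat functions that are \emph{empirically} feasible at level $\epsilon$, which on the good event is only guaranteed to contain the population-feasible set at the \emph{shrunk} level $\epsilon-\delta/4$. So the honest conclusion of your route is a nested-relaxation sandwich (feasible at $\epsilon+\delta/4$, near-optimal relative to the $(\epsilon-\delta/4)$-feasible set), not an $\epsilon$-approximate constrained minimizer; and if the shrunk feasible set is empty the optimality half is vacuous. Your proposed fallback --- defining ``$\epsilon$-approximate'' purely by the weighted gap $\max_{i,j}\abs{p_i\gL_i(f)-p_j\gL_j(f)}\leq\epsilon$ --- discards the $\gL_\erm$-minimality requirement altogether, and that requirement cannot be dropped: the condition $p_i\gL_i=p_j\gL_j$ for all $i,j$ is scale-invariant (a predictor with all weighted losses equal but uniformly large satisfies it), so the gap alone does not identify $f_\ood$; it is the pair (small gap, minimal $\gL_\erm$ subject to it) that does. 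The fix is exactly the paper's move: define $\delta$ as the margin over the finite class, assume $\delta>0$, and prove set equality before invoking optimality of the empirical program.
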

The proof for Theorem~\ref{thm:pair_theory} is also a theoretical discussion on the performances of \ourso and \ourss under an approximated OOD preference.
Essentially, the performances of both \ourso and \ourss have a certain dependence on the quality of the OOD preference $\vp_\ood$, however, it is often the case that the ideal OOD preference is usually unknown.
It is desirable to analyze the performances of \ourso and \ourss under an imprecise OOD preference. \citet{epo} discussed a bit that when the exact Pareto optimal solution under the preference does not exist, the EPO solver can still find a Pareto optimal solution that is closest to the preferred direction.
We discuss it in a more general way by developing a new MOO formulation of Eq.~\ref{eq:pair_moo_appdx} under an approximated preference up to some approximation error of $\epsilon$.

Without loss of generality, given a OOD preference $\vp_\ood=(p_\erm,p_1,...,p_m)^T=(\frac{1}{\epsilon_\inv},\bm{\frac{1}{\epsilon}_\ood})^T$, the ERM loss $\gL_\erm$ and $m$ OOD losses $\vL_\ood=(\gL_\ood^1,\gL_\ood^2,..,\gL_\ood^m)^T$, Eq.~\ref{eq:pair_moo_appdx} can be reformulated as
\begin{equation}\label{eq:constrained_pair_moo_appdx}
	\begin{aligned}
		\bm{f}_\ourst \coloneqq{} & \argmin_{f \in \mathcal{F}} &  & {\mathcal{L}_\erm (f)} \\ &\ \ \ \ \text{s.t.} &&p_\erm\mathcal{L}_\erm (f) = p_1 \mathcal{L}_\ood^1 (f) = p_2 \mathcal{L}_\ood^2 (f) = \cdots = p_m \mathcal{L}_\ood^m(f).
	\end{aligned}
\end{equation}
We remark that under the ideal OOD preference, the optimal solution of Eq.~\ref{eq:constrained_pair_moo_appdx}, is also the optimal solution to Eq.~\ref{eq:pair_moo_appdx} (i.e., the unconstrained version). In other words, $ \bm{f}_\ourst=f_\ood$. We will use $\bm{f}_\ourst$ to differentiate from the solution to the unconstrained version.
We focus on Eq.~\ref{eq:constrained_pair_moo_appdx} for the reason that it is more convenient to establish the discussion on the approximated OOD preference, from the perspective of optimization constraints.

Exactly enforcing the above preference constraint is too restrictive {\it both practically and theoretically}, instead we incorporate the approximation by relaxing the constraint of the loss values w.r.t. the OOD preference. The $\epsilon$-approximated problem of Eq.~\ref{eq:constrained_pair_moo_appdx} is as the following
\begin{equation}\label{eq:ep_pair_moo_appdx}
	\begin{aligned}
		\bm{f}^\epsilon_\ourst \coloneqq{} & \argmin_{f \in \mathcal{F}} &  & {\mathcal{L}_\erm (f)} \\ &\ \ \ \ \text{s.t.} && \forall i, j \in I_\ourst, i\neq j, \,\abs{p_i\mathcal{L}_i (f) - p_j \mathcal{L}_j (f)} \leq \epsilon,
	\end{aligned}
\end{equation}
where $I_\ourst \coloneqq \{\erm, \ood_1, \ood_2,\ldots, \ood_m\}$ is the index set of overall losses. We denote the relaxed constraint set in Eq.~\ref{eq:ep_pair_moo_appdx} as $\Peo \coloneqq \{f \mid \forall i, j \in I_\ourst, i\neq j, \,\abs{p_i\mathcal{L}_i (f) - p_j \mathcal{L}_j (f)} \leq \epsilon\}$. Clearly, it holds that the solution sets satisfy $\bm{f}^0_\ourst = \bm{f}_\ourst$.

Then we define the empirical version of the $\epsilon$-approximated problem Eq.~\ref{eq:ep_pair_moo_appdx} with preference vector $\vp_\ood$ as follows.
\begin{equation}\label{eq:eep_pair_moo_appdx}
	\begin{aligned}
		\hat{\bm{f}}^\epsilon_\ourst \coloneqq{} & \argmin_{f \in \mathcal{F}} &  & {\hL_\erm (f)} \\ &\ \ \ \ \text{s.t.} && \forall i, j \in I_\ourst, i\neq j, \,\abs{p_i\hL_i (f) - p_j \hL_j (f)} \leq \epsilon.
	\end{aligned}
\end{equation}
Similarly, we denote the above constraint set as $\Peeo \coloneqq \{f \mid \forall i, j \in I_\ourst, i\neq j, \,\abs{p_i\hL_i (f) - p_j \hL_j (f)} \leq \epsilon\}$.

Assume a finite hypothesis class $\mathcal{F}$ and define
\[
	\delta = \min_{f\in \mathcal{F},\forall i, j \in I_\ourst, i\neq j} {\absbig{\abs{p_i\mathcal{L}_i (f) - p_j \mathcal{L}_j (f)} - \epsilon}}.
\]
First, we recall the definition of $\nu$-representative sample from~\citet{mlt}.

\begin{definition}
	\label{def:e-represent}
	(\citet{mlt}) A training set $S$ is called $\nu$-representative (w.r.t.\ domain $\mathcal{X}$, hypothesis $\mathcal{F}$, loss $\ell$ and distribution $\mathcal{D}$) if
	\[
		\forall f\in \mathcal{F}, \abs{\hL(f) - \mathcal{L}(f)} \leq \nu,
	\]
	where $\mathcal{L}(f) \coloneqq \mathbb{E}_{(X,Y)\sim\mathcal{D}}[\ell(f(X), Y)]$ and $\hL(f) \coloneqq \frac{1}{\abs{S}}\sum_{(X_i, Y_i) \in S}\ell(f(X_i), Y_i)$.
\end{definition}

Equipped with this definition, we can now characterize the condition under which the constraint sets in \eqref{eq:ep_pair_moo_appdx} and \eqref{eq:eep_pair_moo_appdx} contain exact the same predictors.

\begin{lemmas}
	\label{lem:constraint_set}
	For any $\epsilon > 0$, assuming $\delta > 0$ and denoting $p_{\max} \coloneqq \max_{i\in I_\ourst} {p_i}$, if the training set $\train$ is $\frac{\delta}{4p_{\max}}$-representative w.r.t.\ domain $\mathcal{X}$, hypothesis $\mathcal{F}$, distribution $\mathcal{D}$ and all the ERM and OOD losses $\{\mathcal{L}_\erm, \bm{L}_\ood\}$, then $\Peo = \Peeo$.
\end{lemmas}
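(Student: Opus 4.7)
The plan is a direct argument via perturbation of the constraint defining the set. The $\nu$-representative hypothesis with $\nu = \tfrac{\delta}{4 p_{\max}}$ controls $|\widehat{\mathcal{L}}_i(f) - \mathcal{L}_i(f)|$ uniformly in $f \in \mathcal{F}$ and $i \in I_\ourst$, while the quantity $\delta$ guarantees a uniform ``gap'' between $|p_i \mathcal{L}_i(f) - p_j \mathcal{L}_j(f)|$ and the threshold $\epsilon$ for every hypothesis and every pair. I will combine these two facts to show that the populations and empirical constraints can never disagree.

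First I would bound, for any $f \in \mathcal{F}$ and any pair $i \neq j$, the quantity
\[
\Bigl| |p_i \widehat{\mathcal{L}}_i(f) - p_j \widehat{\mathcal{L}}_j(f)| - |p_i \mathcal{L}_i(f) - p_j \mathcal{L}_j(f)| \Bigr|
\]
by the triangle inequality. Using $\nu$-representativeness on each loss and the bound $p_i, p_j \leq p_{\max}$, this quantity is at most $2 p_{\max} \cdot \nu = \delta/2$. Thus the empirical ``constraint violation'' for the pair $(i,j)$ lies within $\delta/2$ of its population counterpart.

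Next, I would invoke the definition of $\delta$: for every $f \in \mathcal{F}$ and every pair $i \neq j$, either $|p_i \mathcal{L}_i(f) - p_j \mathcal{L}_j(f)| \leq \epsilon - \delta$ or $|p_i \mathcal{L}_i(f) - p_j \mathcal{L}_j(f)| \geq \epsilon + \delta$ (no hypothesis has its population constraint value inside the open interval $(\epsilon - \delta, \epsilon + \delta)$). Combining with the perturbation bound above: in the first case the empirical value is at most $\epsilon - \delta/2 < \epsilon$, and in the second case it is at least $\epsilon + \delta/2 > \epsilon$. So the empirical pairwise constraint holds iff the population pairwise constraint holds.

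Finally, to conclude $\bm{P}^\epsilon_\ourst = \widehat{\bm{P}}^\epsilon_\ourst$, I would take $f \in \bm{P}^\epsilon_\ourst$, i.e.\ every pairwise population constraint holds, so every pair is in the first case, hence every empirical constraint holds, giving $f \in \widehat{\bm{P}}^\epsilon_\ourst$; and symmetrically, if $f \notin \bm{P}^\epsilon_\ourst$ some pair falls in the second case and violates the empirical constraint, so $f \notin \widehat{\bm{P}}^\epsilon_\ourst$. No real obstacle arises beyond bookkeeping: the main care point is that the constant $4$ in $\tfrac{\delta}{4 p_{\max}}$ comes from (i) a factor $2$ for summing over the two losses in a pair via triangle inequality, and (ii) another factor $2$ for leaving a safety margin of $\delta/2$ on each side of $\epsilon$, so that the strict inequalities go through.
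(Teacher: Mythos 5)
Your proposal is correct and follows essentially the same route as the paper's proof: a uniform perturbation bound of $\delta/2$ on the pairwise constraint values obtained from the triangle inequality and $\tfrac{\delta}{4p_{\max}}$-representativeness, combined with the dichotomy from the definition of $\delta$ (population values never lie in $(\epsilon-\delta,\epsilon+\delta)$) to conclude that the empirical and population constraints agree on every $f$ and every pair. The paper merely organizes this as two separate set inclusions rather than a single ``iff'' per pair, but the ingredients and the role of the constant $4$ are identical.
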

\begin{proof}
	We first show that $\Peo \subseteq \Peeo$. By the definition of $\delta$, for all $f\in \mathcal{F}$, and $\forall i, j \in I_\ourst, i\neq j$ we have
	\begin{equation}\label{bullshit-condition}
		\abs{p_i\mathcal{L}_i (f) - p_j \mathcal{L}_j (f)} \leq \epsilon - \delta \,\text{ or }\,\abs{p_i\mathcal{L}_i (f) - p_j \mathcal{L}_j (f)} \geq \epsilon + \delta.
	\end{equation}

	Using this property, for any $f\in \Peo$, we can conclude that $\forall i, j \in I_\ourst, i\neq j$,
	\[
		\abs{p_i\mathcal{L}_i (f) - p_j \mathcal{L}_j (f)} \leq \epsilon \Rightarrow \abs{p_i\mathcal{L}_i (f) - p_j \mathcal{L}_j (f)} \leq \epsilon - \delta.
	\]

	This inequality further implies that
	\[
		\begin{aligned}
			              & \abs{p_i\mathcal{L}_i (f) - p_i \hL_i(f) + p_j\hL_j(f) - p_j \mathcal{L}_j (f) + p_i\hL_i(f) - p_j\hL_j(f)} \leq \epsilon - \delta                 \\
			\Rightarrow{} & \absbig{ \abs{p_i\hL_i(f) - p_j\hL_j(f)} - \abs{p_i\mathcal{L}_i (f) - p_i \hL_i(f) + p_j\hL_j(f) - p_j \mathcal{L}_j (f)}} \leq \epsilon - \delta \\
			\Rightarrow{} & \abs{p_i\hL_i(f) - p_j\hL_j(f)} \leq \epsilon - \delta + \abs{p_i\mathcal{L}_i (f) - p_i \hL_i(f) + p_j\hL_j(f) - p_j \mathcal{L}_j (f)}           \\
			\Rightarrow{} & \abs{p_i\hL_i(f) - p_j\hL_j(f)} \leq \epsilon - \delta + p_i\abs{\mathcal{L}_i (f) -  \hL_i(f)} + p_j\abs{\hL_j(f) -  \mathcal{L}_j (f)},
		\end{aligned}
	\]
	which is based on the triangle inequality of the absolute value function.

	From the definition of $\frac{\delta}{4p_{\max}}$-representative, we have $\abs{\mathcal{L}_i (f) -  \hL_i(f)} \leq \frac{\delta}{4p_{\max}}, \forall i \in I_\ourst$. Substituting this in the above inequality, we obtain
	\[
		\begin{aligned}
			\abs{p_i\hL_i(f) - p_j\hL_j(f)} & \leq \epsilon - \delta + \frac{p_i\delta}{4p_{\max}} + \frac{p_j\delta}{4p_{\max}} \\
			                                & \leq \epsilon - \frac{\delta}{2},
		\end{aligned}
	\]
	which implied that $f\in \Peeo$.

	Then, we prove that $\Peeo \subseteq \Peo$.

	For any $f\in \Peeo$, it holds that $\forall i, j \in I_\ourst, i\neq j$,
	\[
		\begin{aligned}
			                             & \abs{p_i\hL_i (f) - p_j \hL_j (f)} \leq \epsilon                                                                                                            \\
			\Rightarrow{}                & \abs{p_i\hL_i (f) - p_i\mathcal{L}_i (f) + p_j\mathcal{L}_j (f) - p_j \hL_j (f) + p_i\mathcal{L}_i (f) - p_j\mathcal{L}_j (f)} \leq \epsilon                \\
			\Rightarrow{}                & \absbig{\abs{p_i\mathcal{L}_i (f) - p_j\mathcal{L}_j (f)} - \abs{p_i\hL_i (f) - p_i\mathcal{L}_i (f) + p_j\mathcal{L}_j (f) - p_j \hL_j (f)}} \leq \epsilon \\
			\Rightarrow{}                & \abs{p_i\mathcal{L}_i (f) - p_j\mathcal{L}_j (f)} \leq \epsilon + \abs{p_i\hL_i (f) - p_i\mathcal{L}_i (f) + p_j\mathcal{L}_j (f) - p_j \hL_j (f)}          \\
			\Rightarrow{}                & \abs{p_i\mathcal{L}_i (f) - p_j\mathcal{L}_j (f)} \leq \epsilon + p_i\abs{\hL_i (f) - \mathcal{L}_i (f)} + p_j\abs{\mathcal{L}_j (f) - \hL_j (f)}           \\
			\Rightarrow{}                & \abs{p_i\mathcal{L}_i (f) - p_j\mathcal{L}_j (f)} \leq \epsilon + \frac{p_i\delta}{4p_{\max}} + \frac{p_j\delta}{4p_{\max}}                                 \\
			{\color{white}\Rightarrow{}} & {\color{white}\abs{p_i\mathcal{L}_i (f) - p_j\mathcal{L}_j (f)}} \leq \epsilon + \frac{\delta}{2},
		\end{aligned}
	\]
	which is again based on the triangle inequality of the absolute value function and the definition of $\frac{\delta}{4p_{\max}}$-representative. Together with \eqref{bullshit-condition}, we conclude that $\abs{p_i\mathcal{L}_i (f) - p_j \mathcal{L}_j (f)} \leq \epsilon - \delta \Rightarrow f \in \Peo$, which implies $\Peeo \subseteq \Peo$.

	Based on the above discussion, we have proven that $\Peo = \Peeo$.
\end{proof}

\begin{assumption}
	\label{strong-assumption}
	For all $f\in \mathcal{F}, X\in \mathcal{X}, Y\in \mathcal{Y}$, the ERM loss is bounded, i.e., $\abs{\ell(f(X), Y)} \leq L_\erm < \infty$, and all the OOD objectives $\bm{L}_\ood$ can be written as the expectation of some bounded loss functions, i.e., $\forall i \in [m], \mathcal{L}_\ood^i(f) = \mathbb{E}_{(X,Y)\sim\mathcal{D}} [\ell_\ood^i(f(X), Y)]$ and $\abs{\ell_\ood^i(f(X), Y)} \leq L_\ood^i < \infty$.
\end{assumption}

We remark that the assumption is natural and generally holds for many OOD objectives including \irml~\citep{irmv1} and \vrex~\citep{vrex}.

\begin{theorem}
	\label{thm:sample_comp}
	For any $\epsilon > 0, \gamma \in (0, 1)$, if Assumption \ref{strong-assumption} holds and $\delta > 0$, denoting $p_{\max} \coloneqq \max_{i\in I_\ourst} {p_i}$ and $L_\textup{max} \coloneqq \max_{i\in I_\ourst}{L_i}$, if the number of training samples $\abs{\train} \geq \frac{32L_\textup{max}^2p_{\max}^2}{\delta^2}\log{\frac{2(m+1)\abs{\mathcal{F}}}{\gamma}}$, then with probability at least $1 - \gamma$, we have for any $f^\epsilon_\ourst \in \bm{f}^\epsilon_\ourst$ and $\hat{f}^\epsilon_\ourst\in \hat{\bm{f}}^\epsilon_\ourst$, $\mathcal{L}_\erm(f^\epsilon_\ourst)\leq \mathcal{L}_\erm(\hat{f}^\epsilon_\ourst) \leq \mathcal{L}_\erm(f^\epsilon_\ourst) + \frac{\delta}{2p_{\max}}$.
\end{theorem}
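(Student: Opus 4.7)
}
The plan is to combine a uniform concentration argument (to force the sample to be $\nu$-representative for all $m{+}1$ losses simultaneously) with the already-established Lemma~\ref{lem:constraint_set} (to conclude that the population and empirical feasibility sets coincide), and then close the loop with a standard ERM-style sandwich estimate. I will pick $\nu = \delta/(4 p_{\max})$ throughout so that the hypothesis of Lemma~\ref{lem:constraint_set} is matched exactly.

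First, under Assumption~\ref{strong-assumption} each loss $\ell_i$ (for $i\in I_\ourst$) takes values in $[-L_{\textup{max}}, L_{\textup{max}}]$, so by Hoeffding's inequality, for a fixed $f\in\gF$ and a fixed index $i$,
\begin{equation*}
\Pr\!\left[\,\abs{\hL_i(f) - \gL_i(f)} \geq \nu\,\right] \;\leq\; 2\exp\!\left(-\frac{|\train|\,\nu^2}{2L_{\textup{max}}^2}\right).
\end{equation*}
Taking a union bound over the $|\gF|$ hypotheses and the $m{+}1$ losses and demanding this probability is at most $\gamma$ yields
\begin{equation*}
|\train| \;\geq\; \frac{2 L_{\textup{max}}^2}{\nu^2}\log\!\frac{2(m+1)|\gF|}{\gamma} \;=\; \frac{32 L_{\textup{max}}^2 p_{\max}^2}{\delta^2}\log\!\frac{2(m+1)|\gF|}{\gamma},
\end{equation*}
which is exactly the sample complexity stated in the theorem. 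Hence, with probability at least $1-\gamma$, $\train$ is $\nu$-representative (Def.~\ref{def:e-represent}) simultaneously w.r.t.\ every loss in $\{\gL_\erm\}\cup\vL_\ood$. Condition on this event for the rest of the argument.

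Second, the representativeness just established is precisely the hypothesis of Lemma~\ref{lem:constraint_set}, so $\Peo = \Peeo$; call this common feasible set $P$. Now $f^\epsilon_\ourst$ minimizes $\gL_\erm$ over $P$ and $\hat{f}^\epsilon_\ourst$ minimizes $\hL_\erm$ over $P$. Since $\hat{f}^\epsilon_\ourst \in P$, the left-hand inequality $\gL_\erm(f^\epsilon_\ourst) \leq \gL_\erm(\hat{f}^\epsilon_\ourst)$ is immediate by optimality of $f^\epsilon_\ourst$.

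Third, the right-hand inequality is the standard ``two-$\nu$ sandwich'': using $\nu$-representativeness of $\gL_\erm$ on $\hat{f}^\epsilon_\ourst$, then optimality of $\hat{f}^\epsilon_\ourst$ for $\hL_\erm$ over $P$ (applied to the feasible point $f^\epsilon_\ourst\in P$), then $\nu$-representativeness again on $f^\epsilon_\ourst$,
\begin{equation*}
\gL_\erm(\hat{f}^\epsilon_\ourst) \;\leq\; \hL_\erm(\hat{f}^\epsilon_\ourst) + \nu \;\leq\; \hL_\erm(f^\epsilon_\ourst) + \nu \;\leq\; \gL_\erm(f^\epsilon_\ourst) + 2\nu \;=\; \gL_\erm(f^\epsilon_\ourst) + \frac{\delta}{2 p_{\max}}.
\end{equation*}
The only non-routine step is verifying that the chosen $\nu$ indeed matches Lemma~\ref{lem:constraint_set} and that the Hoeffding union bound produces exactly the advertised constants $32$ and $2(m+1)$; everything else is bookkeeping. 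The main conceptual obstacle was already absorbed into Lemma~\ref{lem:constraint_set} (showing that, under the $\delta$-separation condition on the constraint, the empirical feasibility set is neither a strict subset nor a strict superset of the population one), which frees this theorem from any approximation on the constraint side and lets us reduce to a clean ERM generalization argument on $\gL_\erm$ alone.
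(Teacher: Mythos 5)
Your proposal is correct and follows essentially the same route as the paper's proof: a Hoeffding-plus-union-bound argument to make the sample $\tfrac{\delta}{4p_{\max}}$-representative for all $m{+}1$ losses (yielding the identical constants $32$ and $2(m+1)$), then Lemma~\ref{lem:constraint_set} to identify $\Peo$ with $\Peeo$, and finally the standard two-sided sandwich on $\gL_\erm$ using the optimality of each minimizer over the common feasible set. No gaps.
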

\begin{proof}
	We proceed by first assuming that the training set $D$ is $\frac{\delta}{4p_{\max}}$-representative w.r.t.\ domain $\mathcal{X}$, hypothesis $\mathcal{F}$, distribution $\mathcal{D}$ and all the ERM and OOD losses $\{\mathcal{L}_\erm, \bm{L}_\ood\}$, and then we establish the sample complexity required for this condition. From Lemma \ref{lem:constraint_set}, we know that given this condition and the assumptions in the theorem, $\Peo = \Peeo$. Then, since the training set $\train$ is $\frac{\delta}{4p_{\max}}$-representative w.r.t.\ the ERM loss $\mathcal{L}_\erm$, we have for any $f^\epsilon_\ourst \in \bm{f}^\epsilon_\ourst$ and $\hat{f}^\epsilon_\ourst\in \hat{\bm{f}}^\epsilon_\ourst$,
	\[
		\begin{gathered}
			\absbig{\mathcal{L}_\erm(f^\epsilon_\ourst) - \hL_\erm(f^\epsilon_\ourst)} \leq \frac{\delta}{4p_{\max}},\\
			\absbig{\mathcal{L}_\erm(\hat{f}^\epsilon_\ourst) - \hL_\erm(\hat{f}^\epsilon_\ourst)} \leq \frac{\delta}{4p_{\max}}.
		\end{gathered}
	\] Moreover, based on the optimality of problem \eqref{eq:eep_pair_moo_appdx}, we can conclude that
	\[
		\begin{aligned}
			              & \mathcal{L}_\erm(\hat{f}^\epsilon_\ourst) - \frac{\delta}{4p_{\max}}\leq \hL_\erm(\hat{f}^\epsilon_\ourst) \leq \hL_\erm(f^\epsilon_\ourst) \leq \mathcal{L}_\erm(f^\epsilon_\ourst) + \frac{\delta}{4p_{\max}} \\
			\Rightarrow{} & \mathcal{L}_\erm(\hat{f}^\epsilon_\ourst) \leq \mathcal{L}_\erm(f^\epsilon_\ourst) + \frac{\delta}{2p_{\max}}.
		\end{aligned}
	\]
	Then, using the optimality of problem \eqref{eq:ep_pair_moo_appdx}, it holds that
	\[
		\mathcal{L}_\erm(f^\epsilon_\ourst)\leq \mathcal{L}_\erm(\hat{f}^\epsilon_\ourst) \leq \mathcal{L}_\erm(f^\epsilon_\ourst) + \frac{\delta}{2p_{\max}}.
	\]

	It remains to analyze the sample complexity of ensuring that the training set $\train$ is $\frac{\delta}{4p_{\max}}$-representative w.r.t.\ $\mathcal{X}$, $\mathcal{F}$, $\mathcal{D}$ and all the ERM and OOD losses $\{\mathcal{L}_\erm, \bm{L}_\ood\}$.

	For any $i \in I_\ourst$, based on Assumption \ref{strong-assumption}, we can write $\mathcal{L}_i(f) = \mathbb{E}_{(X,Y)\sim\mathcal{D}}[\ell_i(f(X), Y)]$ and $\hL_i(f) = \frac{1}{\abs{D}}\sum_{(X_j, Y_j)\in D}\ell_i(f(X_j), Y_j)$ with $\abs{\ell_i(f(X), Y)} \leq L_i \leq L_\textup{max}, \forall f, X, Y$. Using Hoeffding's inequality, we can conclude that for any $f\in \mathcal{F}$,
	\[
		\Pr\left[\absbig{\hL_i(f) - \mathcal{L}_i(f)} \geq \frac{\delta}{4p_{\max}} \right] \leq 2\exp{\left(\frac{-\abs{D}\delta^2}{32L_\textup{max}^2p_{\max}^2}\right)}.
	\]
	Thus, for any $\gamma\in (0,1)$, if we require
	\[
		\abs{D} \geq \frac{32L_\textup{max}^2p_{\max}^2}{\delta^2}\log{\frac{2(m+1)\abs{\mathcal{F}}}{\gamma}},
	\]
	it holds that
	\[
		\Pr\left[\exists f\in \mathcal{F}, \absbig{\hL_i(f) - \mathcal{L}_i(f)} \geq \frac{\delta}{4p_{\max}} \right] \leq \sum_{f\in \mathcal{F}}{\Pr\left[\absbig{\hL_i(f) - \mathcal{L}_i(f)} \geq \frac{\delta}{4p_{\max}} \right]} \leq \frac{\gamma}{m+1}.
	\]
	Thus,
	\[
		\begin{aligned}
			 & \Pr\left[\exists i\in I_\ourst, \exists f\in \mathcal{F}, \absbig{\hL_i(f) - \mathcal{L}_i(f)} \geq \frac{\delta}{4p_{\max}} \right] \\ \leq{}& \sum_{i\in I_\ourst}{\Pr\left[\exists f\in \mathcal{F},\absbig{\hL_i(f) - \mathcal{L}_i(f)} \geq \frac{\delta}{4p_{\max}} \right]} \leq \gamma.
		\end{aligned}
	\]
	Finally, we can conclude that with probability at least $1 - \gamma$, $\forall i\in I_\ourst, \forall f\in \mathcal{F},$
	\[
		\absbig{\hL_i(f) - \mathcal{L}_i(f)} \leq \frac{\delta}{4p_{\max}},
	\]
	which completes the proof.
\end{proof}

\paragraph{Remarks.} The $\epsilon$-approximated formulation has a close relationship to another relaxation as the following.
\[
	\begin{aligned}
		\bm{f}_\ourst \coloneqq{} & \argmin_{f \in \mathcal{F}} &  & {\mathcal{L}_\erm (f)}                                      \\
		                          & \ \ \ \ \text{s.t.}         &  & \mathcal{L}^i_\ourst (f) \leq \epsilon_i, \forall i\in [m].
	\end{aligned}
\]
Essentially, both the $\epsilon$-approximated formulation and the above formulation are natural relaxation of the original problem (Eq.~\ref{eq:constrained_pair_moo_appdx} or Eq.~\ref{eq:pair_moo_appdx}). As the $\epsilon_i\rightarrow \text{$\epsilon_\ood$}_i$, the above formulation also yields the optimal solution $f_\ood$. In this work, since we focus on the approximations on the preference, $\epsilon$-approximated formulation essentially provides a convenient touch which could be of independent interests for future discussions.

\section{More Details on Experiments}
\label{sec:experiments_appdx}
In this section, we provide more details about the experiments (Sec.~\ref{sec:experiments}) in the main paper.

\subsection{More details on \cmnist experiments}
\label{sec:cmnist_appdx}
In the proof-of-concept experiments with \cmnist,
we follow the evaluation settings as IRM~\citep{irmv1} and the test-domain selection as DomainBed~\citep{domainbed}.
Specifically, we use a $4$-Layer MLP with a hidden dimension of $256$. By default, we use Adam~\cite{adam} optimizer with a learning rate of $1e-3$ and a weight decay of $1e-3$ to train the model with $500$ epochs and select the last epoch as the output model for each hyperparameter setting.
We choose the final model from different hyperparameter setups as the one that maximizes the accuracy on the validation that share the same distribution as test domain.
We then do grid search for the corresponding hyperparameters.
For pretraining epochs, we search from $\{0,50,100,150,200,250\}$.
For OOD penalty, we search from $\{1e1,1e2,1e3,1e4,1e5\}$. We evaluate each configuration of hyperparameters $10$ times and report the mean and standard deviation of the performances.
Besides, for \irml, we will refresh the history in Adam optimizer when the pretraining finishes, following the practice in~\citet{domainbed}. We also empirically find that refreshing the optimizer after pretraining can bring a better performance of \irml in \cmnist. While for \vrex, we find the refreshing is not needed.

For the implementation of \irmx, we change the penalty to be the sum of \irml and \vrex losses and conduct the same hyperparameter search as for \irml for fair comparison.
As for the implementation of \ours, we use SGD with a momentum of $0.9$~\citep{momentum} after pretraining, to avoid the interference of Adam to the gradient direction and convergence of EPO~\citep{epo} solver.
Moreover, we also empirically find that SGD requires larger learning rate (we search over two choices, i.e., $0.01$ and $0.1$) for approaching the direction.
This is because of the design in EPO solver that it first fits to the preference direction then does the ``pure'' gradient descent, while the intrinsically conflicting directions pointed by the objectives can make the loss surface more steep.
We will leave in-depth understanding of the above phenomenon and more sophisticated optimizer design in more complex tasks and network architectures to future works~\citep{importance_sampling,robust_momentum}.

\subsection{More details about ablation studies}
\label{sec:ablation_appdx}
\begin{figure}[t]
	\subfigure[CMNIST.]{
		\includegraphics[width=0.23\textwidth]{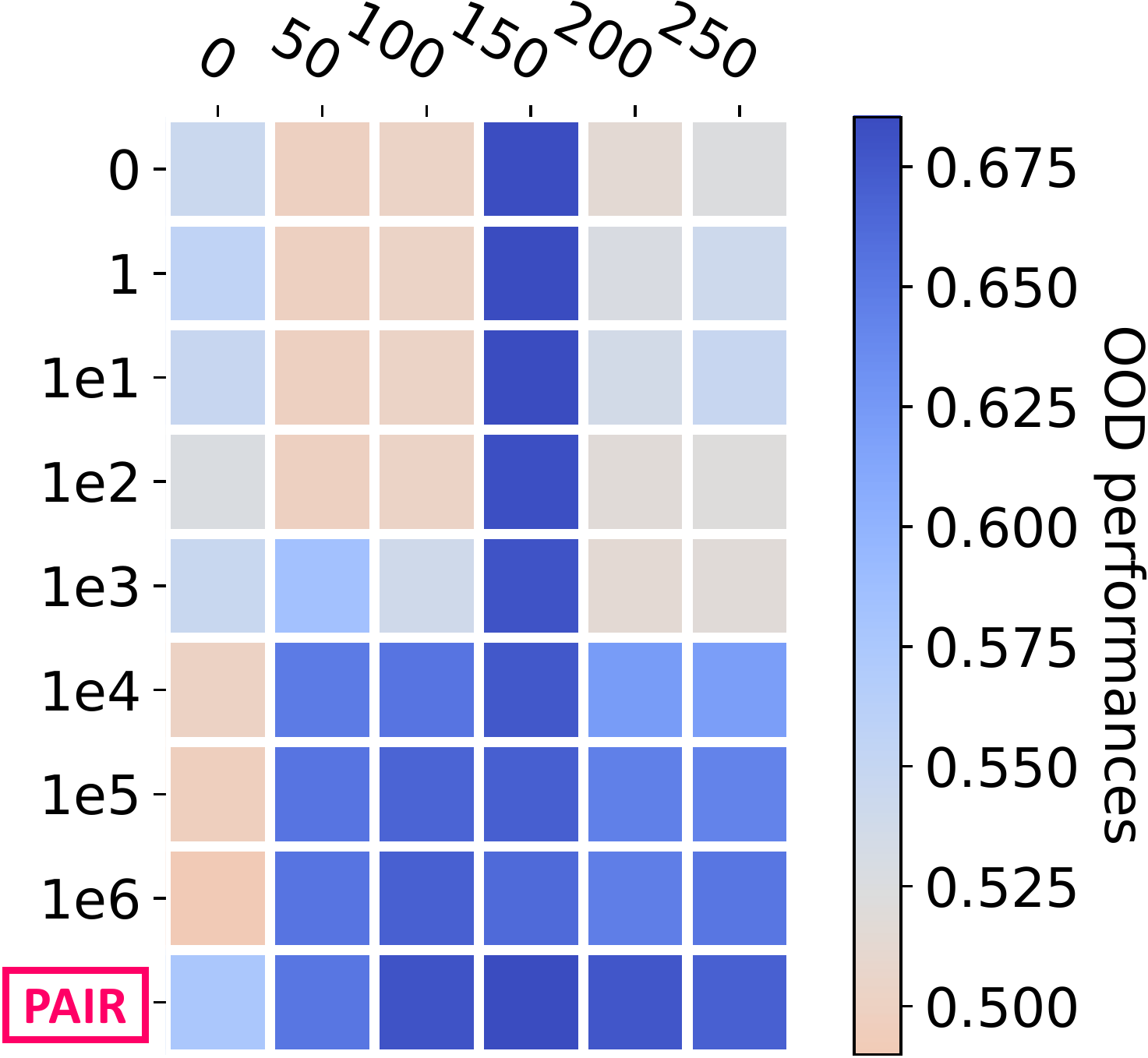}
		\label{fig:scalar_appdx}
	}
	\subfigure[CMNIST-m.]{
		\includegraphics[width=0.23\textwidth]{figures/sweep_acc_c01_short_color_crop.pdf}
		\label{fig:scalar_c01_appdx}
	}
	\subfigure[CMNIST losses.]{
		\includegraphics[width=0.23\textwidth]{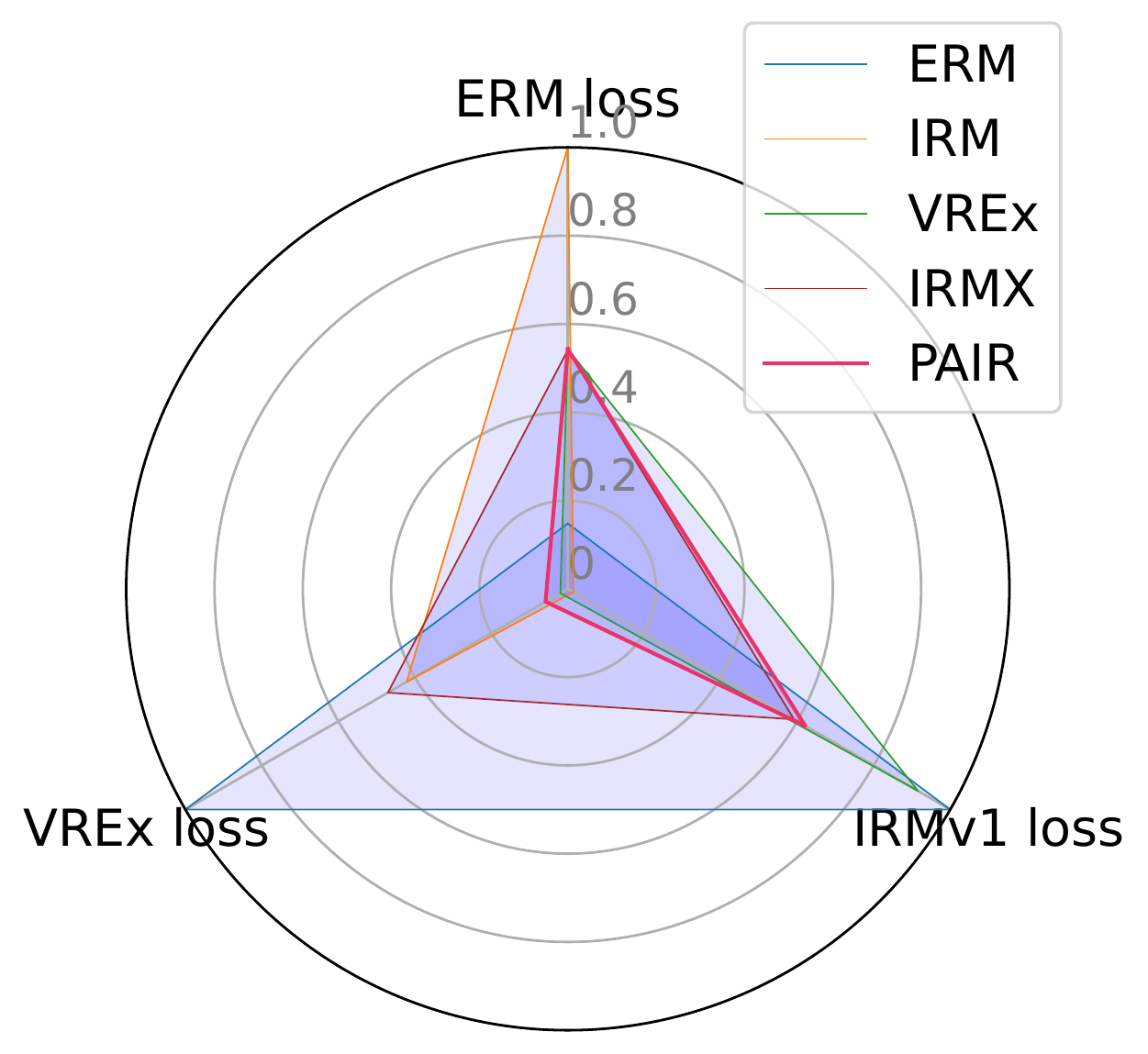}
		\label{fig:loss_radar_appdx}
	}
	\subfigure[CMNIST-m losses.]{
		\includegraphics[width=0.23\textwidth]{figures/loss_radar_c01.pdf}
		\label{fig:loss_radar_c01_appdx}
	}
	\caption{(a),(b) \ours can effectively find a better solution than exhaustive tuning of penalty weights in \irmx. That is because \ours can adaptively adjust the penalty weights during the optimization process, and leads to a Pareto optimal solution, as shown in (c),(d).}
	\label{fig:ablation_exp_appdx}
\end{figure}

\textbf{Comparison between \ourso and the linear weighting scheme under exhaustive parameter search.}
In the main paper, to investigate how \ourso can find a better OOD solution under objective conflicts, we first conduct a ablation study to compare the OOD performances of \ourso and the exhaustive tuned \irmx. Specifically, we tune both \irml and \vrex penalty weights from a substantially larger scope, i.e., $\{1,1e1,1e2,1e3,1e4,1e5,1e6\}$. As for pretraining epochs, we search from $\{0,50,100,150,200,250\}$. The results of \irmx in \cmnist and the modified \cmnist are shown as in Fig.~\ref{fig:scalar_appdx} and Fig.~\ref{fig:scalar_c01_appdx}, respectively.
Each point represents the best performed \irmx with the configuration of the corresponding pretraining epoch, the \irml penalty weight and different \vrex penalty weights from $\{1,1e1,1e2,1e3,1e4,1e5,1e6\}$.

\begin{wrapfigure}{r}{0.64\textwidth}
	\vspace{-0.2in}
	\subfigure[\revision{CMNIST.}]{
		\includegraphics[width=0.301\textwidth]{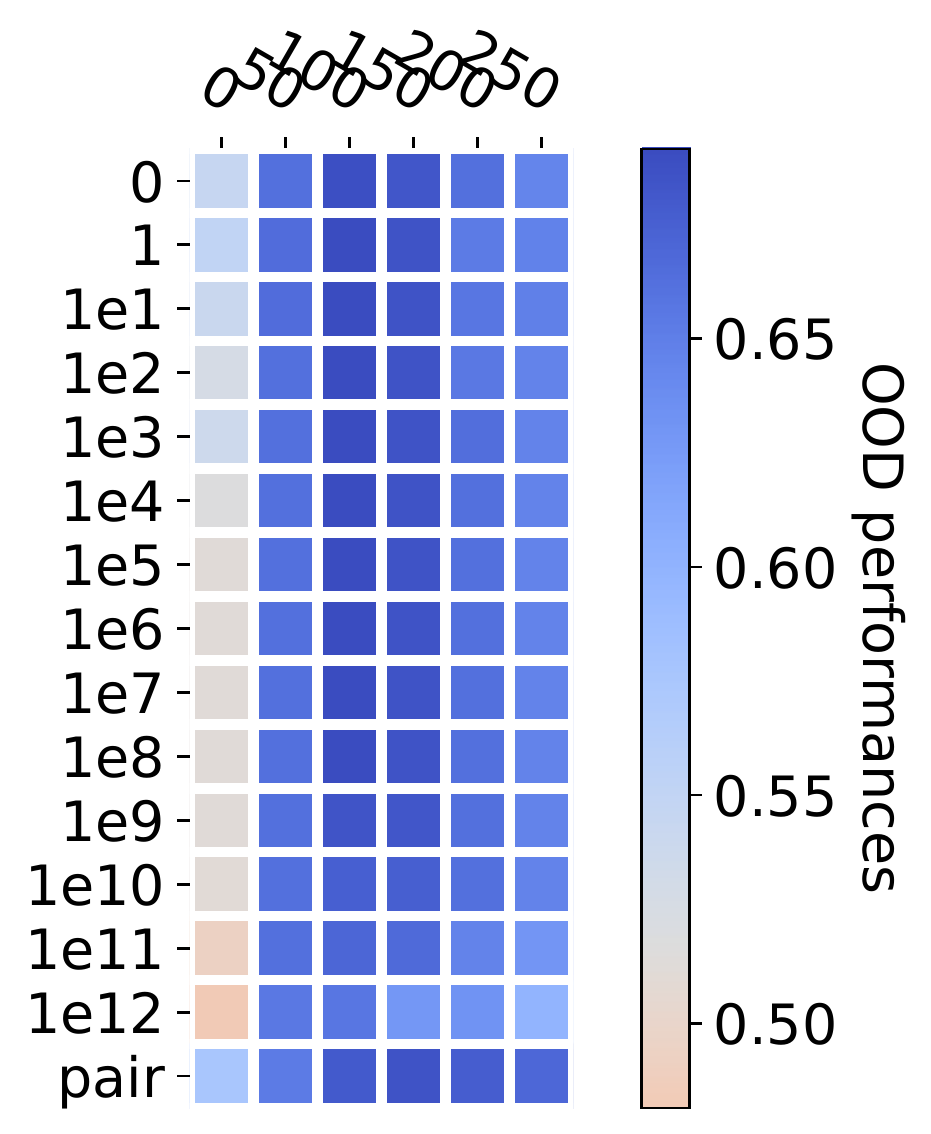}
	}
	\subfigure[\revision{CMNIST-m.}]{
		\includegraphics[width=0.3\textwidth]{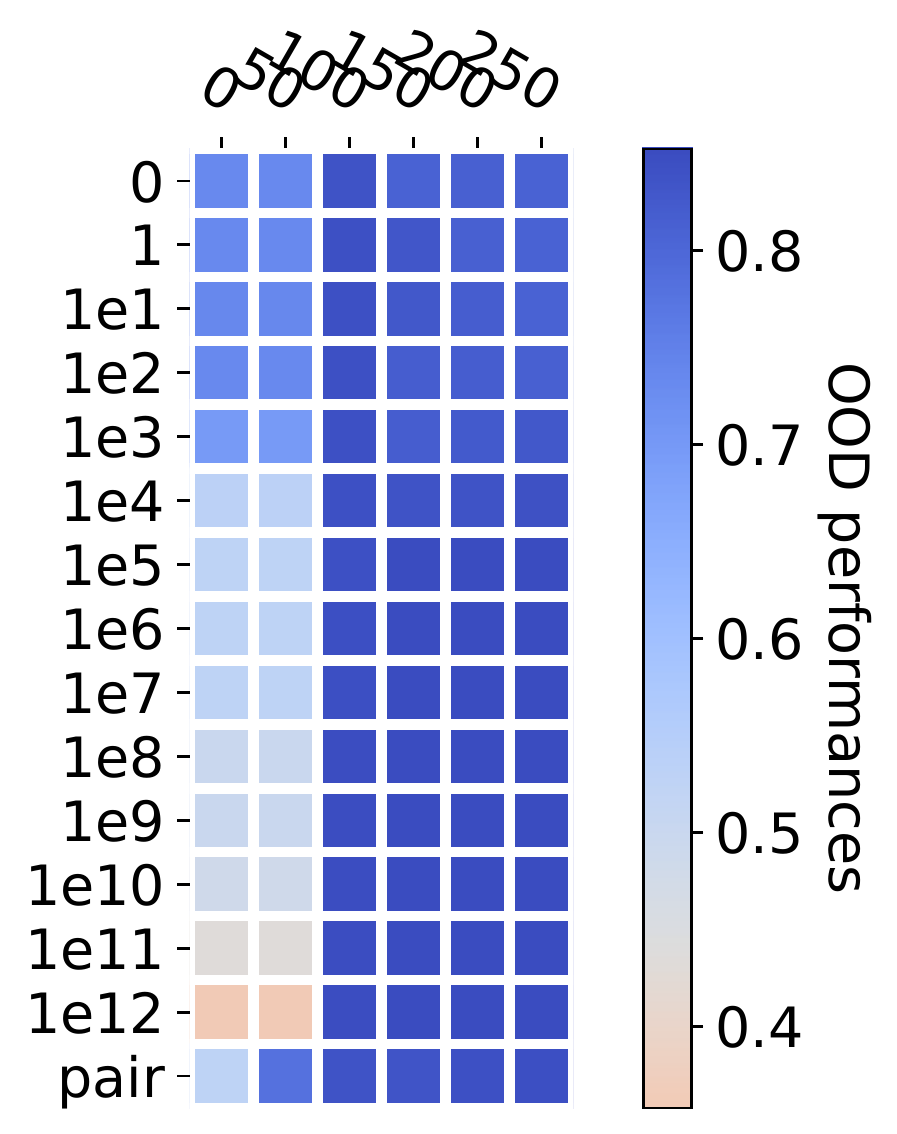}
	}
	\vspace{-0.1in}
	\caption{\revision{Full exhaustive hyperparameter tunning study}}
	\label{fig:scalar_full_appdx}
	\vspace{-0.2in}
\end{wrapfigure}

\revision{
	We also present a full exhaustive hyperparameter tunning study based on linear weighting scheme for \irmx, shown  in Fig.~\ref{fig:scalar_full_appdx}, where we further enlarge the search space of penalty weights from $1e6$ to $1e12$ to better compare with \irmx optimized via \ourso. Similar to Fig.~\ref{fig:scalar_appdx} and Fig.~\ref{fig:scalar_c01_appdx}, each point in Fig.~\ref{fig:scalar_full_appdx} is selected from \textit{best performed} models trained with the corresponding \irml penalty weights, and pretraining epoch, and all possible \vrex penalty weights from $\{1,1e1,1e2,1e3,1e4,1e5,1e6, 1e7, 1e8, 1e9, 1e10, 1e11, 1e12\}$.
}

Compared to \irml shown as in Fig.~\ref{fig:sweep_irm_appdx}, \irmx can substantially improve the OOD performances in both \cmnist and the modified \cmnist, confirming our theoretical results. However, the OOD performances of \irmx turn out to be upper bounded by that optimized with \ourso at each pretraining epochs. In other words, \ourso requires substantially less parameter tuning efforts to achieve the top OOD performances, confirming the advances of \ourso.
In more complex tasks where the exhaustive parameter tunning is prohibitively expensive, such as in the experiments with \wilds~\citep{wilds}, \irmx performs worse than \ours, which further validates the effectiveness of \ourso.

\revision{To better demonstrate the advantages of \ourso over linear weighting scheme, we replicate the previous study in two datasets from \wilds, i.e., \textsc{CivilComments} and \textsc{FMoW}. Due to the computational resource limits, we limit the search scope of \irml and \vrex to $\{1e-2,1,1e2\}$, respectively. It can be found that, even with a broader hyperparameter search space, \irmx optimized via linear weighting scheme remain under-performed than \ourso.}
\begin{table}[ht]
	\vspace{-0.1in}
	\small\centering
	\caption{\revision{Comparison between linear weighting scheme and \ourso in \wilds.}}
	\label{tab:irmx_pair_wilds_appdx}
	\resizebox{\textwidth}{!}{
		\revision{
			\begin{tabular}{llccc||llccc}
				\toprule
				\textbf{\textsc{CivilComments}} & IRMv1\textbackslash{}VREx & $1e-2$           & 1                        & $1e2$            & \textbf{\textsc{FMoW}} & IRMv1\textbackslash{}VREX & $1e-2$            & 1                         & $1e2$             \\\midrule
				                                & $1e-2$                    & $72.5$\std{2.00} & $73.8$\std{1.40}         & $73.1$\std{0.67} &                        & $1e-2$                    & $33.64$\std{0.59} & $34.20$\std{1.33}         & $34.43$\std{0.72} \\
				                                & 1                         & $73.5$\std{1.47} & $74.3$\std{0.83}         & $73.2$\std{0.67} &                        & 1                         & $30.25$\std{0.87} & $33.75$\std{0.78}         & $33.7$\std{0.78}  \\
				                                & $1e2$                     & $72.1$\std{0.59} & $70.1$\std{2.09}         & $74.3$\std{0.51} &                        & $1e2$                     & $21.33$\std{1.51} & $21.00$\std{2.41}         & $13.14$\std{1.63} \\\midrule
				\ourso                          &                           &                  & $\mathbf{75.2}$\std{0.7} &                  &                        &                           &                   & $\mathbf{35.5}$\std{1.13} &                   \\
				\bottomrule
			\end{tabular}}}
	\vspace{-0.1in}
\end{table}

\textbf{Loss values distribution at convergence.}
As for the loss distribution experiments (Fig.~\ref{fig:loss_radar_appdx},~\ref{fig:loss_radar_c01_appdx}), we plot the \erm,\irml and \vrex loss values at convergence of best performed algorithms. The plotted values are in log-scale and normalized to $[0,1]$.
It can be found that \ourso effectively find a better solution in terms of \irml and \vrex losses, while not generating the \erm performances too much, which confirms our motivations for the design of \ours.

\textbf{Penalty weights trajectory.}
To examine whether \ourso can effectively adjust the penalty weights of ERM and OOD objectives, especially when the model has not arrived at the Pareto front (i.e., the gradient conflicts are expected to be more intense), we plot the trajectories of penalty weights generated by \ourso in both CMNIST and CMNIST-m, shown as in Fig.~\ref{fig:trajectory_appdx}.
\begin{wrapfigure}{r}{0.64\textwidth}
	\subfigure[CMNIST.]{
		\includegraphics[width=0.301\textwidth]{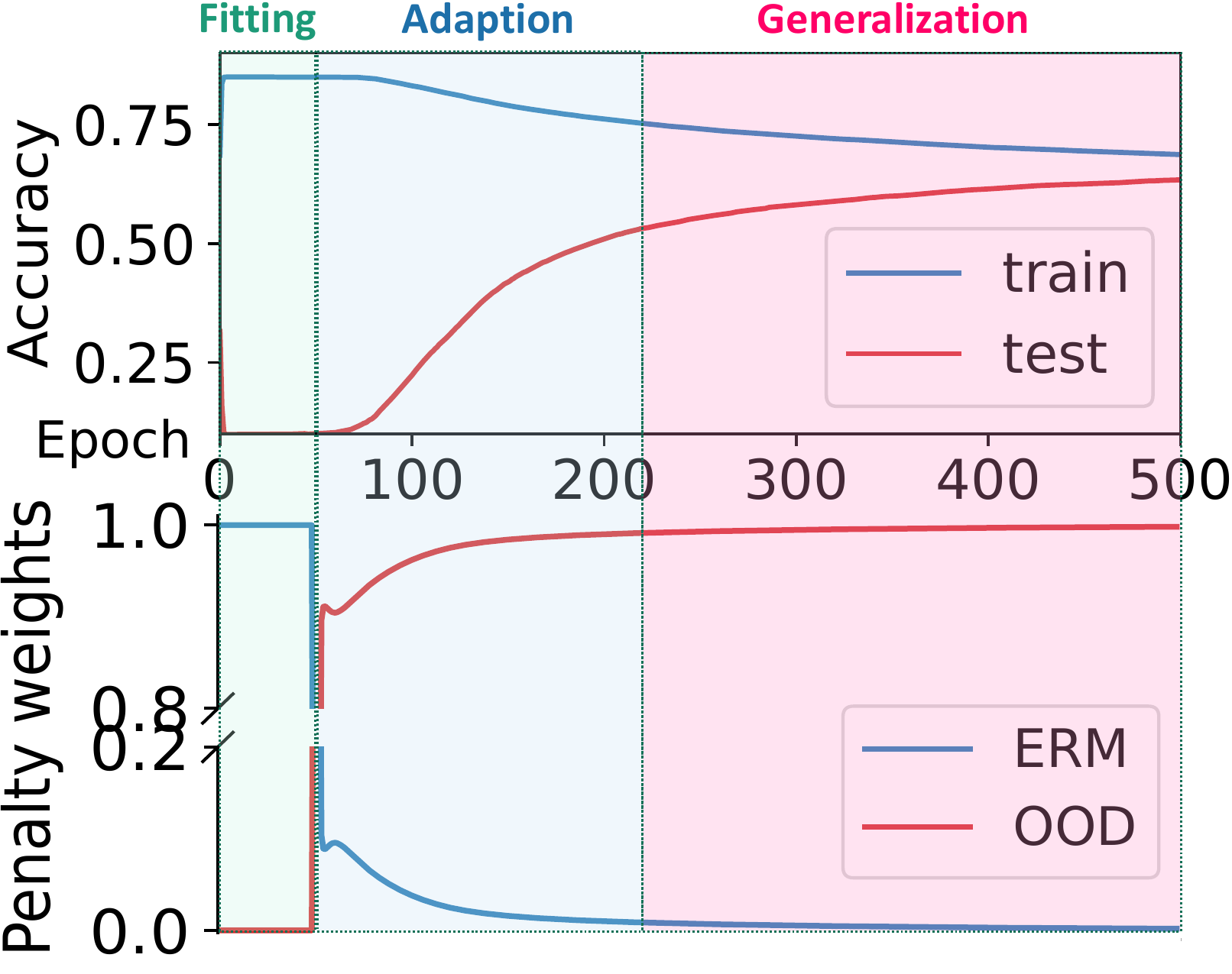}
	}
	\subfigure[CMNIST-m.]{
		\includegraphics[width=0.3\textwidth]{figures/trajectory_c01_crop.pdf}
	}
	\caption{Penalty weights trajectory}
	\label{fig:trajectory_appdx}
	\vspace{-0.2in}
\end{wrapfigure}

It can be found that the whole training process can be divided into three phases: ``Fitting'' phase; ``Adaption'' phase; and ``Generalization'' phase.
In the ``Fitting'' phase, the model is trained with only the ERM objectives and is expected to approach the Pareto front first (cf. Fig.~\ref{fig:pair_opt_appdx}). It also corresponds to the ``descent'' phase in the \ourso algorithm, hence the penalty weight for ERM objective is $1$ while for OOD objective is $0$.
Then, when \ourso enters into the ``balance'' phase, \ourso begins to yield high weights to OOD objectives, while not diminishing the weights to ERM objectives.
That is the ``Adaption'' phase, where \ourso begins to adjust the solution towards the Pareto front as well as the preferred direction.
When the solution is close to the Pareto front, then \ourso enters into the ``Generalization'' phase. That is to incorporate the invariance into the features by assigning high weights to the OOD objectives.

\textbf{Preference sensitivity analysis under strict hyperparameter configuration.}
Another reason for the high performance of \ourso at both \cmnist and realistic datasets from \wilds is because of its robustness to different preference choices.
In complementary to the theoretical discussion in Theorem~\ref{thm:pair_theory_appdx}, we also conducted preference sensitivity analysis experiments under strict hyperparameter configurations.
In other words, the hyperparameter search space is restricted to \emph{single} point, i.e., a learning rate of $0.01$, and a pretraining epoch of $150$.
The results are shown in Fig.~\ref{fig:pair_pc_sens_appdx} for both the original and the modified \cmnist dataset. It can be found that, \ourso maintains high performance and robustness to different preference choices.

\begin{wrapfigure}{r}{0.64\textwidth}
	\vspace{-0.3in}
	\subfigure[CMNIST.]{
		\includegraphics[width=0.3\textwidth]{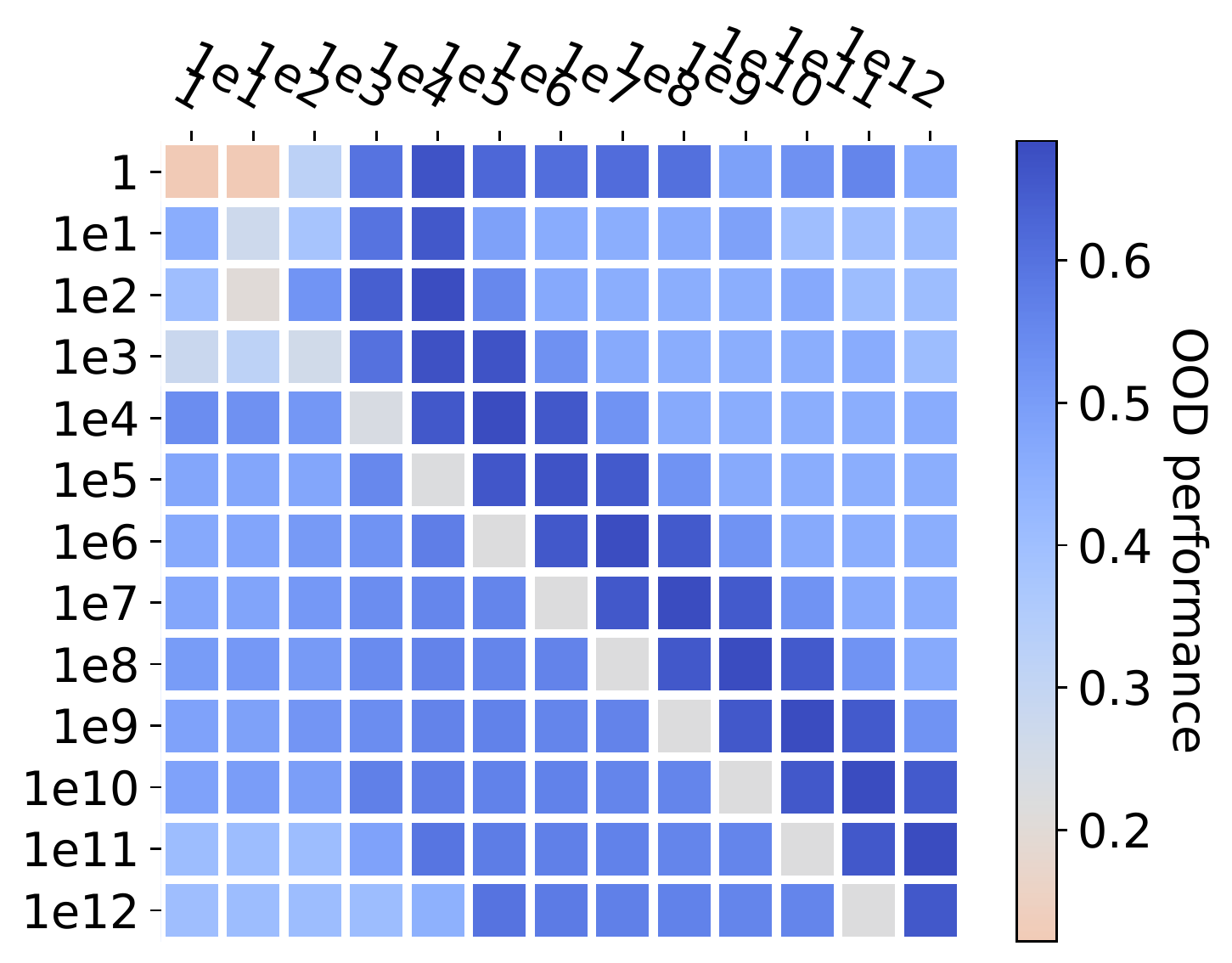}
	}
	\subfigure[CMNIST-m.]{
		\includegraphics[width=0.3\textwidth]{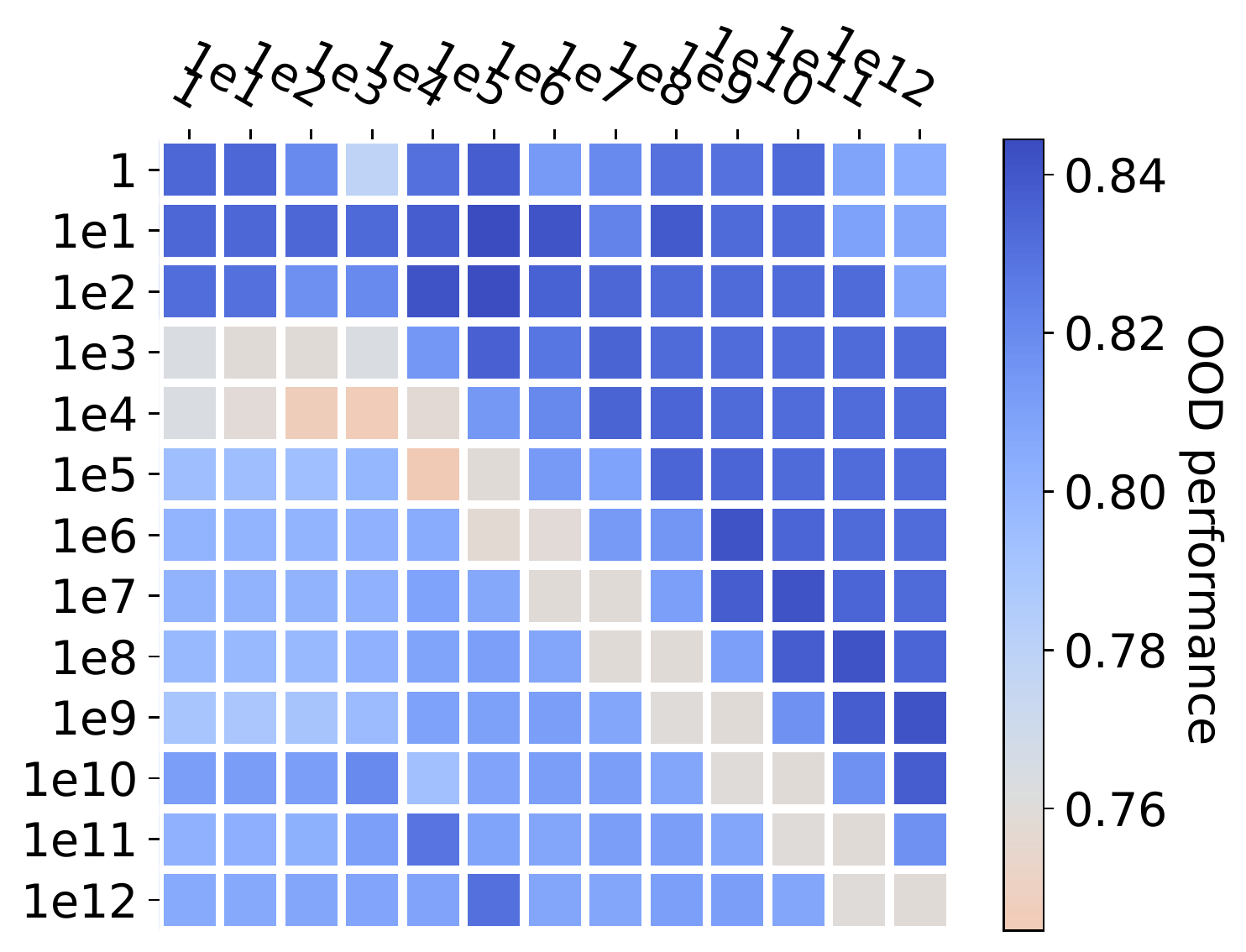}
	}
	\vspace{-0.1in}
	\caption{Preference sensitivity under \emph{strict} hyperparameter configuration. $x$-axis is the preference for \vrex while $y$-axis is the preference for \irml}
	\label{fig:pair_pc_sens_appdx}
	\vspace{-0.2in}
\end{wrapfigure}

It also aligns with our discussion on preference choice in practice (Sec.~\ref{sec:pair_discussion_pc_appdx}), that we need to assign a higher preference to \emph{robust, and more easy-to-optimize} objectives, i.e., \vrex.
When the relative preferences are given within a reasonable scale, \ourso easily yields top OOD performances.

\textbf{Additional ablation study on \cmnist with ``perfect'' initialization.}
We also conduct experiments with ``perfect'' initializations for different methods, to check whether the OOD constraints can enforce the invariance, following~\citet{rfc}. Besides the OOD methods used in the paper, we also include another OOD method IGA~\citep{iga} to give a more comprehensive overview of their performances with ``perfect'' initialization. We also introduce another variant of ColoredMNIST, i.e., \textbf{CMNIST-11}: $\{(0.25,0.10),(0.25,0.20)\}$ to complement more details.
All methods are initialized with a ERM model learned on gray-scale ColoredMNIST data which is expected to learn to use digit shapes in the image to make predictions.
The learning rate is $1e-3$ and the penalty weight is $1e5$.
Different from~\citet{rfc}, we use SGD to optimize the models, as Adam would generate larger step sizes when the gradients continue to be within a small range under the ``perfect'' initialization.
Results are shown as in Fig.~\ref{fig:perfect_init_appdx}.

\begin{figure}[ht]
	\subfigure[``Perfect'' init. on CMNIST-10.]{
		\centering
		\includegraphics[width=0.31\textwidth]{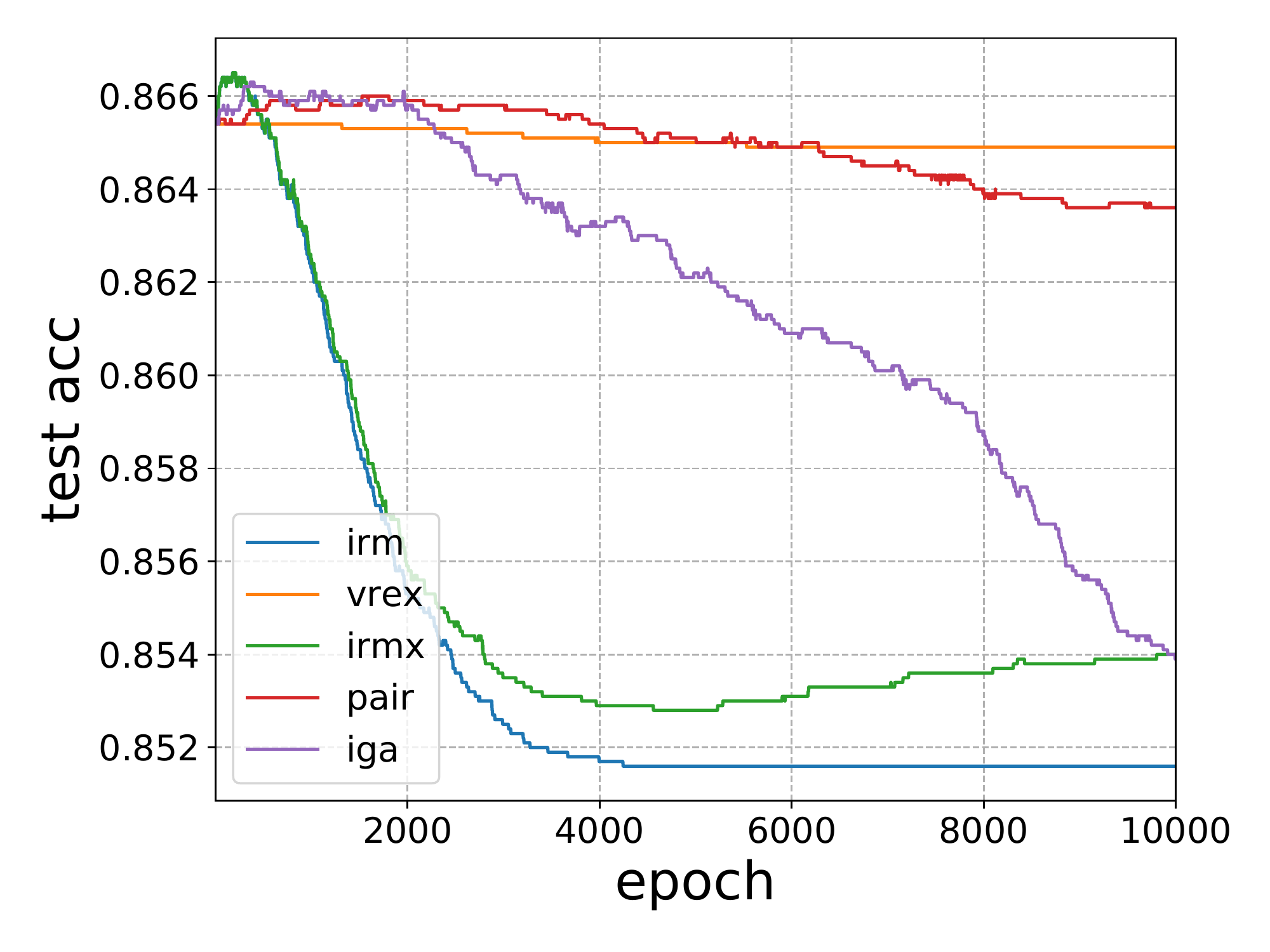}
		\label{fig:c10_perfect_init_appdx}
	}
	\hfill
	\subfigure[``Perfect'' init. on CMNIST-11.]{
		\centering
		\includegraphics[width=0.31\textwidth]{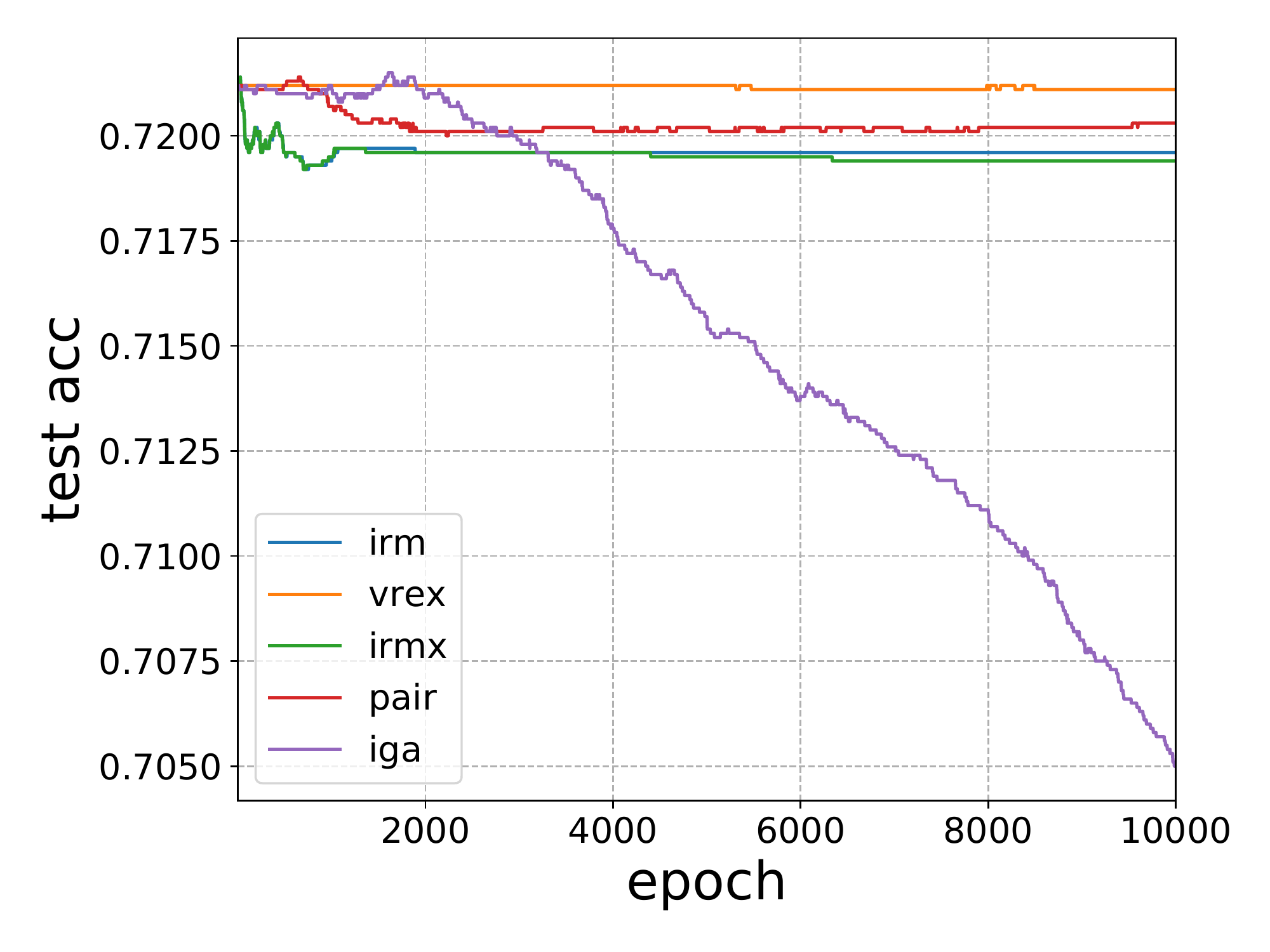}
		\label{fig:c11_perfect_init_appdx}
	}
	\hfil
	\subfigure[``Perfect'' init. on CMNIST-25.]{
		\centering
		\includegraphics[width=0.31\textwidth]{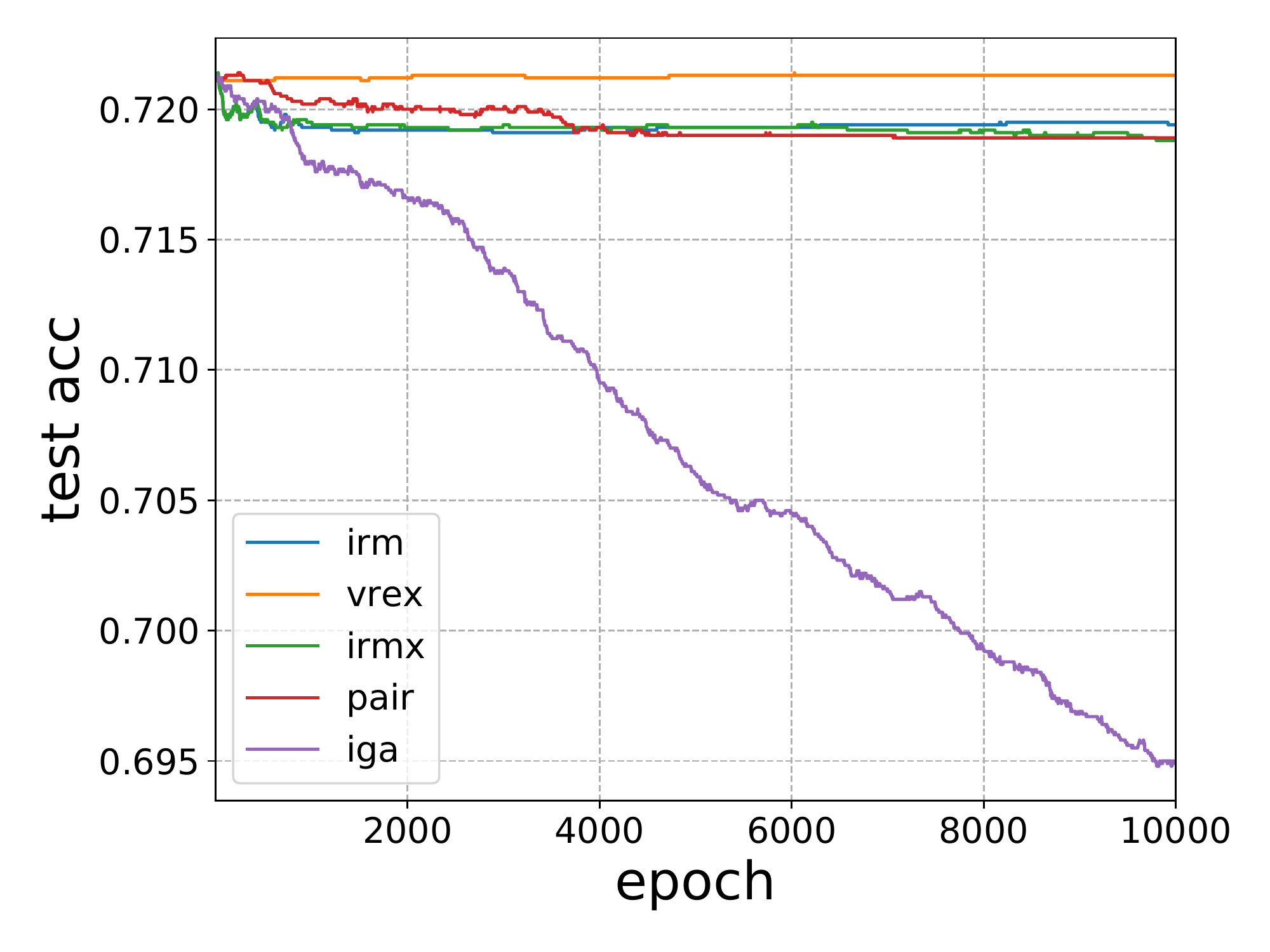}
		\label{fig:c25_perfect_init_appdx}
	}
	\caption{OOD performances with ``Perfect'' initializations.}
	\label{fig:perfect_init_appdx}
\end{figure}

It can be found that, in CMNIST-10, IRM, IRMx and IGA cannot enforce the invariance while V-REx and \ours maintain the invariance, which is consistent to our previous findings. Moreover, IGA fails to maintain the invariance in CMNIST-11 and CMNIST-25, demonstrating the relatively low robustness of IGA objective.
Besides, V-REx consistently maintain the invariance even in CMNIST-11, for the reason that the gradient signals of variance in ``perfect'' initialization tend to vanish. In contrast, \ours improve over both IRM and IRMx to maintain the invariance, confirming the effectiveness of \ours.

\textbf{\revision{Additional ablation study on the performance of \ourso and \ourss with more OOD objectives and their composite with \irml.}}
\revision{Besides \vrex, we conduct additional ablation studies of \ours with IB~\citep{ib-irm}, Fishr~\citep{fishr}, CLOvE~\citep{clove}, IGA~\citep{iga} and SD~\citep{sd}, based on \cmnist and the modified \cmnist. We focus on the cases with no less than $2$ OOD objectives, as one could simply obtain a low OOD loss for single OOD objective, where linear weighting scheme is likely to approach the desired OOD solution as the Pareto front is simpler. However, it is often the case that single OOD objective is not sufficiently robust to locate the desired OOD solution to the Pareto front. }

\revision{In experiments, we follow the same evaluation protocol as previous experiments on \cmnist.
	Due to the resource limits of NVIDIA RTX 3090Ti used for the original \cmnist experiments in previous sections, we switch the hardware and software platform to Linux servers with NVIDIA V100 graphics cards with CUDA 10.2, hence the results in Table~\ref{tab:pirm_appdx} and Table~\ref{tab:compirm_appdx} are not directly comparable with those in Table~\ref{tab:cmnist}.
	Similar to previous experiments, for the stability of MOO solver under heterogeneous objectives, we search learning rate for \vrex and Fishr from $\{0.01,0.02,0.04,0.1,0.2\}$ at stage 2 while a larger scope $\{0.1,0.2,0.4,0.8,1\}$ for other objectives. Note that even considering the learning rate into the hyperparameter search space, \ours still uses a smaller  scope than that of linear weighting scheme.
	Besides, we follow our previous discussion in Appendix~\ref{sec:pair_discussion_pc_appdx} to set up the preference of different OOD objectives. Specifically, for Fishr, we use a larger preference of $1e12$ than that for \irml ($1e8$), since the agreements based methods tend to have a smaller loss than \irml. While for the other objectives, we use a smaller preference of $1e8$ than that for \irml ($1e12$). Note that this is only a heuristic setup and the performance of \ours can be further improved if the preferences can be tuned.}

\begin{table}[ht]
	\small\centering
	\caption{\revision{Generality study of \ours for \irml with other objectives in \cmnist.}}
	\label{tab:pirm_appdx}
	\revision{
		\begin{tabular}{lccccccc}
			\toprule
			                       & \textbf{\scriptsize\irml} & \textbf{\scriptsize\ourso} & \textbf{\scriptsize\ourss} & CMNIST                        & CMNIST-M                      & Avg.                & $\Delta$Avg. \\\midrule
			ERM                    &                           &                            &                            & $17.14$\std{0.73}             & $73.30$\std{0.85}             & $45.22$             &              \\
			IRMv1                  &                           &                            &                            & $67.29$\std{0.99}             & $76.89$\std{3.23}             & $72.09$             & $+0.00$      \\\hdashline[0.5pt/1pt]
			\rule{0pt}{10pt}IB     &                           &                            &                            & $55.48$\std{3.67}             & $76.01$\std{0.58}             & $65.75$             &              \\
			                       & $\checkmark$              &                            &                            & $56.09$\std{2.04}             & $75.66$\std{10.6}             & $65.88$             & $-6.21$      \\
			                       & $\checkmark$              & $\checkmark$               &                            & $61.12$\std{2.33}             & $83.30$\std{3.00}             & $72.21$             & $+0.12$      \\
			                       & $\checkmark$              & $\checkmark$               & $\checkmark$               & $60.69$\std{2.26}             & $83.70$\std{1.79}             & $72.20$             & $+0.11$      \\\hdashline[0.5pt/1pt]
			\rule{0pt}{10pt}VREx   &                           &                            &                            & $68.62$\std{0.73}             & $83.52$\std{2.52}             & $76.07$             &              \\
			                       & $\checkmark$              &                            &                            & $66.19$\std{1.41}             & $81.75$\std{1.68}             & $73.97$             & $+1.88$      \\
			                       & $\checkmark$              & $\checkmark$               &                            & $68.89$\std{1.13}             & $\underline{83.80}$\std{1.60} & $\underline{76.35}$ & $+4.26$      \\
			                       & $\checkmark$              & $\checkmark$               & $\checkmark$               & $\underline{69.16}$\std{0.76} & $\mathbf{83.96}$\std{1.65}    & $\mathbf{76.56}$    & $+4.47$      \\\hdashline[0.5pt/1pt]
			\rule{0pt}{10pt}Fishr  &                           &                            &                            & $\mathbf{69.38}$\std{0.39}    & $77.29$\std{1.61}             & $73.34$             &              \\
			                       & $\checkmark$              &                            &                            & $66.20$\std{2.31}             & $81.07$\std{3.98}             & $73.63$             & $+1.54$      \\
			                       & $\checkmark$              & $\checkmark$               &                            & $68.90$\std{0.56}             & $82.70$\std{1.09}             & $75.80$             & $+2.49$      \\
			                       & $\checkmark$              & $\checkmark$               & $\checkmark$               & $68.78$\std{0.78}             & $84.02$\std{1.37}             & $76.40$             & $+3.31$      \\\hdashline[0.5pt/1pt]
			\rule{0pt}{10pt}CLOvE  &                           &                            &                            & $55.55$\std{9.97}             & $74.20$\std{2.45}             & $64.88$             &              \\
			                       & $\checkmark$              &                            &                            & $66.35$\std{1.51}             & $77.70$\std{1.00}             & $72.02$             & $-0.07$      \\
			                       & $\checkmark$              & $\checkmark$               &                            & $64.99$\std{2.29}             & $75.70$\std{1.05}             & $70.35$             & $-1.75$      \\
			                       & $\checkmark$              & $\checkmark$               & $\checkmark$               & $65.55$\std{2.17}             & $77.29$\std{1.55}             & $71.42$             & $-0.67$      \\\hdashline[0.5pt/1pt]
			\rule{0pt}{10pt}IGA    &                           &                            &                            & $58.67$\std{7.69}             & $76.27$\std{1.01}             & $68.97$             &              \\
			                       & $\checkmark$              &                            &                            & $51.22$\std{3.67}             & $74.20$\std{2.45}             & $62.71$             & $-9.38$      \\
			                       & $\checkmark$              & $\checkmark$               &                            & $66.17$\std{2.34}             & $81.84$\std{3.09}             & $74.01$             & $+1.91$      \\
			                       & $\checkmark$              & $\checkmark$               & $\checkmark$               & $66.51$\std{0.78}             & $82.12$\std{3.04}             & $74.32$             & $+2.23$      \\\hdashline[0.5pt/1pt]
			\rule{0pt}{10pt}SD     &                           &                            &                            & $62.31$\std{1.54}             & $76.73$\std{0.90}             & $69.52$             &              \\
			                       & $\checkmark$              &                            &                            & $62.48$\std{1.25}             & $81.24$\std{0.69}             & $71.86$             & $-0.23$      \\
			                       & $\checkmark$              & $\checkmark$               &                            & $59.52$\std{6.12}             & $82.82$\std{0.64}             & $71.17$             & $-0.92$      \\
			                       & $\checkmark$              & $\checkmark$               & $\checkmark$               & $65.54$\std{0.91}             & $83.57$\std{0.81}             & $74.56$             & $+2.47$      \\\hdashline[0.5pt/1pt]
			\rule{0pt}{10pt}Oracle &                           &                            &                            & $72.08$\std{0.24}             & $86.53$\std{0.14}             & $79.31$             & $79.31$      \\
			\bottomrule
		\end{tabular}}%
\end{table}

\revision{The results are given in Table.~\ref{tab:pirm_appdx}. It can be found that, not all OOD objectives can improve \irml performance. For the OOD objectives that can enhance the OOD robustness when incorporated into \irml, \ours can further improve over the combined OOD objectives optimized via linear weighting scheme. While for unrobust combinations, intuitively it is hard to improve the OOD performance for the following reasons:}
\begin{enumerate}[label=(\roman*).,wide, labelwidth=!]
	\item \revision{When the new objective combination is unrobust, the desired solution may not lie in the new Pareto optimal front;}
	\item \revision{Eventhough the desired solution lies in the new Pareto optimal front, the weakened OOD robustness introduces more local minimals that have low OOD losses while worse OOD generalization performance;}
	\item \revision{As an extra objective is involved, the OOD preference used in \ours tends to have a higher divergence from the ideal one;}
\end{enumerate}
\revision{Therefore, given unrobust OOD objective combinations, the performance gain of \ours is not theoretically guaranteed. Nevertheless, \ourso can still improve some of the unrobust objective combinations, demonstrating its robustness. Notably, \ourss can further improve the performance of \ourso at most cases, demonstrating the generality of \ours.}

\revision{To study what OOD objectives are suitable to be combined with \irml and whether using more OOD objectives can bring more performance improvements, additionally, we conduct experiments with all possible composites of \irml and IB~\citep{ib-irm}, Fishr~\citep{fishr} and \vrex~\citep{vrex}. In experiments, similar as in previous study, \ourso adopts a slightly broader learning rate search scope of $\{0.01,0.02,0.04,0.1,0.2\}$ at stage 2, in order to prevent divergence. Note that even considering the learning rate into the hyperparameter search space, \ours still uses a smaller search scope than that of linear weighting scheme. \ourss adopts the training domain validation accuracy to perform the model selection. Both \ourso and \ourss adopts a heuristic preference setup that uses a decreasing preference from $1e12$ to $1e8$ by a step size of $1e2$ for more objectives. For example, in the composite of IB, \irml and \vrex, we adopt the preference of $(1e8,1e10,1e12)$ for the OOD objectives. The choice of preference follows previous discussion in Appendix~\ref{sec:pair_discussion_pc_appdx}.}

\begin{table}[ht]
	\small\centering
	\caption{\revision{Generality study of \ours for composite objectives in \cmnist.}}
	\label{tab:compirm_appdx}
	\revision{
		\begin{tabular}{lccccccc}
			\toprule
			                                & IB           & VREx         & Fishr        & CMNIST                        & CMNIST-M                      & Avg.                & $\Delta$ Avg.       \\\midrule
			ERM                             &              &              &              & $17.14$\std{0.73}             & $73.30$\std{0.85}             & $45.22$             &                     \\
			IRMv1                           &              &              &              & $67.29$\std{0.99}             & $76.89$\std{3.23}             & $72.09$             & $+0.00$             \\\hdashline[0.5pt/1pt]
			\rule{0pt}{10pt}\texttt{Linear} & $\checkmark$ &              &              & $56.09$\std{2.04}             & $75.66$\std{10.6}             & $65.88$             & $-6.21$             \\
			+\ourso                         & $\checkmark$ &              &              & $61.12$\std{2.33}             & $83.30$\std{3.00}             & $72.21$             & $+0.12$             \\
			+\ourso+\ourss                  & $\checkmark$ &              &              & $60.69$\std{2.26}             & $83.70$\std{1.79}             & $72.20$             & $+0.11$             \\\hdashline[0.5pt/1pt]
			\rule{0pt}{10pt}\texttt{Linear} &              & $\checkmark$ &              & $66.19$\std{1.41}             & $81.75$\std{1.68}             & $73.97$             & $+1.88$             \\
			+\ourso                         &              & $\checkmark$ &              & $68.89$\std{1.13}             & $83.80$\std{1.60}             & $76.35$             & $+4.26$             \\
			+\ourso+\ourss                  &              & $\checkmark$ &              & $\mathbf{69.16}$\std{0.76}    & $\underline{83.96}$\std{1.65} & $\underline{76.56}$ & $\underline{+4.47}$ \\\hdashline[0.5pt/1pt]
			\rule{0pt}{10pt}\texttt{Linear} &              &              & $\checkmark$ & $66.20$\std{2.31}             & $81.07$\std{3.98}             & $73.63$             & $+1.54$             \\
			+\ourso                         &              &              & $\checkmark$ & $66.45$\std{0.90}             & $82.70$\std{1.09}             & $74.58$             & $+2.49$             \\
			+\ourso+\ourss                  &              &              & $\checkmark$ & $67.57$\std{0.81}             & $83.22$\std{2.10}             & $75.40$             & $+3.31$             \\\hdashline[0.5pt/1pt]
			\rule{0pt}{10pt}\texttt{Linear} & $\checkmark$ & $\checkmark$ &              & $52.61$\std{1.56}             & $63.84$\std{1.08}             & $58.23$             & $-13.9$             \\
			+\ourso                         & $\checkmark$ & $\checkmark$ &              & $68.35$\std{1.73}             & $81.25$\std{3.08}             & $74.80$             & $+2.71$             \\
			+\ourso+\ourss                  & $\checkmark$ & $\checkmark$ &              & $\underline{69.05}$\std{0.76} & $83.11$\std{1.46}             & $76.08$             & $+3.99$             \\\hdashline[0.5pt/1pt]
			\rule{0pt}{10pt}\texttt{Linear} & $\checkmark$ &              & $\checkmark$ & $51.91$\std{1.26}             & $68.88$\std{3.22}             & $60.39$             & $-11.7$             \\
			+\ourso                         & $\checkmark$ &              & $\checkmark$ & $59.70$\std{12.7}             & $74.59$\std{1.11}             & $67.15$             & $-4.94$             \\
			+\ourso+\ourss                  & $\checkmark$ &              & $\checkmark$ & $66.98$\std{2.66}             & $75.91$\std{3.50}             & $71.45$             & $-0.65$             \\\hdashline[0.5pt/1pt]
			\rule{0pt}{10pt}\texttt{Linear} &              & $\checkmark$ & $\checkmark$ & $64.83$\std{2.95}             & $79.34$\std{5.77}             & $72.09$             & $+0.00$             \\
			+\ourso                         &              & $\checkmark$ & $\checkmark$ & $67.96$\std{1.60}             & $81.44$\std{2.24}             & $74.70$             & $+2.61$             \\
			+\ourso+\ourss                  &              & $\checkmark$ & $\checkmark$ & $68.19$\std{1.58}             & $81.89$\std{3.01}             & $75.04$             & $+2.95$             \\\hdashline[0.5pt/1pt]
			\rule{0pt}{10pt}\texttt{Linear} & $\checkmark$ & $\checkmark$ & $\checkmark$ & $50.00$\std{0.32}             & $69.60$\std{2.33}             & $59.80$             & $-12.3$             \\
			+\ourso                         & $\checkmark$ & $\checkmark$ & $\checkmark$ & $66.89$\std{1.80}             & $83.46$\std{3.10}             & $75.18$             & $+3.08$             \\
			+\ourso+\ourss                  & $\checkmark$ & $\checkmark$ & $\checkmark$ & $68.59$\std{1.29}             & $\mathbf{85.30}$\std{0.64}    & $\mathbf{76.95}$    & $\mathbf{+4.85}$    \\\hdashline[0.5pt/1pt]
			\rule{0pt}{10pt}\texttt{Oracle} &              &              &              & $72.08$\std{0.24}             & $86.53$\std{0.14}             & $79.31$             &                     \\
			\bottomrule
		\end{tabular}}%
\end{table}
\revision{The results are shown in Table~\ref{tab:compirm_appdx}. The best and second best results are in bold and underline, respectively. It can be found that incorporating more OOD objectives does not necessarily bring more performance improvements into \irml. The linear weighting scheme can further exacerbate the performance drops of unrobust OOD objective combinations. For example, when incorporating IB objective into \irml, the OOD performance drops, since IB is proposed to mitigate a specific type of distribution shifts instead of directly improving learning the invariance in the original \irml setting. In contrast, it can be found that incorporating Fishr can bring performance increases at most cases. The reason is that minimizing Fishr loss can approximately minimizes the \vrex loss, as shown by~\citet{fishr}. Therefore, we suspect that the reason for the performance drop could be that more objectives will make the Pareto front more complicated, and also lead to higher divergence of the OOD preference (since we are less likely know the ideal preference given more objectives). Hence, the preferred composition of the objectives are preferred to those that have theoretical guarantees and are as concise as possible.
}

\revision{Interestingly, we also find that, although incorporating more objectives in \ourso does not necessarily bring performance increase, a combination of \ourso and \ourss can further improve the OOD performance, despite of the simple implementation of \ourso. It serves as strong evidence for the generality and significance of \ours.}

\subsection{More details about experiments on \wilds}
\label{sec:wilds_appdx}

In this section, we provide more details about the \wilds datasets as well as the evaluation setups in the experiments.

\subsubsection{Dataset description.}
We select $6$ challenging datasets from \wilds~\citep{wilds} benchmark for evaluating \ourso performance in realistic distribution shifts. The datasets cover from domain distribution shifts, subpopulation shifts and the their mixed.
A summary of the basic information and statistics of the \textsc{Wilds} datasets can be found in Table.~\ref{tab:wilds_summary_appdx}, Table.~\ref{tab:wilds_stat_appdx}, respectively.
In the following, we will give a brief introduction to each of the datasets. More details can be found in the \wilds paper~\citep{wilds}.
\begin{table}[ht!]
	\centering
	\caption{A summary of datasets information from \wilds.}\label{tab:wilds_summary_appdx}
	\scalebox{0.65}{
		\begin{tabular}{lllllc}
			\toprule
			\textbf{Dataset}       & \textbf{Data ($x$)} & \textbf{Class information}         & \textbf{Domains}        & \textbf{Metric}       & \textbf{Architecture} \\
			\midrule
			\textsc{Camelyon17}    & Tissue slides       & Tumor (2 classes)                  & 5 hospitals             & Avg. acc.             & DenseNet-121          \\
			\textsc{CivilComments} & Online comments     & Toxicity (2 classes)               & 8 demographic groups    & Wr. group acc.        & DistillBERT           \\
			\textsc{FMoW}          & Satellite images    & Land use (62 classes)              & 16 years x 5 regions    & Wr. group acc.        & DenseNet-121          \\
			\textsc{iWildCam}      & Photos              & Animal species (186 classes)       & 324 locations           & Macro F1              & ResNet-50             \\
			\textsc{Poverty}       & Satellite images    & Asset (real valued)                & 23 countries            & Wr. group Pearson (r) & Resnet-18             \\
			\textsc{RxRx1}         & Cell images         & Genetic treatments (1,139 classes) & 51 experimental batches & Avg. acc              & ResNet-50             \\
			\bottomrule
		\end{tabular}
	}
\end{table}

\begin{table}[ht!]
	\centering
	\caption{A summary of datasets statistics from \wilds.}\label{tab:wilds_stat_appdx}
	\scalebox{0.8}{
		\small
		\begin{tabular}{lcccccccc}
			\toprule
			\multirow{2}{*}{Dataset} & \multicolumn{3}{c}{$\#$ Examples} &        & \multicolumn{3}{c}{$\#$ Domains}                         \\
			\cmidrule{2-4}\cmidrule{6-8}
			                         & train                             & val    & test                             &  & train & val & test \\
			\midrule
			\textsc{Camelyon17}      & 302,436                           & 34,904 & 85,054                           &  & 3     & 1   & 1    \\
			\textsc{CivilComments}   & 269,038                           & 45,180 & 133,782                          &  & -     & -   & -    \\
			\textsc{FMoW}            & 76,863                            & 19,915 & 22,108                           &  & 11    & 3   & 2    \\
			\textsc{iWildCam}        & 129,809                           & 14,961 & 42,791                           &  & 243   & 32  & 48   \\
			\textsc{Poverty}         & 10,000                            & 4,000  & 4,000                            &  & 13-14 & 4-5 & 4-5  \\
			\textsc{RxRx1}           & 40,612                            & 9,854  & 34,432                           &  & 33    & 4   & 14   \\
			\bottomrule
		\end{tabular}}
\end{table}

\textbf{Camelyon17.}
We follow the \wilds splits and data processing pipeline for the Camelyon17 dataset~\citep{camelyon}. It provides $450,000$ lymph-node scans from $5$ hospitals. The task in Camelyon17 is to take the input of $96\times96$ medical images to predict whether there exists a tumor tissue in the image. The domains $d$ refers to the index of the hospital where the image was taken. The training data are sampled from the first $3$ hospitals where the OOD validation and test data are sampled from the $4$-th and $5$-th hospital, respectively.
We will use the average accuracy as the evaluation metric and a DenseNet-121~\citep{densenet} as the backbone for the featurizer.

\textbf{CivilComments.}
We follow the \wilds splits and data processing pipeline for the CivilComments dataset~\citep{civil}. It provides $450,000$ comments collected from online articles. The task is to classify whether an online comment text is toxic or non-toxic. The domains $d$ are defined according to the demographic features, including male, female, LGBTQ, Christian, Muslim, other religions, Black, and White. CivilComments is used to study the subpopulation shifts, here we will use the worst group/domain accuracy as the evaluation metric. As for the backbone of the featurizer, we will use a DistillBert~\citep{distillbert} following \wilds~\citep{wilds}.

\textbf{FMoW.}
We follow the \wilds splits and data processing pipeline for the FMoW dataset~\citep{fmow}. It provides satellite images from $16$ years and $5$ regions. The task in FMoW is to classify the images into $62$ classes of building or land use categories. The domain is split according to the year that the satellite image was collected, as well as the regions in the image which could be Africa, America, Asia, Europe or Oceania. Distribution shifts could happen across different years and regions.
The training data contains data collected before $2013$, while the validation data contains images collected within $2013$ to $2015$, and the test data contains images collected after $2015$. The evaluation metric for FMoW is the worst region accuracy and the backbone model for the featurizer is a DenseNet-121~\citep{densenet}.

\textbf{iWildCam.}
We follow the \wilds splits and data processing pipeline for the iWildCam dataset~\citep{iwildcam}. It is consist of  $203,029$ heat or motion-activated photos of animal specifies from 323 different camera traps across different countries around the world. The task of iWildCam is to classify the corresponding animal specifies in the photos. The domains is split according to the locations of the camera traps which could introduce the distribution shifts. We will use the Macro F1 as the evaluation metric and a ResNet-50~\citep{resnet} as the backbone for the featurizer.

\textbf{PovertyMap.}
We follow the \wilds splits and data processing pipeline for the PovertyMap dataset~\citep{povertymap}. It consists of satellite imagery and survey data at $19,669$ villages from $23$ African countries between $2009$ and $2016$.
Different from other datasets, the task in PovertyMap is a regression task that asks the model to predict the real-valued asset wealth index computed from Demographic and Health Surveys (DHS) data. The domain is split according to the countries that the image was taken and whether the image is of an urban or rural area.
The relative small size of PoverMap allows for using cross-fold evaluation, where each fold defines a different set of OOD countries~\citep{wilds}.
We will use the Pearson correlation of the worst urban/rural subpopulation as the evaluation metric and a ResNet-18~\citep{resnet} as the backbone for the featurizer.

\textbf{RxRx1.}
We follow the \wilds splits and data processing pipeline for the RxRx1 dataset~\citep{rxrx1}.
The input is an image of cells taken by fluorescent microscopy. The cells can be genetically perturbed by siRNA and the task of RxRx1 is to predict the class of the corresponding siRNA that have treated the cells.
There exists $1,139$ genetic treatments and the domain shifts are introduced by the experimental batches. We will use the average accuracy of the OOD experimental batches as the evaluation metric and a ResNet-50~\citep{resnet} as the backbone for the featurizer.

\subsubsection{Training and evaluation details.}
We follow previous works to implement and evaluate our models~\citep{wilds,fish,lisa}. The information of the referred paper and code is listed as in Table.~\ref{tab:referred_code_appdx}.

\begin{table}[ht]
	\centering
	\small
	\caption{The information of the referred paper and code.}
	\label{tab:referred_code_appdx}
	\resizebox{\textwidth}{!}{
		\begin{tabular}{l|cc}
			\toprule
			Paper               & Commit                                            & Code                                    \\
			\midrule
			\wilds\citep{wilds} & v2.0.0                                            & \url{https://wilds.stanford.edu/}       \\
			Fish~\citep{fish}   & \texttt{333efa24572d99da0a4107ab9cc4af93a915d2a9} & \url{https://github.com/YugeTen/fish}   \\
			LISA~\citep{lisa}   & \texttt{bc424c47df6f072986b63cd906c44975bd34d9ff} & \url{https://github.com/huaxiuyao/LISA} \\
			\bottomrule
		\end{tabular}}
\end{table}

The general hyperparemter setting inherit from the referred codes and papers, and are shown as in Table~\ref{tab:hyper_wilds_appdx}.
We use the same backbone models to implement the featurizer~\citep{resnet,densenet,distillbert}.
By default, we repeat the experiments by $3$ runs with the random seeds of $0,1,2$. While for Camelyon17, we follow the official guide to repeat $10$ times with the random seeds from $0$ to $9$, and for PovertyMap, we repeat the experiments $5$ times with the random seeds from $0$ to $4$. Note that the PovertyMap use cross-fold validations hence each run will use different training and evaluation splits, following the \wilds official guideline.

For the evaluation of baselines, we refer the previous results from the literature~\citep{wilds,fish,lisa} by default, while we rerun Fish~\citep{fish} and LISA~\citep{lisa} to validate the reported results.
Since the original implementation of Fish does not support the evaluation of the updated PovertyMap dataset, we mildly adjust the hyperparameter settings to reproduce the corresponding results as shown in Table.~\ref{tab:hyper_wilds_appdx}.
We also reduce the batch size on FMoW due to the memory limits and we find it does not affect the reproducibility of Fish and LISA.
Besides, since the original implementation of LISA does not support PovertyMap, which differentiates as a regression task that could be not suitable with Mixup~\citep{mixup}, however we find the ``group by label'' strategy in LISA works particularly well and reaches to the state of the art.
For \irmx, we implement it as the simple addition of \irml and \vrex penalties based on the Fish implementation~\citep{fish}, and search the penalty weights using the same space as for other objectives~\citep{wilds} to ensure the fairness. Besides, since previously reported results did not cover the performance of \vrex in iWildCam and PovertyMap, we implement \vrex and report the results based on the Fish implementation~\citep{fish}.

\begin{table}[ht]
	\centering
	\small
	\caption{General hyperparameter settings for the experiments on \wilds.}
	\label{tab:hyper_wilds_appdx}
	\resizebox{\textwidth}{!}{
		\begin{tabular}{l|cccccc}
			\toprule
			Dataset        & \sc Camelyon17 & \sc  CivilComments & \sc FMoW    & \sc iWildCam & \sc PovertyMap & \sc RxRx1     \\
			\midrule
			Num. of seeds  & 10             & 3                  & 3           & 3            & 5              & 3             \\
			Learning rate  & 1e-4           & 2e-6               & 1e-4        & 1e-4         & 1e-4           & 1e-3          \\
			Weight decay   & 0              & 0.01               & 0           & 0            & 0              & 1e-5          \\
			Scheduler      & n/a            & n/a                & n/a         & n/a          & n/a            & Cosine Warmup \\
			Batch size     & 32             & 16                 & 32          & 16           & 64             & 72            \\
			Architecture   & DenseNet121    & DistilBert         & DenseNet121 & ResNet50     & ResNet18       & ResNet50      \\
			Optimizer      & SGD            & Adam               & Adam        & Adam         & Adam           & Adam          \\
			Pretraing Step & 10000          & 20000              & 24000       & 24000        & 5000           & 15000         \\
			Maximum Epoch  & 2              & 5                  & 12          & 9            & 200            & 90            \\
			\bottomrule
		\end{tabular}}
\end{table}

For \ourso, we implement it based on the Fish code~\citep{fish}. The detailed algorithm can be found in Algorithm.~\ref{alg:pair_opt_appdx}.
We leverage the same number of pretraining steps as in Fish to fulfill the first ``descent'' phase in \ourso algorithm.
Then, during the ``balance'' phase, at each training step, we sampled $k$ batches of data from different domains, calculate loss and conduct the back-propagation.
By default, we use only the gradients of the classifier to solve for the objective weights during the ``balance'' phase.
Except for iWildCam and RxRx1 datasets, due the memory limits, as discussed in Sec.~\ref{sec:pair_opt_sca_appdx}, we use the freeze technique to ensure the consistency of batch size and number of sampled domains as in Table.~\ref{tab:hyper_wilds_appdx}.
Moreover, as discussed in Sec.~\ref{sec:pair_opt_est_appdx}, the unbiased stochastic estimate of \irml penalties can not guarantee the non-negativity of the estimated loss values, which are however not compatible with MOO theory~\citep{moo_book} (thus the same for \ourso). Therefore, we will manually adjust the negative values to be positive, by multiplying it with a adjustment rate (short in Neg. \irml adj. rate in Table.~\ref{tab:hyper_pair_wilds_appdx}). The adjustment rate is tuned from $1$ to $1e-4$ with a step size of $1e-2$ to avoid the training divergence and instability.
Following the discussion as in Sec.~\ref{sec:pair_discussion_pc_appdx}, we tune the OOD relative preference by merely varying the preference for \irml objective from the default choice of $(1,1e10,1e12)$ by a step size of $1e2$. We find the performances of \irml and \vrex highly correlate to the corresponding relative preference weights.
We list hyperparameters of \ourso in Table~\ref{tab:hyper_pair_wilds_appdx}. Although we did not tune the hyperparameters heavily, we find that \ourso generically works well across different challenging datasets and realistic distribution shifts on \wilds.
As discussed in Sec.~\ref{sec:pair_discussion_pc_appdx}, there could be more sophisticated approaches to further improve the search and estimate of OOD preference, which we will leave for future developments based on \ours.

\begin{table}[ht]
	\centering
	\small
	\caption{Hyperparameter settings of \ourso for the experiments on \wilds.}
	\label{tab:hyper_pair_wilds_appdx}
	\resizebox{\textwidth}{!}{
		\begin{tabular}{l|cccccc}
			\toprule
			Dataset              & \sc Camelyon17 & \sc  CivilComments            & \sc FMoW               & \sc iWildCam   & \sc PovertyMap & \sc RxRx1            \\
			\midrule
			Gradients from       & Classifier     & Classifier                    & Classifier             & Classifier     & Classifier     & Classifier           \\
			Freeze featurizer    & No             & No                            & No                     & Yes            & No             & Yes                  \\
			Relative Preference  & (1,1e12,1e12)  & (1,1e8,1e12)                  & (1,1e12,1e12)          & (1,1e10,1e12)  & (1,1e8,1e12)   & (1,1e8,1e12)        \\
			Neg. \irml adj. rate & 1              & 1e-4                          & 1                      & 1e-2           & 1e-2           & 1                    \\
			Group by             & Hospitals      & Demographics$\times$ toxicity & Times $\times$ regions & Trap locations & Countries      & Experimental batches \\
			Sampled domains      & 3              & 5                             & 5                      & 10             & 5              & 10                   \\
			\bottomrule
		\end{tabular}}
\end{table}

\subsection{Software and hardware}
\label{sec:exp_software_appdx}
We implement our methods with PyTorch~\citep{pytorch}.
For the software and hardware configurations, we ensure the consistent environments for each datasets. Specifically, we run \cmnist experiments on Linux Servers with NVIDIA RTX 3090Ti graphics cards with CUDA 11.3, 40 cores Intel(R) Xeon(R) Silver 4114 CPU @ 2.20GHz, 256 GB Memory, and Ubuntu 18.04 LTS installed.
While for \wilds and \dobed experiments, we run on Linux servers with NVIDIA V100 graphics cards with CUDA 10.2.

\section{More Details of Model Selection Results on \dobed}
\label{sec:dobed_full_appdx}

\subsection{Introduction of difficult model selection in \dobed}
\label{sec:dobed_intro_appdx}

\dobed is proposed by~\citet{domainbed} to highlight the importance of model selection in OOD generalization. Specifically, they empirically show that, under rigorous hyperparameter tunning, ERM~\citep{erm} achieves the state-of-the-art performances. Although recently progress are made to outperform ERM under the rigorous \dobed evaluation protocol~\citep{fishr}, whether there exists a proper model selection for OOD algorithms remains elusive.

The difficulty of a proper model selection for OOD algorithms is mainly because of: We lack the access to a validation set that have a similar distribution with the test data. Therefore, \citet{domainbed} provide $3$ options to choose and construct a validation set from: training domain data; leave-one-out validation data; test domain data. However, all three validation set construction approaches have their own limitations, as they essentially posit different assumptions on the test distribution~\citep{domainbed,trainval_issue,fishr}.

\ourss tries to address the limitations caused by the difficulty of finding a proper validation set for model selection in domain generalization, by leveraging the \emph{prior} assumed within the OOD algorithm.
Essentially, different lines of OOD algorithms discussed in Sec.~\ref{sec:related_work_appdx} adopt different prior and assumptions on the causes of the distribution shifts. The main purpose of the OOD evaluation is to validate the correctness of the posed assumptions.
To this end, the selected models should properly reflect the preferences implied by the assumptions, i.e., the OOD loss values.
When considering the loss values during the model selection, it is natural to leverage the MOO perspective and explicitly consider the trade-offs between ERM and OOD performance.
The detailed description, implementation options, and potential leverages of \ourss are provided in Appendix~\ref{sec:pair_implementation_appdx}.

\subsection{Training and evaluation details}

Since our main purpose of the \dobed experiments is to validate the existence of the problem and the effectiveness of \ourss, we apply \ourss to the representative methods of the four discussed OOD solutions in Sec.~\ref{sec:related_work_appdx}.
Specifically, we choose the following four methods out of all implemented algorithms in \dobed (\url{https://github.com/facebookresearch/DomainBed}):

\begin{itemize}
	\item ERM: Empirical Risk Minimization \citep{erm}
	\item IRM: Invariant Risk Minimization \citep{irmv1}
	\item GroupDRO: Group Distributionally Robust Optimization \citep{groupdro}
	\item DANN: Domain Adversarial Neural Network \citep{DANN}
	\item Fishr: Invariant Gradient Variances for OOD Generalization \citep{fishr}
\end{itemize}

Due to the limits of computational resources, we select $3$ out of $7$ datasets from \dobed. We refer~\citet{fishr} to prescribe the detail, listed as follows:
\begin{enumerate}
	\item Colored MNIST \citep{irmv1} is a variant of the MNIST handwritten digit classification dataset \citep{mnist}. Domain $d\in\{90\%, 80\%, 10\%\}$ contains a disjoint set of digits colored: the correlation strengths between color and label vary across domains. The dataset contains 70,000 examples of dimension $(2,28,28)$ and 2 classes. Most importantly, the network, the hyperparameters, the image shapes, etc. are \textbf{not} the same as in the IRM setup for \cmnist experiments.
	\item PACS \citep{pacs} includes domains $d\in\{$art, cartoons, photos, sketches$\}$, with 9,991 examples of dimension $(3,224,224)$ and 7 classes.
	\item TerraIncognita \citep{camel_example} contains photographs of wild animals taken by camera traps at locations $d\in\{$L100, L38, L43, L46$\}$, with 24,788 examples of dimension $(3,224,224)$ and 10 classes.
\end{enumerate}
Note that CMNIST dataset in \dobed use a convolutional neural network as the backbone for the featurizer, which is not the same MLP for \cmnist experiments.
By default, all real datasets leverage a ResNet-50~\citep{resnet} pretrained on ImageNet, with a dropout layer before the newly added dense layer and fine-tuned with frozen batch normalization layers.

During the training, we strictly follow the evaluation protocol in \dobed. Note that the hyperparameter configurations of Fishr have some differences from the default configurations hence we refer the configuration tables by~\citet{fishr} directly, shown as follows.

\begin{table}[h]
	\caption{\textbf{Hyperparameters}, their default values and distributions for random search~\citep{domainbed,fishr}.}
	\centering
	\resizebox{\textwidth}{!}{%
		\begin{tabular}{llll}
			\toprule
			Condition              & Parameter                         & Default value & Random distribution                                                         \\
			\midrule
			\pacs /                & learning rate                     & $0.00005$     & $10^{\text{Uniform}(-5,-3.5)}$                                              \\
			\terra                 & batch size                        & 32            & $2^{\text{Uniform}(3,5.5)}$ if not DomainNet else $2^{\text{Uniform}(3,5)}$ \\
			                       & weight decay                      & 0             & $10^{\text{Uniform}(-6,-2)}$                                                \\
			                       & dropout                           & 0             & RandomChoice $([0,0.1,0.5])$                                                \\
			\midrule
			\cmnist                & learning rate                     & $0.001$       & $10^{\text{Uniform}(-4.5,-3.5)}$                                            \\
			                       & batch size                        & 64            & $2^{\text{Uniform}(3,9)}$                                                   \\
			                       & weight decay                      & 0             & 0                                                                           \\
			\midrule
			All                    & steps                             & 5000          & 5000                                                                        \\
			\midrule
			\midrule
			\multirow{3}{*}{Fishr} & regularization strength $\lambda$ & 1000          & $10^{\text{Uniform}(1,4)}$                                                  \\
			                       & ema $\gamma$                      & 0.95          & Uniform$(0.9,0.99)$                                                         \\
			                       & warmup iterations                 & 1500          & Uniform$(0,5000)$                                                           \\
			\bottomrule
		\end{tabular}}

	\label{tab:hyper_dobed_appdx}%
\end{table}

As for the construction of the validation set, we test with training domain validation set and test domain validation set, as leave-one-out domain selection requires more runs and more computational resources that are out of our limits.
Specifically, to construct the validation set, the data from each domain will be first splitted into $80\%$ (for training and evaluation) and $20\%$ (for validation and model selection).
For training domain validation set, the validation data is consist of the $20\%$ split from each training domain. While for the test domain validation set, the validation data is consist of the $20\%$ split from each test domain.

The whole evaluation will be repeated $3$ times where in each repeat, there will be $20$ samplings of hyperparameters from the distribution shown in Table~\ref{tab:hyper_dobed_appdx}. Therefore, there will be $20$ runs in each repeat and there will be $1$ model selected from the $20$ runs.

For the implementation of \ourss, we follow the algorithm as in Algorithm~\ref{alg:pair_sel_appdx}. Since training domain validation accuracy tends to be a more unreliable indicator than test domain validation accuracy, i.e., has a worse reflection of the OOD generalization performance due to the high similarity with the training data~\citep{trainval_issue}, during the selection within each run, we filter out the models before the last $5$ steps in \cmnist and the last $10$ steps in \pacs and \terra. During the selection within one repeat (across different runs), we use a percent of $50\%$ for step $9$ in Algorithm~\ref{alg:pair_sel_appdx} and finalize the selection according the \ourst score. Except for GroupDRO and DANN of which the objective value tend to have higher variance and relatively low OOD robustness, we aggregate the models within each repeat by the validation accuracy.
In contrast, for the test domain validation accuracy, we filter out the models before the last $5$ steps for DANN while $10$ steps for others according to the robustness of the objectives during the selection within each run. During the selection within one repeat (across different runs), we directly adopt the validation accuracy to finalize the model selected. Note that \citet{domainbed} argue that test domain validation is more likely to be a invalid benchmarking methodology, since it requires access to the test domain which is usually inaccessible in realistic applications.

For the selection of loss values $\vL$, we use the values reported solely at each logging step, which is evaluated every $100$ steps with a minibtach of the training data, listed as follows:
\begin{itemize}
	\item ERM: N/A.
	\item IRM: \erm and \irml (\texttt{nll,penalty}).
	\item GroupDRO: Worst group \erm loss (\texttt{losses.min()}).
	\item DANN: Weighted \erm and domain discrimination loss (\texttt{gen\_loss}).
	\item Fishr: \erm and Fishr penalty (\texttt{nll,penalty}).
\end{itemize}

\subsection{Full \dobed Results}
\label{sec:dobed_full}
In this section, we provide full results of the \dobed experiments.
To begin with, we first present the overall results of the three datasets, including the averages and the improvements of the worst domain accuracies,
as in Table.~\ref{tab:dobed_select_train_appdx} and Table.~\ref{tab:dobed_select_test_appdx}.
From results we can seed that \ourss consistently improves the OOD performance across all datasets and validation set options.
Remarkably, in the most challenging setting that uses train domain validation set on \cmnist ,
\ourss improves the worst domain performances of \irml and Fishr by a large margin up tp $14.3\%$.
In the realistic dataset \pacs , \ourss improves the worst domain performances of \irml by a large margin up to $7.3\%$.
In \terra , \ourss improves the worst domain performances of DANN by a large margin up to $3.1\%$.
Besides the worst domain performance, \ourss improves the average domain performances up to $1.0\%$
and empower the OOD methods to reach new state-of-the-arts.

When using the test domain validation set, since the validation set itself could reflect the OOD generalization performance,
therefore the improvements could be lower. When comes to OOD objectives that have a relatively low robustness, the worst domain performance could be lower.

We also report the detailed results at each domain with the variance in the next section.

\subsubsection{Overall results}

\begin{table}[h]
	\centering
	\caption{Overeall OOD generalization performances using training domain validation accuracy.}
	\label{tab:dobed_select_train_appdx}
	\vskip 0.1in
	\resizebox{\textwidth}{!}{
		\small
		\begin{tabular}{@{}{l}*{8}{c}@{}}
			\toprule
			                         &                 & \multicolumn{2}{c}{{\small\cmnist}} & \multicolumn{2}{c}{{\small \pacs}} & \multicolumn{2}{c}{{\small \terra}} & \multicolumn{1}{c}{{Overall}}                                                                     \\\cmidrule(lr){3-4}\cmidrule(lr){5-6}\cmidrule(lr){7-8}
			                         & \textbf{\ourss} & \textbf{Avg. acc}                   & \textbf{$\Delta$ wr. acc}          & \textbf{Avg. acc}                   & \textbf{$\Delta$ wr. acc}     & \textbf{Avg. acc} & \textbf{$\Delta$ wr. acc} & \textbf{Avg. acc} \\ \midrule
			ERM                      &                 & 51.4 $\pm$ 1.0                      &                                    & 84.8 $\pm$ 0.3                      &                               & 44.6 $\pm$ 1.1    &                           & 60.2              \\  \hdashline[0.5pt/1pt]
			\rule{0pt}{10pt}DANN     &                 & 51.5 $\pm$ 0.1                      &                                    & 82.5 $\pm$ 0.8                      &                               & 44.9 $\pm$ 0.9    &                           & 59.6              \\
			DANN                     & $\checkmark$    & 51.9 $\pm$ 0.1                      & +0.9                               & 83.3 $\pm$ 0.5                      & +0.7                          & 44.5 $\pm$ 1.5    & +3.1                      & 59.9              \\\hdashline[0.5pt/1pt]
			\rule{0pt}{10pt}GroupDRO &                 & 51.8 $\pm$ 0.0                      &                                    & 84.1 $\pm$ 0.8                      &                               & 46.6 $\pm$ 1.1    &                           & 60.8              \\
			GroupDRO                 & $\checkmark$    & 53.0 $\pm$ 0.4                      & +3.1                               & 84.4 $\pm$ 0.7                      & +1.1                          & 46.6 $\pm$ 1.1    & +0.0                      & 61.3              \\\hdashline[0.5pt/1pt]
			\rule{0pt}{10pt}IRM      &                 & 51.6 $\pm$ 0.1                      &                                    & 83.5 $\pm$ 1.1                      &                               & 44.9 $\pm$ 0.3    &                           & 60.0              \\
			IRM                      & $\checkmark$    & 52.2 $\pm$ 0.5                      & +14.3                              & 85.1 $\pm$ 0.9                      & +7.3                          & 41.1 $\pm$ 3.8    & +1.4                      & 59.5              \\\hdashline[0.5pt/1pt]
			\rule{0pt}{10pt}Fishr    &                 & 51.8 $\pm$ 0.1                      &                                    & 85.6 $\pm$ 0.5                      &                               & 47.0 $\pm$ 1.4    &                           & 61.5              \\
			Fishr                    & $\checkmark$    & 54.2 $\pm$ 1.0                      & +12.7                              & 85.6 $\pm$ 0.1                      & +1.1                          & 47.7 $\pm$ 1.1    & +0.3                      & 62.5              \\
			\bottomrule                                                                                                                                                                                                                                                     \\
		\end{tabular}}
\end{table}

\begin{table}[h]
	\centering
	\caption{Overeall OOD generalization performances using test domain validation accuracy.}
	\label{tab:dobed_select_test_appdx}
	\vskip 0.1in
	\resizebox{\textwidth}{!}{
		\small
		\begin{tabular}{@{}{l}*{8}{c}@{}}
			\toprule
			                         &                 & \multicolumn{2}{c}{{\small\cmnist}} & \multicolumn{2}{c}{{\small \pacs}} & \multicolumn{2}{c}{{\small \terra}} & \multicolumn{1}{c}{{Overall}}                                                                     \\\cmidrule(lr){3-4}\cmidrule(lr){5-6}\cmidrule(lr){7-8}
			                         & \textbf{\ourss} & \textbf{Avg. acc}                   & \textbf{$\Delta$ wr. acc}          & \textbf{Avg. acc}                   & \textbf{$\Delta$ wr. acc}     & \textbf{Avg. acc} & \textbf{$\Delta$ wr. acc} & \textbf{Avg. acc} \\ \midrule
			ERM                      &                 & 57.8 $\pm$ 0.2                      &                                    & 87.0 $\pm$ 0.1                      &                               & 52.9 $\pm$ 0.9    &                           & 65.9              \\  \hdashline[0.5pt/1pt]
			\rule{0pt}{10pt}DANN     &                 & 57.4 $\pm$ 0.8                      &                                    & 84.7 $\pm$ 0.5                      &                               & 50.8 $\pm$ 0.3    &                           & 64.3              \\
			DANN                     & $\checkmark$    & 56.2 $\pm$ 1.1                      & -2.6                               & 85.7 $\pm$ 0.2                      & +2.2                          & 50.7 $\pm$ 0.5    & +0.4                      & 64.2              \\\hdashline[0.5pt/1pt]
			\rule{0pt}{10pt}GroupDRO &                 & 61.3 $\pm$ 0.4                      &                                    & 86.9 $\pm$ 0.0                      &                               & 52.5 $\pm$ 0.2    &                           & 66.9              \\
			GroupDRO                 & $\checkmark$    & 60.1 $\pm$ 0.7                      & -4.3                               & 87.3 $\pm$ 0.2                      & +1.8                          & 52.0 $\pm$ 0.7    & +0.6                      & 66.4              \\\hdashline[0.5pt/1pt]
			\rule{0pt}{10pt}IRM      &                 & 68.1 $\pm$ 1.6                      &                                    & 84.4 $\pm$ 0.5                      &                               & 49.2 $\pm$ 0.6    &                           & 67.2              \\
			IRM                      & $\checkmark$    & 69.0 $\pm$ 1.1                      & +2.9                               & 86.0 $\pm$ 0.4                      & +0.8                          & 50.7 $\pm$ 0.9    & +0.4                      & 68.6              \\\hdashline[0.5pt/1pt]
			\rule{0pt}{10pt}Fishr    &                 & 68.0 $\pm$ 2.9                      &                                    & 87.5 $\pm$ 0.1                      &                               & 53.7 $\pm$ 0.2    &                           & 69.7              \\
			Fishr                    & $\checkmark$    & 68.2 $\pm$ 3.0                      & +0.6                               & 87.4 $\pm$ 0.1                      & +0.6                          & 52.1 $\pm$ 0.7    & -0.5                      & 69.2              \\
			\bottomrule                                                                                                                                                                                                                                                     \\
		\end{tabular}}
\end{table}

\clearpage
\subsubsection{Training domain validation set}
\begin{table}[h]
	\begin{center}
		\caption{OOD generalization performances with training domain validation set on \cmnist.}
		\vskip 0.1in
		\adjustbox{max width=\textwidth}{%
			\begin{tabular}{lcccccc}
				\toprule
				\textbf{Algorithm}       & \textbf{\ourss} & \textbf{+90\%} & \textbf{+80\%} & \textbf{-90\%} & \textbf{Avg} & \textbf{$\Delta$ wr. acc} \\\midrule
				ERM                      &                 & 71.0 $\pm$ 0.5 & 73.4 $\pm$ 0.1 & 10.0 $\pm$ 0.1 & 51.5         &                           \\\hdashline[0.5pt/1pt]
				\rule{0pt}{10pt}DANN     &                 & 71.0 $\pm$ 0.3 & 73.4 $\pm$ 0.1 & 10.0 $\pm$ 0.1 & 51.5         &                           \\
				DANN                     & $\checkmark$    & 71.6 $\pm$ 0.3 & 73.3 $\pm$ 0.2 & 10.9 $\pm$ 0.4 & 51.9         & +0.9                      \\\hdashline[0.5pt/1pt]
				\rule{0pt}{10pt}GroupDRO &                 & 72.6 $\pm$ 0.2 & 73.1 $\pm$ 0.0 & 9.9 $\pm$ 0.1  & 51.8         &                           \\
				GroupDRO                 & $\checkmark$    & 72.7 $\pm$ 0.2 & 73.2 $\pm$ 0.5 & 13.0 $\pm$ 1.5 & 53.0         & +3.1                      \\\hdashline[0.5pt/1pt]
				\rule{0pt}{10pt}IRM      &                 & 72.3 $\pm$ 0.3 & 72.6 $\pm$ 0.4 & 9.9 $\pm$ 0.1  & 51.6         &                           \\
				IRM                      & $\checkmark$    & 67.4 $\pm$ 2.6 & 64.8 $\pm$ 1.4 & 24.2 $\pm$ 1.6 & 52.2         & +14.3                     \\\hdashline[0.5pt/1pt]
				\rule{0pt}{10pt}Fishr    &                 & 72.2 $\pm$ 0.6 & 73.1 $\pm$ 0.3 & 9.9 $\pm$ 0.2  & 51.8         &                           \\
				Fishr                    & $\checkmark$    & 69.1 $\pm$ 2.9 & 70.9 $\pm$ 1.7 & 22.6 $\pm$ 1.4 & 54.2         & +12.7                     \\
				\bottomrule
			\end{tabular}}
	\end{center}
\end{table}

\begin{table}[h]
	\begin{center}
		\caption{OOD generalization performances with training domain validation set on \pacs.}
		\vskip 0.1in
		\adjustbox{max width=\textwidth}{%
			\begin{tabular}{lccccccc}
				\toprule
				\textbf{Algorithm}       & \textbf{\ourss} & \textbf{A}     & \textbf{C}     & \textbf{P}     & \textbf{S}     & \textbf{Avg} & \textbf{$\Delta$ wr. acc} \\\midrule
				ERM                      &                 & 82.6 $\pm$ 1.6 & 79.2 $\pm$ 1.0 & 97.2 $\pm$ 0.5 & 74.9 $\pm$ 2.6 & 83.5         &                           \\\hdashline[0.5pt/1pt]
				\rule{0pt}{10pt}DANN     &                 & 84.7 $\pm$ 1.8 & 75.8 $\pm$ 0.9 & 97.3 $\pm$ 0.1 & 72.3 $\pm$ 1.0 & 82.5         &                           \\
				DANN                     & $\checkmark$    & 86.5 $\pm$ 0.9 & 77.0 $\pm$ 1.8 & 97.0 $\pm$ 0.2 & 73.0 $\pm$ 0.5 & 83.3         & +0.7                      \\\hdashline[0.5pt/1pt]
				\rule{0pt}{10pt}GroupDRO &                 & 83.4 $\pm$ 1.7 & 77.1 $\pm$ 0.3 & 97.6 $\pm$ 0.2 & 78.2 $\pm$ 1.3 & 84.1         &                           \\
				GroupDRO                 & $\checkmark$    & 83.4 $\pm$ 1.7 & 78.3 $\pm$ 0.3 & 97.6 $\pm$ 0.2 & 78.2 $\pm$ 1.3 & 84.4         & +1.1                      \\\hdashline[0.5pt/1pt]
				\rule{0pt}{10pt}IRM      &                 & 82.9 $\pm$ 2.6 & 81.4 $\pm$ 0.1 & 96.7 $\pm$ 0.6 & 73.1 $\pm$ 3.1 & 83.5         &                           \\
				IRM                      & $\checkmark$    & 82.4 $\pm$ 2.3 & 80.5 $\pm$ 0.8 & 97.2 $\pm$ 0.2 & 80.4 $\pm$ 1.3 & 85.1         & +7.3                      \\\hdashline[0.5pt/1pt]
				\rule{0pt}{10pt}Fishr    &                 & 85.3 $\pm$ 1.1 & 80.3 $\pm$ 1.1 & 97.9 $\pm$ 0.3 & 79.1 $\pm$ 1.7 & 85.6         &                           \\
				Fishr                    & $\checkmark$    & 85.4 $\pm$ 1.4 & 80.2 $\pm$ 0.8 & 96.2 $\pm$ 0.7 & 80.5 $\pm$ 0.8 & 85.6         & +1.1                      \\
				\bottomrule
			\end{tabular}}
	\end{center}
\end{table}

\begin{table}[h]
	\begin{center}
		\caption{OOD generalization performances with training domain validation set on \terra.}
		\vskip 0.1in
		\adjustbox{max width=\textwidth}{%
			\begin{tabular}{lccccccc}
				\toprule
				\textbf{Algorithm}       & \textbf{\ourss} & \textbf{L100}  & \textbf{L38}   & \textbf{L43}   & \textbf{L46}   & \textbf{Avg} & \textbf{$\Delta$ wr. acc} \\\midrule
				ERM                      &                 & 46.7 $\pm$ 3.5 & 41.8 $\pm$ 1.0 & 57.4 $\pm$ 1.0 & 39.7 $\pm$ 0.2 & 46.4         &                           \\\hdashline[0.5pt/1pt]
				\rule{0pt}{10pt}DANN     &                 & 46.1 $\pm$ 3.5 & 41.2 $\pm$ 1.0 & 56.7 $\pm$ 0.9 & 35.6 $\pm$ 1.1 & 44.9         &                           \\
				DANN                     & $\checkmark$    & 43.1 $\pm$ 3.8 & 41.1 $\pm$ 0.9 & 55.2 $\pm$ 2.1 & 38.7 $\pm$ 1.9 & 44.5         & +3.1                      \\\hdashline[0.5pt/1pt]
				\rule{0pt}{10pt}GroupDRO &                 & 48.4 $\pm$ 2.9 & 40.3 $\pm$ 3.1 & 57.9 $\pm$ 2.2 & 40.0 $\pm$ 0.5 & 46.6         &                           \\
				GroupDRO                 & $\checkmark$    & 48.4 $\pm$ 2.9 & 40.3 $\pm$ 3.1 & 57.9 $\pm$ 2.2 & 40.0 $\pm$ 0.5 & 46.6         & +0.0                      \\\hdashline[0.5pt/1pt]
				\rule{0pt}{10pt}IRM      &                 & 48.4 $\pm$ 3.8 & 35.6 $\pm$ 2.9 & 55.4 $\pm$ 0.9 & 40.1 $\pm$ 1.4 & 44.9         &                           \\
				IRM                      & $\checkmark$    & 40.4 $\pm$ 7.3 & 38.3 $\pm$ 2.5 & 48.8 $\pm$ 6.3 & 37.0 $\pm$ 0.9 & 41.1         & +1.4                      \\\hdashline[0.5pt/1pt]
				\rule{0pt}{10pt}Fishr    &                 & 49.2 $\pm$ 4.4 & 40.6 $\pm$ 1.4 & 57.9 $\pm$ 1.1 & 40.4 $\pm$ 1.2 & 47.0         &                           \\
				Fishr                    & $\checkmark$    & 51.0 $\pm$ 3.3 & 40.7 $\pm$ 1.3 & 58.2 $\pm$ 0.1 & 40.8 $\pm$ 1.2 & 47.7         & +0.3                      \\
				\bottomrule
			\end{tabular}}
	\end{center}
\end{table}

\clearpage
\subsubsection{Test domain validation set}

\begin{table}[h]
	\begin{center}
		\caption{OOD generalization performances with test domain validation set on \cmnist.}
		\vskip 0.1in
		\adjustbox{max width=\textwidth}{%
			\begin{tabular}{lcccccc}
				\toprule
				\textbf{Algorithm}       & \textbf{\ourss} & \textbf{+90\%} & \textbf{+80\%} & \textbf{-90\%} & \textbf{Avg} & \textbf{$\Delta$ wr. acc} \\\midrule
				ERM                      &                 & 71.7 $\pm$ 0.2 & 72.7 $\pm$ 0.2 & 28.8 $\pm$ 0.8 & 57.8         &                           \\\hdashline[0.5pt/1pt]
				\rule{0pt}{10pt}DANN     &                 & 73.0 $\pm$ 1.2 & 73.3 $\pm$ 0.1 & 25.8 $\pm$ 1.7 & 57.4         &                           \\
				DANN                     & $\checkmark$    & 72.1 $\pm$ 0.3 & 73.2 $\pm$ 0.3 & 23.2 $\pm$ 3.8 & 56.2         & -2.6                      \\\hdashline[0.5pt/1pt]
				\rule{0pt}{10pt}GroupDRO &                 & 73.4 $\pm$ 0.4 & 72.4 $\pm$ 0.0 & 38.1 $\pm$ 0.8 & 61.3         &                           \\
				GroupDRO                 & $\checkmark$    & 73.2 $\pm$ 0.2 & 73.3 $\pm$ 0.3 & 33.8 $\pm$ 2.3 & 60.1         & -4.3                      \\\hdashline[0.5pt/1pt]
				\rule{0pt}{10pt}IRM      &                 & 72.3 $\pm$ 0.3 & 72.5 $\pm$ 0.4 & 59.4 $\pm$ 5.3 & 68.1         &                           \\
				IRM                      & $\checkmark$    & 71.7 $\pm$ 0.4 & 73.1 $\pm$ 0.1 & 62.3 $\pm$ 3.1 & 69.0         & +2.9                      \\\hdashline[0.5pt/1pt]
				\rule{0pt}{10pt}Fishr    &                 & 73.8 $\pm$ 0.5 & 73.6 $\pm$ 0.1 & 56.7 $\pm$ 8.6 & 68.0         &                           \\
				Fishr                    & $\checkmark$    & 73.7 $\pm$ 0.6 & 73.5 $\pm$ 0.2 & 57.3 $\pm$ 8.4 & 68.2         & +0.6                      \\
				\bottomrule
			\end{tabular}}
	\end{center}
\end{table}

\begin{table}[h]
	\begin{center}
		\caption{OOD generalization performances with test domain validation set on \pacs.}
		\vskip 0.1in
		\adjustbox{max width=\textwidth}{%
			\begin{tabular}{lccccccc}
				\toprule
				\textbf{Algorithm}       & \textbf{\ourss} & \textbf{A}     & \textbf{C}     & \textbf{P}     & \textbf{S}     & \textbf{Avg} & \textbf{$\Delta$ wr. acc} \\\midrule
				ERM                      &                 & 86.6 $\pm$ 0.7 & 82.5 $\pm$ 0.8 & 97.3 $\pm$ 0.5 & 81.8 $\pm$ 0.7 & 87.0         &                           \\\hdashline[0.5pt/1pt]
				\rule{0pt}{10pt}DANN     &                 & 86.5 $\pm$ 0.8 & 79.9 $\pm$ 0.4 & 97.1 $\pm$ 0.1 & 75.3 $\pm$ 1.1 & 84.7         &                           \\
				DANN                     & $\checkmark$    & 87.0 $\pm$ 0.2 & 81.4 $\pm$ 0.7 & 96.8 $\pm$ 0.5 & 77.5 $\pm$ 1.3 & 85.7         & +2.2                      \\\hdashline[0.5pt/1pt]
				\rule{0pt}{10pt}GroupDRO &                 & 87.7 $\pm$ 0.4 & 82.1 $\pm$ 0.7 & 98.0 $\pm$ 0.2 & 79.6 $\pm$ 0.7 & 86.9         &                           \\
				GroupDRO                 & $\checkmark$    & 86.7 $\pm$ 0.3 & 83.2 $\pm$ 1.1 & 97.8 $\pm$ 0.1 & 81.4 $\pm$ 0.5 & 87.3         & +1.8                      \\\hdashline[0.5pt/1pt]
				\rule{0pt}{10pt}IRM      &                 & 82.3 $\pm$ 1.5 & 80.8 $\pm$ 0.7 & 95.8 $\pm$ 1.3 & 78.9 $\pm$ 1.4 & 84.4         &                           \\
				IRM                      & $\checkmark$    & 85.3 $\pm$ 0.3 & 81.7 $\pm$ 0.9 & 97.4 $\pm$ 0.3 & 79.7 $\pm$ 1.8 & 86.0         & +0.8                      \\\hdashline[0.5pt/1pt]
				\rule{0pt}{10pt}Fishr    &                 & 88.4 $\pm$ 0.4 & 82.2 $\pm$ 0.7 & 97.7 $\pm$ 0.5 & 81.6 $\pm$ 0.4 & 87.5         &                           \\
				Fishr                    & $\checkmark$    & 87.4 $\pm$ 0.8 & 82.6 $\pm$ 0.5 & 97.5 $\pm$ 0.6 & 82.2 $\pm$ 0.0 & 87.4         & +0.6                      \\
				\bottomrule
			\end{tabular}}
	\end{center}
\end{table}

\begin{table}[h]
	\begin{center}
		\caption{OOD generalization performances with test domain validation set on \terra.}
		\vskip 0.1in
		\adjustbox{max width=\textwidth}{%
			\begin{tabular}{lccccccc}
				\toprule
				\textbf{Algorithm}       & \textbf{\ourss} & \textbf{L100}  & \textbf{L38}   & \textbf{L43}   & \textbf{L46}   & \textbf{Avg} & \textbf{$\Delta$ wr. acc} \\\midrule
				ERM                      &                 & 58.7 $\pm$ 1.7 & 51.3 $\pm$ 1.8 & 59.9 $\pm$ 0.6 & 41.7 $\pm$ 1.0 & 52.9         &                           \\\hdashline[0.5pt/1pt]
				\rule{0pt}{10pt}DANN     &                 & 53.8 $\pm$ 0.5 & 47.4 $\pm$ 1.0 & 59.0 $\pm$ 0.5 & 42.9 $\pm$ 0.3 & 50.8         &                           \\
				DANN                     & $\checkmark$    & 54.4 $\pm$ 1.3 & 46.9 $\pm$ 1.2 & 58.1 $\pm$ 0.2 & 43.3 $\pm$ 0.0 & 50.7         & +0.4                      \\\hdashline[0.5pt/1pt]
				\rule{0pt}{10pt}GroupDRO &                 & 57.3 $\pm$ 0.4 & 50.4 $\pm$ 1.1 & 59.7 $\pm$ 0.7 & 42.8 $\pm$ 0.7 & 52.5         &                           \\
				GroupDRO                 & $\checkmark$    & 55.9 $\pm$ 3.2 & 50.6 $\pm$ 0.7 & 57.9 $\pm$ 0.4 & 43.4 $\pm$ 0.4 & 52.0         & +0.6                      \\\hdashline[0.5pt/1pt]
				\rule{0pt}{10pt}IRM      &                 & 53.6 $\pm$ 0.5 & 47.9 $\pm$ 1.9 & 54.1 $\pm$ 0.9 & 41.3 $\pm$ 0.6 & 49.2         &                           \\
				IRM                      & $\checkmark$    & 59.3 $\pm$ 1.8 & 45.5 $\pm$ 0.6 & 56.4 $\pm$ 1.7 & 41.7 $\pm$ 0.7 & 50.7         & +0.4                      \\\hdashline[0.5pt/1pt]
				\rule{0pt}{10pt}Fishr    &                 & 60.7 $\pm$ 0.8 & 49.4 $\pm$ 0.7 & 59.5 $\pm$ 0.5 & 45.0 $\pm$ 0.5 & 53.7         &                           \\
				Fishr                    & $\checkmark$    & 58.9 $\pm$ 1.0 & 46.4 $\pm$ 1.8 & 58.6 $\pm$ 0.7 & 44.5 $\pm$ 0.8 & 52.1         & -0.5                      \\
				\bottomrule
			\end{tabular}}
	\end{center}
\end{table}

\end{document}